\newcommand{\f}{\mathbf{f}}
\renewcommand{\c}{c}
\newcommand{\Yf}{\mathcal{Y}}
\newcommand{\T}{T}
\newcommand{\Pf}{\psi}
\newcommand{\Qf}{\mathcal{Q}}
\newcommand{\Gf}{\mathcal{G}}
\newcommand{\B}{\mathcal{B}}
\newcommand{\C}{\mathcal{C}}
\newcommand{\M}{\mathcal{M}}
\newcommand{\Mm}{\mathbf{M}}
\newcommand{\Am}{\mathbf{A}}
\newcommand{\U}{\mathcal{U}}
\newcommand{\filsize}{q}
\newcommand{\dsaw}{\varpi}
\newcommand{\y}{\mathbf{y}}
\newcommand{\reg}{\mathbf{\lambda}}
\newcommand{\diam}{\gamma}
\newcommand{\I}{\mathbf{I}}
\newcommand{\bias}{\textbf{Bias}}
\newcommand{\risk}{\textbf{Risk}}
\newcommand{\var}{\textbf{Variance}}
\newcommand{\Q}{\mathbf{Q}}
\newcommand{\2}{\{-1,1\}}
\renewcommand{\k}{\mathbf{k}}
\newcommand{\kernel}{\mathcal{K}}
\newcommand{\K}{\mathbf{K}}
\newcommand{\mumax}[1]{\mu_{\max}(#1)}
\newcommand{\freg}{\hat{f}_{\reg}}
\newcommand{\fopt}{\hat{f}_{\regopt}}
\newcommand{\regopt}{\reg_{\text{opt}}}
\newcommand{\mlarge}{{\hat{\m}}}
\newcommand{\Lropt}{{\ell_{\regopt}}}
\newcommand{\Lru}{\bar{\ell}}
\newcommand{\Lrmin}{\ell_{\reg_{\text{min}}}}
\newcommand{\fint}{\hat{f}_0}
\newcommand{\fstar}{f^{\star}}
\newcommand{\customsubscript}[2]{{#2}_{#1}}
\newcommand{\Km}{\customsubscript{\leq \m}{\K}}
\newcommand{\KM}{\customsubscript{> \m}{\K}}
\newcommand{\module}[2]{\mathrm{mod}\left(#1,#2\right)}
\newcommand{\rmult}{r}
\renewcommand{\H}{\mathbf{H}}
\newcommand{\HM}{\customsubscript{> \m}{\H}}
\newcommand{\betath}{\beta^*}
\newcommand{\p}{\mathbf{\Psi}}
\let\mathpm\pm
\renewcommand{\pm}{\customsubscript{\leq \m}{\p}}
\renewcommand{\l}{l}
\renewcommand{\l}{l}
\renewcommand{\l}{l}
\newcommand{\overl}{\overline{\l}}
\newcommand{\underl}{\underline{\l}}
\newcommand{\m}{m}
\newcommand{\tm}{\tilde{\m}}
\newcommand{\ratio}{r}
\newcommand{\tmin}{\tau_1}
\newcommand{\tmax}{\tau_2}
\newcommand{\rmin}{\ratio_1}
\newcommand{\rmax}{\ratio_2}
\newcommand{\elem}{\varpi}
\newcommand{\term}{E}
\newcommand{\tterm}{E}
\renewcommand{\i}{k}
\newcommand{\ii}{l}
\newcommand{\iii}{i}
\renewcommand{\L}{L}
\renewcommand{\d}{\delta}
\newcommand{\Ln}{{\ell}}
\newcommand{\Ls}{{\ell_{\sigma}}}
\newcommand{\Lr}{{\ell_{\reg}}}
\newcommand{\Lgt}{{\L^*}}
\newcommand{\bL}{\bar{\L}}
\newcommand{\bd}{\bar{\delta}}
\newcommand{\bm}{\bar{\m}}
\newcommand{\Pm}{\customsubscript{\leq \m}{\P}}
\renewcommand{\P}{\mathbf{\Psi}}
\renewcommand{\a}{a}
\newcommand{\e}{\boldsymbol{\epsilon}}
\newcommand{\D}{\mathbf{D}}
\newcommand{\Dm}{\customsubscript{\leq \m}{\D}}
\newcommand{\KfM}{\customsubscript{> \m}{\Kf}}
\newcommand{\Kfm}{\customsubscript{\leq \m}{\Kf}}
\newcommand{\Kf}{\mathcal{K}}
\newcommand{\SfM}{\customsubscript{> \m}{\Sf}}
\newcommand{\Sfm}{\customsubscript{\leq \m}{\Sf}}
\newcommand{\Sf}{{\mathcal{S}}}
\newcommand{\IndS}{{\mathcal{I}}}
\newcommand{\IndSlarge}{{\hat{\mathcal{I}}}}
\newcommand{\lt}{\tilde{\l}}
\newcommand{\rate}{\eta}
\newcommand{\ratev}{{\eta_v}}
\newcommand{\rateb}{{\eta_b}}
\newcommand{\ratemin}{\eta_{\min}}
\renewcommand{\v}{v}
\renewcommand{\S}{\mathbf{S}}
\newcommand{\Sm}{\customsubscript{\leq \m}{\S}}
\newcommand{\SM}{\customsubscript{> \m}{\S}}
\newcommand{\E}{\mathbb{E}}
\newcommand{\W}{\mathbf{W}}
\renewcommand{\t}{\intercal}
\newcommand{\Tr}{\mathbf{Tr}}
\newcommand{\Proj}{\mathbf{P}}
\newcommand{\innersize}[2]{\langle#1\rangle}
\newtheorem{theorem}{Theorem}
\newtheorem{remark}{Remark}
\newtheorem{lemma}{Lemma}
\newtheorem{proposition}{Proposition}
\newtheorem{assumption}{Assumption}
\title{Strong inductive biases provably prevent \\ harmless interpolation}
\author{Michael Aerni\thanks{Equal contribution;
correspondence to \texttt{research@michaelaerni.com}}\, $^{1}$,
Marco Milanta\footnotemark[1]\, $^{1}$,
Konstantin Donhauser$^{1,2}$,
Fanny Yang$^{1}$ \\
$^{1}$Department of Computer Science, ETH Zurich
$^{2}$ETH AI Center\vspace{-0.1in}
}
\begin{document}

\newlength{\figcontentwidth}
\setlength{\figcontentwidth}{397pt}
\newlength{\figgutterwidth}
\setlength{\figgutterwidth}{10pt}
\newlength{\figcolwidth}
\setlength{\figcolwidth}{0.083333\figcontentwidth-0.916667\figgutterwidth}

\newlength{\figtwelvecol}
\setlength{\figtwelvecol}{12\figcolwidth+11\figgutterwidth}
\newlength{\figsixcol}
\setlength{\figsixcol}{6\figcolwidth+5\figgutterwidth}
\newlength{\figfivecol}
\setlength{\figfivecol}{5\figcolwidth+4\figgutterwidth}
\newlength{\figfourcol}
\setlength{\figfourcol}{4\figcolwidth+3\figgutterwidth}
\newlength{\figthreecol}
\setlength{\figthreecol}{3\figcolwidth+2\figgutterwidth}
\newlength{\figonecol}
\setlength{\figonecol}{\figcolwidth}

\maketitle

\begin{abstract}
Classical wisdom suggests that estimators should avoid fitting noise to achieve good generalization.
In contrast, modern overparameterized models can yield small test error
despite interpolating noise --- a phenomenon often called ``benign overfitting''
or ``harmless interpolation''.
This paper argues that the degree to which interpolation is harmless hinges upon
the strength of an estimator's inductive bias, i.e.,
how heavily the estimator favors solutions with a certain structure:
while strong inductive biases prevent harmless interpolation,
weak inductive biases can even require fitting noise to generalize well.
Our main theoretical result establishes tight non-asymptotic bounds
for high-dimensional kernel regression that reflect this phenomenon
for convolutional kernels, where the filter size regulates the strength of the inductive bias.
We further provide empirical evidence of the same behavior for deep neural networks
with varying filter sizes and rotational invariance.
\end{abstract}

\section{Introduction}

According to classical wisdom (see, e.g., \cite{hastie01statisticallearning}),
an estimator that fits noise suffers from ``overfitting''
and cannot generalize well.
A typical solution is to prevent interpolation, that is,
stopping the estimator from achieving zero training error
and thereby fitting less noise.
For example, one can use ridge regularization or early stopping
for iterative algorithms  %
to obtain a model that has training error close to the noise level.
However, large overparameterized models such as neural networks seem to behave differently:
even on noisy data, they may achieve optimal test performance
at convergence after interpolating the training data %
\citep{Nakkiran_2021, belkin_2019}
--- a phenomenon referred to as \emph{harmless interpolation} \citep{muthukumar_2020} or \emph{benign overfitting} \citep{bartlett_2020} and often discussed in the context of \emph{double descent} \citep{belkin_2019}.

To date, we lack a general understanding of when interpolation is harmless for overparameterized models.
In this paper, we argue that the \emph{strength of an inductive bias}
critically influences whether an estimator exhibits harmless interpolation.
An estimator with a strong inductive bias
heavily favors ``simple'' solutions that structurally  align with the ground truth
(such as sparsity or rotational invariance).
Based on well-established high-probability recovery results of sparse linear regression
\citep{tibshirani_1996,candes_2008,donoho_2006},
we expect that models with a stronger inductive bias
generalize better than ones with a weaker inductive bias, particularly from noiseless data.
In contrast, the effects of inductive bias are much less studied for interpolators of noisy data.

Recently, \citet{donhauser22a} provided a first rigorous analysis
of the effects of inductive bias strength
on the generalization performance
of linear max-$\ell_p$-margin/min-$\ell_p$-norm interpolators.
In particular, the authors prove that
a stronger inductive bias (small $p \to 1$) not solely
enhances a model's ability to generalize on noiseless data,
but also increases a model's sensitivity to noise
--- eventually harming generalization when interpolating noisy data.
As a consequence, their result suggests that interpolation might not be harmless
when the inductive bias is too strong.

In this paper, we confirm the hypothesis and show that strong inductive biases indeed prevent harmless interpolation, while also moving away from sparse linear models.
As one example, we consider data where the true labels nonlinearly only depend on input features in a local neighborhood, and vary the strength of the inductive bias via the filter size of convolutional kernels or shallow convolutional neural networks
--- small filter sizes encourage functions that depend nonlinearly only on local neighborhoods of the input features.
As a second example, we also investigate classification for rotationally invariant data,
where we encourage different degrees of rotational invariance for neural networks.
In particular,
\begin{itemize}
\item we prove a phase transition between harmless and harmful interpolation
    that occurs by varying the strength of the inductive bias via the filter
    size of convolutional kernels for kernel regression in the high-dimensional setting
    (\cref{th:rates}).
\item we further show that, for a weak inductive bias,
    not only is interpolation harmless but partially fitting the observation noise
    is in fact necessary
    (\cref{th:trainerror}).
\item we show the same phase transition experimentally for neural networks
    with two common inductive biases:
    varying convolution filter size,
    and rotational invariance enforced via data augmentation
    (\cref{sec:empirical}).
\end{itemize}

From a practical perspective,
empirical evidence suggests that large neural networks
not necessarily benefit from early stopping.
Our results match those observations for typical networks
with a weak inductive bias;
however, we caution that strongly structured models
must avoid interpolation,
even if they are highly overparameterized.

\section{Related work}
\label{sec:related_work}

We now discuss three groups of related work and explain
how their theoretical results cannot reflect
the phase transition between harmless and harmful interpolation for high-dimensional kernel learning.

\pseudoparagraph{Low-dimensional kernel learning}
Many recent works
\citep{bietti2021sample,favero2021locality,bietti2022approximation,wyart}
prove statistical rates for kernel regression
with convolutional kernels in low-dimensional settings,
but crucially rely on ridge regularization.
In general, one cannot expect harmless interpolation
for such kernels in the low-dimensional regime
\citep{rakhlin2019consistency,mallinar2022benign,buchholz2022kernel};
positive results exist only
for very specific adaptive spiked kernels \citep{belkin2019does}.
Furthermore, techniques developed for low-dimensional settings
(see, e.g., \cite{scholkopf18}) usually suffer from a curse of dimensionality,
that is, the bounds become vacuous in high-dimensional settings
where the input dimension grows with the number of samples.

\pseudoparagraph{High-dimensional kernel learning}
One line of research
\citep{liang2020multiple,McRae2022,liang2020just,liu2021kernel}
tackles high-dimensional kernel learning and
proves non-asymptotic bounds
using advanced high-dimensional random matrix concentration tools
from \cite{el2010spectrum}.
However, those results heavily rely on a bounded Hilbert norm assumption.
This assumption is natural in the low-dimensional regime,
but misleading in the high-dimensional regime,
as pointed out in \cite{donhauser21}.
Another line of research
\citep{Ghorbani19,ghorbani2020neural,Mei2021_invariances,ghosh2022the,misiakiewicz2021learning,Mei2021_hypercontractivity}
asymptotically characterizes the precise risk
of kernel regression estimators
in specific settings
with access to a kernel's eigenfunctions and eigenvalues.
However, these asymptotic results are insufficient
to investigate how varying the filter size of a convolutional kernel
affects the risk of a kernel regression estimator.
In contrast to both lines of research,
we prove tight non-asymptotic matching upper and lower bounds for high-dimensional kernel learning which precisely capture the phase transition described in \cref{subsec:main_result}.

\pseudoparagraph{Overfitting of structured interpolators}
Several works question the generality of harmless interpolation %
for models that incorporate strong structural assumptions.
Examples include structures enforced via
data augmentation \citep{Nishi21},
adversarial training \citep{rice2020overfitting,Kamath21,sanyal2021how,donhauser21neurips},
neural network architectures \citep{Li21},
pruning-based sparsity \citep{Chang21},
and sparse linear models \citep{wang2021tight,muthukumar_2020,chatterji2022foolish}.
In this paper, we continue that line of research
and offer a new theoretical perspective
to characterize when interpolation is expected to be harmless.

\section{Theoretical results}
For convolutional kernels,
a small filter size induces a strong bias towards estimators
that depend nonlinearly on the input features
only via small patches.
This section analyzes the effect of filter size
(as an example inductive bias)
on the degree of harmless interpolation
for kernel ridge regression.
For this purpose, we derive and compare tight non-asymptotic
bias and variance bounds
as a function of filter size
for min-norm interpolators
and optimally ridge-regularized estimators
(\cref{th:rates}).
Furthermore, we prove for large filter sizes that
not only does harmless interpolation occur
(\cref{th:rates}),
but fitting some degree of noise is even necessary
to achieve optimal test performance
(\cref{th:trainerror}).

\subsection{Setting}
\label{subsec:setting}
We study kernel regression with a (cyclic) convolutional kernel
in a high-dimensional setting
where the number of training samples $n$
scales with the dimension of the input data $d$ as $n \in \Theta(d^{\Ln})$.
We use the same setting as in previous works on high-dimensional kernel learning
such as \cite{misiakiewicz2021learning}:
we assume that the training samples
$\{x_i,y_i\}_{i = 1}^n$
are i.i.d.\ draws from the distributions
$x_i \sim \U(\2^d)$,
and $y_i = \fstar(x_i) + \epsilon_i$
with ground truth $\fstar$
and noise $\epsilon \sim \mathcal{N}(0,\sigma^2)$.
For simplicity of exposition,
we further assume that $\fstar(x) = x_1 x_2 \cdots x_\Lgt$,
with $\Lgt$ specified in \cref{th:rates}.

While the assumptions on the noise and ground truth can be easily extended
by following the proof steps in \cref{sec:proof},
generalizing the feature distribution is challenging.
Indeed, existing results that establish precise risk characterizations
(see \cref{sec:related_work})
crucially rely on hypercontractivity of the feature distribution
--- an assumption so far only proven for few high-dimensional distributions,
including the hypersphere \citep{beckner95},
and the discrete hypercube \citep{bc75}
which we use in this paper.
Hypercontractivity is essential to tightly control the empirical kernel matrix
within \cref{lem:combined} in \cref{sec:proof}.
Generalizations beyond this assumption require the development
of new tools in random matrix theory,
which we consider important future work.

We consider (cyclic) convolutional kernels
with filter size $\filsize \in \{1, \dotsc, d\}$ of the form
\begin{equation}\label{eq:conv_kern_def}
    \Kf(x,x') = \frac 1d \sum_{k = 1}^d{\kappa\left(\frac{\innersize{x_{(k,\filsize)},x'_{(k,\filsize)}}{\filsize}}{\filsize}\right)},
\end{equation}
where $x_{(k,\filsize)}\coloneqq[x_{\module kd}\cdots x_{\module{k+\filsize-1}d}]$,
and $\kappa\colon [-1,1]\to \R$ is a nonlinear function that
implies standard regularity assumptions
(see \cref{ass:genericity} in \cref{app:hypercube})
that hold for instance for the exponential function.
Decreasing the filter size $\filsize$
restricts kernel regression solutions to depend nonlinearly only on local neighborhoods  instead of the entire input $x$.

We analyze the kernel ridge regression (KRR) estimator, which is the minimizer of the following convex optimization problem:
\begin{equation}\label{eq:krr_def}
    \hat f_\reg = \argmin_{f \in \mathcal H} \frac 1n\sum_{i = 1}^n (f(x_i)-y_i)^2 + \frac{\reg}n \norm{f}_{\mathcal{H}}^2,
\end{equation}
where $\mathcal H$ is the Reproducing Kernel Hilbert space (RKHS)
over $\{-1,1\}^d$ generated by the convolutional kernel $\Kf$ in \cref{eq:conv_kern_def},
$\norm{\cdot}_{\mathcal H}$ the corresponding norm,
and $\reg>0$ the ridge regularization penalty.\footnote{
Note that previous works show how early-stopped gradient methods on the square loss
behave statistically similarly to kernel ridge regression \citep{Raskutti14,Wei17}.
}
In the interpolation limit
($\reg \to 0$), we obtain the min-RKHS-norm interpolator
\begin{equation}\label{eq:kri_def}
    \hat f_0 = \argmin_{f \in \mathcal H} \|f\|_{\mathcal H} \quad \text{s.t.}\quad \forall i:~f(x_i) = y_i.
\end{equation}
For simplicity, we refer to $\hat f_0$ as the kernel ridge regression estimator with $\reg =0$.
We evaluate all estimators with the expected population risk over the noise, defined as
\begin{align*}
    \risk(\hat f_{\reg})
    &\coloneqq
    \underbrace{
        \E_{x}\left[\left(\E_\epsilon[\hat f_{\reg}(x)]-\fstar (x)\right)^2\right]
        }_{\coloneqq\bias^2(\hat f_{\reg})}
    +
    \underbrace{
        \E_{x,\epsilon}\left[\left(\hat f_{\reg}(x)-\E_\epsilon[\hat f_{\reg}(x)]\right)^2\right]
    }_{\coloneqq\var(\hat f_{\reg})} .
\end{align*}

\subsection{Main result}
\label{subsec:main_result}
We now present tight upper and lower bounds
for the prediction error of kernel regression estimators
in the setting from \cref{subsec:setting}.
The resulting rates hold for the high-dimensional regime, that is,
when both the ambient dimension $d$ and filter size $\filsize$ scale with $n$.\footnote{
We hide positive constants that depend at most on $\Ln$ and $\beta$ (defined in \cref{th:rates})
using the standard Bachmann--Landau notation $\bigO(\cdot)$, $\Omega(\cdot)$, $\Theta(\cdot)$,
as well as $\lesssim$, $\gtrsim$,
and use $\c, \c_1, \dots$ as generic positive constants.
}
We defer the proof to \cref{sec:proof}.

\newpage
\begin{theorem}[Non-asymptotic prediction error rates]
\label{th:rates}
Let $\Ln > 0$, $\beta \in (0, 1)$, $\Ls \in \R$.
Assume a dataset and a kernel
as described in \cref{subsec:setting},
with the kernel satisfying \cref{ass:genericity}.
Assume further
$n \in \Theta(d^{\Ln})$,
the filter size $\filsize \in \Theta\left(d^\beta\right)$,
and $\sigma^2 \in \Theta(d^{-\Ls})$.
Lastly, define $\bd \coloneqq \frac{\Ln - 1}\beta - \floor{\frac{\Ln -1}\beta}$
and $\delta \coloneqq \frac{\Ln-\Lr -1}\beta  - \floor{\frac{\Ln-\Lr -1}\beta}$
for any $\Lr$.
Then, with probability at least $1-cd^{-\beta\min\{\bd,1-\bd\}}$
uniformly over all $\Lr \in [0, \Ln - 1)$,
the KRR estimate $\freg$ in \cref{eq:krr_def}
with $\max\{\reg,1\} \in \Theta(d^{\Lr})$ satisfies
\begin{align*}
    \var(\hat f_\reg)  &\in \Theta\left(n ^{\frac{-\Ls-\Lr}{\Ln} - \frac{\beta}{\Ln} \min\{\delta, 1-\delta\}}\right).
\end{align*}
Further, for a ground truth $\fstar(x) = x_1 x_2 \cdots x_\Lgt$
with $\Lgt \leq \ceil*{\frac{{\Ln-\Lr-1}}\beta}$,
with probability at least $1-cd^{-\beta\min\{\bd,1-\bd\}}$, we have
\begin{equation*}
     \bias^2(\hat f_\reg)  \in \Theta\left(n^{-2-\frac{2}{\Ln}(-\Lr-1-\beta(\Lgt-1))}\right).
\end{equation*}
Finally, by setting $\Lr=0$, both rates also hold for the min-RKHS-norm interpolator $\fint$
in Eq.~\eqref{eq:kri_def}.

\end{theorem}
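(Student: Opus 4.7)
The plan is a spectral approach tailored to the hypercube $\{-1,1\}^d$. First, I would diagonalize the cyclic convolutional kernel $\Kf$ in the Walsh basis $\chi_S(x)=\prod_{i\in S}x_i$, in which $\Kf$ is automatically diagonal by translation invariance. The eigenvalue $\mu_S$ of a parity $\chi_S$ depends on how $S$ distributes across the $d$ cyclic patches of width $\filsize$: a parity contained in one patch with $|S|=k$ has $\mu_S$ of order $d^{-1}\filsize^{-k}$, whereas a parity spread across $j$ distinct patches has $\mu_S$ of order $d^{-j}\filsize^{-(k-j)}$. Combined with $n\in\Theta(d^{\Ln})$ and $\filsize\in\Theta(d^{\beta})$, this groups parities by their effective learnability $n\mu_S$ and selects a threshold $\m=\m(\Lr)$ such that parities with $n\mu_S\gtrsim\max(1,d^{\Lr})$ are learnable; the floor/fractional-part structure in the definitions of $\bd$ and $\delta$ arises from which patch-coverage profiles fall on either side of this threshold.

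Next, I would write the feature matrix in block form $\p=[\pm,\pM]$ and decompose
\begin{equation*}
\K=\pm\Dm\pm^{\!\t}+\pM\DM\pM^{\!\t},
\end{equation*}
where $\Dm,\DM$ collect the corresponding eigenvalues. The key structural lemma (analogous to \cref{lem:combined} in \cref{sec:proof}) is that the high-frequency block concentrates around a scalar matrix,
\begin{equation*}
\tfrac{1}{n}\pM\DM\pM^{\!\t}=\alpha\I+\term,\qquad \|\term\|\ll\alpha,
\end{equation*}
with high probability, where $\alpha$ equals the tail eigenvalue mass. This is the step where the hypercube hypothesis is essential: Bonami-Beckner hypercontractivity of the Walsh basis provides the higher-moment control needed to dominate the vast number of off-diagonal contributions from parities above the threshold, carried out via a dyadic partition of the tail into blocks of parities with comparable eigenvalue. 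With this reduction, the estimator reduces to a finite-dimensional ridge regression on the $\m$ learnable parities with \emph{effective regularization} $\reg_{\mathrm{eff}}=\reg+n\alpha$; the min-norm interpolator $\fint$ is automatic since $\reg_{\mathrm{eff}}=n\alpha>0$ even when $\reg=0$.

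Third, applying the Woodbury identity to $(\K+\reg\I)^{-1}$ reduces bias and variance to exponent bookkeeping. Since $\fstar=\chi_{[\Lgt]}$ with $\Lgt\leq\lceil(\Ln-\Lr-1)/\beta\rceil$ is learnable, $\bias^{2}$ collapses to the squared ridge shrinkage $(\reg_{\mathrm{eff}}/(\reg_{\mathrm{eff}}+n\mu_{[\Lgt]}))^{2}$; substituting $n\mu_{[\Lgt]}$ of order $d^{\Ln-1-\beta(\Lgt-1)}$ yields the claimed exponent $-2-\tfrac{2}{\Ln}(-\Lr-1-\beta(\Lgt-1))$. For $\var$, the identity $\sigma^{2}\Tr(\K(\K+\reg\I)^{-2})$ splits into a learnable piece of order $\sigma^{2}\m/n$ coming from $\pm$ and a noise piece of order $\sigma^{2}n\alpha^{2}/(n\alpha+\reg)^{2}$ coming from $\pM$; the borderline block, whose parities sit right at the threshold, interpolates between the two and produces the fractional $\beta\min\{\delta,1-\delta\}/\Ln$ correction in the exponent. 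Matching lower bounds come for free because both the structural lemma and the ridge shrinkage are two-sided.

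The main obstacle is the structural lemma in the second step. The cyclic convolutional spectrum is combinatorially much richer than that of a rotationally invariant kernel on the hypercube: every parity above the threshold must be tracked by its patch-coverage profile $(k,j)$, and pushing the hypercontractive moment bound through this partition requires counting parities with a given profile and balancing these counts against the eigenvalue scaling so that the claimed $\beta\min\{\delta,1-\delta\}$ term survives in both directions. Once this lemma is in place, the remainder is a deterministic bookkeeping of scaling exponents in $d$.
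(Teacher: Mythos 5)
Your high-level strategy mirrors the paper's: Walsh-basis diagonalization, a threshold $\m$ determined by the effective regularization, concentration of the tail via hypercontractivity, and a Woodbury reduction to finite-dimensional ridge regression. The paper carries this out in two steps --- a deterministic fixed-design result (\cref{th:vidya}) giving two-sided bias/variance bounds for any kernel whose truncated feature matrix concentrates, followed by verification that the convolutional kernel meets the conditions with high probability (\cref{lem:combined}). However, there are two substantive gaps in your plan.

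\textbf{The eigenvalue picture of the cyclic convolutional kernel is wrong.} You state that a parity $\chi_S$ ``spread across $j$ distinct patches'' has eigenvalue of order $d^{-j}\filsize^{-(k-j)}$. In fact it has eigenvalue exactly zero. The kernel in \cref{eq:conv_kern_def} is an \emph{additive} average over $d$ cyclic windows of size $\filsize$, not a product, so the only parities in its RKHS are those with $\diam(S) \le \filsize$, i.e., those contained in a single cyclic window (see \cref{prop:kern_eig}). For such $S$, the eigenvalue is $\lambda_S = \xi_{|S|}^{(\filsize)}(\filsize+1-\diam(S))/(d\binom{\filsize}{|S|})$, which for degree $k$ ranges from $\Theta(d^{-1}\filsize^{-(k-1)})$ (compact support, $\diam(S)=k$) down to $\Theta(d^{-1}\filsize^{-k})$ (maximally spread within a window, $\diam(S)\approx\filsize$). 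Your stated order $d^{-1}\filsize^{-k}$ for a parity ``contained in one patch'' is therefore off by a factor of $\filsize$ unless $\diam(S)\approx\filsize$. This matters: the factor $\filsize+1-\diam(S)$ creates a continuum of eigenvalues within each degree level, and it is precisely the ability to pick $\diam(S_\m) = \lfloor\filsize+1-\filsize^{1-\delta}\rfloor$ (as in \cref{lem:eigendecay_main}) that produces the fractional $\delta$ and the multiple-descent structure of $\m$. Without the diameter factor, you cannot manufacture the threshold $n\lambda_\m\in\Theta(\max\{\reg,1\})$ at a non-integer ``effective degree'' $L+\delta$, and the $\beta\min\{\delta,1-\delta\}$ term will not appear.

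\textbf{The variance expression is incorrect.} You write $\var = \sigma^2\Tr(\K(\K+\reg\I)^{-2})$, but the test-set variance is $\var = \sigma^2\Tr(\H^{-1}\S\H^{-1})$, where $\S$ is the empirical matrix of the \emph{squared} kernel $\Sf(x,x')=\E_z[\Kf(x,z)\Kf(z,x')]$, whose population eigenvalues are $\lambda_\i^2$ (see \cref{eq:closedform_var} and \cref{lem:squared_kernel}). The distinction is not cosmetic: after the truncation at $\m$, the tail piece of the variance is governed by $\Tr(\SM)/\max\{\reg,1\}^2$, and $\Tr(\SM)$ scales very differently from $\Tr(\KM)$. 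Indeed the paper shows $\norm{\K_2}\in\Theta(1)$ while $\norm{\S_2}\in\bigO(1/(d\filsize^{\bL+1}))$ (\cref{lem:K2,lem:S2}); in other words, the empirical tail of the squared kernel is tiny even though the empirical tail of the original kernel is $\Theta(1)$. Using $\K$ in place of $\S$ produces a tail contribution of order $\sigma^2 n/\max\{\reg,1\}^2$ rather than the much smaller $\sigma^2\Tr(\SM)/\max\{\reg,1\}^2$, and the resulting exponent does not match the claim. Related to this, the claim ``$\frac1n\pM\DM\pM^\t = \alpha\I + \term$ with $\norm{\term}\ll\alpha$'' is too strong: in the regularized regime ($\reg\gg 1$) the tail kernel matrix $\KM$ has eigenvalues spanning $[\Theta(1),\Theta(\max\{\reg,1\})]$ and is not close to a multiple of the identity. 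The paper sidesteps this by only requiring the shifted quantities $\rmin,\rmax$ to be $\Theta(1)$, which is the weaker fact that actually holds.

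Both gaps would need to be repaired before the ``remaining bookkeeping'' you describe can even begin.
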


Note how the theorem reflects the usual intuition
for the effects of noise and ridge regularization strength on bias and variance
via the parameter $\Lr$:
With increasing ridge regularization $\Lr$ (and thus increasing $\lambda$),
the bias increases and the variance decreases.
Similarly, as noise increases (and thus $\Ls$ decreases), the variance increases.

\pseudoparagraph{Phase transition as a function of $\beta$}
In the following, we focus on the impact
of the filter size $\filsize \in \Theta\left(d^\beta\right)$
on the risk (sum of bias and variance)
via the growth rate $\beta$.
Recalling that a small filter size (small $\beta$)
corresponds to a strong inductive bias, and vice versa,
\cref{fig:theory} demonstrates how the strength of the inductive bias affects generalization.
For illustration, we choose the ground truth $\fstar(x) = x_1 x_2$
so that the assumption on $\Lgt$ is satisfied for all $\beta$.
Specifically, \cref{fig:rates_opt_reg} shows the rates
for the min-RKHS-norm interpolator $\hat f_0$
and the optimally ridge-regularized estimator $\fopt$,
where we choose $\regopt$ to minimize the expected population risk $\risk(\fopt)$.
Furthermore, \cref{fig:tradeoff} depicts the (statistical) bias and variance
of the interpolator $\hat f_0$.
At the threshold $\betath \in (0,1)$,
implicitly defined as the $\beta$ at which the rates
of statistical bias and variance in \cref{th:rates} match,
we can observe the following phase transition:

\begin{itemize}
    \item For $\beta < \betath$, that is, for a strong inductive bias,
    the rates in \cref{fig:rates_opt_reg} for the optimally ridge-regularized estimator
    $\fopt$ are strictly better than the ones
    for the corresponding interpolator $\hat f_0$.
    In other words, we are observing \emph{harmful interpolation}.
    \item For $\beta > \betath$, that is, for a weak inductive bias,
    the rates in \cref{fig:rates_opt_reg} of the optimally ridge-regularized estimator
    $\fopt$ and the min-RKHS-norm interpolator $\hat f_0$ match.
    Hence, we observe \emph{harmless interpolation}.
\end{itemize}

In the following theorem, we additionally show that interpolation is not only harmless
for $\beta>\betath$,
but the optimally ridge-regularized estimator $\fopt$
necessarily fits part of the noise
and has a training error strictly below the noise level.
In contrast, we show that when interpolation is harmful in \cref{fig:rates_opt_reg},
that is, when $\beta < \betath$,
the training error of the optimally ridge-regularized model approaches the noise level.
\begin{theorem}[Training error (informal)]
\label{th:trainerror}
Let $\regopt$ be such that the expected population risk~$\risk(\fopt)$ is minimal,
and let $\beta^*$ be the unique threshold\footnote{
See \cref{th:trainerror_formal} for a more general statement
that does not rely on a unique $\betath$.
}
where the bias and variance bounds in \cref{th:rates}
are of the same order for the interpolator $\hat f_0$ (setting $\Lr =0$).
Then, the expected training error converges in probability:
\begin{align*}
    \lim_{n,d\to \infty}{
    \frac{1}{\sigma^2 }
    \E_{\epsilon}{\left[\frac1n\sum_i (\fopt(x_i) - y_i)^2\right]}
    }
    \quad
    \begin{cases}
        \: = 1 &\beta < \betath, \\
        \: \leq c_{\beta} &\beta \geq \betath,
    \end{cases}
\end{align*}
where $c_{\beta} < 1$ for any $\beta > \beta^*$.
\end{theorem}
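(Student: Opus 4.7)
The plan is to express the expected (over $\e$) training error in closed form and reduce the claim to a spectral question about the empirical kernel matrix $\K$, which is in turn controlled by the same high-probability concentration tool behind \cref{th:rates}. Since $\fopt$ solves the kernel ridge regression problem, the vector of training residuals equals $\fopt(X)-\y = -\regopt(\K+\regopt\I)^{-1}\y$, so
\[
\E_{\e}\!\left[\tfrac{1}{n}\sum_i (\fopt(x_i)-y_i)^2\right] = \tfrac{\regopt^2}{n}\fstar(X)^{\t}(\K+\regopt\I)^{-2}\fstar(X) + \tfrac{\sigma^2\regopt^2}{n}\Tr\!\left((\K+\regopt\I)^{-2}\right).
\]
Letting $\mu_1\geq\dots\geq\mu_n$ denote the eigenvalues of $\K$, the variance term equals $\sigma^2\cdot \tfrac{1}{n}\sum_i \regopt^2/(\mu_i+\regopt)^2 \in [0,\sigma^2]$, and is close to $\sigma^2$ precisely when $\regopt$ dominates most eigenvalues of $\K$. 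So the claim reduces to (i) showing that the bias term above is of lower order than $\sigma^2$ and (ii) counting how many eigenvalues of $\K$ sit below $\regopt$.

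\textbf{Step 1 (training bias).} I would show that the training bias term admits the same polynomial-in-$d$ rate as the population bias in \cref{th:rates}. Both are quadratic forms in $(\K+\regopt\I)^{-1}\fstar(X)$; after projecting $\fstar(X)$ onto the character basis that diagonalizes the expected kernel, the block-wise decomposition of $\K$ from \cref{lem:combined} lets one compare the empirical and population versions cluster by cluster. Because $\regopt$ minimizes the population risk, this risk is smaller than $\sigma^2$ as soon as the problem is learnable (which is enforced by $\Lgt\leq\lceil(\Ln-1)/\beta\rceil$), so the training bias is $o(\sigma^2)$ with high probability over $X$.

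\textbf{Step 2 (scaling of $\regopt$ and case analysis).} I would read off the scaling of $\regopt$ from \cref{th:rates} by matching the bias and variance rates as functions of $\Lr$. The threshold $\betath$ is exactly the value at which these two rates coincide for the interpolator ($\Lr=0$). Thus for $\beta<\betath$ the minimum is attained at some $\Lr_{\rm opt}>0$, giving $\regopt\in\Theta(d^{\Lr_{\rm opt}})\to\infty$, while for $\beta>\betath$ the interpolator already balances them and $\regopt=O(1)$ is risk-optimal up to constants. Combining this with the cluster structure of the eigenvalues from \cref{lem:combined}: in the first case only a vanishing fraction of eigenvalues exceeds $\regopt$, hence $\regopt^2/(\mu_i+\regopt)^2\to 1$ on almost all indices and $V_{\text{train}}/\sigma^2\to 1$ in probability. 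In the second case a positive asymptotic fraction of eigenvalues strictly exceeds $\regopt$ (precisely the top clusters that the interpolator already captures), so at least a constant fraction of terms in the sum are bounded away from $1$, yielding $V_{\text{train}}/\sigma^2\leq c_\beta$ in probability with $c_\beta<1$ whenever $\beta>\betath$.

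\textbf{Main obstacle.} The delicate point is bridging the deterministic rate calculus of \cref{th:rates} with the probabilistic behaviour of the random eigenvalues of $\K$. Because the eigenvalue clusters from \cref{lem:combined} are separated by polynomial gaps in $d$, the fraction of eigenvalues sitting at the boundary scale $\mu_i\asymp\regopt$ is $o(1)$ with high probability, and the cluster-counting argument above goes through after absorbing that negligible mass. Additional care is needed at the boundary case $\beta=\betath$ and when $\betath$ fails to be unique, both of which are handled through the more general formal statement \cref{th:trainerror_formal} referenced in the footnote, where the bias--variance matching is phrased without requiring a single threshold.
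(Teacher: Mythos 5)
Your high-level route matches the paper's: start from the closed-form residual $-\regopt\H^{-1}\y$ with $\H = \K + \regopt\I_n$, split the expected training error into $\frac{\regopt^2}{n}\f^\t\H^{-2}\f + \frac{\sigma^2\regopt^2}{n}\Tr(\H^{-2})$ (this is \cref{prop:trainerror_context_agnostic} in the paper), show the bias term is small, and then show that the variance term is close to or bounded away from $\sigma^2$ depending on whether $\regopt$ grows with $d$. So you have the right skeleton, but several load-bearing technical steps are glossed over or stated imprecisely.

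\textbf{The scaling of $\regopt$ does not ``read off'' from \cref{th:rates}.} You state that $\regopt \in \Theta(d^{\Lr_{\mathrm{opt}}})$ follows by ``matching the bias and variance rates as functions of $\Lr$.'' But \cref{th:rates} gives only $\Theta$-bounds with unknown constants; the regularizer minimizing the actual population risk need not a priori be of the order that minimizes the rate exponent. Bridging these two is exactly \cref{lem:opt_reg}, which uses the monotonicity/continuity structure of the rate exponent $\rate(\Lr)$ (\cref{lem:eta_prop}) and a careful argument that $d^{\Lropt - \l} \in \Theta(1)$ for some rate-minimizer $\l$. Without this, Step 2 of your proposal does not go through.

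\textbf{The eigenvalue-counting picture is not quite the one the paper uses.} In the harmless case ($\regopt = O(1)$) you argue that ``a positive asymptotic fraction of eigenvalues strictly exceeds $\regopt$.'' The paper proves something stronger and cleaner: $\mineig{\K} \geq \mineig{\K_2} \geq c_1 > 0$ with high probability (\cref{lem:K2}), so \emph{every} ratio $\regopt^2/(\mu_i+\regopt)^2 \leq (\c'')^2/(c_1 + \c'')^2 < 1$ uniformly, which immediately gives the $\tilde{c}\sigma^2$ upper bound. Your ``positive fraction'' statement would also suffice to get \emph{some} $c_\beta < 1$, but it is a weaker claim, and as phrased it lacks a concrete reason for the fraction being bounded away from zero. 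In the harmful case ($\regopt \to \infty$), the ``vanishing fraction of eigenvalues exceeding $\regopt$'' intuition is right, but making it precise requires the rank bound $\rank(\Km + \K_1) \in o(n)$ and the trace lower bound $\Tr(\H^{-2}) \geq (n - \rank(\K_{-2}))/\norm{\H_2}^2$ (\cref{lem:tr_inv}), paired with the matching upper bound $\Tr(\H^{-2}) \leq n/(\mineig{\K_2}+\regopt)^2$; this two-sided pinch is what turns the heuristic ``almost all indices'' into $\bigl|\frac{\regopt^2}{n}\Tr(\H^{-2}) - 1\bigr| \in \bigO(d^{-\l})$.

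\textbf{Step 1 (training bias) is waved through.} You assert the training bias ``admits the same polynomial rate as the population bias'' by a block-wise comparison, but the actual argument in the paper reuses the $B_1$ term already bounded in the proof of \cref{th:vidya}: via \cref{lem:20_from_bartlett}, $\frac{\regopt^2}{n}\f^\t\H^{-2}\f$ is controlled by $\frac{\norm{\Dm^{-1}\a}^2}{n^2}$ up to $\rmin,\rmax,\tmax$ factors, giving $T_2 \in \bigO(\bias^2(\fopt)) \subseteq \bigO(\risk^2(\fopt))$. The subsequent statement ``so the training bias is $o(\sigma^2)$'' is an additional assumption (namely $\risk^2(\fopt) \in o(\sigma^2)$), which the formal theorem sidesteps by keeping the $\bigO(\risk^2(\fopt))$ term explicit rather than absorbing it into the limit; only the informal corollary makes that simplification. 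Worth flagging this explicitly rather than implying it follows automatically from learnability.
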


We refer to \cref{ssec:training_error_proof} for the proof and a more general statement.

\begin{figure}[t!]
    \centering
    \begin{subfigure}[l]{\figsixcol}
        \centering
        \begin{subfigure}[t]{\linewidth}
            \centering
            \includegraphics[width=\linewidth]{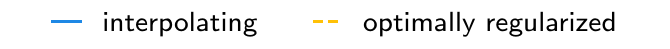}
        \end{subfigure}
        \begin{subfigure}[b]{\linewidth}
            \centering
            \includegraphics[width=\linewidth]{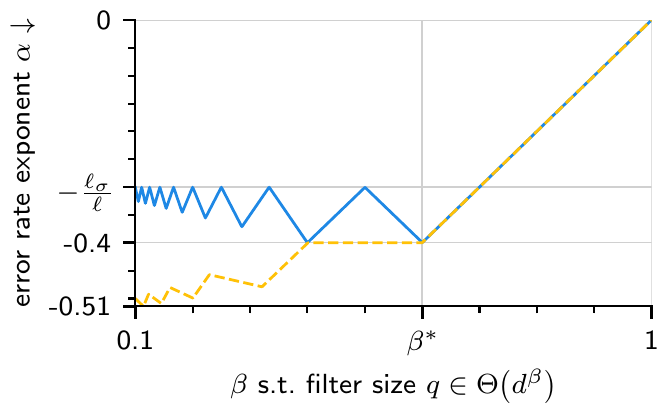}
        \end{subfigure}
        \caption{Interpolating vs.\ regularized model}
        \label{fig:rates_opt_reg}
    \end{subfigure}
    \hfill%
    \begin{subfigure}[r]{\figsixcol}
        \centering
        \begin{subfigure}[t]{\linewidth}
            \centering
            \includegraphics[width=\linewidth]{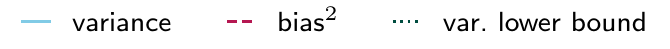}
        \end{subfigure}
        \begin{subfigure}[b]{\linewidth}
            \centering
            \includegraphics[width=\linewidth]{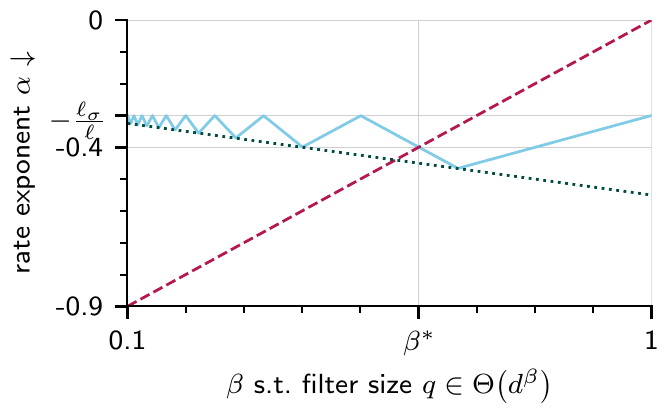}
        \end{subfigure}
        \caption{Variance vs.\ bias for interpolating model}
        \label{fig:tradeoff}
    \end{subfigure}
    \caption{
        Illustration of the rates in \cref{th:rates}
        for high-dimensional kernel ridge regression
        as a function of $\beta$
        --- the rate of the filter size $\filsize \in \Theta(d^\beta)$.
        (a) Rate exponent $\alpha$ of the $\risk \in \Theta(n^{\alpha})$ for
        the interpolator $\hat f_0$
        vs.\ the optimally ridge-regularized estimator $\fopt$.
        (b) Rate exponent of the variance and bias for the interpolator $\hat f_0$.
        For both illustrations,
        we choose $\hat f_0$ with $\Ln = 2$, $\Ls = 0.6$,
        and the ground truth $\fstar(x) = x_1x_2$.
        Lastly, $\betath$ denotes the threshold
        where the bias and variance terms in \cref{th:rates} match,
        and where we observe a phase transition between harmless and harmful interpolation.
        See \cref{app:opt_reg} for technical details.
    }
    \vspace{-0.2in}
    \label{fig:theory}
\end{figure}

\pseudoparagraph{Bias-variance trade-off}
We conclude by discussing how the phase transition
arises from a (statistical) bias and variance trade-off
for the min-RKHS-norm interpolator
as a function of $\beta$,
reflected in \cref{th:rates} when setting $\Lr =0$
and illustrated in \cref{fig:tradeoff}.
While the statistical bias monotonically decreases with decreasing $\beta$
(i.e., increasing strength of the inductive bias),
the variance follows a multiple descent curve
with increasing minima as $\beta$ decreases.
Hence, analogous to the observations in \cite{donhauser22a}
for linear max-$\ell_p$-margin/min-$\ell_p$-norm interpolators,
the interpolator achieves its optimal performance
at  a  $\beta \in (0,1)$,
and therefore at a moderate inductive bias.
Finally, we note that \citet{liang2020multiple}
previously observed a multiple descent curve for the variance,
but as a function of input dimension
and without any connection to structural biases.
\vspace{-0.05in}

\section{Experiments}
\label{sec:empirical}
\vspace{-0.05in}
We now empirically study whether the phase transition phenomenon
that we prove for kernel regression
persists for deep neural networks with feature learning.
More precisely, we present controlled experiments to investigate
how the strength of a CNN's inductive bias
influences if interpolating noisy data is harmless.
In practice, the inductive bias of a neural network varies
by way of design choices such as the architecture
(e.g., convolutional vs.\ fully-connected vs.\ graph networks)
or the training procedure (e.g., data augmentation, adversarial training).
We focus on two examples:
convolutional filter size that we vary via the architecture,
and rotational invariance via data augmentation.
To isolate the effects of inductive bias and provide conclusive results,
we use datasets where we know a priori that the ground truth
exhibits a simple structure that matches the networks' inductive bias.
See \cref{sec:experimentdetails} for experimental details.

Analogous to ridge regularization for kernels,
we use early stopping as a mechanism to prevent noise fitting.
Our experiments compare optimally early-stopped CNNs to their interpolating versions.
This highlights a trend that
mirrors our theoretical results:
the stronger the inductive bias of a neural network grows,
the more harmful interpolation becomes.
These results suggest exciting future work:
proving this trend for models with feature learning.
\vspace{-0.05in}

\subsection{Filter size of CNNs on synthetic images}
\label{ssec:empirical_filters}
\vspace{-0.05in}
In a first experiment, we study the impact of filter size on
the generalization of interpolating CNNs.
As a reminder, small filter sizes yield functions that depend
nonlinearly only on local neighborhoods of the input features.
To clearly isolate the effects of filter size,
we choose a special architecture on a synthetic classification problem
such that the true label function is indeed a CNN with small filter size.
Concretely, we generate images of size $32 \times 32$
containing scattered circles (negative class) and crosses (positive class)
with size at most $5 \times 5$.
Thus, decreasing filter size down to $5 \times 5$ corresponds to a stronger inductive bias.
Motivated by our theory, we hypothesize that interpolating noisy data
is harmful with a small filter size,
but harmless when using a large filter size.

\pseudoparagraph{Training setup}
In the experiments, we use CNNs with a single convolutional layer,
followed by global spatial max pooling and two dense layers.
We train those CNNs with different filter sizes on $200$ training samples
(either noiseless or with $20\%$ label flips)
to minimize the logistic loss and achieve zero training error.
We repeat all experiments over $5$ random datasets with
$15$ optimizations per dataset and filter size,
and report the average $0$-$1$-error for $100$k test samples per dataset.
For a detailed discussion on the choice of hyperparameters
and more experimental details, see \cref{ssec:experimentdetails_setup_filters}.

\pseudoparagraph{Results}
First, the noiseless error curves (dashed) in \cref{fig:filters_error_synthetic}
confirm the common intuition
that the strongest inductive bias (matching the ground truth) at size $5$
yields the lowest test error.
More interestingly, for $20\%$ training noise (solid),
\cref{fig:filters_error_synthetic} reveals a similar phase transition as \cref{th:rates}
and confirms our hypothesis:
Models with weak inductive biases (large filter sizes)
exhibit harmless interpolation,
as indicated by the matching test error of interpolating (blue)
and optimally early-stopped (yellow) models.
In contrast, as filter size decreases,
models with a strong inductive bias (small filter sizes)
suffer from an increasing gap in test errors
when interpolating versus using optimal early stopping.
Furthermore, \cref{fig:filters_error_es}
reflects the dual perspective of the phase transition
as presented in \cref{th:trainerror}
under optimal early stopping:
models with a small filter size
entirely avoid fitting training noise,
such that the training error on the noisy training subset equals $100\%$,
while models with a large filter size interpolate the noise.

\begin{figure}[t!]
    \centering
    \begin{subfigure}[l]{\figsixcol}
        \centering
        \begin{subfigure}[t]{\figsixcol}
            \centering
            \includegraphics[width=\linewidth]{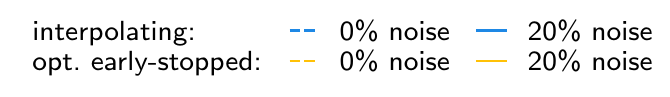}
        \end{subfigure}
        \begin{subfigure}[b]{\figfivecol}
            \centering
            \includegraphics[width=\linewidth]{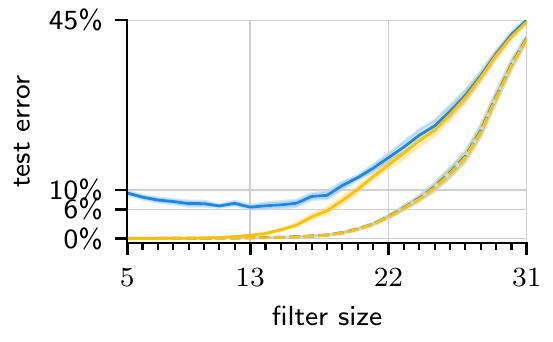}
        \end{subfigure}
        \caption{Test error}
        \label{fig:filters_error_synthetic}
    \end{subfigure}
    \hfill%
    \begin{subfigure}[r]{\figsixcol}
        \centering
        \begin{subfigure}[t]{\figsixcol}
            \centering
            \includegraphics[width=\linewidth]{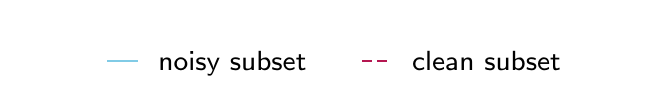}
        \end{subfigure}
        \begin{subfigure}[b]{\figfivecol}
            \centering
            \includegraphics[width=\linewidth]{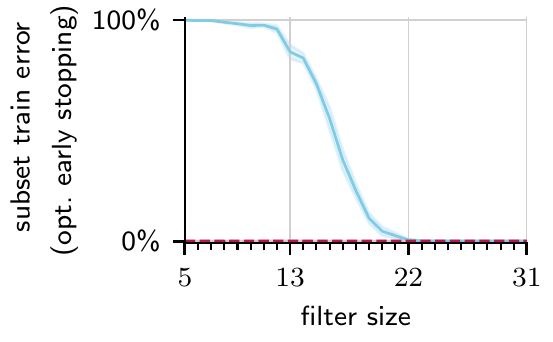}
        \end{subfigure}
        \caption{Training error of optimally early-stopped models}
        \label{fig:filters_error_es}
    \end{subfigure}
    \caption{
        Convolutional neural network experiments with varying filter size on synthetic image data.
        (a) For noisy data, small filter sizes (strong inductive bias) induce a gap between
        the generalization performance of interpolating (blue)
        vs.\ optimally early-stopped models (yellow).
        The gap vanishes
        as the inductive bias decreases (i.e., filter size increases).
        For noiseless data (dashed), interpolation is always harmless.
        (b) Training error of the optimally early-stopped model (optimized for test error)
        on the noisy and clean subsets of a training set with $20\%$ label noise.
        Under optimal early stopping,
        models with a strong inductive bias
        ignore all noisy samples ($100\%$ error on the noisy subset),
        while models with a weak inductive bias fit all noisy training samples
        ($0\%$ error on the noisy subset).
        All lines show the mean over five random datasets,
        and shaded areas the standard error;
        see \cref{ssec:empirical_filters} for the experiment setup.
    }
    \vspace{-0.2in}
    \label{fig:filters_error}
\end{figure}

\pseudoparagraph{Difference to \ddd{}}
One might suspect that our empirical observations simply
reflect another form of \ddd{} \citep{belkin_2019}.
As a CNN's filter size increases (inductive bias becomes weaker),
so does the number of parameters and degree of overparameterization.
Thus, \ddd{} predicts vanishing benefits of regularization due to model size
for weak inductive biases.
Nevertheless, we argue that
the phenomenon we observe here is distinct,
and provide an extended discussion in \cref{ssec:experimentdetails_ddd}.
In short, we choose sufficiently large networks and tune their hyperparameters
to ensure that all models interpolate and yield small training loss,
even for filter size $5$ and $20\%$ training noise.
To justify this approach, we repeat a subset of the experiments
while significantly increasing the convolutional layer width.
As the number of parameters increases for a fixed filter size,
\ddd{} would predict that the benefits of optimal early stopping vanish.
However, we observe that our phenomenon persists.
In particular, for filter size $5$ (strongest inductive bias),
the test error gap between interpolating and optimally early-stopped
models remains large.

\subsection{Rotational invariance of \wrn{}s on satellite images}
\label{ssec:empirical_rotations}
In a second experiment,
we investigate rotational invariance as an inductive bias for CNNs
whenever true labels are independent of an image's rotation.
Our experiments control inductive bias strength
by fitting models on multiple rotated versions
of an original training dataset,
effectively performing varying degrees of data augmentation.\footnote{
Data augmentation techniques can efficiently enforce rotational invariance;
see, e.g., \cite{yang19}.
}
As an example dataset with a rotationally invariant ground truth,
we classify satellite images from the EuroSAT dataset \citep{Helber18}
into $10$ types of land usage.
Because the true labels are independent of image orientation,
we expect rotational invariance to be a particularly fitting
inductive bias for this task.

\pseudoparagraph{Training and test setup}
For computational reasons, we subsample the original EuroSAT training set
to $7680$ raw training and $10$k raw test samples.
In the noisy case, we replace $20\%$ of the raw training labels with
a wrong label chosen uniformly at random.
We then vary the strength of the inductive bias towards rotational invariance
by augmenting the dataset with an increasing number of $k$ rotated versions
of itself.
For each sample, we use $k$ equal-spaced angles spanning $360^\circ$,
plus a random offset.
Note that training noise applies before rotations,
so that all rotated versions of the same image share the same label.
We then center-crop all rotated images
such that they only contain valid pixels.
In all experiments, we fit \wrn{}s \citep{Zagoruyko16}
on the augmented training set
for $5$ different network initializations.
We evaluate the $0$-$1$-error on the randomly rotated test samples
to avoid distribution shift effects from image interpolation.
All random rotations are the same for all experiments
and stay fixed throughout training.
See \cref{ssec:experimentdetails_setup_rotations} for more experimental details.
Lastly, we perform additional experiments with larger models
to differentiate from \ddd{};
see \cref{ssec:experimentdetails_ddd} for the results and further discussions.

\pseudoparagraph{Results}
Similar to the previous subsection,
\cref{fig:rotations_error} corroborates our hypothesis under rotational invariance:
stronger inductive biases result in lower test errors on noiseless data,
but an increased gap between the test errors of interpolating
and optimally early-stopped models.
In contrast to filter size, the phase transition is more abrupt;
invariance to $180^\circ$ rotations
already prevents harmless interpolation.
\Cref{fig:rotations_es} confirms this from a dual perspective,
since all models with some rotational invariance
cannot fit noisy samples for optimal generalization.

\begin{figure}[t!]
    \centering
    \begin{subfigure}[l]{\figsixcol}
        \centering
        \begin{subfigure}[t]{\figsixcol}
            \centering
            \includegraphics[width=\linewidth]{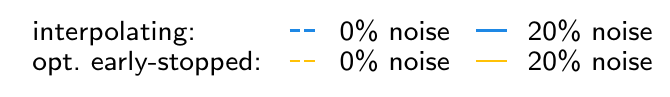}
        \end{subfigure}
        \begin{subfigure}[b]{\figfivecol}
            \centering
            \includegraphics[width=\linewidth]{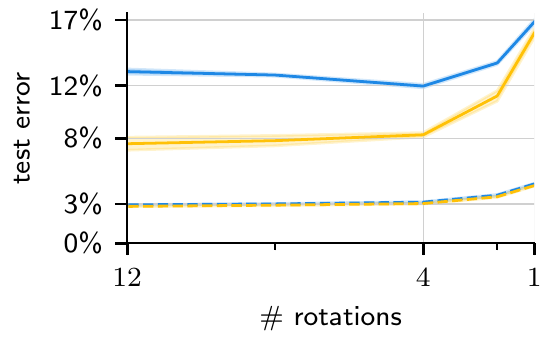}
        \end{subfigure}
        \caption{Test error}
        \label{fig:rotations_error}
    \end{subfigure}
    \hfill%
    \begin{subfigure}[r]{\figsixcol}
        \centering
        \begin{subfigure}[t]{\figsixcol}
            \centering
            \includegraphics[width=\linewidth]{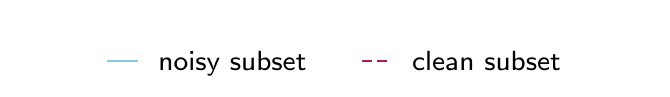}
        \end{subfigure}
        \begin{subfigure}[b]{\figfivecol}
            \centering
            \includegraphics[width=\linewidth]{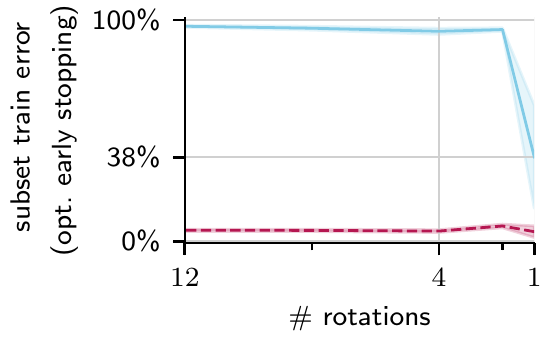}
        \end{subfigure}
        \caption{Training error of optimally early-stopped models}
        \label{fig:rotations_es}
    \end{subfigure}
    \caption{
        Varying degrees of rotational invariance when fitting \wrn{}s on satellite images.
        (a) For noisy data (solid), a strong bias towards rotational invariance
        (via the number of augmented rotations)
        induces a gap between the generalization performance of interpolating (blue)
        vs.\ optimally early-stopped models (yellow).
        The gap decreases as the inductive bias decreases (\# rotations decreases).
        (b) Training error of optimally early-stopped models (w.r.t.\ the test error)
        on the noisy and clean subsets of a training set with $20\%$ label noise:
        For maximum rotational invariance ($12$ rotations),
        optimally early-stopped models avoid fitting noisy data
        (close to $100\%$ training error on noisy subset),
        yet no rotational invariance ($1$ rotation)
        requires fitting noise (less than $50\%$ training error on noisy subset)
        for optimal generalization.
        All lines show the mean over five optimization runs,
        and shaded areas the standard error;
        see \cref{ssec:empirical_rotations} for the experiment setup.
    }\label{fig:rotations}
    \vspace{-0.2in}
\end{figure}

\section{Proof of the main result}
\label{sec:proof}
The proof of the main result, \cref{th:rates}, proceeds in two steps:
First, \cref{subsec:proof_s1} presents a fixed-design result
that yields matching upper and lower bounds
for the prediction error of general kernels
under additional conditions.
Second, in \cref{subsec:proof_s2},
we show that the setting of \cref{th:rates}
satisfies those conditions
with high probability over dataset draws.

\paragraph{Notation}
Assuming that inputs are draws
from a data distribution $\nu$ (i.e., $x, x' \sim \nu$),
we can decompose and divide any continuous, positive semi-definite kernel function as
\begin{equation}
\label{eq:kerneldecomp}
    \Kf(x,x') = \sum_{\i = 1}^{\infty} \lambda_{\i}\Pf_\i(x)\Pf_\i(x') =
    \underbrace{
        \sum_{\i = 1}^{\m} \lambda_{\i}\Pf_\i(x)\Pf_\i(x')
    }_{\defeq \Kfm(x,x')}
    + \underbrace{
        \sum_{\i = \m + 1}^{\infty} \lambda_{\i}\Pf_\i(x)\Pf_\i(x')
    }_{\defeq \KfM(x,x')},
\end{equation}
where $\{\Pf_\i\}_{\i \geq 1}$ is an orthonormal basis of the RKHS
induced by $\inner{f,g}_\nu \coloneqq \E_{x\sim\nu}[f(x)g(x)]$
and the eigenvalues $\lambda_{\i}$ are sorted in descending order.
In the following, we write $[\cdot]_{i,j}$ to refer
to the entry in row $i$ and column $j$
of a matrix.
Then, we define the empirical kernel matrix for $\Kf$ as $\K\in \R^{n\times n}$
with $[\K]_{i,j} \coloneqq \Kf(x_i,x_j)$,
and analogously the truncated versions $\Km$ and $\KM$ for $\Kfm$ and $\KfM$, respectively.
Next, we utilize the matrices $\Pm \in \R^{n\times \m}$ with $[\Pm]_{i,l} \coloneqq \Pf_l(x_i)$,
and $\Dm \coloneqq \text{diag}(\lambda_1,\dots,\lambda_\m)$.
We further use the squared kernel $\Sf(x,x') \coloneqq \E_{z \sim \nu}[\Kf(x,z)\Kf(z,x')]$,
its truncated versions $\Sfm$ and $\SfM$,
as well as the corresponding empirical kernel matrices
$\S,\Sm,\SM \in \mathbb R^{n\times n}$.
Next, for a symmetric positive-definite matrix,
we write $\mineig{\cdot}$ and $\mumax{\cdot}$ (or $\norm{\cdot}$)
to indicate the min and max eigenvalue, respectively,
and $\eigval_\iii(\cdot)$ for the $\iii$-th eigenvalue
in decreasing order.
Finally, we use $\innersize{\cdot,\cdot}{d}$
for the Euclidean inner product in $\R^d$.

\subsection{Generalization bound for fixed-design}
\label{subsec:proof_s1}
First, \cref{th:vidya} provides tight fixed-design bounds
for the prediction error.
\begin{theorem}[Generalization bound for fixed-design]
\label{th:vidya}
Let $\Kf$ be a kernel that
under a distribution $\nu$
decomposes as $\Kf(x,x') = \sum_\i \lambda_\i \Pf_\i(x)\Pf_\i(x')$
with
$\E_{x\sim\nu}[\Pf_\i(x)\Pf_{\i'}(x)] = \delta_{\i,\i'}$,
and $\{(x_i,y_i)\}_{i = 1}^n$ be a dataset
with $y_i = \fstar(x_i) + \epsilon_i$
for zero-mean $\sigma^2$-variance i.i.d.\ noise $\epsilon_i$
and ground truth $\fstar$.
Define
$\tmin \defeq \min\left\{\frac{n\lambda_\m}{\max\{\reg,1\}},1\right\}$,
$\tmax \defeq \max\left\{\frac{n\lambda_{\m+1}}{\max\{\reg,1\}},1\right\}$,
$\rmin \defeq \frac{\mineig{\KM}+\reg}{\max\{\reg,1\}}$,
$\rmax \defeq \frac{\norm{\KM}+ \reg}{\max\{\reg,1\}}$.
Then, for any $\m \in \mathbb{N}$ such that $\rmin > 0$ and
\begin{equation}
    \label{eq:ass1}
    \norm*{ \Pm^\t\Pm/n-\I_\m} \leq \frac{1}{2},
\end{equation}
the KRR estimate $\freg$ in \cref{eq:krr_def} for any $\reg \geq 0$ has a variance upper and lower bounded by
\begin{equation*}
    \frac{\rmin^2\tmin^2}{2\rmax^2(1.5+\rmin)^2}
    \frac{\m}{n}
    + \frac{\sum_{\iii = \m+1}^{n}{\eigval_\iii{(\SM)}}}{\rmax^2\max\{\reg,1\}^2}
    \leq \var(\hat f_{\reg})/\sigma^2
    \leq 6\frac{\rmax^2}{\rmin^2}
    \frac{\m}{n}
    + \frac{\Tr\left(\SM\right)}{\rmin^2\max\{\reg,1\}^2} .
\end{equation*}
Furthermore, for any ground truth that can be expressed as
$\fstar(x) = \sum_{\i = 1}^\m \a_\i \Pf_\i(x)$
with $\a \in \R_+^m$ and $\Pf_\i$ as defined in \cref{eq:kerneldecomp},
the bias is upper and lower bounded by
 \begin{equation*}
    \frac{\rmin^2\tmin^2}{(1.5+\rmin)^2}
    \max\{\reg,1\}^2
    \frac{\norm{{\Dm^{-1}\a}}^2}{n^2}
    \leq \bias^2(\hat f_{\reg})
    \leq 4 \left(\rmax^2+1.5\frac{\rmax^3}{\rmin^2}\right) \tmax \max\{\reg,1\}^2 \frac{\norm{{\Dm^{-1}\a}}^2}{n^2}.
\end{equation*}
\end{theorem}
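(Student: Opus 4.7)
The plan is to combine the closed-form of kernel ridge regression with the Sherman--Morrison--Woodbury (SMW) identity applied to the decomposition $\K = \Km + \KM = \Pm \Dm \Pm^\t + \KM$. First, I would record that $\freg(x) = k(x)^\t (\K+\reg \I)^{-1} y$, where $k(x) \in \R^n$ collects the kernel values between $x$ and the training inputs. Substituting $y = \fstar(X) + \e$ and taking expectation over the noise gives
\begin{align*}
    \var(\freg) &= \sigma^2 \, \Tr\bigl((\K+\reg \I)^{-2}\, \S\bigr), \\
    \bias^2(\freg) &= \E_x\bigl[\bigl(k(x)^\t (\K+\reg \I)^{-1} \fstar(X) - \fstar(x)\bigr)^2\bigr],
\end{align*}
since $\E_x[k(x)k(x)^\t] = \S$ by definition of the squared kernel. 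Because $\fstar = \sum_{\i=1}^\m \a_\i \Pf_\i$, the vector $\fstar(X)$ equals $\Pm \a$ and therefore lies in the column span of $\Pm$. Splitting $\S = \Sm + \SM$ with $\Sm = \Pm \Dm^2 \Pm^\t$ then reduces both quantities to matrix traces that only couple $(\K+\reg \I)^{-1}$ with the two truncated pieces.

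Next, I would apply SMW to the rank-$\m$ perturbation $\Km = \Pm \Dm \Pm^\t$:
\begin{equation*}
    (\K+\reg \I)^{-1} = B^{-1} - B^{-1} \Pm M^{-1} \Pm^\t B^{-1}, \quad B := \KM + \reg \I, \quad M := \Dm^{-1} + \Pm^\t B^{-1} \Pm .
\end{equation*}
Condition \eqref{eq:ass1} sandwiches $\Pm^\t \Pm / n$ between $\tfrac{1}{2} \I_\m$ and $\tfrac{3}{2} \I_\m$, while by definition $\rmin \max\{\reg,1\} \I \preceq B \preceq \rmax \max\{\reg,1\} \I$. Composing the two yields
\begin{equation*}
    \frac{n}{2 \rmax \max\{\reg,1\}} \I_\m \preceq \Pm^\t B^{-1} \Pm \preceq \frac{3 n}{2 \rmin \max\{\reg,1\}} \I_\m,
\end{equation*}
so $M$ is well-conditioned, and $M^{-1}$ interpolates between a multiple of $\frac{\max\{\reg,1\}}{n}\I_\m$ (when $n\lambda_\m \gg \max\{\reg,1\}$, captured by $\tmin = 1$) and a multiple of $\Dm$ (otherwise, controlled by $\tmin, \tmax$). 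This is precisely where the ratios $\tmin, \tmax, \rmin, \rmax$ all enter naturally.

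I would then plug the SMW identity into the variance and bias formulas and treat the signal ($\leq \m$) and tail ($> \m$) contributions separately. For the variance, $\Tr((\K+\reg \I)^{-2}\, \Sm)$ collapses, via $\Sm = \Pm \Dm^2 \Pm^\t$, to a trace involving only $\Dm$, $\Pm^\t B^{-1} \Pm$, and $M^{-1}$, which by the sandwich bounds equals $\Theta(\m/n)$ with the stated $\rmax^2/\rmin^2$ factor and the $\tmin$ correction in the lower bound. The tail contribution $\Tr((\K+\reg \I)^{-2}\, \SM)$ is upper bounded via $(\K+\reg \I)^{-1} \preceq B^{-1}$ to give $\Tr(\SM)/(\rmin\max\{\reg,1\})^2$; the matching lower bound follows because the subtracted SMW correction is psd of rank $\m$ and can thus cancel at most the top $\m$ eigenvalues of $\SM$, leaving $\sum_{\iii = \m+1}^n \eigval_\iii(\SM)/(\rmax\max\{\reg,1\})^2$. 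An analogous calculation for the bias uses the SMW consequence $(\K+\reg \I)^{-1} \Pm = B^{-1} \Pm M^{-1} \Dm^{-1}$ (from $M - \Pm^\t B^{-1} \Pm = \Dm^{-1}$), so everything funnels through $M^{-1}$ and produces the claimed $\max\{\reg,1\}^2 \norm{\Dm^{-1}\a}^2/n^2$ term with the $\tmin, \tmax$ modifiers.

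The main obstacle I anticipate is matching upper and lower constants sharply. While the variance-tail upper bound is a one-line operator inequality, the lower bound must absorb the rank-$\m$ SMW correction through a careful Weyl/interlacing argument to extract $\sum_{\iii = \m+1}^n \eigval_\iii(\SM)$ rather than the weaker $\Tr(\SM)$ minus the top-$\m$ part of some other matrix. Likewise, keeping the $\tmin$ factor sharp in the regime where $n\lambda_\m \lesssim \max\{\reg,1\}$ --- where $M$ transitions from being dominated by $\Pm^\t B^{-1} \Pm$ to being dominated by $\Dm^{-1}$ --- is what forces the mixed constants such as $(1.5 + \rmin)^2$ in the bounds rather than cleaner ratios. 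Once those two pinch-points are handled, the remainder is bookkeeping of positive constants.
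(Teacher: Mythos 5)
Your proposal is correct and follows essentially the same route as the paper: start from the closed-form expressions for bias and variance, decompose $\S = \Pm\Dm^2\Pm^\t + \SM$, apply Woodbury to $\K + \reg\I = \HM + \Pm\Dm\Pm^\t$ (the paper phrases this through Lemma~20 of \citet{bartlett_2020}, but the identity is the same and your explicit form of the rank-$\m$ correction $\Am = \HM^{-1}\Pm(\Dm^{-1} + \Pm^\t\HM^{-1}\Pm)^{-1}\Pm^\t\HM^{-1}$ coincides with theirs), and sandwich all quantities using the operator bounds implied by \cref{eq:ass1} on $\Pm^\t\Pm/n$ and by $\rmin,\rmax$ on $\HM$. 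Your anticipated obstacle for the variance-tail lower bound is precisely the step the paper resolves via an orthonormal basis adapted to $\col(\Am)^\perp$ (killing the cross-terms) followed by Cauchy interlacing to extract $\sum_{\iii=\m+1}^n \eigval_\iii(\SM)$, so the plan is sound.
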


See \cref{app:context_agnostic} for the proof.
Note that this result holds for any fixed-design dataset.
We derive the main ideas from the proofs in
\cite{bartlett_2020,Tsigler2021},
where the authors establish tight bounds for the min-$\ell_2$-norm interpolator
on independent sub-Gaussian features.
\begin{remark}[Comparison with \cite{McRae2022}]
The upper bound for the bias in Theorem~1 from \citet{McRae2022}
depends on the suboptimal term ${\norm{{\Dm^{-1/2}\a}}}/{n}$,
but also applies to more general ground truths.
We improve that upper bound in \cref{th:vidya} and present a matching lower bound.
\end{remark}

\subsection{Proof of \texorpdfstring{\cref{th:rates}}{Theorem~\ref{th:rates}}}
\label{subsec:proof_s2}

Throughout the remainder of the proof,
all statements hold for the setting in \cref{subsec:setting}
and under the  assumptions of \cref{th:rates},
especially \cref{ass:genericity} on $\kernel$,
$n\in \Theta(d^\Ln)$, and $\max\{\reg,1\} \in \Theta(d^{\Lr})$.
Furthermore, as in \cref{th:rates}, we use $\delta,\bd>0$
with $\delta\coloneqq \frac{\Ln-\Lr -1}\beta - \floor*{\frac{\Ln-\Lr -1}\beta}$
and $\bd \coloneqq \frac{\Ln -1}\beta  - \floor*{\frac{\Ln -1}\beta}$.
We show that there exists a particular $\m$ for which the conditions
in \cref{th:vidya} hold and we can control the terms in the bias and variance bounds.

\paragraph{Step 1: Conditions for the bounds in \cref{th:vidya}}
We first derive sufficient conditions on $m$ such that
the conditions on $\Pm$ and $\fstar$ in \cref{th:vidya} hold with high probability.
The following standard concentration result
shows that all $\m \ll n$ satisfy \cref{eq:ass1} with high probability.
\begin{lemma}[Corollary of Theorem~5.44 in \cite{Vershynin2012}]
\label{lem:as_1_verification}
For $d$ large enough, with probability at least $1-\c d^{-\beta\bd}$,
all $\m \in \bigO(n\cdot \filsize^{-\bd})$ satisfy \cref{eq:ass1}.
\end{lemma}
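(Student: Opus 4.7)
The plan is to apply a standard matrix concentration inequality (Vershynin's Theorem~5.44) to the empirical second-moment matrix $\Pm^\t \Pm/n$, whose expectation is $\I_m$ by orthonormality of $\{\Pf_l\}$ under $\nu = \U(\{-1,1\}^d)$. I would first carry out the calculation for the largest admissible $\m_\star \asymp n\filsize^{-\bd}$ in the stated range, and then transfer the bound to all smaller $\m$ via a principal-submatrix argument: for every $\m \leq \m_\star$, the matrix $\Pm^\t\Pm/n - \I_\m$ is the leading $\m\times\m$ principal submatrix of $\mathbf{\Psi}_{\leq \m_\star}^\t\mathbf{\Psi}_{\leq \m_\star}/n - \I_{\m_\star}$, so its spectral norm cannot exceed that of the full matrix.

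The crucial input to Vershynin's inequality is a uniform bound on the Euclidean norms of the rows of $\mathbf{\Psi}_{\leq \m_\star}$. Here I would exploit the hypercube structure: the eigenfunctions of the cyclic convolutional kernel under $\nu$ lie in the parity basis $\chi_S(x) = \prod_{i\in S} x_i$, and a filter of size $\filsize$ restricts these eigenfunctions to the span of parities whose support lies, up to cyclic shift, in a window of length $\filsize$. Choosing $\m_\star$ at a cutoff that does not split any eigenspace, the first $\m_\star$ eigenfunctions span the same subspace as some collection $\{\chi_S\}_{S\in\mathcal{S}}$ of $\m_\star$ orthonormal parities. Since the two orthonormal bases of this subspace are related by an orthogonal transformation and $\chi_S(x)^2 \equiv 1$, we obtain
\[
\sum_{l=1}^{\m_\star} \Pf_l(x)^2 \;=\; \sum_{S\in\mathcal{S}} \chi_S(x)^2 \;=\; \m_\star
\]
for every $x \in \{-1,1\}^d$, so each row of $\mathbf{\Psi}_{\leq \m_\star}$ has deterministic norm $\sqrt{\m_\star}$.

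Plugging this row bound and the identity covariance into Vershynin~5.44 yields
\[
\Pr\Bigl(\bigl\|\mathbf{\Psi}_{\leq \m_\star}^\t \mathbf{\Psi}_{\leq \m_\star}/n - \I_{\m_\star}\bigr\| > \tfrac{1}{2}\Bigr) \;\leq\; 2\m_\star \exp(-cn/\m_\star).
\]
Using $n/\m_\star \gtrsim \filsize^{\bd} = d^{\beta\bd}$, this is at most $2 d^{\Ln}\exp(-c d^{\beta\bd})$, which is comfortably (in fact super-polynomially) smaller than $d^{-\beta\bd}$ for $d$ large enough. Combined with the submatrix argument from the first paragraph, this gives \cref{eq:ass1} simultaneously for all $\m \in \bigO(n\filsize^{-\bd})$ on a single high-probability event.

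I expect the concentration step and submatrix transfer to be essentially routine; the technical heart of the argument is the row-norm identity. Specifically, one must choose the cutoff $\m_\star$ carefully at a boundary between two distinct eigenvalues so that no eigenspace is split by the truncation, and account for the orbit structure of cyclic shifts (including parities with nontrivial shift stabilizer), which mildly complicates the enumeration of parities in $\mathcal{S}$ but does not affect the order of magnitude of the row bound.
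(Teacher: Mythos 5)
Your proposal is correct and follows essentially the same route as the paper: apply Vershynin's Theorem~5.44 to $\P_{\leq\mlarge}$ for the largest admissible $\mlarge \asymp n\filsize^{-\bd}$, use the hypercube structure to get deterministic row norms $\sqrt{\mlarge}$, and then extend to all smaller $\m$ via the principal submatrix monotonicity of the operator norm of $\Pm^\t\Pm/n - \I_\m$.

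One small simplification is available, however. Your paragraph on the row-norm bound invokes an orthogonal change of basis between $\{\Pf_\ii\}$ and a set of parities $\{\chi_S\}$, and worries about choosing $\m_\star$ so that no eigenspace is split. This machinery is unnecessary here: by \cref{prop:kern_eig}, the eigenfunctions of the cyclic convolutional kernel over $\2^d$ \emph{are} the parity functions $\Yf_S(x) = \prod_{j\in S} [x]_j$ themselves, not merely some orthogonal rotation of them. Thus $\Pf_\ii(x)^2 = 1$ holds entry-by-entry for every $\ii$ and every $x \in \2^d$, giving the row norm $\sqrt{\m}$ directly for \emph{any} truncation index $\m$, regardless of whether the cutoff splits an eigenvalue multiplicity. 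The paper's proof uses exactly this pointwise identity, so the orbit-structure and eigenspace-splitting caveats in your last paragraph can be dropped entirely.
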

See \cref{ssec:matrix_concentration_early} for the proof.
Simultaneously, to ensure that $\fstar$ is contained
in the span of the first $\m$ eigenfunctions,
$\m$ must be sufficiently large.
We formalize this in the following lemma.
\begin{lemma}[Bias bound condition]
    \label{lem:ground_truth_rate}
    Consider a kernel as in \cref{th:rates} satisfying \cref{ass:genericity},
    and a ground truth $\fstar(x) = \prod_{j = 1}^{\Lgt} x_j$
    with $1 \leq \Lgt \leq \ceil*{\frac{{\Ln-\Lr-1}}\beta}$.
    Then, for any $\m$ with
    $\lambda_{\m} \in o{\left(\frac{1}{d \filsize^{\Lgt-1}}\right)}$
    and $d$ sufficiently large,
    $\fstar$ is in the span of the first $\m$ eigenfunctions
    and $ \norm*{\D_{\leq \m}^{-1}\a} \in \Theta\left(d \filsize^{\Lgt-1}\right)$.
\end{lemma}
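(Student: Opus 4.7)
The plan is to diagonalize the convolutional kernel in the character basis of $\{-1,1\}^d$, identify the eigenvalue of $\chi_{\{1,\ldots,\Lgt\}}=\fstar$, and then read off both the membership claim and the norm computation.

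First, I would write each patch kernel $\kappa(\innersize{u,v}{\filsize}/\filsize)$ on $\{-1,1\}^\filsize$ in the character basis as $\sum_{T\subseteq[\filsize]}\mu_{|T|}\chi_T(u)\chi_T(v)$, where the coefficient depends only on $|T|$ by symmetry, and then push this expansion through the cyclic average in \cref{eq:conv_kern_def}. The result is
\begin{equation*}
    \Kf(x,x')=\sum_{T\subseteq[d]}\lambda_T\,\chi_T(x)\chi_T(x'),\qquad \lambda_T=\frac{\mu_{|T|}\,N_\filsize(T)}{d},
\end{equation*}
where $N_\filsize(T)$ counts the cyclic windows of length $\filsize$ containing $T$; this equals $\filsize-\diam(T)+1$ when the cyclic diameter $\diam(T)$ of $T$ is at most $\filsize$ and zero otherwise. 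Since $\{\chi_T\}_{T\subseteq[d]}$ is an orthonormal basis of $L^2(\{-1,1\}^d,\U)$, this is a valid eigendecomposition; I would pick the characters themselves as an orthonormal basis within each (possibly degenerate) eigenspace so that the decomposition matches the form of \cref{eq:kerneldecomp}.

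Second, I would Taylor expand $\kappa$ at the origin and use the genericity condition (\cref{ass:genericity}) to show that
\begin{equation*}
    \mu_t=\frac{\kappa^{(t)}(0)}{\filsize^t}+O\!\left(\frac{1}{\filsize^{t+1}}\right)\in\Theta(\filsize^{-t})
\end{equation*}
for every $t\leq\Lgt$. The only nonzero contributions to $\mu_t=\E_w[\kappa(\sum_i w_i/\filsize)\,w_1\cdots w_t]$ come from Taylor degrees $k\geq t$ with $k-t$ even, and a short multinomial counting argument shows that the degree-$k$ contribution is $O(\filsize^{-t-(k-t)/2})$, so the $k=t$ term dominates. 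Specializing to $S=\{1,\ldots,\Lgt\}$, which has $\diam(S)=\Lgt$, this gives
\begin{equation*}
    \lambda_S=\frac{\mu_\Lgt(\filsize-\Lgt+1)}{d}\in\Theta\!\left(\frac{1}{d\filsize^{\Lgt-1}}\right).
\end{equation*}

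Both conclusions now follow directly. The hypothesis $\lambda_\m\in o(1/(d\filsize^{\Lgt-1}))$ forces $\lambda_\m<\lambda_S$ for all sufficiently large $d$, so $\chi_S$ lies in the span of the first $\m$ eigenfunctions; and since $\fstar=\chi_S$ is itself a chosen basis element, the coefficient vector $\a$ has a single nonzero entry of value $1$, yielding $\norm*{\D_{\leq\m}^{-1}\a}=1/\lambda_S\in\Theta(d\filsize^{\Lgt-1})$. The main obstacle I foresee lies in the second step: making $\mu_t\in\Theta(\filsize^{-t})$ rigorous requires the genericity condition both to guarantee $\kappa^{(t)}(0)\neq 0$ for all relevant degrees $t\leq\ceil*{(\Ln-\Lr-1)/\beta}$ and to ensure that higher-order Taylor corrections cannot cancel the leading term. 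Once this scaling together with the window-count formula are in place, the remaining membership and norm computations are one-line consequences of the character eigendecomposition.
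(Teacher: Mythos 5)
Your high-level strategy --- identify $\fstar = \Yf_{S^*}$ for $S^* = \{1,\dots,\Lgt\}$, pin down the rate of $\lambda_{S^*}$, compare it to $\lambda_\m$, and read off $\norm{\Dm^{-1}\a} = 1/\lambda_{S^*}$ --- is exactly the paper's proof, and your final conclusions are correct. The difference is in the middle step. You rederive the order of the patch-kernel coefficient $\mu_t$ by Taylor-expanding $\kappa$ and counting multinomial contributions, whereas the paper reads $\xi_{\Lgt}^{(\filsize)} \in \Theta(1)$ directly off \cref{ass:genericity} (via \cref{eq:ass_lb,eq:ass_ub_2}, applicable since $\Lgt \leq \T$). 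Indeed your $\mu_t$ equals $\xi_t^{(\filsize)}/\B(t,\filsize)$, so $\mu_t \in \Theta(\filsize^{-t})$ follows at once from $\xi_t^{(\filsize)}\in\Theta(1)$ and $\B(t,\filsize)\in\Theta(\filsize^{t})$, with no Taylor argument at all. This matters because \cref{ass:genericity} is stated at the level of the expansion coefficients $\xi_\ii^{(\filsize)}$, not of the derivatives $\kappa^{(t)}(0)$; relating the two requires precisely the smoothness, series-convergence, and non-cancellation hypotheses you flag as the main obstacle --- hypotheses the paper never makes. Your Taylor route effectively re-proves, for analytic $\kappa$, a special case of why \cref{ass:genericity} is sensible, rather than using what the assumption actually gives you. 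Replacing your second step with the one-line coefficient identity makes the proof identical to the paper's and removes the obstacle you identified.
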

See \cref{subsec:proof_ground_truth_rate} for the proof.
Note that $\Lgt \geq 1$
follows from $\Lr < \Ln - 1$ in \cref{th:rates},
and allows us to focus on non-trivial ground truth functions.

\vspace{-0.1in}
\paragraph{Step 2: Concentration of the (squared) kernel matrix}
In a second step,
we show that there exists a set of $\m$ that
satisfy the sufficient conditions,
and for which the spectra
of the kernel matrix $\KM$ and squared kernel matrix $\SM$
concentrate.
\begin{lemma}[Tight bound conditions]
\label{lem:combined}
With probability at least $1-\c d^{-\beta\min\{\bd,1-\bd\}}$
uniformly over all $\Lr \in [0, \Ln - 1)$,
for $d$ sufficiently large and
any $\reg \in \R$, $m \in \N$
with $\max\{\reg,1\} \in \Theta(d^{\Lr})$,
$n\lambda_{\m} \in \Theta(\max\{\reg,1\})$, and
$m \in \bigO\left(\frac{n\filsize^{-\delta}}{\max\{\reg,1\}}\right)$,
we have
\begin{align*}
 \vspace{-0.1in}
    \rmin,\rmax \in \Theta(1),
    \quad
    \Tr(\SM) \in \bigO{\left( d^{\Lr}\filsize^{-\delta} +d^{\Lr}\filsize^{-(1-\delta)} \right)},
    \quad
    \sum_{\iii = \m+1}^{n}{\eigval_\iii{(\SM)}}
    \in \Omega{\left( d^{\Lr}\filsize^{-(1-\delta)} \right)}.
\end{align*}
\end{lemma}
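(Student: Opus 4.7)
The plan is to leverage the explicit eigenstructure of the convolutional kernel on $\{-1,1\}^d$ (as developed in \cref{app:hypercube} under \cref{ass:genericity}) together with sharp random-matrix concentration that exploits hypercontractivity of the uniform measure on the hypercube. I would first catalogue the eigenvalues of $\Kf$: each is indexed by a subset $S \subseteq \mathbb{Z}/d\mathbb{Z}$ of size $r$ that fits inside some cyclic window of length $\filsize$, with $\lambda_S = N(S,\filsize)\mu_r(\filsize)/d$, where $N(S,\filsize)\in[1,\filsize]$ counts the windows containing $S$ and $\mu_r(\filsize)\in\Theta(\filsize^{-r})$. This groups eigenvalues into \emph{levels} $r\in\N$, with level $r$ having multiplicity $\Theta(d^{1+\beta(r-1)})$ and eigenvalues on the scale $\Theta(d^{-1-\beta(r-1)})$ (with an in-level spread of a factor $\filsize$ coming from $N(S,\filsize)$). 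A direct counting argument then shows that the hypotheses $n\lambda_\m\in\Theta(\max\{\reg,1\})$ and $\m\in\bigO(n\filsize^{-\delta}/\max\{\reg,1\})$ place $\m$ just above a $(1-\filsize^{-\delta})$-fraction of level $J+1$, where $J=\lfloor(\Ln-\Lr-1)/\beta\rfloor$, so that the residual spectrum comprises an $\filsize^{-\delta}$-fraction of level $J+1$ together with all higher levels.

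To prove $\rmin,\rmax\in\Theta(1)$, I write $\KM = \sum_{k>\m}\lambda_k\phi_k\phi_k^\t$ with $\phi_k\coloneqq(\Pf_k(x_i))_{i=1}^n$. Orthonormality of the $\Pf_k$ under $\nu$ yields $\E[\KM]=c\,\I_n$ with $c=\sum_{k>\m}\lambda_k=\Theta(1)$ (each complete level above $J+1$ contributes $\Theta(1)$ to the trace and only finitely many levels remain). The task reduces to bounding $\|\KM-\E[\KM]\|_{\mathrm{op}}=o(\max\{\reg,1\})$ with high probability; a Frobenius bound is insufficient here because $n^2\sum_{k>\m}\lambda_k^2$ does not decay fast enough, so I would instead invoke the operator-norm concentration for polynomial kernels on the Boolean hypercube (developed by Mei--Montanari and Misiakiewicz--Montanari for exactly this regime) that relies on hypercontractivity of the uniform measure on $\{-1,1\}^d$. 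The resulting bound $\mineig{\KM},\|\KM\|\in\Theta(1)$ immediately gives $\rmin,\rmax\in\Theta(1)$ whether $\reg\le 1$ or $\reg>1$.

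For $\Tr(\SM)$, I use $\E[\SfM(x,x)]=\sum_{k>\m}\lambda_k^2$, hence $\E[\Tr(\SM)]=n\sum_{k>\m}\lambda_k^2$, and split by level. A crude bound on the residual of level $J+1$ (count $\le m_{J+1}$, each eigenvalue $\le\lambda_\m$) gives $\bigO(n\cdot m_{J+1}\lambda_\m^2)=\bigO(d^{\Lr}\filsize^{-\delta})$, level $J+2$ contributes $\Theta(n\cdot m_{J+2}(\lambda^{(J+2)})^2)=\Theta(d^{\Lr}\filsize^{-(1-\delta)})$, and higher levels are of lower order. Concentration of $\Tr(\SM)$ around its expectation (via the same hypercube tools) converts this into the high-probability upper bound. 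For the lower bound on $\sum_{\iii>\m}\eigval_\iii(\SM)$, the key step is to isolate the \emph{super-critical} block $\sum_{k:\,\text{level}(k)>J+1}\lambda_k^2\phi_k\phi_k^\t$, whose expectation is $c''\,\I_n$ with $c''=\Theta(d^{\Lr-\Ln}\filsize^{-(1-\delta)})$. The same concentration machinery shows that this block dominates $(c''/2)\,\I_n$ in PSD order; since $\SM$ in turn dominates the super-critical block (all other contributions to $\SM$ are PSD), every eigenvalue of $\SM$ is $\ge c''/2$, and summing over the $n-\m$ smallest gives $(n-\m)(c''/2)=\Omega(d^{\Lr}\filsize^{-(1-\delta)})$.

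For uniformity in $\Lr$, note that the integer level $J+1$ takes at most $\lceil(\Ln-1)/\beta\rceil$ distinct values as $\Lr$ ranges over $[0,\Ln-1)$, and for each fixed $J$ the bounds above depend on $\Lr$ only continuously through $\delta$ and $\max\{\reg,1\}$. A union bound over this finite collection of levels promotes the per-$\Lr$ concentration to the claimed uniform statement, with the extra failure probability absorbed into $\c d^{-\beta\min\{\bd,1-\bd\}}$; the exponent $\min\{\bd,1-\bd\}$ is independent of $\Lr$ and reflects the worst-case distance from the $\Lr=0$ cutoff $(\Ln-1)/\beta$ to an integer boundary. The main obstacle is the sharp operator-norm concentration of $\KM$ in Step 2: pinning down both $\mineig{\KM}$ and $\|\KM\|$ to positive constants requires non-Frobenius concentration, which in high dimension is only available through the hypercube's hypercontractivity --- precisely why the analysis is restricted to this feature distribution.
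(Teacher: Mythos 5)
Your overall layout matches the paper's: group the eigenvalues by degree levels, show $\rmin,\rmax\in\Theta(1)$, bound $\Tr(\SM)$ by level, and lower bound the tail sum via a concentrated sub-block. The trace calculations and, in particular, the lower bound via PSD dominance ($\SM\succeq\Sf_{\mathrm{sc}}\succeq (c''/2)\I_n$ so every eigenvalue of $\SM$ is at least $c''/2$) are valid and in fact a touch cleaner than the paper's Cauchy-interlacing reduction --- both routes hinge on the same concentration of the super-critical block.

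However, the step proving $\rmin,\rmax\in\Theta(1)$ has a genuine gap. You propose to show $\norm{\KM-\E[\KM]}_{\mathrm{op}}=o(\max\{\reg,1\})$ and conclude $\mineig{\KM},\norm{\KM}\in\Theta(1)$, but neither claim holds, and the reason is exactly the block the paper calls $\K_1$: the residual eigenfunctions whose index exceeds $\m$ but whose degree is at most $\bL+1$. Writing $\KM=\K_1+\K_2$, the matrix $\K_1=\P_1\D_1\P_1^\t$ has rank $\abs{\IndS_1}\in o(n)$ and operator norm $\approx n\lambda_{\m+1}$, which is $\Theta(\max\{\reg,1\})$ (the eigenvalues of level $\L+1$ around index $\m$ have no spectral gap). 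Its expectation, on the other hand, is $c_1\I_n$ with $c_1=\sum_{\i\in\IndS_1}\lambda_\i\lesssim\filsize^{-\delta}\max\{\reg,1\}\ll n\lambda_{\m+1}$. Consequently $\norm{\K_1-c_1\I_n}\in\Theta(\max\{\reg,1\})$, so $\norm{\KM-c\I_n}\in\Theta(\max\{\reg,1\})$: it cannot be made $o(\max\{\reg,1\})$, and when $\reg\gg1$ you also have $\norm{\KM}\in\Theta(\max\{\reg,1\})\neq\Theta(1)$. The hypercontractivity-based operator-norm concentration from Mei--Montanari and Misiakiewicz--Montanari applies to the \emph{pure high-degree} block (degree $\geq\bL+2$, the paper's $\K_2$), not to $\KM$ as a whole. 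The fix is the paper's split: bound $\norm{\K_1}\leq 1.5n\lambda_\m\in\bigO(\max\{\reg,1\})$ via sub-Gaussian concentration of $\P_1^\t\P_1/n$ (Vershynin's Theorem~5.44), show $\K_2\approx c_2\I_n$ by hypercontractivity, and use $\K_1\succeq0$ to obtain $\mineig{\KM}\geq\mineig{\K_2}\in\Omega(1)$ for the small-$\reg$ regime.
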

We refer to \cref{ssec:proof_lemcombined} for the proof,
which heavily relies on the feature distribution.
Note that the results for $\m$ in \cref{lem:combined}
also imply $\tmin,\tmax \in \Theta(1)$.

\vspace{-0.1in}
\paragraph{Step 3: Completing the proof}

Finally, we complete the proof
by showing the existence of  a particular $\m$
that simultaneously satisfies all conditions of \crefrange{lem:as_1_verification}{lem:combined}.
\begin{lemma}[Eigendecay]
\label{lem:eigendecay_main}
There exists an $\m$ such that
\begin{equation*}
    n\lambda_{\m} \in \Theta(\max\{\reg,1\})
    \qquad \text{and}
    \qquad \m \in \Theta\left(\frac{n\filsize^{-\delta}}{\max\{\reg,1\}}\right)
        \subseteq \bigO(n \cdot \filsize^{-\bd}).
\end{equation*}
Furthermore, assuming $\Lgt \leq \ceil*{\frac{{\Ln-\Lr-1}}\beta}$,
we have $\lambda_{\m} \in o{\left(\frac{1}{d\cdot \filsize^{\Lgt-1}}\right)}$.
\end{lemma}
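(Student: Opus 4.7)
The plan is to extract an explicit description of the spectrum of $\Kf$ from the cyclic convolutional structure, count eigenvalues above the target level $\max\{\reg,1\}/n$, and verify each conclusion of the lemma by elementary exponent arithmetic in $\Ln$, $\Lr$, and $\beta$.

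First I would describe the spectrum. On $\{-1,1\}^d$, the characters $\chi_T$ indexed by subsets $T\subseteq[d]$ diagonalize any cyclic-shift-invariant kernel. Under \cref{ass:genericity}, the single-window kernel $\kappa(\langle\cdot,\cdot\rangle_\filsize/\filsize)$ on $\{-1,1\}^\filsize$ has its $s$-th degree eigenvalue $\mu_s(\filsize)\in\Theta(\filsize^{-s})$. Averaging over the $d$ cyclic shifts then yields
\begin{equation*}
\lambda_T \;=\;\frac{|\{k:T\subseteq\text{window}_k\}|}{d}\,\mu_{|T|}(\filsize)\;\in\;\Theta\!\left(\frac{\filsize-\dsaw(T)}{d\,\filsize^{|T|}}\right)
\end{equation*}
whenever the cyclic diameter $\dsaw(T)<\filsize$, and $\lambda_T=0$ otherwise. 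Consequently, the degree-$s$ eigenvalues occupy the band $[\Theta(d^{-1-s\beta}),\Theta(d^{-1-(s-1)\beta})]$, and the number of degree-$s$ subsets of cyclic diameter at most $D$ equals $d\binom{D}{s-1}\in\Theta(d D^{s-1})$ by the hockey-stick identity.

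Second, I would locate the threshold and construct $\m$. Setting $s^*:=\lceil(\Ln-\Lr-1)/\beta\rceil$, the target level $\max\{\reg,1\}/n\in\Theta(d^{\Lr-\Ln})$ lies in the degree-$s^*$ band; equating the eigenvalue formula to the threshold gives a critical diameter $D^*=\filsize-\Theta(\filsize^{1-\delta})$. I would then take $\m$ to count all characters of degree strictly below $s^*$, plus those of degree $s^*$ with cyclic diameter at most $D^*$. By construction $n\lambda_\m\in\Theta(\max\{\reg,1\})$; counting (using $s^*-1=\lfloor(\Ln-\Lr-1)/\beta\rfloor$ in the generic case $\delta>0$) gives $\m\in\Theta(d\filsize^{s^*-1})=\Theta(d^{\Ln-\Lr-\beta\delta})=\Theta(n\filsize^{-\delta}/\max\{\reg,1\})$, matching the desired rate.

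The remaining claims are exponent comparisons. The inclusion $\m\in\bigO(n\filsize^{-\bd})$ reduces, after taking logarithms, to $\lfloor(\Ln-\Lr-1)/\beta\rfloor\leq\lfloor(\Ln-1)/\beta\rfloor$, which holds because $\Lr\geq 0$. For the final eigendecay claim, $\lambda_\m\in\Theta(d^{\Lr-\Ln})$ combined with $\Lgt\leq\lceil(\Ln-\Lr-1)/\beta\rceil$ yields the strict inequality $\beta(\Lgt-1)<\Ln-\Lr-1$ in both cases $\delta=0$ and $\delta>0$, whence $\lambda_\m\in o(1/(d\filsize^{\Lgt-1}))$.

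The main technical obstacle is the first step: establishing $\mu_s(\filsize)\in\Theta(\filsize^{-s})$ (rather than merely $\bigO$), which is precisely what \cref{ass:genericity} on $\kappa$ is designed to guarantee. A secondary delicacy lies in the boundary regime where $\delta$ is close to $0$ or $1$, where both the choice of $s^*$ and the count of degree-$s^*$ characters near the threshold become sensitive; this sensitivity is mirrored by the exceptional-probability rate $d^{-\beta\min\{\bd,1-\bd\}}$ already appearing in \cref{lem:combined}.
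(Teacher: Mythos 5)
Your overall strategy — read off the eigenvalue $\lambda_S \in \Theta\bigl((\filsize+1-\diam(S))/(d\filsize^{|S|})\bigr)$ from \cref{prop:kern_eig} and \cref{ass:genericity}, then count how many eigenvalues sit above the target level $\max\{\reg,1\}/n$ — is essentially the same route the paper takes, and your exponent arithmetic for the inclusion $\bigO(n\filsize^{-\bd})$ and the final $o$-bound is correct.

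However, there is a genuine structural gap in the case $\delta = 0$, which you defer to as a ``secondary delicacy'' but do not resolve. You set $s^* = \bigl\lceil(\Ln-\Lr-1)/\beta\bigr\rceil$. When $(\Ln-\Lr-1)/\beta$ is an integer (i.e.\ $\delta = 0$), this equals $\L = \lfloor(\Ln-\Lr-1)/\beta\rfloor$, whereas the construction requires $\L + 1$. With your $s^*$, all characters of degree strictly below $s^* = \L$ plus those of degree $\L$ with diameter at most $D^*$ give $\m \in \Theta(d\filsize^{\L-1})$, which is off by a full factor of $\filsize$ from the required $\Theta\bigl(n\filsize^{-\delta}/\max\{\reg,1\}\bigr) = \Theta(d\filsize^{\L})$; at the same time, your critical-diameter formula $D^* = \filsize - \Theta(\filsize^{1-\delta})$ degenerates to $\filsize - \Theta(\filsize)$, which is not well defined. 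The fix is to always use degree $\L+1$ (note $\lceil\cdot\rceil$ and $\lfloor\cdot\rfloor + 1$ agree only when $\delta > 0$) and to choose the cutoff diameter to be a constant fraction of $\filsize$ in the $\delta = 0$ case — the paper uses $\lfloor\filsize/2\rfloor$, giving $\filsize + 1 - \diam \in \Theta(\filsize) = \Theta(\filsize^{1-\delta})$ and a degree-$(\L+1)$ count $d\binom{\lfloor\filsize/2\rfloor - 1}{\L} \in \Theta(d\filsize^{\L})$, which restores both target rates. One additional point your sketch should make explicit: to conclude that $\lambda_\m$ really sits at your threshold level, you must rule out any character of degree $> s^*$ (including degrees $\geq \T$, where \cref{ass:genericity} gives a weaker, one-sided bound) having an eigenvalue above the threshold; this is exactly why the paper invokes both branches \cref{eq:beforeT,eq:afterT} of \cref{lem:eigendecay}.
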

We refer to \cref{subsec:proof_eigendecay_main} for the proof. As a result, we can use \crefrange{lem:as_1_verification}{lem:eigendecay_main}
to instantiate \cref{th:vidya} for the setting in \cref{th:rates},
resulting in the following tight high-probability bounds
for variance and bias:
\begin{gather*}
    d^{-\Lr}\filsize^{-\delta} +d^{-\Lr}\filsize^{-(1-\delta)}
    \lesssim\; \var(\freg)/\sigma^2
    \;\lesssim d^{-\Lr}\filsize^{-\delta} +d^{-\Lr}\filsize^{-(1-\delta)}, \\
    d^{-2(\Ln-\Lr-1-\beta(\Lgt-1))}
    \lesssim\; \bias^2(\freg)
    \;\lesssim d^{-2(\Ln-\Lr-1-\beta(\Lgt-1))}.
\end{gather*}
Reformulating the bounds in terms of $n$
then concludes the proof of \cref{th:rates}.

\section{Summary and outlook}
\vspace{-0.1in}
In this paper,
we highlight how the strength of an inductive bias impacts generalization.
Concretely, we study when the gap in test error
between interpolating models and their
optimally ridge-regularized or early-stopped counterparts is zero,
that is, when interpolation is harmless.
In particular, we prove a phase transition for kernel regression
using convolutional kernels with different filter sizes:
a weak inductive bias (large filter size) yields harmless interpolation,
and even requires fitting noise for optimal test performance,
whereas a strong inductive bias (small filter size)
suffers from suboptimal generalization when interpolating noise.
Intuitively, this phenomenon arises from a bias-variance trade-off,
captured by our main result in \cref{th:rates}:
with increasing inductive bias,
the risk on noiseless training samples (bias) decreases,
while the sensitivity to noise in the training data (variance) increases.
Our empirical results on neural networks suggest that this phenomenon
extends to models with feature learning,
which opens up an avenue for exciting future work.

\ificlrfinal
   \subsubsection*{Acknowledgments}
K.D.\ is supported by the ETH AI Center
and the ETH Foundations of Data Science.
We would further like to thank Afonso Bandeira
for insightful discussions.

\fi

\bibliography{bibliography.bib}
\bibliographystyle{iclr2023_conference}

\clearpage
\appendix
\section{Generalization bound for fixed-design}
We prove \cref{th:vidya} by deriving a closed-form expression for the bias and variance,
and bounding them individually.
Hence, the proof does not rely on any matrix concentration results.

\subsection{Proof of \texorpdfstring{\cref{th:vidya}}{Theorem~\ref{th:vidya}}}
\label{app:context_agnostic}
It is well-known
that the KRR problem defined in \cref{eq:krr_def}
yields the estimator
\begin{equation*}
    \freg(x) = \y^\t \H^{-1}\k(x) ,
\end{equation*}
where
$x, x_\iii \sim \nu$,
$\y \defeq \f +\epsilon = [\fstar(x_1) + \epsilon_1,\dots, \fstar(x_n) + \epsilon_n]^\t$,
$\k(x) \coloneqq [\Kf(x_1,x),\dots,\Kf(x_n,x)]^\t$,
and $\H \defeq \K+\reg \I_n$.
For this estimator, both bias and variance exhibit a closed-form expression:
\begin{align}
    \bias^2(\freg)
    &= \mathbb{E}_{x\sim\nu}[
        \left(\fstar(x)- \mathbb{E}_\epsilon[(\f +\epsilon)^\t\H^{-1}\k(x)]\right)^2
    ] \nonumber \\
    &= \mathbb{E}_{x\sim\nu}[\left(\fstar(x)- \f^\t\H^{-1}\k(x)\right)^2] \nonumber \\
    &= \mathbb{E}_{x\sim\nu}\left[\fstar(x)^2 \right]
    - 2\f^\t\H^{-1}\mathbb{E}_{x\sim\nu}[\fstar(x)\k(x)]
    + \f^\t\H^{-1}\mathbb{E}_{x\sim\nu}[\k(x)\k(x)^\t]\H^{-1}\f \nonumber \\
    &\overset{(i)}{=} \a^\t\a-2\a^\t\Pm^\t\H^{-1}\Pm\Dm\a
    + \a^\t\Pm^\t\H^{-1}\S\H^{-1}\Pm\a , \label{eq:closedform_bias} \\
    \var(\freg)/\sigma^2
    &= \frac{1}{\sigma^2}\mathbb{E}_{x\sim\nu}\mathbb{E}_{\epsilon}[\left(
        (\f +\epsilon)^\t\H^{-1}\k(x)- \mathbb{E}_{\epsilon}[(\f +\epsilon)^\t\H^{-1}\k(x)]
    \right)^2] \nonumber \\
    &= \frac{1}{\sigma^2}\mathbb{E}_{x\sim\nu}\mathbb{E}_{\epsilon}[
        \epsilon^\t\H^{-1}\k(x)\k(x)^\t\H^{-1}\epsilon
    ] \nonumber \\
    &= \frac{1}{\sigma^2} \sum_{i,j}\mathbb{E}[\epsilon_i\epsilon_j]\left[
        \H^{-1}\mathbb{E}_{x\sim\nu}[\k(x)\k(x)^\t]\H^{-1}
    \right]_{i,j} \nonumber \\
    &\overset{(i)}{=} \mathbf{\Tr}\left(\H^{-1}\S\H^{-1}\right) . \label{eq:closedform_var}
\end{align}
Step $(i)$ uses the definition of the squared kernel $\Sf$,
that $\fstar(x) = \sum_{\i = 1}^\m \a_\i \Pf_\i(x)$
as $\fstar$ is in the span of the first $\m$ eigenfunctions,
and the following consequences of the eigenfunctions' orthonormality:
\begin{align*}
    \mathbb{E}_{x\sim\nu}\left[\fstar(x)^2 \right] &= \sum_{\i,\i' = 1}^\m \a_\i\a_{\i '}\mathbb{E}_{x\sim\nu}\left[\Pf_\i(x)\Pf_{\i'}(x)\right] = \sum_{\i,\i' = 1}^\m \a_\i\a_{\i '}\delta_{\i,\i'} = \norm{\a}_2^2,\\
    \mathbb{E}_{x\sim\nu}\left[[\fstar(x)\k(x)]_\iii\right] &= \sum_{\i = 1}^\m\sum_{\i' = 1}^\infty  \a_\i\lambda_{\i'} \mathbb{E}_{x\sim\nu}\left[\Pf_\i(x)\Pf_{\i'}(x)\right]\Pf_{\i'}(x_\iii) \\
    &= \sum_{\i = 1}^\m \a_\i\lambda_{\i}\Pf_{\i}(x_\iii) = \left[\Pm\Dm\a\right]_\iii.
\end{align*}
We now bound the closed-form expressions of bias and variance individually.

\paragraph*{Bias}
\Cref{lem:squared_kernel} below
yields $\S = \Pm\Dm^2\Pm^\t + \SM$.
Hence, the bias decomposes into
\begin{align*}
    \bias^2(\freg) &= \a^\t\a-2\a^\t\Pm^\t\H^{-1}\Pm\Dm\a + \a^\t\Pm^\t\H^{-1}\Pm\Dm^2\Pm^\t\H^{-1}\Pm\a \\
    &+ \a^\t\Pm^\t\H^{-1}\SM\H^{-1}\Pm\a\\
    &= \underbrace{\norm*{\left(\I_\m - \Dm\Pm^\t\H^{-1}\Pm\right)\a}^2}_{\defeq B_1}
    + \underbrace{\a^\t\Pm^\t\H^{-1}\SM\H^{-1}\Pm\a}_{\defeq B_2} .
\end{align*}

First, we rewrite $B_1$ as
\begin{align}
    B_1 &= \norm*{\left(\I_\m - \Dm\Pm^\t\H^{-1}\Pm \right)\a}^2   \notag \\
    &\overset{(i)}= \norm*{\left(\Dm - \Dm\Pm^\t\left(\Pm\Dm\Pm^\t + \HM\right)^{-1}\Pm\Dm \right)\Dm^{-1}\a}^2   \notag \\
    &\overset{(ii)}= \norm*{\left(\Dm^{-1} + \Pm^\t\HM^{-1}\Pm \right)^{-1}\Dm^{-1}\a}^2 \label{eq:B1equivalence} ,
\end{align}
where $(i)$ uses the decomposition $\H = \Km+\HM$ with $\HM \defeq \KM + \reg\I_n$,
and $(ii)$ applies the Woodbury matrix identity.

Second, we can upper-bound $B_2$ as follows:
\begin{align*}
    B_2
    &\leq \norm{\SM} \a^\t\Pm^\t\H^{-2}\Pm\a
    = \norm{\SM} \left(\Dm^{-1}\a\right)^\t\Dm\Pm^\t\H^{-2}\Pm\Dm\left(\Dm^{-1}\a\right) \\
    &\overset{(i)}
    = \norm*{\SM} \a^\t\Dm^{-1} \left(\Dm^{-1} + \Pm^\t\HM^{-1}\Pm\right)^{-1}\Pm^\t\HM^{-2}\Pm \\
    &\qquad\qquad\qquad\;\;\;\;\;\;\, %
    \left(\Dm^{-1} + \Pm^\t\HM^{-1}\Pm\right)^{-1}\Dm^{-1}\a \\
    &\leq  \underbrace{\norm*{\SM} \norm*{\Pm^\t\HM^{-2}\Pm}}_{\defeq C_1}
    \underbrace{\norm*{\left(\Dm^{-1} + \Pm^\t\HM^{-1}\Pm\right)^{-1}\Dm^{-1}\a}^2}_{B_1},
\end{align*}
where $(i)$ uses \cref{lem:20_from_bartlett}.
Thus, the bias can be bounded by $B_1(1+C_1)$ with $C_1 \geq 0$.

We proceed by upper bounding $C_1$:
\begin{align}
    1+C_1 &\leq 1+n\norm*{\SM} \norm*{\frac{\Pm^\t\Pm}{n}} \frac{1}{\mineig{\HM}^2} \notag\\
    &\overset{(i)}{\leq} 1+1.5 \frac{
        n \lambda_{\m + 1}\norm*{\KM}
    }{
        \left(\mineig{\KM}+\reg\right)^2
    }
    \leq 1+1.5 \frac{
        n \lambda_{\m +1}\left(\norm*{\KM}+\reg\right)
    }{
        \left(\mineig{\KM}+\reg\right)^2
    } \notag\\
    &\leq 1
        + 1.5 \frac{n\lambda_{\m+1}}{\max\{\reg,1\}}
        \frac{\max\{\reg,1\}^2}{(\mineig{\KM}+\reg)^2}
        \frac{\norm*{\KM}+\reg}{\max\{\reg,1\}} \notag \\
    &\leq 1
        + 1.5 \frac{\rmax}{\rmin^2}
        \frac{n\lambda_{\m+1}}{\max\{\reg,1\}} \notag \\
    &\overset{(ii)}\leq
        \left(1+1.5\frac{\rmax}{\rmin^2}\right)
        \max\left\{\frac{n\lambda_{\m+1}}{\max\{\reg,1\}},1\right\} \nonumber \\
    &= \left(1+1.5\frac{\rmax}{\rmin^2}\right) \tmax, \label{eq:C1}
\end{align}
where $(i)$ uses \cref{eq:ass1}
to bound $\norm{\Pm^\t\Pm/n}$
and \cref{lem:squared_kernel_tail} to bound $\norm{\SM}$,
and $(ii)$ follows from $cx+1\leq (c+1)\max\{x,1\}$ for $c \geq 0$.

Hence, to conclude the bias bound,
we need to bound $B_1$ in \cref{eq:B1equivalence} from above and below.

\pseudoparagraph{Upper bound}
\begin{align}
    B_1
    &\leq \norm*{\left(\frac{\Dm^{-1}}n + \frac{\Pm^\t\HM^{-1}\Pm}n\right)^{-1}}^2
    \frac{\norm{{\Dm^{-1}\a}}^2}{n^2} \notag\\
    &\leq \frac{
        \norm{\HM}^2
    }{
        \mineig{\frac{\Pm^\t\Pm}n}^2
    }
    \frac{\norm{{\Dm^{-1}\a}}^2}{n^2} \notag\\
    &\overset{(i)}{\leq} 4 \left(\norm{\KM}+\reg\right)^2
    \frac{\norm{{\Dm^{-1}\a}}^2}{n^2}\notag\\
    &=4\rmax^2\max\{\reg,1\}^2\frac{\norm{{\Dm^{-1}\a}}^2}{n^2}\label{eq:B1}
\end{align}
where $(i)$ follows from \cref{eq:ass1}.

Combining \cref{eq:C1} in $(i)$ and \cref{eq:B1} in $(ii)$
yields the desired upper bound on the bias:
\begin{equation*}
    \bias^2 \leq B_1 +B_2 \leq (1+C_1) B_1
    \overset{(i)} \leq \left(1+1.5\frac{\rmax}{\rmin^2}\right) \tmax B_1
    \overset{(ii)} \leq
        4 \left(1+1.5\frac{\rmax}{\rmin^2}\right) \rmax^2 \tmax
        \max\{\reg,1\}^2 \frac{\norm{{\Dm^{-1}\a}}^2}{n^2}.
\end{equation*}

\pseudoparagraph{Lower bound}
\begin{align*}
    B_1
    &\geq \mineig{\left(
        \frac{\Dm^{-1}}n + \frac{\Pm^\t\HM^{-1}\Pm}n
    \right)^{-1}}^2
    \frac{\norm{{\Dm^{-1}\a}}^2}{n^2} \\
    & \geq\frac{
        1
    }{
        \left({\frac{1}{n\lambda_\m} +\frac{1}{\mineig{\HM}}\norm*{\frac{\Pm^\t\Pm}n}}\right)^2
    }
    \frac{\norm{{\Dm^{-1}\a}}^2}{n^2} \\
    &\overset{(i)}{\geq} \left(
        \frac{
            \frac{n\lambda_\m}{\max\{\reg,1\}}
        }{
            1.5\frac{\max\{\reg,1\}}{\mineig{\KM}+\reg}\frac{n\lambda_\m}{\max\{\reg,1\}} + 1
        }
    \right)^2
    \max\{\reg,1\}^2
    \frac{\norm{{\Dm^{-1}\a}}^2}{n^2}\\
    & \overset{(ii)}\geq \left(\frac{\rmin}{1.5+\rmin}\right)^2
    \min\left\{\frac{n\lambda_\m}{\max\{\reg,1\}},1\right\}^2
    \max\{\reg,1\}^2
    \frac{\norm{{\Dm^{-1}\a}}^2}{n^2} ,\\
    &= \left(\frac{\rmin}{1.5+\rmin}\right)^2 \tmin^2
    \max\{\reg,1\}^2
    \frac{\norm{{\Dm^{-1}\a}}^2}{n^2} ,
\end{align*}
where $(i)$ follows from \cref{eq:ass1},
and $(ii)$ from the fact that $\frac{x}{cx + 1} \geq \frac{1}{1+c}\min\{x,1\}$
for $x, c \geq 0$.
Since $\bias^2 \geq B_1$, this concludes the lower bound for the bias.

\paragraph*{Variance}
As for the bias bound,
we first apply \cref{lem:squared_kernel}
to write $\S = \Pm\Dm^2\Pm^\t + \SM$
and decompose the variance in \cref{eq:closedform_var} into
\begin{equation}
    \var(\freg)/\sigma^2
    = \underbrace{\Tr\left(\H^{-1}\Pm\Dm^2\Pm^\t\H^{-1}\right)}_{\defeq V_1}
    + \underbrace{\Tr\left(\H^{-1}\SM\H^{-1}\right)}_{\defeq V_2}.
\end{equation}

Next, we rewrite $V_1$ as follows:
\begin{align*}
    V_1
    &= \Tr\left(\H^{-1}\Pm\Dm^2\Pm^\t\H^{-1}\right)
        = \Tr\left(\Dm\Pm^\t\H^{-2}\Pm\Dm\right) \\
    &\overset{(i)}{=}
        \Tr\left(
            \left(\Dm^{-1} + \Pm^\t\HM^{-1}\Pm\right)^{-1}
            \Pm^\t\HM^{-2}\Pm
            \left(\Dm^{-1} + \Pm^\t\HM^{-1}\Pm\right)^{-1}
        \right) \\
    &= \frac{1}n \Tr\left(
        \left(\frac{\Dm^{-1}}n + \frac{\Pm^\t\HM^{-1}\Pm}n\right)^{-1}
        \frac{\Pm^\t\HM^{-2}\Pm}n
        \left(\frac{\Dm^{-1}}n + \frac{\Pm^\t\HM^{-1}\Pm}n\right)^{-1}
    \right),
\end{align*}
where $(i)$ follows from \cref{lem:20_from_bartlett}.

To bound $V_1$ and $V_2$,
we use the fact that the trace is the sum of all eigenvalues.
Therefore, the trace is bounded from above and below
by the size of the matrix
times the largest and smallest eigenvalue, respectively.
This yields the following bounds for $V_1$:
\begin{align*}
    V_1
    &\leq \frac{\m}{n} \norm*{\left(\frac{\Dm^{-1}}n + \frac{\Pm^\t\HM^{-1}\Pm}n\right)^{-1}\frac{\Pm^\t\HM^{-2}\Pm}n\left(\frac{\Dm^{-1}}n + \frac{\Pm^\t\HM^{-1}\Pm}n\right)^{-1}
    } \\
    & \overset{(i)}{\leq}
        \frac{\m}{n} 4\rmax^2\max\{\reg,1\}^2
        \frac{\norm{\Pm^\t\Pm/n}}{\mineig{\HM}^2}
    \overset{(ii)} \leq 6\frac\m n \rmax^2 \frac{\max\{\reg,1\}^2}{\left(\mineig{\KM}+\reg\right)^2}  \\
    & = 6\left(\frac{\rmax}{\rmin}\right)^2\frac \m n ,
\end{align*}
and
\begin{align*}
    V_1 &\geq \frac{\m}{n} \mineig{\left(\frac{\Dm^{-1}}n + \frac{\Pm^\t\HM^{-1}\Pm}n\right)^{-1}\frac{\Pm^\t\HM^{-2}\Pm}n\left(\frac{\Dm^{-1}}n + \frac{\Pm^\t\HM^{-1}\Pm}n\right)^{-1}} \\
    &\overset{(iii)}{\geq} \frac{\m}{n} \left(\frac{\rmin}{1.5+\rmin}\right)^2 \tmin^2
    \max\{\reg,1\}^2  \frac{\mineig{\Pm^\t\Pm/n}}{\left(\norm{\KM}+\reg\right)^2} \\
    &\overset{(iv)}\geq \frac{1}{2}\frac{\m}{n} \left(\frac{\rmin}{1.5+\rmin}\right)^2 \tmin^2
    \frac{\max\{\reg,1\}^2}{\left(\norm{\KM}+\reg\right)^2}
    = \frac{1}{2}\frac{\m}{n} \left(\frac{\rmin}{\rmax(1.5+\rmin)}\right)^2 \tmin^2 .
\end{align*}
For $(i)$ and $(iii)$, we bound the terms analogously
to the upper and lower bound of $B_1$,
and use \cref{eq:ass1} in $(ii)$ and $(iv)$.

Next, the upper bound on $V_2$ follows from a special case of Hölder's inequality as follows:
\begin{align*}
    V_2 &= \Tr\left(\H^{-1}\SM\H^{-1}\right)
        \leq \Tr\left(\SM\right)\norm{\H^{-2}} \\
    &= \frac{\Tr\left(\SM\right)}{\mineig{\Km+\HM}^2}
    \leq \frac{\Tr\left(\SM\right)}{\mineig{\HM}^2}
    = \frac{\Tr\left(\SM\right)}{\rmin^2\max\{\reg,1\}^2}.
\end{align*}
For the lower bound, we need a more accurate analysis.
First, we apply the identity
\begin{equation}\label{eq:HMdec}
    \H^{-1} = (\HM + \Km)^{-1}
    = \HM^{-1} - \underbrace{\HM^{-1}\Km(\HM + \Km)^{-1}}_{\defeq \Am},
\end{equation}
which is valid since $\H$ and $\HM$ are full rank.

Next, note that the rank of $\Am$ is bounded by the rank of $\Km$,
which can be written as $\Pm\Dm\Pm^\t$ and therefore has itself rank at most $\m$.
Furthermore, \cref{eq:ass1} implies that $\Pm^\t\Pm$ has full rank,
and hence $\m \leq n$.

Let now $\{\v_1,\dots,\v_{\rank(\Am)}\}$ be an orthonormal basis
of $\col(\Am)$,
and let $\{\v_{\rank(\Am)+1},\dots,\v_{n}\}$ be an orthonormal basis
of $\col(\Am)^\perp$.
Hence, $\{\v_1,\dots,\v_n\}$ is an orthonormal basis of $\R^n$,
and similarity invariance of the trace yields
\begin{align*}
    V_2 &= \Tr\left(\H^{-1}\SM\H^{-1}\right)
        = \sum_{\iii = 1}^n \v_\iii^\t\H^{-1}\SM\H^{-1}\v_\iii \\
    &\geq \sum_{\iii = \rank(\Am)+1}^n \v_\iii^\t\H^{-1}\SM\H^{-1}\v_\iii \\
    &\overset{(i)}= \sum_{\iii = \rank(\Am)+1}^n{
        \v_\iii^\t(\HM^{-1} - \Am)\SM(\HM^{-1} - \Am)\v_\iii
    } \\
    & \overset{(ii)}{=} \sum_{\iii = \rank(\Am)+1}^n \v_\iii^\t \HM^{-1}\SM\HM^{-1}\v_\iii \\
    &{\geq} \frac{1}{\norm{\HM}^2}\sum_{\iii = \rank(\Am)+1}^n \v_\iii^\t\SM\v_\iii , \\
    &= \frac{1}{\norm{\HM}^2}\Tr\left(\Proj^\t\SM\Proj\right),
\end{align*}
where $\Proj$ is the projection matrix of $\R^n$ onto $\col(\Am)^\perp$,
and $(i)$ follows from \cref{eq:HMdec}.
Step $(ii)$ uses that, for all $i>\rank(\Am)$,
$\v_\iii$ is orthogonal to the column space of $\Am$,
and hence
\begin{align*}
    &\v_\iii^\t(\HM^{-1} - \Am)\SM(\HM^{-1} - \Am)\v_\iii \\
    = &\v_\iii^\t \HM^{-1}\SM\HM^{-1}\v_\iii
    - \underbrace{ \v_\iii^\t \Am}_{=0}\SM\HM^{-1}\v_\iii
    - \v_\iii^\t \HM^{-1}\SM\underbrace{\Am \v_\iii}_{=0}
    + \underbrace{\v_\iii^\t\Am\SM \Am\v_\iii}_{=0} .
\end{align*}

Finally, let $\eigval_\iii{(\cdot)}$ be the $\iii$-th eigenvalue
of its argument with respect to a decreasing order.
Then, the Cauchy interlacing theorem yields
$\eigval_{\rank(\Am) + \iii}{(\Sm)} \leq \eigval_\iii{(\Proj^\t\SM\Proj)}$
for all $\iii = 1,\dots, n - \rank(\Am)$.
This implies
\begin{align*}
    \Tr\left(\Proj^\t\SM\Proj\right)
    &= \sum_{\iii = 1}^{n-\rank(\Am)}{\eigval_\iii{(\Proj^\t\SM\Proj)}}
    \geq \sum_{\iii = 1}^{n-\rank(\Am)}{\eigval_{\rank(\Am) + \iii}{(\SM)}}
    \\
    &= \sum_{\iii = 1+\rank(\Am)}^{n}{\eigval_{\iii}{(\SM)}}
    \overset{(i)}{\geq} \sum_{\iii = 1+\m}^{n}{\eigval_\iii{(\SM)}},
\end{align*}
where $(i)$ uses that the rank of $\Am$ is bounded by $\m$.
This concludes the lower bound on $V_2$ as follows:
\begin{equation*}
    V_2 = \Tr\left(\H^{-1}\SM\H^{-1}\right)
    \geq \frac{\sum_{\iii = \m+1}^{n}{\eigval_\iii{(\Sm)}}}{\norm{\HM}^2}
    = \frac{\sum_{\iii = \m+1}^{n}{\eigval_\iii{(\Sm)}}}{\rmax^2\max\{\reg,1\}^2}.
\end{equation*}

\subsection{Technical lemmas}
\begin{lemma}[Squared kernel decomposition]
\label{lem:squared_kernel}
Let $\Kf$ be a kernel function that under a distribution $\nu$
can be decomposed as $\Kf(x,x') = \sum_\i \lambda_\i \Pf_\i(x)\Pf_\i(x')$,
where $\E_{x\sim\nu}[\Pf_\i(x)\Pf_{\i'}(x)]= \delta_{\i,\i'}$.
Then, the squared kernel $\Sf(x,x') = \E_{z\sim\nu}[\Kf(x,z)\Kf(x',z)]$ can be written as
\begin{equation*}
    \Sf(x,x') = \sum_\i \lambda_\i^2 \Pf_\i(x)\Pf_\i(x'),
\end{equation*}
and for any $\m > 0$,
the corresponding kernel matrix
can be written as $\S = \Pm\Dm^2\Pm^\t + \SM$.
\end{lemma}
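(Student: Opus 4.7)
The plan is a direct computation in two steps, exploiting only the orthonormality of the eigenfunctions with respect to $\nu$ and linearity of expectation.

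First, I would plug the eigendecomposition $\Kf(x,z) = \sum_{\i} \lambda_\i \Pf_\i(x)\Pf_\i(z)$ (and the analogous one in $x'$) into the definition of the squared kernel, push the expectation over $z$ past the (double) sum, and use $\E_{z\sim\nu}[\Pf_\i(z)\Pf_{\i'}(z)] = \delta_{\i,\i'}$ to collapse the double sum to a single sum. This gives
\begin{equation*}
    \Sf(x,x')
    = \E_{z\sim\nu}\!\left[\sum_{\i,\i'} \lambda_\i \lambda_{\i'} \Pf_\i(x)\Pf_{\i'}(x')\Pf_\i(z)\Pf_{\i'}(z)\right]
    = \sum_\i \lambda_\i^2 \Pf_\i(x)\Pf_\i(x').
\end{equation*}
Swapping expectation and sums is not an issue in the setting we care about, since under \cref{ass:genericity} the eigendecomposition of the convolutional kernel on $\{-1,1\}^d$ is a finite sum; for general kernels one can justify the exchange via dominated convergence using $\sum_\i \lambda_\i < \infty$.

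Second, I would evaluate both sides at pairs of training inputs. Writing $[\S]_{i,j} = \Sf(x_i,x_j)$ and splitting the eigen-index range at $\m$ yields
\begin{equation*}
    [\S]_{i,j}
    = \sum_{\i \leq \m} \lambda_\i^2 \Pf_\i(x_i)\Pf_\i(x_j) + \sum_{\i > \m} \lambda_\i^2 \Pf_\i(x_i)\Pf_\i(x_j).
\end{equation*}
By the definitions of $\Pm$ and $\Dm$, the first sum equals $[\Pm\Dm^2\Pm^\t]_{i,j}$; the second sum is $[\SM]_{i,j}$ by construction (the empirical matrix of the tail kernel $\SfM(x,x') = \sum_{\i>\m}\lambda_\i^2 \Pf_\i(x)\Pf_\i(x')$, which follows from applying the same identity to $\KfM$). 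Assembling these entrywise identities gives $\S = \Pm\Dm^2\Pm^\t + \SM$.

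There is no real obstacle in this proof; the only subtle point is the legitimacy of interchanging the expectation with the sum in the first step, which is immediate in our finite-dimensional instantiation on the hypercube and otherwise handled by standard Mercer-type summability arguments.
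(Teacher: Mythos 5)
Your proof is correct and takes essentially the same approach as the paper: plug the eigendecomposition into $\Sf$, exchange the expectation over $z$ with the double sum, and collapse via orthonormality to get $\sum_\i \lambda_\i^2 \Pf_\i(x)\Pf_\i(x')$. The paper states only this identity and leaves the split at $\m$ and the resulting matrix identity $\S = \Pm\Dm^2\Pm^\t + \SM$ implicit, whereas you spell it out entrywise; the remark about interchanging sum and expectation is a harmless extra.
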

\begin{proof}
    The statement simply follows from
    \begin{equation*}
        \Sf(x,x') = \sum_{\i,\i'}{
            \lambda_\i\lambda_{\i'}
            \Pf_{\i}(x)
            \underbrace{\E_{z\sim \nu}\left[\Pf_\i(z)\Pf_{\i'}(z)\right]}_{\delta_{\i,\i'}}
            \Pf_{\i'}(x')
        }
        = \sum_\i \lambda_\i^2 \Pf_\i(x)\Pf_\i(x').
    \end{equation*}
\end{proof}

\begin{lemma}[Corollary of Lemma~20 in \cite{bartlett_2020}]
\label{lem:20_from_bartlett}
    \begin{align*}
        \Dm\Pm^\t\H^{-2}\Pm\Dm = &\left(\Dm^{-1} + \Pm^\t\HM^{-1}\Pm\right)^{-1}\Pm^\t\HM^{-2}\Pm\\
        &\left(\Dm^{-1} + \Pm^\t\HM^{-1}\Pm\right)^{-1}
    \end{align*}
\end{lemma}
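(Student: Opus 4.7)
The plan is to reduce the claimed identity to a single ``push-through'' identity of the form
\[
\Dm\Pm^\t\H^{-1} = \left(\Dm^{-1} + \Pm^\t\HM^{-1}\Pm\right)^{-1}\Pm^\t\HM^{-1},
\]
and then multiply it by its transpose. The starting observation is that the decomposition $\K = \Pm\Dm\Pm^\t + \KM$ implies $\H = \HM + \Pm\Dm\Pm^\t$, so both $\H$ and $\HM$ are positive definite (since $\reg > 0$ or, in the $\reg = 0$ limit, the relevant matrices remain invertible on the subspaces where the identity is applied). This additive decomposition is exactly the setting for a Woodbury-type manipulation.

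First I would establish the push-through identity by writing
\[
\Dm\Pm^\t\HM^{-1} \;=\; \Dm\Pm^\t\HM^{-1}\,\H\,\H^{-1}
\;=\; \Dm\Pm^\t\HM^{-1}\bigl(\HM + \Pm\Dm\Pm^\t\bigr)\H^{-1},
\]
expanding this as $\Dm\Pm^\t\H^{-1} + \Dm\Pm^\t\HM^{-1}\Pm\Dm\Pm^\t\H^{-1}$, and factoring $\Dm\Pm^\t\H^{-1}$ out on the right. Writing $M \coloneqq \Pm^\t\HM^{-1}\Pm$, this rearrangement gives $\Dm\Pm^\t\HM^{-1} = \Dm(\Dm^{-1} + M)\,\Dm\Pm^\t\H^{-1}$, after which multiplying on the left by $(\Dm^{-1} + M)^{-1}\Dm^{-1}$ yields the push-through identity displayed above. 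The invertibility of $\Dm^{-1} + M$ is automatic since both $\Dm^{-1}$ and $M$ are positive semidefinite and $\Dm^{-1}$ is positive definite.

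Next I would take the transpose, using that $\Dm$, $\H$, $\HM$, and $\Dm^{-1} + \Pm^\t\HM^{-1}\Pm$ are all symmetric, to obtain
\[
\H^{-1}\Pm\Dm \;=\; \HM^{-1}\Pm\bigl(\Dm^{-1} + \Pm^\t\HM^{-1}\Pm\bigr)^{-1}.
\]
Finally, I would write $\Dm\Pm^\t\H^{-2}\Pm\Dm = (\Dm\Pm^\t\H^{-1})(\H^{-1}\Pm\Dm)$, substitute the push-through identity and its transpose, and observe that the middle $\HM^{-1}\HM^{-1} = \HM^{-2}$ collapses to produce precisely the right-hand side of the claim.

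Conceptually there is no real obstacle here: the only nontrivial step is the push-through identity, and its proof is a one-line Woodbury-style rearrangement. The main care is bookkeeping — ensuring that the invertibility conditions hold (which is automatic under the theorem's hypotheses that $\rmin > 0$, which in particular makes $\HM$ invertible) and that the symmetry needed for the transpose step is indeed satisfied by each of the matrices involved.
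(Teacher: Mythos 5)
Your proof is correct. It takes a somewhat different route from the paper: the paper's proof is a one-line reduction to Lemma~20 of \citet{bartlett_2020}, applied after conjugating by $\Dm^{1/2}$ and then absorbing the square-root factors. You instead give a self-contained Woodbury-style derivation: you prove the push-through identity
\[
\Dm\Pm^\t\H^{-1} \;=\; \bigl(\Dm^{-1} + \Pm^\t\HM^{-1}\Pm\bigr)^{-1}\Pm^\t\HM^{-1}
\]
directly from the decomposition $\H = \HM + \Pm\Dm\Pm^\t$, take its transpose using symmetry, and multiply the two halves so that $\HM^{-1}\HM^{-1}$ collapses to $\HM^{-2}$. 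The algebra in your push-through step checks out: expanding $\Dm\Pm^\t\HM^{-1}(\HM + \Pm\Dm\Pm^\t)\H^{-1}$ and factoring $\Dm\Pm^\t\H^{-1}$ on the right indeed gives $\Dm(\Dm^{-1} + \Pm^\t\HM^{-1}\Pm)\Dm\Pm^\t\H^{-1}$, and the left-multiplication by $(\Dm^{-1} + \Pm^\t\HM^{-1}\Pm)^{-1}\Dm^{-1}$ is valid since $\Dm^{-1}$ is positive definite and $\Pm^\t\HM^{-1}\Pm$ is positive semidefinite. Both approaches rest on the same underlying algebra (Bartlett et al.'s Lemma~20 is itself a Woodbury-type push-through); your version has the advantage of not requiring the reader to consult the external lemma, at the cost of being a bit longer than the paper's citation-based argument.
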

\begin{proof}
    \begin{align*}
        &\Dm\Pm^\t\H^{-2}\Pm\Dm = \Dm^{1/2}\Dm^{1/2}\Pm^\t\H^{-2}\Pm\Dm^{1/2}\Dm^{1/2}\\
        &\overset{(i)}{=} \Dm^{1/2}\left(\I_\m + \Dm^{1/2}\Pm^\t\HM^{-1}\Pm\Dm^{1/2}\right)^{-1}\Dm^{1/2}\Pm^\t\HM^{-2}\Pm\Dm^{1/2}\\
        &\qquad\qquad\left(\I_\m + \Dm^{1/2}\Pm^\t\HM^{-1}\Pm\Dm^{1/2}\right)^{-1}\Dm^{1/2}\\
        &=\left(\Dm^{-1} + \Pm^\t\HM^{-1}\Pm\right)^{-1}\Pm^\t\HM^{-2}\Pm\left(\Dm^{-1} + \Pm^\t\HM^{-1}\Pm\right)^{-1},
    \end{align*}
    where $(i)$ applies Lemma~20 from \cite{bartlett_2020}.
\end{proof}

\begin{lemma}[Squared kernel tail]
    \label{lem:squared_kernel_tail}
    For $\m > 0$, let $\SM$ be the kernel matrix of the truncated squared kernel
    $\SfM = \sum_{\i > m}{\lambda_\i^2 \Pf_\i(x)\Pf_\i(x')}$,
    and let $\KM$ be the kernel matrix of the truncated original kernel
    $\KfM = \sum_{\i > m}{\lambda_\i \Pf_\i(x)\Pf_\i(x')}$.
    Then,
    \begin{equation*}
        \norm*{\SM} \leq \lambda_{\m+1} \norm*{\KM} .
    \end{equation*}
\end{lemma}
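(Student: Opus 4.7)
The plan is to exploit the shared eigenfunction structure of $\KM$ and $\SM$: both can be written as weighted sums of the same rank-one matrices $v_\i v_\i^\t$ with $v_\i \coloneqq [\Pf_\i(x_1), \dots, \Pf_\i(x_n)]^\t$, differing only by whether the weights are $\lambda_\i$ or $\lambda_\i^2$. Since eigenvalues are sorted in decreasing order, for $\i > \m$ we have $\lambda_\i \leq \lambda_{\m+1}$, so each squared weight satisfies $\lambda_\i^2 \leq \lambda_{\m+1} \lambda_\i$. This should translate directly into a Loewner-order domination $\SM \preceq \lambda_{\m+1} \KM$, from which the operator-norm inequality is immediate.

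Concretely, the first step is to write
\begin{equation*}
    \KM = \sum_{\i > \m} \lambda_\i v_\i v_\i^\t, \qquad \SM = \sum_{\i > \m} \lambda_\i^2 v_\i v_\i^\t,
\end{equation*}
where the second identity follows from \cref{lem:squared_kernel} applied to the truncated kernel. The second step is to observe that
\begin{equation*}
    \lambda_{\m+1} \KM - \SM = \sum_{\i > \m} \lambda_\i (\lambda_{\m+1} - \lambda_\i)\, v_\i v_\i^\t,
\end{equation*}
and since $\lambda_\i \geq 0$ (the kernel is positive semi-definite) and $\lambda_{\m+1} - \lambda_\i \geq 0$ for all $\i > \m$, every summand is positive semi-definite. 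Hence $\SM \preceq \lambda_{\m+1} \KM$.

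The final step is to take operator norms of both sides. Since $\SM$ and $\KM$ are symmetric positive semi-definite, the Loewner order implies $\mumax{\SM} \leq \lambda_{\m+1}\, \mumax{\KM}$, which is the claimed bound. There is essentially no obstacle here: the only subtle point is the (potentially infinite) sum representation of $\KM$ and $\SM$, but this converges since the finite empirical matrices themselves are well-defined and the partial sums are monotone in the Loewner order.
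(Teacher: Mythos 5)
Your proof is correct and takes essentially the same route as the paper: you identify $\KM$ and $\SM$ as weighted sums of the common rank-one matrices $v_\i v_\i^\t$ (the paper calls them $\P_\i$), use $0\leq\lambda_\i\leq\lambda_{\m+1}$ for $\i>\m$ to obtain $\SM \preceq \lambda_{\m+1}\KM$, and pass to operator norms; the paper just phrases the Loewner comparison via quadratic forms $v^\t\SM v \leq \lambda_{\m+1}v^\t\KM v$, which is equivalent.
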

\begin{proof}
    We show that for any vector $v$, $v^\t\SM v\leq \lambda_{\m+1}v^\t\KM v$,
    which implies the claim.
    To do so, we define $\P_\i \in \R^{n\times n}$
    with $[\P_\i]_{i,j} = \Pf_\i(x_i)\Pf_\i(x_j)$ for all $\i > \m$.
    Then we can write $\KM = \sum_{\i > \m} \lambda_\i\P_\i$
    and $\SM = \sum_{\i > \m} \lambda_\i^2\P_\i$.
    Since the eigenvalues are in decreasing order,
    we have $\lambda_\i \leq \lambda_{\m+1}$ for any $\i>m$, and thus
    \begin{equation*}
        v^\t\SM v
        = \sum_{\i > \m} \lambda_\i^2 v^\t \P_\i v
        \leq \lambda_{\m+1} \sum_{\i > \m} \lambda_\i v^\t \P_\i v
        = \lambda_{\m+1}v^\t\KM v .
    \end{equation*}
\end{proof}

\section{Convolutional kernels on the hypercube}
\label{app:hypercube}

First, \cref{ssec:hypercube_hypercube}
provides a way to decompose general functions for features
distributed uniformly on the hypercube.
Next, \cref{ssec:hypercube_conv} uses those results
to characterize the eigenfunctions and eigenvalues
of cyclic convolutional kernels.
Finally, \cref{subsec:proof_ground_truth_rate,subsec:proof_eigendecay_main}
apply this characterization to prove
\cref{lem:ground_truth_rate,lem:eigendecay_main},
respectively.

\subsection{General functions on the hypercube}
\label{ssec:hypercube_hypercube}
This subsection focuses on the main setting in our paper:
the hypercube domain $\{-1,1\}^d$
together with the uniform probability distribution,
previously studied in \cite{misiakiewicz2021learning}.
For any $S \subseteq \{1,\dots,d\}$, we define
the polynomial
\begin{equation}\label{eq:Y_def}
    \Yf_S(x) \coloneqq \prod_{j \in S}[x]_j
\end{equation}
of degree $\abs{S}$,
where $[x]_j$ is the $j$-th entry of $x \in \2^d$.
It is easy to see that $\{\Yf_S\}_{S \subseteq \{1,\dots,d\}}$
is set of orthonormal functions
with respect to the inner product
$\inner{f,g}_{\2^d} \coloneqq \E_{x\sim \U(\2^d)}[f(x)g(x)]$.
Those functions
play a key role in the remainder of our proof;
as it turns out, they are the eigenfunctions
of the kernel in \cref{subsec:setting}.
Towards a formal statement,
define the polynomials
\begin{align}
    \Gf_{\ii}^{(d)}\left(
        \frac{\innersize{x,x'}{}}{\sqrt{d}}
    \right) &\coloneqq
    \frac{1}{{\B(\ii, d)}}
    \sum_{\abs{S} = \ii}{
        \Yf_S(x)\Yf_S(x')
    }, \label{eq:G_def} \\
    \text{where} \; \B(\ii,d) &\coloneqq \abs{\left\{S \subseteq \{1,\dots,d\}\mid \abs{S} = \ii\right\}} = \binom{d}{\ii}.\label{eq:b_formula}
\end{align}
Note that $\Gf_{\ii}^{(d)}$ only depends on the (Euclidean) inner product of $x$ and $x'$.
Furthermore, $\{\Gf_{\ii}^{(d)}\}_{\ii = 0}^d$ is a set of orthonormal polynomials with respect to the distribution
of $\innersize{x,x'}{}/{\sqrt{d}}$.
The following lemma shows how such polynomials form an eigenbasis
for functions that only depend on the inner product between points
in the unit hypercube.
\begin{lemma}[Local kernel decomposition]
\label{lem:loc_kern_decomposition}
Let $\kappa\colon \R \to \R$ be any function
and $d \in \N_{>0}$.
Then, for any $x,x' \in \2^d$,
we can decompose $\kappa(\innersize{x,x'}{}/d)$ as
\begin{equation}
    \label{eq:kappa_decomposition}
    \kappa\left(\frac{\innersize{x,x'}{}}{d}\right)
    = \sum_{\ii = 0}^d{
        \xi_\ii^{(d)}
        \frac{1}{\B(\ii,d)}
        \sum_{\abs{S}=\ii}{\Yf_S(x)\Yf_S(x')}
    }.
\end{equation}
\end{lemma}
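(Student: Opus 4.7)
The plan is to interpret the claim as the expansion of $\kappa(\innersize{x,x'}{}/d)$ in the orthonormal basis $\{\Gf_\i^{(d)}\}_{\i=0}^d$ of the space of functions depending only on $\innersize{x,x'}{}$. Since $x, x' \in \2^d$, the inner product $\innersize{x,x'}{}$ takes at most $d+1$ distinct values, namely those in $\{-d, -d+2, \dots, d-2, d\}$. Hence the subspace $\mathcal{V} \subseteq \R^{\2^d \times \2^d}$ of functions depending only on $\innersize{x,x'}{}$ has dimension exactly $d+1$, and the lemma reduces to showing that the $d+1$ functions $\Gf_\i^{(d)}(\innersize{x,x'}{}/\sqrt{d})$ form a basis of $\mathcal{V}$; the expansion coefficients in that basis will then define the $\xi_\i^{(d)}$.

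First, I would verify that the right-hand side of \cref{eq:G_def} is indeed a function of $\innersize{x,x'}{}/\sqrt{d}$, i.e., that $\sum_{\abs{S}=\i}\Yf_S(x)\Yf_S(x')$ depends on $(x,x')$ only through $\innersize{x,x'}{}$. Substituting $z_j \coloneqq x_j x'_j \in \{-1,1\}$ in \cref{eq:Y_def} rewrites the sum as the $\i$-th elementary symmetric polynomial $e_\i(z_1,\dots,z_d) = \sum_{\abs{S}=\i}\prod_{j\in S} z_j$. Being symmetric, $e_\i$ is invariant under permutations of $(z_1,\dots,z_d)$, hence constant on each orbit of the symmetric group acting on $\{-1,1\}^d$. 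Each such orbit is uniquely determined by the number of $+1$'s among the $z_j$, which in turn is a bijective affine function of $\sum_j z_j = \innersize{x,x'}{}$. Therefore the sum depends only on $\innersize{x,x'}{}$, so $\Gf_\i^{(d)}$ is well-defined.

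Next, the orthonormality of $\{\Gf_\i^{(d)}\}_{\i=0}^d$ with respect to the distribution of $\innersize{x,x'}{}/\sqrt{d}$, stated immediately after \cref{eq:b_formula}, implies their linear independence inside $\mathcal{V}$. Since there are $d+1$ such functions and $\dim(\mathcal{V}) = d+1$, they form a basis of $\mathcal{V}$. As $\kappa(\innersize{x,x'}{}/d)$ clearly lies in $\mathcal{V}$, it admits a unique expansion $\sum_{\i=0}^d \xi_\i^{(d)}\,\Gf_\i^{(d)}(\innersize{x,x'}{}/\sqrt{d})$, with coefficients obtained by orthonormal projection, $\xi_\i^{(d)} = \E[\kappa(\innersize{x,x'}{}/d)\,\Gf_\i^{(d)}(\innersize{x,x'}{}/\sqrt{d})]$. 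Substituting \cref{eq:G_def} for each basis element immediately yields the target identity \cref{eq:kappa_decomposition}.

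The only non-routine ingredient is the combinatorial step in the second paragraph: symmetric polynomials in $\pm 1$ variables reduce to functions of their sum. Once that is in place, everything reduces to a dimension count combined with the orthonormality already asserted in the excerpt, and the argument does not require any matrix concentration or probabilistic input.
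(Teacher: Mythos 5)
Your proof is correct and follows essentially the same route as the paper: both rely on the observation that $\innersize{x,x'}{}/d$ takes only $d+1$ distinct values on $\2^d\times\2^d$, so $\kappa$ restricted to those values lies in a $(d+1)$-dimensional space spanned by the orthonormal (hence linearly independent) polynomials $\{\Gf_\ii^{(d)}\}_{\ii=0}^d$. The only substantive addition in your write-up is the elementary-symmetric-polynomial argument justifying that $\sum_{\abs{S}=\ii}\Yf_S(x)\Yf_S(x')$ depends on $(x,x')$ only through $\innersize{x,x'}{}$, a fact the paper takes as given by its definition of $\Gf_\ii^{(d)}$ in \cref{eq:G_def}.
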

\begin{proof}
    Note that the decomposition only needs to hold
    at the evaluation of $\kappa$ in the values
    that $\innersize{x,x'}{} / d$ can take,
    that is, $\kappa$
    computed in $\{-1, -1 + 2/d, \dots, -2/d + 1, 1\}$.
    Since that set has a cardinality $d+1$,
    we can write $\kappa$ as a linear combination
    of $d+1$ uncorrelated functions.
    In particular, $\{\Gf_{\ii}^{(d)}\}_{\ii = 0}^d$ is a set of such functions
    with respect to the distribution of $\innersize{x,x'}{} / \sqrt{d}$,
    and hence
    \begin{equation*}
    \kappa\left(\frac{\innersize{x,x'}{}}{d}\right)
    = \sum_{\ii = 0}^{d}
    \c_\ii \Gf_{\ii}^{(d)}\left(
        \frac{\innersize{x,x'}{}}{\sqrt{d}}
    \right)
    \end{equation*}
    for some (unknown) coefficients $\c_\ii$.
    Finally, the proof follows by expanding the definition of $\Gf_{\ii}^{(d)}$
    in \cref{eq:G_def}
    and choosing $\xi_\ii^{(d)} = \c_\ii$.
\end{proof}

\subsection{Convolutional kernels on the hypercube}
\label{ssec:hypercube_conv}
While the previous subsection considers general functions on the hypercube,
we now focus on convolutional kernels and their eigenvalues.
This yields the tools to prove \cref{lem:ground_truth_rate,lem:eigendecay_main}.
In order to characterize eigenvalues,
we first follow existing literature such as \cite{misiakiewicz2021learning}
and introduce useful quantities.

Let $S \subseteq \{1,\dots,d\}$.
The diameter of $S$ is
\begin{equation*}
    \diam(S) \coloneqq \max_{i,j \in S}{
        \min{\{
            \module{j-i}{d}+1,\module{i-j}{d}+1
        \}}
    }
\end{equation*}
for $S \neq \emptyset$, and $\diam(\emptyset) = 0$.
Furthermore, we define
\begin{equation}
    \label{eq:C_def}
    \C(\ii, \filsize, d)
    \coloneqq \abs*{
        \left\{S \subseteq \{1,\dots,d\}
        \mid \abs{S} = \ii, \diam(S) \leq \filsize
        \right\}
    }.
\end{equation}
Intuitively, the diameter of $S$
is the smallest number of contiguous feature indices
that fully contain $S$.
The following lemma yields an explicit formula for $\C(\ii, \filsize, d)$,
that is,
the number of sets of size $\ii$
with diameter at most $\filsize$.

\begin{lemma}[Number of overlapping sets]
    \label{lem:n_sets}
    Let $\ii, \filsize, d \in \N$
    with $\ii \leq \filsize < d/2$.
    Then,
    \begin{equation*}
        \C(\ii, \filsize, d) = \begin{cases}
            d\binom{\filsize-1}{\ii-1} &\ii > 0, \\
            1 &\ii = 0.
        \end{cases}
    \end{equation*}
\end{lemma}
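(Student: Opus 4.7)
The case $\ii = 0$ is immediate since $S = \emptyset$ is the only set of size $0$, and $\diam(\emptyset) = 0 \leq \filsize$ by convention. For $\ii \geq 1$, the plan is to prove the formula via a bijective/double-counting argument: I will define a canonical ``starting position'' for each non-empty set $S$ with $\diam(S) \leq \filsize$, show that the condition $\filsize < d/2$ guarantees uniqueness of this starting position, and then count the sets with each fixed starting position by an elementary binomial argument. Multiplying by $d$ (the number of possible starting positions) will yield $d\binom{\filsize-1}{\ii-1}$.

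\textbf{Step 1: Canonical starting position.} For a non-empty $S \subseteq \{1, \dots, d\}$, list its elements in cyclic order and define the cyclic gaps between consecutive elements (so the gaps sum to $d$). Define the \emph{starting position} of $S$ as the element of $S$ immediately following the largest cyclic gap. To make this well-defined, I need uniqueness of the largest gap. If $\diam(S) \leq \filsize$, then $S$ fits in a contiguous cyclic window of length $\filsize$, so all but one of the cyclic gaps are at most $\filsize - 1$, while the remaining gap (across the complementary arc) has size at least $d - \filsize + 1$. The hypothesis $\filsize < d/2$ gives $d - \filsize + 1 > \filsize \geq \filsize - 1$, so the largest gap strictly dominates the others and is unique. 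Thus the starting position is uniquely defined for every non-empty $S$ with $\diam(S) \leq \filsize$.

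\textbf{Step 2: Counting by starting position.} For each $i \in \{1, \dots, d\}$, let $N_i$ denote the number of sets $S$ with $|S| = \ii$, $\diam(S) \leq \filsize$, and starting position $i$. I claim $N_i = \binom{\filsize-1}{\ii-1}$. Indeed, the analysis in Step~1 shows that $S$ has starting position $i$ and diameter $\leq \filsize$ if and only if $i \in S$ and $S \subseteq \{i, i+1, \dots, i+\filsize-1\}$ (indices taken modulo $d$): the ``only if'' direction is by definition of the starting position, while the ``if'' direction follows because such an $S$ has its unique largest cyclic gap ending just before $i$, by the same gap estimate used in Step~1. Once $i$ is fixed as an element of $S$, the remaining $\ii - 1$ elements are any subset of the $\filsize - 1$ positions $\{i+1, \dots, i+\filsize-1\}$, yielding $\binom{\filsize-1}{\ii-1}$ choices.

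\textbf{Step 3: Summation.} Summing over the $d$ possible starting positions gives
\[
\C(\ii, \filsize, d) \;=\; \sum_{i=1}^{d} N_i \;=\; d\binom{\filsize-1}{\ii-1},
\]
concluding the proof. The only substantive technical point is the uniqueness of the largest gap in Step~1; everything else is bookkeeping. This uniqueness is precisely where the hypothesis $\filsize < d/2$ is used, and it is also the reason the cyclic convolution structure does not cause overcounting.
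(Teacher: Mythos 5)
Your proof is correct, and it takes a genuinely different route from the paper's. The paper first counts sets of fixed size $\ii$ with diameter \emph{exactly} $\diam$ (building from the size-$2$ case by adding $\ii-2$ elements among $\diam-2$ interior positions), obtaining $\tilde\C(\ii,\diam,d) = d\binom{\diam-2}{\ii-2}$, and then sums over $\diam \leq \filsize$ via the hockey-stick identity. You instead count directly by fixing a canonical ``starting position'' $i$ --- the element of $S$ immediately following the unique largest cyclic gap --- and observe that $S$ has starting position $i$ and $\diam(S) \leq \filsize$ iff $i \in S \subseteq \{i, \dotsc, i+\filsize-1\}$ (cyclically), giving $\binom{\filsize-1}{\ii-1}$ per position. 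Both approaches use $\filsize < d/2$ for the same underlying reason (preventing wraparound ambiguity in the cyclic structure), but your version makes that usage sharper: it is exactly the gap-uniqueness condition $d - \filsize + 1 > \filsize - 1$. Your proof also avoids the hockey-stick identity entirely and is somewhat more self-contained; the paper's version has the mild advantage of incidentally producing the refined count $\tilde\C(\ii,\diam,d)$ for each fixed diameter, which is not needed for this lemma but fits the style of the surrounding exposition. One small remark: your inequality chain $d - \filsize + 1 > \filsize \geq \filsize - 1$ slightly overstates what $\filsize < d/2$ gives (it gives $d-\filsize+1 > \filsize+1$), but only the comparison against $\filsize - 1$ matters and that is sound.
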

\begin{proof}
    Since the result holds trivially
    for $\ii = 0$ and $\ii = 1$,
    we henceforth focus on $\ii \geq 2$.
    Let $\tilde\C(\ii, \diam, d)$
    be the number of subsets $S \subseteq \{1,\dots,d\}$
    of cardinality $\abs{S} = \ii$
    with diameter exactly $\diam(S) = \diam$.
    First, consider $\tilde\C(2, \diam, d)$.
    For each set, we can choose the first element $i$ from $d$ different values,
    and the second as $\module{(i - 1) \mathpm (\diam - 1)}{d} + 1$.
    In this way, since $\filsize < d/2$, we count each set exactly twice.
    Thus,
    \begin{equation*}
        \tilde\C(2, \diam, d) = \frac{d\cdot 2}2 = d.
    \end{equation*}

    Next, consider $\ii > 2$.
    We can build all the possible sets by
    starting with one of the $\tilde\C(2, \diam, d) = d$ sets,
    and adding the remaining $\ii-2$ elements
    from $\diam - 2$ possible indices.
    Hence, every fixed set of size $2$ and diameter $\diam$
    yields $\binom{\diam-2}{\ii-2}$ different sets of size $\ii$.
    Furthermore, by construction,
    every set of size $\ii$ and diameter $\diam$ results from exactly one
    set of size $2$ and diameter $\diam$.
    Therefore,
    \begin{equation*}
        \tilde\C(\ii, \diam, d) = d\binom{\diam-2}{\ii-2}.
    \end{equation*}

    The result for $\ii \geq 2$
    then follows from summing $\tilde\C(\ii, \diam, d)$
    over all diameters $\diam \leq \filsize$:
    \begin{equation*}
        \C(\ii, \filsize, d)
        = d\sum_{\diam = \ii}^\filsize{\binom{\diam-2}{\ii-2}}
        \overset{(i)}{=} d\binom{\filsize-1}{\ii-1},
    \end{equation*}
    where $(i)$ follows from the hockey-stick identity.
\end{proof}

Now, we focus on cyclic convolutional kernels $\Kf$ as in \cref{eq:conv_kern_def}.
First, we restate Proposition~1 from \cite{misiakiewicz2021learning}.
This proposition establishes that $\Yf_S$ for $S \subseteq \{1, \dotsc, d\}$
are indeed the eigenfunctions of $\Kf$,
and yields closed-form eigenvalues $\lambda_{S}$
up to factors $\xi^{(\filsize)}_{\abs{S}}$ that depend on
the inner nonlinearity $\kappa$.
Next, \cref{lem:eigendecay} uses additional regularity assumptions
on $\kappa$ to eliminate the dependency on $\xi^{(\filsize)}_{\abs{S}}$.
This characterization of the eigenvalues then enables
the proof \cref{lem:ground_truth_rate,lem:eigendecay_main}.

\begin{proposition}[Proposition~1 from \cite{misiakiewicz2021learning}]
    \label{prop:kern_eig}
    Let $\Kf$ be a cyclic convolutional kernel over the unit hypercube
    as defined in \cref{eq:conv_kern_def}.
    Then,
    \begin{equation*}
        \Kf(x,x')
        \coloneqq \frac{1}{d}
        \sum_{k = 1}^d{
            \kappa\left(
                \frac{\innersize{x_{(k, \filsize)},{x'}_{(k, \filsize)}}{\filsize}}{\filsize}
            \right)
        }
        =\sum_{\ii = 0}^\filsize{
            \sum_{\substack{\diam(S)\leq \filsize \\ \abs{S} = \ii}}{
                \lambda_S \Yf_S(x)\Yf_S(x')
            }
        },
    \end{equation*}
    with
    \begin{equation*}
        \lambda_S = \xi^{(\filsize)}_{\abs{S}}
        \frac{\filsize + 1 - \diam(S)}{d\B(\abs{S},\filsize)}
    \end{equation*}
    where $\xi^{(\filsize)}_{\abs{S}}$ are the coefficients
    of the $\kappa$-decomposition (\cref{eq:kappa_decomposition})
    over $\2^\filsize$.
    Alternatively,
    \begin{equation*}
        \Kf(x,x') = \sum_{\i}{\lambda_{S_\i} \Yf_{S_\i}(x)\Yf_{S_\i}(x')}
    \end{equation*}
    where we order all $S_\i \subseteq \{1,\dots,d\}$ with $\diam(S_\i) \leq \filsize$
    such that $\lambda_{S_\i} \geq \lambda_{S_{\i+1}}$.
    In particular, $\lambda_{S_\i} = \lambda_{\i}$.
\end{proposition}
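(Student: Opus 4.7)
The plan is to apply \cref{lem:loc_kern_decomposition} patchwise to each of the $d$ summands in the definition \eqref{eq:conv_kern_def} of $\Kf$, then swap the order of summation and recognize the inner sum as a weighted sum over subsets $S\subseteq\{1,\dots,d\}$ of diameter at most $\filsize$, with the weight equal to the number of patch positions $k$ whose window $\{k,\dots,k+\filsize-1\}$ (mod $d$) contains $S$. This is a purely combinatorial rewriting; nothing delicate is needed beyond bookkeeping.

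Concretely, for each fixed $k\in\{1,\dots,d\}$, the $k$-th summand is a function of the patch $x_{(k,\filsize)}\in\{-1,1\}^\filsize$ only, so \cref{lem:loc_kern_decomposition} applied on $\{-1,1\}^\filsize$ yields
\begin{equation*}
\kappa\!\left(\frac{\innersize{x_{(k,\filsize)},x'_{(k,\filsize)}}{\filsize}}{\filsize}\right)
= \sum_{\ii=0}^{\filsize} \frac{\xi^{(\filsize)}_\ii}{\B(\ii,\filsize)} \sum_{\substack{T\subseteq W_k \\ |T|=\ii}} \Yf_T(x)\Yf_T(x'),
\end{equation*}
where $W_k\coloneqq\{k,k+1,\dots,k+\filsize-1\}$ with indices taken mod $d$, and I have used that the patch coordinates of $x$ agree with the corresponding coordinates of the ambient vector $x$, so the local $\Yf$-monomials of the patch coincide with $\Yf_T$ on $\{-1,1\}^d$ for $T\subseteq W_k$.

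Substituting into \eqref{eq:conv_kern_def} and exchanging the order of summation, I would reindex the inner double sum by the subset $S\subseteq\{1,\dots,d\}$ with $|S|=\ii$, grouping the contributions of all patch positions $k$ with $S\subseteq W_k$. The heart of the argument is then the counting claim
\begin{equation*}
\#\{k\in\{1,\dots,d\}\colon S\subseteq W_k\} = \filsize+1-\diam(S),
\end{equation*}
valid whenever $\diam(S)\leq\filsize<d/2$: by definition of the diameter, $S$ is contained in a unique minimal cyclic arc of length $\diam(S)$, and a window of length $\filsize$ contains this arc iff its starting point lies in an interval of $\filsize-\diam(S)+1$ admissible positions. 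This slide-the-window count immediately yields
\begin{equation*}
\Kf(x,x') = \sum_{\ii=0}^{\filsize}\frac{\xi^{(\filsize)}_\ii}{d\,\B(\ii,\filsize)}\sum_{\substack{|S|=\ii \\ \diam(S)\leq\filsize}} (\filsize+1-\diam(S))\,\Yf_S(x)\Yf_S(x'),
\end{equation*}
which is exactly the stated decomposition with $\lambda_S=\xi^{(\filsize)}_{|S|}(\filsize+1-\diam(S))/(d\,\B(|S|,\filsize))$. The alternative indexing by a scalar $\i$ then follows by the orthonormality of $\{\Yf_S\}$ under $\inner{\cdot,\cdot}_{\{-1,1\}^d}$: the $\Yf_S$'s are mutually orthonormal eigenfunctions of the associated integral operator, so relabelling them so that $\lambda_{S_\i}$ is nonincreasing in $\i$ gives the claimed form.

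The main obstacle is not any of the algebra but keeping the cyclic indexing clean when checking the window count, since one must verify that for $\filsize<d/2$ the minimal cyclic arc enclosing $S$ is unique and the $\filsize+1-\diam(S)$ valid starting positions are distinct mod $d$; both follow from the diameter definition but deserve explicit verification. All remaining steps are automatic: the decomposition into the $\Yf_S$ basis is guaranteed by \cref{lem:loc_kern_decomposition}, and the coefficients $\xi^{(\filsize)}_\ii$ are exactly those obtained from applying the local decomposition to $\kappa$ restricted to $\{-1,1\}^\filsize$.
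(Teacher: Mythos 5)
Your proposal is correct and matches the argument the paper sketches after the statement of \cref{prop:kern_eig}. The paper itself does not supply a self-contained proof---it cites Proposition~1 of \citet{misiakiewicz2021learning} and only offers the intuitive description: apply \cref{lem:loc_kern_decomposition} to each window of $\filsize$ contiguous coordinates (valid because any $\filsize$-subset of the uniform hypercube is again uniform on $\2^\filsize$), then observe that the prefactor $\filsize+1-\diam(S)$ counts the number of shifted copies of $S$ fitting inside a window of length $\filsize$, or equivalently the number of windows containing $S$. Your patchwise application of \cref{lem:loc_kern_decomposition}, the exchange of summation, and the sliding-window count $\#\{k : S\subseteq W_k\} = \filsize + 1 - \diam(S)$ are exactly this argument made explicit, and you correctly flag the one technical point the paper leaves implicit: uniqueness of the minimal enclosing arc and distinctness of the admissible starting positions, both of which require $\filsize < d/2$. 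The final step---reading off the scalar ordering $\lambda_{S_\i}$ from the orthonormality of $\{\Yf_S\}$---is likewise right. No gap.
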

We refer to Proposition~1 from \cite{misiakiewicz2021learning}
for a formal proof.
Intuitively, the result follows from applying \cref{lem:loc_kern_decomposition}
for each subset $S \subseteq \{1, \dotsc, d\}$ of contiguous elements.
This is possible because crucially,
any subset of $\filsize$ features
is again distributed uniformly on the $\filsize$-dimensional unit hypercube.
Lastly, the factor $\filsize + 1 - \diam(S)$
stems from the fact that each eigenfunction $\Yf_S$ appears as many times
as there are contiguous index sets of size $\diam(S)$
supported in a fixed contiguous index set of size $\filsize$.
In other words, the term is the number of shifted instances of $S$
supported in a contiguous subset of $\filsize$ features.

As mentioned before, \cref{prop:kern_eig} characterizes the eigenvalues
of cyclic convolutional kernels $\Kf$
up to factors $\xi^{(\filsize)}_{\abs{S}}$ that depend on
the inner nonlinearity $\kappa$.
To avoid the additional factors,
we require the following regularity assumptions:
\begin{assumption}[Regularity]
    \label{ass:genericity}
    Let $\T \defeq \ceil*{4+\frac{4\Ln}{\beta}}$.
    A cyclic convolutional kernel
    $
        \Kf(x,x')
        =\frac{1}{d}
        \sum_{k = 1}^d{
            \kappa\left(
                \frac{\innersize{x_{(k, \filsize)},{x'}_{(k, \filsize)}}{\filsize}}{\filsize}
            \right)
        }
    $
    from the setting of \cref{subsec:setting}
    with inner function $\kappa$ satisfies the regularity assumption
    if there exist constants $\c \geq \T$, $\c', \c'' > 0$
    and a series of constants $\{\c_\ii > 0\}_{\ii=0}^{\T}$ such that,
    for any $\filsize \geq \c$,
    the decomposition
    \begin{equation*}
        \Kf(x,x') = \sum_{\ii = 0}^\filsize\sum_{\substack{\diam(S)\leq \filsize \\ \abs{S} = \ii}}\xi^{(\filsize)}_{\abs{S}}
    \frac{\filsize + 1 - \diam(S)}{d\B(\abs{S},\filsize)} \Yf_S(x)\Yf_S(x')
    \end{equation*}
    from \cref{prop:kern_eig} over inputs $x,x' \in \2^d$
    satisfies
    \begin{align}
        \xi_\ii^{(\filsize)} &\geq \c_{\ii} \quad
        \forall\ii \in \{0,\dots, \T \}\label{eq:ass_lb},\\
        \xi_\ii^{(\filsize)} &\geq 0 \quad
        \forall\ii > \T \label{eq:ass_psd}, \\
        \xi_{\filsize-\ii}^{(\filsize)} &\leq \frac{\c'}{\filsize^{\T-\ii +1}} \quad
        \forall\ii \in \{0,\dots, \T \}, \label{eq:ass_ub_1}\\
        \sum_{\ii \geq 0} \xi_\ii^{(\filsize)}&\leq \c''. \label{eq:ass_ub_2}
    \end{align}
\end{assumption}

For sufficiently high-dimensional inputs $x, x'$,
\cref{eq:ass_lb,eq:ass_psd} ensure that the convolutional kernel
$\Kf(x,x')$ in \cref{eq:conv_kern_def}
is a valid kernel,
and that it can learn polynomials of degree up to $\T$.
Indeed, if $\xi_\ii^{(\filsize)} = 0$ for some $\l$,
then there are no polynomials of degree $\ii$ among the eigenfunctions of $\Kf$.
Furthermore, \cref{eq:ass_lb,eq:ass_ub_1,eq:ass_ub_2} guarantee
that the eigenvalue tail is sufficiently bounded.
This allows us to bound $\norm{\KM}$ and $\mineig{\KM}$ in \cref{app:matrix_concentration}.

Our assumption resembles Assumption~1 by \citet{misiakiewicz2021learning}:
For one, \cref{eq:ass_ub_1,eq:ass_ub_2} are equivalent to Equations~43~and~44
in \cite{misiakiewicz2021learning}.
Furthermore, \cref{eq:ass_lb} above is a slightly stronger version
of Equation~42,
where strengthening is necessary due to the non-asymptotic nature of our results.

We still argue that many standard $\kappa$,
for example, the Gaussian kernel,
satisfy \cref{ass:genericity} with our convolution kernel $\Kf$.
Because such $\kappa$ satisfy Assumption~1 in \cite{misiakiewicz2021learning},
we only need to check that they additionally satisfy our \cref{eq:ass_lb}.
If $\kappa$ is a smooth function,
we have $\xi_{\ii}^{(\filsize)} = \kappa^{(\ii)}(0) + o(1)$ for all $\ii\leq \T$,
where $\kappa^{(\ii)}$ is the $\ii$-th derivative of $\kappa$.
In particular, all derivatives of the exponential function at $0$ are strictly positive,
implying \cref{eq:ass_lb} for the Gaussian kernel
if $d$ is large enough.

The final lemma of this section is a corollary that
characterizes the eigenvalues solely in terms of $\abs{S}$,
and further shows that, for $d$ large enough,
the eigenvalues decay as $\abs{S}$ grows.

\begin{lemma}[Corollary of \cref{prop:kern_eig}]
\label{lem:eigendecay}
Consider a cyclic convolutional kernel as in \cref{prop:kern_eig}
that satisfies \cref{ass:genericity} with $\filsize \in \Theta(d^\beta)$ for some $\beta \in (0,1)$.
Then, for any $S\subseteq \{1,\dots,d\}$ such that $\diam(S) \leq \filsize$ and $\abs{S} < \T$,
the eigenvalue $\lambda_S$ corresponding to the eigenfunction $\Yf_S(x)$ satisfies
\begin{equation}\label{eq:beforeT}
    \lambda_{S} \in \Omega\left(\frac{1}{d\cdot \filsize^{\abs{S}}}\right)
    \quad \text{and} \quad
    \lambda_{S} \in \bigO\left(\frac{1}{d\cdot \filsize^{\abs{S}-1}}\right).
\end{equation}
Furthermore,
\begin{equation}\label{eq:afterT}
     \max_{\substack{\abs{S} \geq \T \\ \gamma(S)\leq \filsize}}\lambda_{S}
    \in \bigO\left(\frac{1}{d\cdot \filsize^{\T-1}}\right).
\end{equation}

\end{lemma}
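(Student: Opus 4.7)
\textbf{Proof plan for \cref{lem:eigendecay}.}
The starting point is the closed form
\[
    \lambda_{S}
    = \xi^{(\filsize)}_{\abs{S}}\,
      \frac{\filsize + 1 - \diam(S)}{d\,\B(\abs{S},\filsize)}
\]
supplied by \cref{prop:kern_eig}, which reduces the lemma to controlling the three factors $\xi^{(\filsize)}_{\abs{S}}$, the ``shift multiplicity'' $\filsize+1-\diam(S)$, and the binomial $\B(\abs{S},\filsize)=\binom{\filsize}{\abs{S}}$. For the multiplicity factor I will just use $1\le \filsize+1-\diam(S)\le \filsize$, which follows from $\abs{S}\le \diam(S)\le \filsize$. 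For the binomial factor, the crude estimates $\binom{\filsize}{\abs{S}}=\Theta(\filsize^{\abs{S}})$ whenever $\abs{S}$ is constant, and $\binom{\filsize}{\abs{S}}\ge \binom{\filsize}{\T}=\Theta(\filsize^{\T})$ whenever $\T\le \abs{S}\le \filsize-\T$, will suffice.

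First, I handle the regime $\abs{S}<\T$ (recall $\T$ is a constant depending only on $\Ln$ and $\beta$). From \cref{eq:ass_lb} and \cref{eq:ass_ub_2} in \cref{ass:genericity} I obtain $\xi^{(\filsize)}_{\abs{S}}\in \Theta(1)$ uniformly for $d$ large enough. Combining with the two-sided bound on the multiplicity factor and $\binom{\filsize}{\abs{S}}\in\Theta(\filsize^{\abs{S}})$ immediately yields
\[
    \lambda_S \in \Omega\!\left(\frac{1}{d\,\filsize^{\abs{S}}}\right)
    \quad\text{and}\quad
    \lambda_S \in \bigO\!\left(\frac{1}{d\,\filsize^{\abs{S}-1}}\right),
\]
where the lower bound uses $\filsize+1-\diam(S)\ge 1$ and the upper bound uses $\filsize+1-\diam(S)\le \filsize$.

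For the second display, with $\abs{S}\ge \T$, I split the range into a ``middle'' part $\T\le \abs{S}\le \filsize-\T$ and a ``top'' part $\filsize-\T<\abs{S}\le \filsize$. In the middle part, \cref{eq:ass_ub_2} gives $\xi^{(\filsize)}_{\abs{S}}\le \c''$ while $\binom{\filsize}{\abs{S}}\ge \binom{\filsize}{\T}\in\Theta(\filsize^{\T})$, so together with the multiplicity factor bound $\le \filsize$ I get $\lambda_S\in \bigO(1/(d\,\filsize^{\T-1}))$. In the top part, writing $\abs{S}=\filsize-\ii$ for some $\ii\in\{0,\dots,\T-1\}$ and invoking \cref{eq:ass_ub_1} gives $\xi^{(\filsize)}_{\abs{S}}\le \c'/\filsize^{\T-\ii+1}$, while $\binom{\filsize}{\abs{S}}=\binom{\filsize}{\ii}\ge 1$; multiplying factors yields $\lambda_S\in \bigO(1/(d\,\filsize^{\T-\ii}))\subseteq \bigO(1/(d\,\filsize^{\T-1}))$ because $\ii\le \T-1$. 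Taking the max over the two subranges finishes \cref{eq:afterT}.

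The only mildly delicate step is the top-range case, where one must confirm that the polynomial decay of $\xi^{(\filsize)}_{\filsize-\ii}$ prescribed by \cref{eq:ass_ub_1} dominates any polynomial \emph{growth} in the binomial $\binom{\filsize}{\ii}$ across the full range $\ii\in\{0,\dots,\T-1\}$; the exponent $\T-\ii+1$ built into \cref{eq:ass_ub_1} is exactly what makes the worst case (at $\ii=\T-1$) give the claimed $1/(d\,\filsize^{\T-1})$ rate, which is why the assumption is stated with that particular shape. Everything else is routine arithmetic in $\filsize$ and $d$ using the Landau notation conventions from \cref{subsec:main_result}.
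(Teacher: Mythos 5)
Your handling of the ``top'' range $\filsize - \T < \abs{S} \le \filsize$ contains a genuine gap. Writing $\abs{S} = \filsize - \ii$ with $\ii \in \{0, \dotsc, \T-1\}$, you bound $\binom{\filsize}{\ii} \ge 1$ and conclude $\lambda_S \in \bigO(1/(d\,\filsize^{\T-\ii}))$. But the claimed inclusion $\bigO(1/(d\,\filsize^{\T-\ii})) \subseteq \bigO(1/(d\,\filsize^{\T-1}))$ needs $\T - \ii \ge \T - 1$, that is $\ii \le 1$; it fails for every $\ii \in \{2, \dotsc, \T - 1\}$, and this range is nonempty since $\T = \lceil 4 + 4\Ln/\beta\rceil \ge 5$. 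Concretely, at $\ii = \T - 1$ your bound only yields $\lambda_S \lesssim 1/(d\filsize)$, which is far larger than the target $1/(d\,\filsize^{\T-1})$. The paper's proof uses the sharper lower bound $\binom{\filsize}{\ii} \ge (\filsize/\ii)^\ii$ at this step, which supplies the missing factor of $\filsize^\ii$ and gives $\lambda_S \lesssim \ii^\ii/(d\,\filsize^{\T})$; the overall $\bigO(1/(d\,\filsize^{\T-1}))$ rate in \cref{eq:afterT} then comes from the ``middle'' case, not from the top one.

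Your closing paragraph flags a delicacy here, but the diagnosis points the wrong way: $\binom{\filsize}{\ii}$ sits in the denominator, so its growth in $\filsize$ \emph{helps} the upper bound rather than competing with the decay of $\xi^{(\filsize)}_{\filsize-\ii}$. There is nothing for the $\xi$-decay to ``dominate''; the problem is that replacing $\binom{\filsize}{\ii}$ by $1$ throws away exactly the $\filsize^\ii$ factor needed to make the arithmetic close. With that one bound tightened, the rest of your argument (two-sided control of the multiplicity factor, $\Theta(1)$ control of $\xi^{(\filsize)}_{\abs{S}}$ for constant $\abs{S}$, and the $\binom{\filsize}{\T}$ bound in the middle range) matches the paper's proof.
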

\begin{proof}
Without loss of generality,
assume $d$ is large enough such that
$d > \filsize / 2 \geq \c / 2$,
where $\c$ is a constant from \cref{ass:genericity}.

Let $S\subseteq \{1,\dots,d\}$ with $\diam(S) \leq \filsize$
be arbitrary and define
$\ii \defeq \abs{S}$
and $\rmult \defeq \filsize + 1 - \diam(S)$.
Since $\ii\leq \diam(S) \leq \filsize$, we have
\begin{equation*}
    1 \leq \rmult \leq \filsize + 1 - \ii.
\end{equation*}
Furthermore, since $\B(\ii, \filsize) = \binom{\filsize}{\ii}$,
we use the following classical bound on the binomial coefficient
throughout the proof:
\begin{equation*}
    \left(\frac{1}{\ii}\right)^\ii \filsize^\ii
    \leq \B(\ii, \filsize)
    \leq \left(\frac{e}{\ii}\right)^\ii \filsize^\ii .
\end{equation*}

For the first part of the lemma, assume $\abs{S} < \T$.
Then, using \cref{ass:genericity}, we have
\begin{align*}
    \lambda_{S}
    = \frac{\xi_\ii^{(\filsize)}\rmult}{d\B(\ii, \filsize)}
    &\overset{(i)}{\leq} \frac{\ii^\ii}{\filsize^\ii} \frac{\c'' \rmult}{d}
    \leq \c_{\ii,1} \frac{1}{dq^{\ii - 1}}
    \leq \c_{\T,1} \frac{1}{dq^{\ii - 1}},\\
    &\overset{(ii)}{\geq}
    \frac{\ii^\ii}{\filsize^\ii e^\ii}
    \frac{\c_\ii \rmult}{d}
    \geq
    \c_{\ii,2} \frac{1}{dq^{\ii}}
    \geq \c_{\T,2} \frac{1}{dq^{\ii}}
\end{align*}
for some positive constants $\c_{\ii,1}, \c_{\ii,2}$
that depend on $\ii$,
$\c_{\T,1} \defeq \max_{\ii \in \{0, \dotsc, \T - 1\}}{\c_{\ii,1}}$,
and
$\c_{\T,2} \defeq \min_{\ii \in \{0, \dotsc, \T - 1\}}{\c_{\ii,2}}$.
Step $(i)$ follows from the upper bound in \cref{eq:ass_ub_2}
with non-negativity in \cref{eq:ass_lb,eq:ass_psd},
and $(ii)$ follows from the lower bound in \cref{eq:ass_lb}.
Since $\c_{\T,1}$ and $\c_{\T,2}$ do not depend on $\ii$,
this concludes the first part of the proof.

For the second part of the proof,
we consider two cases
depending on whether $\abs{S} \in [\T, \filsize-\T]$
or $\abs{S} > \filsize-\T$.

Hence, first assume $\T\leq\abs{S} \leq \filsize-\T$. Then,
\begin{equation}\label{eq:mid_part}
    \lambda_{S}
    \overset{(i)}{=} \xi_{\abs{S}}\frac{\filsize+1-\gamma(S)}{d\binom{\filsize}{\abs{S}}}
    \overset{(ii)}\leq \xi_{\abs{S}}\frac{\filsize+1-\gamma(S)}{d\binom{\filsize}{\T}}
    \overset{(iii)}\leq \c'' \frac{\filsize}{d \left(\frac \filsize \T\right)^\T}
    = \frac{\c'' \T^\T}{d\filsize^{\T-1}},
\end{equation}
where $(i)$ follows from \cref{prop:kern_eig}.
In step $(ii)$, we use that $\binom{q}{\abs{S}}$ is minimized
when $\abs{S}$ is has the largest difference to $\filsize/2$.
Lastly, step $(iii)$ applies the upper bound from \cref{eq:ass_ub_2}
together with non-negativity of $\xi_{\abs{S}}$ in \cref{ass:genericity},
and the classical bound on the binomial coefficient.

Now assume$\abs{S} > \filsize-\T$. Then,
\begin{align}
    \lambda_S&\overset{(i)}{=} \xi_{\abs{S}}\frac{\filsize+1-\gamma(S)}{d\binom{\filsize}{\abs{S}}}
    \leq \xi_{\filsize-(\filsize-\abs{S})}\frac{\filsize}{d\binom{\filsize}{\filsize-(\filsize-\abs{S})}} \nonumber \\
    &\overset{(ii)}{=} \xi_{\filsize-(\filsize-\abs{S})}\frac{\filsize}{d\binom{\filsize}{\filsize-\abs{S}}}\notag\\
    &\overset{(iii)}{\leq}
        \xi_{\filsize-(\filsize-\abs{S})}
    \frac{
        (\filsize-\abs{S})^{\filsize-\abs{S}} \filsize
    }{
        d \filsize^{\filsize-\abs{S}}
    } \nonumber \\
    &\overset{(iv)}\leq \c' \frac{(\filsize-\abs{S})^{\filsize-\abs{S}}\filsize}{d \filsize^{\T-(\filsize-\abs{S})+1}\filsize^{\filsize-\abs{S}}}
    \overset{(v)}\leq \frac{\c' \T^\T}{d\filsize^{\T}},\label{eq:late_part}
\end{align}
where $(i)$ follows from \cref{prop:kern_eig},
$(ii)$ from the fact that $\binom{n}{n-k} = \binom{n}{k}$,
$(iii)$ from the classical bound for the binomial coefficient,
$(iv)$ from \cref{eq:ass_ub_1} in \cref{ass:genericity},
and $(v)$ from the fact that $\filsize-\abs{S} < \T$ in the current case.

Combining \cref{eq:mid_part,eq:late_part} from the two cases finally yields
\begin{align*}
    \max_{\substack{\abs{S}\geq \T\\\gamma(S)\leq q}}\lambda_{S}
     &\leq \max\left\{
        \max_{\substack{\T\leq\abs{S} \leq \filsize-\T\\\gamma(S)\leq q}}\lambda_{S},
        \max_{\substack{\abs{S} > \filsize-\T\\\gamma(S)\leq q}}\lambda_{S}
    \right\}\\
    &\leq \max\left\{\frac{\c'' \T^\T}{d\filsize^{\T-1}}, \frac{\c' \T^\T}{d\filsize^{\T}}\right\}
    \in \bigO\left(\frac{1}{d\filsize^{\T-1}}\right),
\end{align*}
which concludes the second part of the proof.
\end{proof}

\subsection{Proof of \texorpdfstring{\cref{lem:ground_truth_rate}}{Lemma~\ref{lem:ground_truth_rate}}}
\label{subsec:proof_ground_truth_rate}
    First, note that $\fstar = \Yf_{S^*}$
    for $S^* = \{1, \dots, \Lgt\}$.
    Since $\abs{S^*} = \diam(S^*) = \Lgt$,
    \cref{prop:kern_eig} yields
    \begin{equation*}
        \lambda_{S^*}
        = {\xi_{\Lgt}^{(\filsize)}} \frac{\filsize +1 - \Lgt}{d\B(\Lgt,\filsize)}
        \overset{(i)}\in \Theta\left(\frac{\filsize}{d \filsize^{\Lgt}}\right)
        = \Theta\left(\frac{1}{d \filsize^{\Lgt-1}}\right)
        \overset{(ii)}{\subseteq} \omega(\lambda_\m),
    \end{equation*}
    where $(i)$ uses \cref{eq:ass_lb,eq:ass_ub_2} in \cref{ass:genericity}
    for $\Lgt \leq \T$ and $d$ large enough
    to get $\xi_{\Lgt}^{(\filsize)}\in \Theta(1)$,
    and $(ii)$ uses $\lambda_\m\in o\left(\frac{1}{d \filsize^{\Lgt-1}}\right)$.
    Hence, for $d$ sufficiently large,
    $\lambda_{S^*}>\lambda_\m$.
    Since the eigenvalues are in decreasing order,
    this implies that $\fstar$ is in the span of the first $\m$ eigenfunctions.
    This further yields
    \begin{equation*}
        \norm{\Dm^{-1}\a}
        = \lambda_{S^*}^{-1}
        \in \Theta\left(d \filsize^{\Lgt-1}\right),
    \end{equation*}
    since the entry of $\a$ corresponding to $\Yf_{S^*}$ is $1$ while all others are $0$.

\subsection{Proof of \texorpdfstring{\cref{lem:eigendecay_main}}{Lemma~\ref{lem:eigendecay_main}}}
\label{subsec:proof_eigendecay_main}

Before proving \cref{lem:eigendecay_main},
we introduce the following quantity:
\begin{equation}
    \label{eq:L}
    \L \defeq\floor*{\frac{\Ln-\Lr-1}{\beta}}.
\end{equation}
Intuitively, $\L$ corresponds to the degree of the largest polynomial
that a cyclic convolutional kernel as defined in \cref{eq:conv_kern_def}
can learn.
This quantity plays a key role throughout the proof of \cref{lem:eigendecay_main},
and \cref{lem:combined} later.
Finally, note that $\delta$ as defined in \cref{th:rates}
can be written as $\delta = \frac{\Ln-\Lr -1}\beta - \L$.

\begin{proof}[Proof of \cref{lem:eigendecay_main}]
First, we use \cref{prop:kern_eig} to write the cyclic convolutional kernel as
\begin{equation*}
    \Kf(x,x')
    = \sum_{\ii = 0}^\filsize{
        \sum_{\substack{\diam(S)\leq \filsize \\ \abs{S} = \ii}}{
            \lambda_S \Yf_S(x)\Yf_S(x')
        }
    } = \sum_{\i}{\lambda_{S_\i}  \Yf_{S_\i}(x)\Yf_{S_\i}(x')}
\end{equation*}
where the $\lambda_{S_\i}$ are ordered such that $\lambda_{S_{\i+1}} \leq \lambda_{S_\i}$.

For the first part of the proof,
we need to pick an $\m \in \N$
such that $n\lambda_{\m} \in \Theta\left(\max\{\reg,1\}\right)$
and $\m \in \Theta\left(\frac{n\filsize^{-\delta}}{\max\{\reg,1\}}\right)$.
We will equivalently choose $\m \in \Theta(d \filsize^{\L})$
with $\lambda_\m \in \Theta\left(\frac{1}{d \cdot \filsize^{\L+\delta}}\right)$;
since $n \in \Theta(d^\Ln)$ and $\max\{\reg,1\}\in \Theta(d^\Lr)$,
we have
\begin{align*}
    \Theta\left(\frac{\max\{\reg,1\}}{n}\right)
    =\Theta\left(\frac{d^{\Lr}}{d^{\Ln}}\right)
    =\Theta\left(\frac{d^{\Lr}}{d^{1+\Lr + \beta (\L+\delta)}}\right)
    = \Theta\left(\frac{1}{d \cdot \filsize^{\L+\delta}}\right),\\
    \Theta\left(
        \frac{n\filsize^{-\delta}}{\max\{\reg,1\}}
    \right)
    = \Theta\left(\frac{d^{\Ln}\filsize^{-\delta}}{d^{\Lr}}\right)
    = \Theta\left(\frac{d^{1+\Lr + \beta (\L+\delta)}\filsize^{-\delta}}{d^{\Lr}}\right)
    = \Theta(d \filsize^{\L}).
\end{align*}

The remainder of the proof proceeds in five steps:
we first construct a candidate $S_\m \subseteq \{1,\dots,d\}$
with $\diam(S_\m) \leq \filsize$,
show that the rate of the eigenvalue
corresponding to $\Yf_{S_\m}$ satisfies
$\lambda_{S_\m} = \lambda_\m \in \Theta\left(\frac{1}{d \cdot \filsize^{\L+\delta}}\right)$,
show that the rate of $\m \in \Theta(d \filsize^{\L})$,
establish $\Theta\left(\frac{n\filsize^{-\delta}}{\max\{\reg,1\}}\right)
\subseteq \bigO(n \cdot \filsize^{-\bd})$,
and finally show that $\lambda_{\m} \in o{\left(\frac{1}{d\cdot \filsize^{\Lgt-1}}\right)}$
for appropriate $\Lgt$.

\paragraph*{Construction of $\m$}
We consider two different $S_\m$ depending on $\delta$:
\begin{equation}\label{eq:SM}
    S_\m = \begin{cases}
        \{1,\dots,L,\floor{\filsize+1-\filsize^{1-\delta}}\} & \delta \in (0,1)\\
        \{1,\dots,L,\floor{\filsize/2}\} & \delta =0.
    \end{cases}
\end{equation}
For $d$---and hence $\filsize \in \Theta(d^\beta)$---large enough,
$S_\m$ is well-defined,
$\abs{S_\m} = \L + 1$,
and the diameter is
\begin{equation*}
    \diam(S_\m)=\begin{cases}
        \floor{\filsize+1-\filsize^{1-\delta}} & \delta \in (0,1)\\
        \floor{\filsize/2} & \delta =0 .
    \end{cases}
\end{equation*}
For the rest of the proof, assume that $d$ is sufficiently large.

\paragraph*{Rate of $\lambda_{S_\m}$}
Using \cref{prop:kern_eig} and $\abs{S_\m}=\L+1$,
we can write
\begin{equation*}
    \lambda_{S_\m}
    = \xi^{(\filsize)}_{\L + 1}
    \frac{\filsize + 1 - \diam(S_\m)}{d\B(\L + 1,\filsize)} .
\end{equation*}
First, we show that the numerator is in $\Theta\left(q^{1-\delta}\right)$
for both definitions of $S_\m$.
In the case where $\delta \in (0,1)$, we have
\begin{equation*}
    \filsize+1-\floor{\filsize+1-\filsize^{1-\delta}}
    = -\floor{-\filsize^{1-\delta}} = \ceil{q^{1-\delta}}
    \overset{(i)}\in \Theta\left(q^{1-\delta}\right),
\end{equation*}
where $(i)$ follows from $\delta < 1$ and $\filsize$ sufficiently large.
In the case where $\delta = 0$, we have
\begin{align*}
    \filsize+1-\floor{\filsize/2} &\leq \filsize + 1 \in \bigO(\filsize), \\
    \filsize+1-\floor{\filsize/2} &\geq \filsize/2 \in \Omega(\filsize).
\end{align*}
Thus, since $\delta = 0$ in this case,
the numerator is in $\Theta(\filsize) = \Theta(\filsize^{1-\delta})$.

As the denominator does not depend on $\delta$,
we use the same technique for both $\delta = 0$ and $\delta \in (0,1)$.
The classical bound on $\B(L+1,\filsize) = \binom{\filsize}{\L+1}$ yields
\begin{equation*}
    \filsize^{\L+1}
    \lesssim \left(\frac{\filsize}{\L+1}\right)^{\L+1}
    \leq \binom{\filsize}{\L+1}
    \leq e^{\L + 1} \left(\frac{\filsize}{\L+1}\right)^{\L+1}
    \lesssim \filsize^{\L+1}.
\end{equation*}
Therefore, $d\B(\L + 1,\filsize) \in \Theta\left(d \filsize^{\L + 1}\right)$.

Finally, since $\L+1 \leq \T = \ceil*{4+4\Ln/\beta}$,
we have $ \xi^{(\filsize)}_{\L+1} \in \Theta(1)$
by \cref{eq:ass_lb,eq:ass_ub_2} in \cref{ass:genericity}
for $d$ sufficiently large.
Combining all results then yields the desired rate of $\lambda_{S_\m}$ as follows:
\begin{equation}
    \lambda_{S_\m}
    \in \Theta\left(\frac{\filsize^{1-\delta}}{d\filsize ^{L+1}}\right)
    = \Theta\left(\frac{1}{d \cdot \filsize^{\L+\delta}}\right)
    \label{eq:lambda_m_rate} .
\end{equation}

\paragraph*{Rate of $\m$}
To establish $\m \in \Theta(d \filsize^{\L})$, we bound $\m$ individually from above and below.

\pseudoparagraph{Upper bound}
Since the eigenvalues are in decreasing order,
we can bound $\m$ from above by counting how many eigenvalues
are larger than $\lambda_\m$.
To do so, we use $\abs{S_\m} = \L+1$,
and show that for $d$ sufficiently large,
all $S_\i$ with $\abs{S_\i} > \L + 1$
correspond to eigenvalues $\lambda_{S_\i} < \lambda_{S_\m}$.
We first decompose
\begin{align*}
    \max_{\substack{\i : \abs{S_\i} > \L+1 \\ \diam(S_\i) \leq \filsize}}{\lambda_{S_\i}}
    = \max\left\{
        \underbrace{
            \max_{
                \substack{\i : \L+1 < \abs{S_\i} <\T \\ \diam(S_\i) \leq \filsize}
            }\lambda_{S_\i}
        }_{\revdefeq M_1},
        \underbrace{
            \max_{
                \substack{\i : \abs{S_\i} \geq \T \\ \diam(S_\i) \leq \filsize}
            }\lambda_{S_\i}
        }_{\revdefeq M_2}
    \right\}.
\end{align*}
For $M_1$, let $\i$ with $\L + 1 < \abs{S_\i} <\T$ be arbitrary. Then,
\begin{align*}
    \lambda_{S_\i}
    &\overset{(i)}{\in} \bigO\left(\frac{1}{d\filsize^{\abs{S_\i}- 1}}\right)
    \subseteq \bigO\left(\frac{1}{d\filsize^{(\L+2)-1}}\right)
    = \bigO\left(\frac{1}{d\filsize^{\L + 1}}
    \right) \overset{(ii)}\subseteq o(\lambda_{S_\m}),
\end{align*}
where we apply \cref{eq:beforeT} from \cref{lem:eigendecay} in $(i)$,
and use \cref{eq:lambda_m_rate} with $\delta < 1$ in $(ii)$.
This implies $M_1 \in o(\lambda_{S_\m})$.

For $M_2$, we directly get
\begin{align*}
    M_2 = \max_{
        \substack{\i : \abs{S_\i} \geq \T \\ \diam(S_\i) \leq \filsize}
    }\lambda_{S_\i}
    \overset{(i)}{\in} \bigO\left(\frac{1}{d\filsize^{\T- 1}}\right)
    \overset{(ii)}{\subseteq} \bigO\left(\frac{1}{d\filsize^{(\L+2)- 1}}\right)
    = \bigO\left(\frac{1}{d\filsize^{\L+1}}\right)
    \overset{(iii)}\subseteq  o(\lambda_{S_\m}),
\end{align*}
where we apply \cref{eq:afterT} from \cref{lem:eigendecay} in $(i)$,
$(ii)$ follows from $\L+2 \leq \T$,
and step $(iii)$ uses \cref{eq:lambda_m_rate} with $\delta < 1$.

Combined, we have
$\max_{\substack{\i : \abs{S_\i} > \L+1, \diam(S_\i) \leq \filsize}}{\lambda_{S_\i}}
= \max{\{M_1, M_2\}} \in o(\lambda_{S_\m})$.
Thus, for $d$ sufficiently large and $\abs{S_\i} > \L + 1$,
we have $\lambda_{S_\i} < \lambda_{S_\m}$.
For this reason, $\m$ is at most the number of eigenfunctions
with degree no larger than $\L + 1$:
\begin{equation*}
    \m
    \leq \sum_{\ii = 0}^{\L+1}{\C(\ii, \filsize, d)}
    \overset{(i)}{\in} \bigO(d\filsize^{\L}),
\end{equation*}
where $(i)$ uses \cref{lem:n_sets} for $d$ large enough
with $\binom{\filsize}{\ii} \in \Theta(\filsize^\ii)$.

\pseudoparagraph{Lower bound}
By construction of $S_\m$ in \cref{eq:SM},
we have $\diam(S_\m) \geq \floor{\filsize/2}$.
This, combined with \cref{prop:kern_eig},
implies that the indices of all polynomials
with degree $\L+1$ but diameter at most $\floor{\filsize/2}-1$ are smaller than $\m$.
Hence, for large enough $d$, \cref{lem:n_sets} yields the following lower bound:
\begin{equation*}
    \m \geq \C(\L+1, \floor{\filsize/2} - 1, d)
    = d\binom{ \floor{\filsize/2}-2}{\L}
    \geq d\left(\frac{ \floor{\filsize/2}-2}{\L}\right)^\L \in \Omega(d\filsize^\L) .
\end{equation*}

The upper and lower bound together then imply $ \m \in \Theta(d\filsize^\L)$.
This concludes the existence of an $\m \in \N$ such that $\lambda_\m$ and $\m$
exhibit the desired rates.

\paragraph*{Rate of $\m$ with respect to $n$}
We can write $n$ as
\begin{equation*}
    n \in \Theta(d^\Ln)
    = \Theta\left(d\cdot d^{\beta\frac{\Ln-1}{\beta}}\right)
    = \Theta\left(
        d \filsize^{
            \floor*{\frac{{\Ln-1}}\beta}
            + \left(\frac{{\Ln-1}}\beta-\floor*{\frac{{\Ln-1}}\beta}\right)
        }
    \right)
    = \Theta\left(d \filsize^{\floor*{\frac{{\Ln-1}}\beta} + \bd}\right) .
\end{equation*}
Combining this with $\L \leq \floor*{\frac{{\Ln-1}}\beta}$,
we directly get
$\Theta(d\filsize^\L) \subseteq \bigO(d\filsize^{\floor*{\frac{{\Ln-1}}\beta}})
= \bigO(n \filsize^{-\bd})$.

\paragraph*{Rate of $\lambda_\m$ for appropriate $\Lgt$}
Since $n\lambda_{\m} \in \Theta(\max\{\reg,1\})$,
we have
\begin{equation*}
    \lambda_{\m}
    \in \Theta\left(\frac{\max\{\reg,1\}}{n}\right)
    \overset{(i)}{=} \Theta\left(
        \frac{d^{\Lr}}{d\cdot d^{\Lr} \filsize^{\L + \delta}}
    \right)
    = \Theta\left(\frac{1}{d \filsize^{\L + \delta}}\right),
\end{equation*}
where $(i)$ uses the identity $\Ln = 1 + \Lr + \beta (\L + \delta)$.
Assume now $\Lgt \leq \ceil*{\frac{{\Ln-\Lr-1}}\beta}$.
For the remainder, we need to consider two cases depending on $\delta$.

If $\delta > 0$, then $\Lgt \leq \L + 1$, and we have
\begin{equation*}
    \lambda_{\m}
    \in\bigO\left(\frac{\filsize^{-\delta}}{d \filsize^{\Lgt-1}}\right)
    \subseteq o\left(\frac{1}{d \filsize^{\Lgt-1}}\right).
\end{equation*}
If $\delta = 0$, then $\Lgt \leq \L$, and we have
\begin{equation*}
    \lambda_{\m} \in\bigO\left(\frac{\filsize^{-\delta}}{d \filsize^{\L}}  \right)\subseteq\bigO\left(\frac{1}{d \filsize^{\Lgt}}  \right) \subseteq o\left(\frac{1}{d \filsize^{\Lgt-1}}  \right).
\end{equation*}
In both cases,
$\lambda_{\m}\in o\left(\frac{1}{d \filsize^{\Lgt-1}}\right)$,
concluding the proof.
\end{proof}

\section{Matrix concentration}
\label{app:matrix_concentration}
This section considers the random matrix theory part of our main result.
First, \cref{ssec:matrix_concentration_early} focuses on the large eigenvalues
of our kernel, and proves \cref{lem:as_1_verification}.
Next, \cref{ssec:12decomposition,ssec:proof_lemcombined} focus on the tail
of the eigenvalues,
culminating in the proof of \cref{lem:combined}.
Lastly, \cref{ssec:technical_lemmas,ssec:rmt_lemmas}
establishes some technical tools that we use throughout the proofs.

\subsection{Proof of \texorpdfstring{\cref{lem:as_1_verification}}{Lemma~\ref{lem:as_1_verification}}}
\label{ssec:matrix_concentration_early}
In this proof, we show that the matrix $\Pm^\t\Pm/n$
concentrates around the identity matrix
for all $\m \in \bigO(n d^{-\beta\bd})$,
thereby establishing \cref{eq:ass1}.
Let $\mlarge$ be the largest $\m \in \bigO(n\filsize^{-\bd})$.
The proof consists of applying Theorem~5.44 from \cite{Vershynin2012}
to the matrix $\P_{\leq\mlarge}$,
and extending the result to all suitable choices of $\m$ simultaneously.

More precisely, let $\tilde{\c}$ be the implicit constant
of the $\bigO(n \filsize^{-\bd})$-notation,
and define $\mlarge$ to be the largest $\m \in \N$
with $\m \leq \tilde{\c} \cdot n\filsize^{-\bd}$.
Note that $\mlarge$ exists,
because $d$ is large enough and fixed.

\paragraph*{Bound for $\mlarge$}
To apply Theorem~5.44 from \cite{Vershynin2012},
we need to verify the theorem's conditions on the rows of $\P_{\leq\mlarge}$.
In particular, we show that the rows are independent,
have a common second moment matrix,
and that their norm is bounded.
Let $[\P_{\leq\mlarge}]_{\iii,:}$ indicate
the $\iii$-th row of $\P_{\leq\mlarge} \in \R^{n\times \mlarge}$.
We may write each row entry-wise as
\begin{equation*}
    [\P_{\leq\mlarge}]_{\iii,:}
    = \left[\Yf_{1}(x_\iii)\;\Yf_{2}(x_\iii)\;\cdots\;\Yf_{\mlarge}(x_\iii)\right]^\t .
\end{equation*}

First, the rows of $\P_{\leq\mlarge}$ are independent,
since each row depends on a different $x_\iii$,
and we assume the data to be i.i.d..

Second, since the eigenfunctions are orthonormal w.r.t.\ the data distribution,
the second moment of the rows is
$\E\left[[\P_{\leq\mlarge}]_{i,:}[\P_{\leq\mlarge}]_{i,:}^\t\right] = \I_\mlarge$
for all rows $\iii \in \{1, \dotsc, n\}$.

Third, to show that each row has a bounded norm,
we use the fact that the eigenfunctions $\Yf_\i$ in \cref{eq:Y_def}
over $\2^d$ satisfy $\Yf_\i(x_i)^2 = 1$ for all $\i$.
Thus, the norm of each row is
\begin{equation*}
    \norm*{[\P_{\leq\mlarge}]_{i,:}}_2
    = \sqrt{\sum_{\i = 1}^\mlarge \Yf_\i(x_i)^2}
    = \sqrt{\sum_{\i = 1}^\mlarge 1} = \sqrt{\mlarge} .
\end{equation*}

We can now apply Theorem~5.44 from \cite{Vershynin2012}.
For any $t\geq0$, this yields the following inequality
with probability $1-\mlarge\exp\{-ct^2\}$,
where $c$ is an absolute constant:
\begin{equation*}
    \norm*{\frac{\P_{\leq\mlarge}^\t\P_{\leq\mlarge}}{n}-\I_\mlarge}
    \leq \max\left\{\norm{\I_\mlarge}^{\frac{1}{2}}\Delta, \Delta^2\right\},
    \quad \text{where } \Delta = t\sqrt{\frac{\mlarge}{n}}.
\end{equation*}
The choice $t = \frac{1}{2}\sqrt{\frac{n}{\mlarge}}$
yields $\max\left\{\norm{\I_\mlarge}^{\frac{1}{2}}\Delta, \Delta^2\right\} = 1/2$,
and the following error probability for large enough $d$:
\begin{equation*}
    \mlarge \exp\left\{-\c \frac{n}{4\mlarge}\right\}
    \overset{(i)}{\lesssim} n \filsize^{-\bd} \exp\left\{-\c' \frac{n}{n \filsize^{-\bd}}\right\}
    \lesssim \filsize^{- \bd} \cdot d^{\Ln} \exp\{-\c' \filsize^{\bd}\}
    \lesssim \filsize^{-\bd},
\end{equation*}
where $(i)$ follows from $\mlarge \in \bigO(n \filsize^{-\bd})$.

\paragraph*{Bound for any $\m < \mlarge$}
Note that $\Pm^\t\Pm$ is a submatrix of $\P_{\leq\mlarge}^\t\P_{\leq\mlarge}$.
Thus,
\begin{equation*}
    \frac{\Pm^\t\Pm}n-\I_{\m} \quad \text{is also a submatrix of}\quad\frac{\P_{\leq\mlarge}^\t\P_{\leq\mlarge}}{n}-\I_\mlarge.
\end{equation*}
Therefore,
\begin{equation*}
    \norm*{\frac{\Pm^\t\Pm}n-\I_{\m}}
    \leq \norm*{\frac{\P_{\leq\mlarge}^\t\P_{\leq\mlarge}}{n}-\I_\mlarge}
    \leq 1/2
\end{equation*}
with probability at least $1-\c d^{-\beta\bd}$
uniformly over all $\m\leq \mlarge$.

\subsection{
Further decomposition of the terms after \texorpdfstring{$\m$}{\m}
}
\label{ssec:12decomposition}
In this section, we focus on the concentration
of the smallest and largest eigenvalue of the kernel matrix $\KM$
to prove \cref{lem:combined}.
However, this proof is involved, and requires additional tools.
In particular, we further decompose $\KfM$
into two kernels $\Kf_1$ and $\Kf_2$.

In the following, we consider the setting of \cref{th:rates}
with a convolutional kernel $\Kf$ that satisfies \cref{ass:genericity}.
We define the additional notation
\begin{equation*}
    \L \defeq\floor*{\frac{\Ln-\Lr-1}{\beta}}
    \quad \text{and} \quad
    \bL \defeq \floor*{\frac{\Ln-1}{\beta}}.
\end{equation*}
Intuitively, $\L$ is the maximum polynomial degree
that $\Kf$ can learn with regularization,
and $\bL$ is the analogue without regularization.
Finally, note that $\delta$ and $\bd$ as defined in \cref{th:rates}
can be written as $\delta = \frac{\Ln-\Lr -1}\beta - \L$
and $\bd = \frac{\Ln - 1}\beta - \bL$,
respectively.

We now introduce the two additional kernels,
and then show in \cref{lem:K1K2} that $\KfM = \Kf_1 + \Kf_2$.
First, applying \cref{prop:kern_eig} to $\Kf$ yields
\begin{equation}\label{eq:Kf_Si_dec}
    \Kf(x,x') = \sum_{\i}{\lambda_{S_\i} \Yf_{S_\i}(x)\Yf_{S_\i}(x')},
\end{equation}
where $\{S_\i\}_{\i > 0}$ is a sequence of all subsets $S_\i \subseteq \{1,\dots,d\}$
with $\diam(S_\i) \leq \filsize$,
ordered such that $\lambda_{S_\i} \geq \lambda_{S_{\i+1}}$.
Next, let $\m \in \N$ be such that $n\lambda_{S_\m}\in\Theta(\max\{\reg,1\})$,
and define the index sets
\begin{align*}
    \IndS_1 &\defeq \{\i \in \N \mid \i > \m \;\text{ and }\; \abs{S_\i} \leq \bL + 1\}, \\
    \IndS_2 &\defeq \{\i \in \N \mid \abs{S_\i} \geq \bL + 2 \}.
\end{align*}
Those sets induce the following kernels:
\begin{align*}
    \Kf_1(x,x') &\defeq \sum_{\i \in \IndS_1}\lambda_{S_\i} \Yf_{S_\i}(x)\Yf_{S_\i}(x'),
    & \Sf_1(x,x') &\defeq \sum_{\i \in \IndS_1}\lambda_{S_\i}^2 \Yf_{S_\i}(x)\Yf_{S_\i}(x'),\\
    \Kf_2(x,x') &\defeq \sum_{\i \in \IndS_2}\lambda_{S_\i} \Yf_{S_\i}(x)\Yf_{S_\i}(x'),
    & \Sf_2(x,x') &\defeq \sum_{\i \in \IndS_2}\lambda_{S_\i}^2 \Yf_{S_\i}(x)\Yf_{S_\i}(x'),
\end{align*}
where $\Sf_1$ and $\Sf_2$ are the squared kernels
corresponding to $\Kf_1$ and $\Kf_2$, respectively.
The empirical kernel matrices
$\K_1, \K_2,\S_1,\S_2 \in \R^{n \times n}$ are
\begin{equation*}
    [\K_1]_{i,j} = \Kf_1(x_i, x_j),
    \quad [\K_2]_{i,j} = \Kf_2(x_i, x_j),
    \quad [\S_1]_{i,j} = \Sf_1(x_i, x_j),
    \quad\text{and}\quad [\S_2]_{i,j} = \Sf_2(x_i, x_j).
\end{equation*}
Furthermore, as in the original kernel decomposition, we define the matrices
\begin{align*}
    \P_1 & \in \R^{n \times \abs{\IndS_1}},
    &[\P_1]_{i,j} &= \Yf_{S_{\i_j}}(x_i),\\
    \D_1 &\in \R^{\abs{\IndS_1} \times \abs{\IndS_1}},
    &\D_1 &= \diag(\lambda_{S_{\i_1}}, \dots, \lambda_{S_{\i_{\abs{\IndS_1}}}}),
\end{align*}
where $\{{\i_j}\}_{j = 1}^{\abs{\IndS_1}}$ is a sequence of all indices in $\IndS_1$
ordered such that $\lambda_{S_{\i_j}} \geq \lambda_{S_{\i_{j+1}}}$.
Intuitively, $\P_1,\D_1$ are the analogue
to $\Pm,\Dm$ in the original decomposition $\Kf = \Kfm + \KfM$.

Lastly, we define $\bm$ as the largest eigenvalue corresponding
to an eigenfunction $\Yf_S$ of degree $\abs{S} \geq \bL + 2$, that is,
\begin{equation*}
    \bm \defeq \min \IndS_2 .
\end{equation*}

Using the previous definitions,
the following lemma establishes that $\Kf_1$ and $\Kf_2$
indeed constitute a decomposition of $\KfM$.
\begin{lemma}[$1$-$2$ decomposition]
    \label{lem:K1K2}
    For $d$ sufficiently large,
    we have
    \begin{equation*}
        \Kf_{>\m}(x,x') = \Kf_1(x,x') + \Kf_2(x,x')
        \quad\text{and}\quad \Sf_{>\m}(x,x') = \Sf_1(x,x') + \Sf_2(x,x') .
    \end{equation*}
\end{lemma}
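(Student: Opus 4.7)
The plan is to reduce both claimed identities to a single statement about index sets, and then establish that statement by an eigenvalue-rate comparison using \cref{lem:eigendecay}. Because $\Kf$ and its squared counterpart $\Sf$ share the same eigenfunctions $\Yf_{S_\i}$ and differ only by whether the coefficient is $\lambda_{S_\i}$ or $\lambda_{S_\i}^2$, the two identities will follow from the same index-set equality, so I only need to handle the unsquared case.

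First I would unfold the definitions and observe that $\IndS_1$ and $\IndS_2$ are disjoint (since they are defined by complementary constraints on $\abs{S_\i}$), and that $\IndS_1 \subseteq \{\i > \m\}$ by definition. Hence it suffices to show $\IndS_2 \subseteq \{\i > \m\}$, i.e.\ $\bm > \m$, or equivalently that every $\i \leq \m$ satisfies $\abs{S_\i} \leq \bL + 1$. Given this, the contributions of $\Kf_1$ and $\Kf_2$ partition those of $\Kf_{>\m}$ exactly, and an identical argument applied with $\lambda_{S_\i}^2$ in place of $\lambda_{S_\i}$ handles the squared kernel.

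The core step is thus to prove $\bm > \m$ for $d$ large. From the construction of $\m$ (via \cref{lem:eigendecay_main}) we already know $\lambda_{S_\m} \in \Theta\bigl(1/(d\filsize^{\L+\delta})\bigr)$. On the other hand, for any $S$ with $\abs{S} \geq \bL + 2$ and $\diam(S) \leq \filsize$, \cref{lem:eigendecay} gives $\lambda_S \in \bigO\bigl(1/(d\filsize^{\abs{S}-1})\bigr) \subseteq \bigO\bigl(1/(d\filsize^{\bL+1})\bigr)$ when $\abs{S} < \T$, and $\lambda_S \in \bigO\bigl(1/(d\filsize^{\T-1})\bigr)$ when $\abs{S} \geq \T$. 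I would then verify the strict exponent inequalities $\L + \delta < \bL + 1$ and $\L + \delta < \T - 1$: the first follows from $\L+\delta = (\Ln - \Lr - 1)/\beta \leq (\Ln-1)/\beta < \bL + 1$ using $\Lr \geq 0$ and the ceiling gap, while the second is immediate from $\T \geq 4 + 4\Ln/\beta$. Combining these bounds, $\lambda_{S_\m}$ strictly dominates $\lambda_S$ for every such $S$ once $d$ is large enough, so no subset of cardinality $\geq \bL + 2$ appears among the top-$\m$ eigenvalues, i.e.\ $\bm > \m$.

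The main obstacle is bookkeeping around the two regimes $\abs{S} < \T$ and $\abs{S} \geq \T$ in \cref{lem:eigendecay}, and making sure the exponent comparisons are strict with a gap that depends only on $\beta$ and $\Ln$ (so that a large enough $d$ absorbs the hidden constants in $\Theta$ and $\bigO$); everything else is bookkeeping on the index sets. Note that the argument does not require choosing a particular $\m$ beyond the one selected in \cref{lem:eigendecay_main}, and the two claimed identities emerge simultaneously since the decomposition is purely combinatorial in the index sets.
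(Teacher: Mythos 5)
Your proposal is correct and follows essentially the same route as the paper: reduce both identities to the single index-set statement $\IndS_2 \subseteq \{\i > \m\}$, then establish it by showing via \cref{lem:eigendecay} (split into the $\abs{S} < \T$ and $\abs{S} \geq \T$ regimes) that every eigenvalue with degree at least $\bL + 2$ is eventually dominated by $\lambda_{S_\m}$. The paper's version carries out the comparison by expressing everything through $n\lambda_{S_\i} \in o(\max\{\reg,1\})$ while you compare exponents directly using $\lambda_{S_\m} \in \Theta(1/(d\filsize^{\L+\delta}))$ and verify $\L + \delta < \bL + 1$ and $\L + \delta < \T - 1$; these are algebraically equivalent.
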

\begin{proof}
    For the decomposition of $\Kf_{>\m}$,
    we have to show that exactly the eigenfunctions with index larger than $m$
    appear in either $\Kf_1$ or $\Kf_2$,
    that is, $\IndS_1 \cup \IndS_2 = \{\i > \m\}$,
    and that no eigenfunction appears in both $\Kf_1$ or $\Kf_2$,
    that is, $\IndS_1 \cap \IndS_2 = \emptyset$.
    Furthermore,
    since we can write
    $\Sf_{>\m}(x,x') = \sum_{\i > \m}\lambda_{S_\i}^2 \Yf_{S_\i}(x)\Yf_{S_\i}(x')$
    by \cref{lem:squared_kernel},
    the same argument implies the $1$-$2$ decomposition of $\Sf_{>\m}$.

    First, from the definition of $\IndS_1$ and $\IndS_2$,
    it follows directly that $\IndS_1 \cap \IndS_2 = \emptyset$,
    that $\IndS_1 \cup \IndS_2 \supseteq \{\i > \m\}$,
    and that $\IndS_1 \subseteq \{\i > \m\}$.
    Hence, to conclude the proof, we only need to show that $\IndS_2 \subseteq \{\i > \m\}$.
    Since the eigenvalues are sorted in decreasing order,
    we equivalently show that, for $d$ sufficiently large,
    all eigenvalues $\lambda_{S_\i}$ with $\i \in\IndS_2$
    are smaller than $\lambda_{S_\m}\in\Theta(\max\{\reg,1\} / n)$.

    More precisely, we show that
    $\max_{\i \in \IndS_2}n\lambda_{S_\i} \in o(n\lambda_{S_\m}) = o(\max\{\reg,1\})$.
    Using $\T$ from \cref{ass:genericity}, we have
    \begin{align*}
        \max_{\i \in \IndS_2}n\lambda_{S_\i} = \max\left\{\underbrace{
            \max_{\i \in \IndS_2 : \abs{S_\i} <\T}n\lambda_{S_\i}}_{\revdefeq M_1},\underbrace{
            \max_{\i \in \IndS_2 : \abs{S_\i} \geq \T}n\lambda_{S_\i}}_{\revdefeq M_2}
        \right\}.
    \end{align*}
    For $M_1$, we bound a generic $\i \in \IndS_2$ with $ \abs{S_\i} <\T$
    as follows:
    \begin{align*}
        n\lambda_{S_\i}
        &\overset{(i)}{\in} \bigO\left(\frac{n}{d\filsize^{\abs{S_\i}- 1}}\right)
        \subseteq \bigO\left(\frac{n}{d\filsize^{(\bL+2)- 1}}\right)
        = \bigO\left(
            \frac{d^{1+\Lr+\beta(\L+\delta)}}{d^{1+\beta(\bL+\delta)}}d^{-\beta(1-\delta)}
        \right) \\
        &= \bigO(\max\{\reg,1\}d^{\beta(\L-\bL)}d^{-\beta(1-\delta)})
        \overset{(ii)}{\subseteq} o(n\lambda_{\m}),
    \end{align*}
    where $(i)$ applies \cref{eq:beforeT} from \cref{lem:eigendecay},
    and $(ii)$ uses $\bL\geq\L$ and $\delta < 1$.
    In particular, this implies $M_1 \in o(n\lambda_\m)$.

    For $M_2$ we have
    \begin{align*}
        \max_{\i \in \IndS_2 : \abs{S_\i} \geq \T}n\lambda_{S_\i}
        &= n \max_{\substack{\abs{S_\i} \geq \T \\ \diam(S_\i) \leq \filsize}}\lambda_{S_\i}
        \overset{(i)}{\in} \bigO\left(\frac{n}{d\filsize^{\T- 1}}\right) \\
        &\overset{(ii)}{\subseteq} \bigO\left(\frac{n}{d\filsize^{(\bL+2)- 1}}\right)
        = \bigO\left(
            \frac{d^{1+\Lr+\beta(\L+\delta)}}{d^{1+\beta(\bL+\delta)}}d^{-\beta(1-\delta)}
        \right) \\
        &= \bigO(\max\{\reg,1\}d^{\beta(\L-\bL)}d^{-\beta(1-\delta)})
        \overset{(iii)}{\subseteq} o(n\lambda_{\m}),
    \end{align*}
    where $(i)$ applies \cref{eq:afterT} from \cref{lem:eigendecay},
    $(ii)$ uses that $\bL+2 \leq \T$,
    and $(iii)$ follows from $\bL\geq\L$ and $\delta < 1$.

    Combining the bounds on $M_1$ and $M_2$,
    we have $\max_{\i \in \IndS_2}n\lambda_{S_\i} \in o(n\lambda_{S_\m}) = o(\max\{\reg,1\})$.
    Hence, for $d$ sufficiently large,
    all $\i \in\IndS_2$ yield
    $\lambda_{S_\i} < \lambda_{S_\m}$
    and consequently $\i > \m$.
\end{proof}

Using the $1$-$2$ decomposition,
we now prove \cref{lem:combined}.
We defer the auxiliary \crefrange{lem:K1}{lem:Q11}
to \cref{ssec:technical_lemmas},
and concentration-results to \cref{ssec:rmt_lemmas}.

\subsection{Proof of \texorpdfstring{\cref{lem:combined}}{Lemma~\ref{lem:combined}}}
\label{ssec:proof_lemcombined}
Throughout the proof, we assume $d$ to be large enough such that
all quantities are well-defined and all necessary lemmas apply.
In particular, we assume
the conditions of \cref{lem:K1K2} to be satisfied,
and that $\c < \floor{\filsize / 2} < \filsize < d/2$
for $\c$ in \cref{ass:genericity}.
Hence, $\L + 2 \leq \bL + 2 < \T < \floor{\filsize / 2}$,
and we can apply
\crefrange{lem:n_sets}{lem:Q11},
the setting of \cref{ssec:12decomposition},
as well as \cref{ass:genericity} throughout the proof.
We will mention additional implicit lower bounds on $d$ as they arise.

The proof proceeds in three steps:
we first bound $\rmin$ and $\rmax$,
then bound $\Tr(\S_{>\m})$,
and finally $\sum_{\iii = 1+\m}^{n}{\eigval_\iii{(\SM)}}$.
We do not establish the required matrix concentration results directly,
but apply various auxiliary lemmas.
All corresponding statements hold with either
probability at least $1-\tilde{\c} \filsize^{-\bd}$
or at least $1-\tilde{\c} \filsize^{-(1-\bd)}$
for context-dependent constants $\tilde{\c}$.
We hence implicitly choose a $\c > 0$ such that
collecting all error probabilities yields
the statement of \cref{lem:combined}
with probability at least $1-\c d^{-\beta\min\{\bd,1-\bd\}}$.

To start,
let $\m \in \N$ as in the statement of \cref{lem:combined},
and instantiate \cref{ssec:12decomposition} with that $\m$.
In particular,
\cref{lem:K1K2} yields the $1$-$2$ decomposition $\KM = \K_1 + \K_2$ and $\SM = \S_1+\S_2$,
which we will henceforth use.
Finally, we define
\begin{equation}\label{eq:Qdef}
    \Qf^{(d,\filsize)}_\ii(x,x')
    \coloneqq \sum_{\substack{\diam(S)\leq \filsize \\ \abs{S} =\ii}}{
        \frac{\filsize + 1 - \diam(S)}{d\B(\ii,\filsize)} \Yf_{S}(x)\Yf_{S}(x')
    }
\end{equation}
with the corresponding kernel matrix $\Q^{(d,\filsize)}_\ii \in \R^{n \times n}$.

\paragraph*{Bound on $\rmin$ and $\rmax$}
Remember the definition of $\rmin$ and $\rmax$:
\begin{equation*}
    \rmin = \frac{\mineig{\KM}+\reg}{\max\{\reg,1\}},\qquad
    \rmax = \frac{\norm{\KM}+\reg}{\max\{\reg,1\}}.
\end{equation*}
To bound those quantities, we have to bound $\mineig{\KM}$ and $\norm{\KM}$.
For the upper bound on $\norm{\KM}$,
we use the triangle inequality on
$\norm{\KM} = \norm{\K_1 + \K_2}$,
and then bound $\norm{\K_1}$ and $\norm{\K_2}$ individually.

Note that we can write $\K_1 = \P_1\D_1\P_1^\t$ by definition.
Hence,
\begin{equation*}
    \norm{\K_1}
    = \norm{\P_1\D_1\P_1^\t}
    = n\norm{\D_1}\norm*{\frac{\P_1^\t\P_1}{n}}
    \overset{(i)}{\leq} 1.5 n \norm{\D_1}
    \overset{(ii)}{\leq} 1.5 n \lambda_{\m}
    \overset{(iii)}{\in} \bigO\left(\max\{\reg,1\}\right),
\end{equation*}
where $(i)$ follows from \cref{lem:K1}
with probability at least $1-\tilde{\c}_1 \filsize^{-\bd}$,
$(ii)$ uses that all eigenvalues of $\Kf_1$ are at most $\lambda_\m$ by definition,
and $(iii)$ follows from $n\lambda_{\m} \in \Theta(\max\{\reg,1\})$.

Next,
\cref{lem:K2}
directly yields with probability at least $1-\tilde{\c}_2 \filsize^{-(1-\bd)}$
that $\norm{\K_2} \in \Theta(1)$ and $\mineig{\K_2} \in \Theta(1)$.
Hence,
with probability at least
$1 - (\tilde{\c}_1 \filsize^{-\bd} + \tilde{\c}_2 \filsize^{-(1-\bd)}) \geq 1-\c \filsize^{-\min\{\bd,1-\bd\}}$,
we have
\begin{align*}
    \norm{\K_1},\norm{\K_2} &\in \bigO\left(\max\{\reg,1\}\right),\\
    \mineig{\K_2} &\in \Omega(1).
\end{align*}
This implies
\begin{align*}
    \norm{\KM} + \reg \leq \norm{\K_1}+\norm{\K_2} +\reg &\in \bigO(\max\{\reg,1\}),\\
    \mineig{\KM}+\reg \geq \mineig{\K_2} + \reg &\in \Omega(\max\{\reg,1\}),
\end{align*}
and subsequently
\begin{align*}
    \rmax &= \frac{\norm{\KM} + \reg}{\max\{\reg,1\}} \in \bigO(1),\\
    \rmin &= \frac{\mineig{\KM} + \reg}{\max\{\reg,1\}} \in \Omega(1).
\end{align*}
Finally, since $\rmin \leq \rmax$, this yields $\rmin, \rmax \in \Theta(1)$.

\paragraph*{Bound on $\Tr(\S_{>\m})$}
We need to show that
\begin{equation}\label{eq:to_prove_S}
        \Tr(\SM)
        \lesssim d^{\Lr}\filsize^{-\delta} +d^{\Lr}\filsize^{\delta-1}
\end{equation}
with high probability,
where the two terms correspond to the $1$-$2$ decomposition
$\Tr{(\SM)} = \Tr{(\S_1)}+\Tr{(\S_2)}$.
We differentiate between $\bL = \L$ and $\bL > \L$.
Intuitively, the case $\bL = \L$ corresponds to interpolation or weak regularization,
because the maximum degree of learnable polynomials
with regularization equals the one without regularization.
Conversely, $\bL > \L$ corresponds to strong regularization.

\pseudoparagraph{Case $\bL = \L$ (interpolation or weak regularization)}
In this setting,
\begin{equation}
    \label{eq:dbd}
    \delta = \frac{\Ln-\Lr-1}{\beta} - \L
    = \frac{\Ln-1}{\beta} - \bL - \frac{\Lr}{\beta}
    = \bd - \frac{\Lr}{\beta}.
\end{equation}
First, \cref{lem:S2}
yields $\Tr(\S_{2}) \in \Theta(\filsize^{-(1-\bd)})$
with probability at least $1-\tilde{\c}_3 {\filsize^{-(1-\bd)}}$.
Therefore,
\begin{equation*}
    \Tr(\S_2)
    \in \Theta(\filsize^{-(1-\bd)})
    \overset{(i)}{=} \Theta(\filsize^{-(1-\d) + \frac{\Lr}{\beta}})
    = \Theta\left(d^{\Lr} \filsize^{-(1-\d)}\right),
\end{equation*}
where $(i)$ follows from \cref{eq:dbd}.

We now consider $\Tr(\S_1)$:
\begin{equation*}
    \Tr(\S_1) = n\Tr\left(\frac{\P_{1}^\t\P_{1}}n\D_1^2\right)
    \leq n \norm*{\frac{\P_{1}^\t\P_{1}}n} \Tr(\D_1^2)
    \overset{(i)}{\leq} 1.5 n\sum_{\i \in \IndS_1} \lambda_{S_\i}^2,
\end{equation*}
where $(i)$ follows from \cref{lem:K1}
with probability at least $1-\tilde{\c}_1 \filsize^{-\bd}$.
The bound continues as
\begin{align*}
    \Tr(\S_1) &\lesssim n \sum_{\i \in \IndS_1} \lambda_{S_\i}^2 \overset{(i)}{\leq} n \lambda_{\m}^2 \abs{\IndS_1} \\ %
    &\overset{(ii)}{\in} \bigO\left(\max\{\reg,1\}^2\frac{\abs{\IndS_1}}{n}\right) \\
    &\overset{(iii)}{\subseteq} \bigO\left(d^{2\Lr}\frac{d\filsize^{\bL}}{d\filsize^{\bL+\bd}}\right)
    = \bigO\left(d^{2\Lr}\filsize^{-\bd}\right)
    \overset{(iv)}=\bigO\left(d^{\Lr}\filsize^{-\delta}\right),
\end{align*}
where $(i)$ uses $\lambda_{\i}\leq \lambda_{\m}$ for all $\i \in \IndS_1$ by definition,
$(ii)$ uses $n\lambda_{\m}\in \Theta(\max\{\reg,1\})$,
and $(iv)$ follows from \cref{eq:dbd}.
Furthermore, $(iii)$ uses the following bound of $\abs{\IndS_1}$:
\begin{equation*}
    \abs*{\IndS_1}
    \leq \sum_{\ii = 0}^{\bL + 1}\C(\ii,\filsize, d)
    \overset{(i)}{=} 1 + d \sum_{\ii = 1}^{\bL + 1}{\binom{\filsize - 1}{\ii - 1}}
    \overset{(ii)}{\leq} 1 + d \sum_{\ii = 1}^{\bL + 1}{\left(e \frac{\filsize - 1}{\ii - 1}\right)^{\ii - 1}}
    \overset{(iii)}{\in} \bigO(d\cdot \filsize^{\bL}),
\end{equation*}
where $\C(\ii,\filsize, d)$ is defined in \cref{eq:C_def},
$(i)$ follows from \cref{lem:n_sets},
$(ii)$ is a classical bound on the binomial coefficient,
and $(iii)$ follows from the fact that the term corresponding to $\bL + 1$
dominates the polynomial.

Finally, collecting the upper bounds on $\Tr(\S_1)$ and $\Tr(\S_2)$ yields
\begin{equation*}
    \Tr(\S_{\geq \m}) = \Tr(\S_1) + \Tr(\S_2)
    \in \bigO\left(d^{\Lr}\filsize^{-\delta} + d^{\Lr} \filsize^{-(1-\d)}\right)
\end{equation*}
with probability at least
$1- (\tilde{\c}_3 {\filsize^{-(1-\bd)}} + \tilde{\c}_1 \filsize^{-\bd})
\geq 1-\c \filsize^{-\min\{\bd,1-\bd\}}$.

\pseudoparagraph{Case $\bL > \L$ (strong regularization)}
In this setting, the dominating rate will arise from $\Tr(\S_1)$.
We start by linking $\bd$ and $\delta$ in analogy to \cref{eq:dbd}:
\begin{equation}\label{eq:dbd2}
    \delta = \frac{\Ln-\Lr-1}{\beta}-\L
    = -\frac{\Lr}{\beta}+ \frac{\Ln-1}{\beta}-\bL + \bL - \L
    = \bd -\frac{\Lr}{\beta}+ \bL-\L .
\end{equation}
Next, as in the previous case,
\cref{lem:S2} yields $\Tr(\S_{2}) \in \Theta(\filsize^{-(1-\bd)})$
with probability at least $1-\tilde{\c}_3 {\filsize^{-(1-\bd)}}$,
and therefore
\begin{equation*}
    \Tr(\S_2)
    \in \Theta(\filsize^{-(1-\bd)})
    = \Theta(\filsize^{\bd - 1})
    \overset{(i)}{=} \Theta\left(d^\Lr \filsize^{\delta-(\bL-\L) - 1}\right)
    \overset{(ii)}{\subseteq }o\left(d^\Lr \filsize^{-(1-\delta)} \right),
\end{equation*}
where $(i)$ follows from \cref{eq:dbd2},
and $(ii)$ from $\bL > \L$.

To bound $\Tr(\S_1)$, we start as in the previous case:
\begin{equation*}
    \Tr(\S_1) = n\Tr\left(\frac{\P_{1}^\t\P_{1}}n\D_1^2\right)
    \leq n \norm*{\frac{\P_{1}^\t\P_{1}}n} \Tr(\D_1^2)
    \overset{(i)}{\leq} 1.5 n\sum_{\i \in \IndS_1} \lambda_{S_\i}^2,
\end{equation*}
where $(i)$ follows from \cref{lem:K1}
with probability at least $1-\tilde{\c}_1 \filsize^{-\bd}$.
We then decompose the sum over all squared eigenvalues with index in $\IndS_1$ as
\begin{align*}
    n\sum_{\i \in \IndS_1} \lambda_{S_\i}^2 &
    = \underbrace{
        n\sum_{\substack{\i \in \IndS_1 \\ \abs{S_\i} \leq \L+1}} \lambda_{S_\i}^2
    }_{\revdefeq \tterm_1} + \underbrace{
        n\sum_{\substack{\i \in \IndS_1 \\ \abs{S_\i} = \L+2}} \lambda_{S_\i}^2
    }_{\revdefeq \tterm_2} + \underbrace{
        n  \sum_{\substack{\i \in \IndS_1 \\ \abs{S_\i} \geq \L + 3}} \lambda_{S_\i}^2
    }_{\revdefeq \tterm_3},
\end{align*}
and bound the three terms individually.

First, we upper-bound $\tterm_1$ as follows:
\begin{align*}
    \tterm_1 &= n\sum_{\substack{\i \in \IndS_1 \\ \abs{S_\i} \leq \L+1}} \lambda_{S_\i}^2 \overset{(i)}\leq \frac{n^2\lambda_{\m}^2}{n}\sum_{\substack{\i \in \IndS_1 \\ \abs{S_\i} \leq \L+1}}1 \\
    &\leq \frac{n^2\lambda_{\m}^2}{n} \sum_{\ii = 0}^{\L+1}\C(\ii,\filsize,d) \overset{(ii)} \in \bigO\left(\frac{\max\{\reg,1\}^2}{n}d\filsize^{\L}\right)\\
    &= \bigO\left(\frac{d^{2\Lr}}{d \cdot d^{\Lr}\cdot \filsize^{\L+\delta}}d\filsize^{\L}\right)
    = \bigO\left(d^{\Lr}\filsize^{-\delta}\right),
\end{align*}
where $(i)$ follows from $\lambda_\i \leq \lambda_\m$ for all $\i > \m$
due to the decreasing order of eigenvalues.
Step $(ii)$ applies $n\lambda_{\m} \in \Theta(\max\{\reg,1\})$,
as well as $\sum_{\ii = 0}^{\L + 1}\C(\ii,\filsize, d) \in \bigO(d\cdot \filsize^{\L})$,
which follows as in the other case from \cref{lem:n_sets}
and the classical bound on the binomial coefficient.

Second, the upper bound of $\tterm_2$ arises as follows:
\begin{align*}
    \tterm_2 = n\sum_{\substack{\i \in \IndS_1 \\ \abs{S_\i} = \L+2}} \lambda_{S_\i}^2
    &\overset{(i)}{=} n (\xi_{\L+2}^{(\filsize)} )^2
    \sum_{\substack{\i \in \IndS_1 \\ \abs{S_\i} = \L+2}}{\left(
        \frac{\filsize + 1-\diam(S_\i)}{d\B(\L+2,\filsize)}
    \right)^2} \\
    &\overset{(ii)}{\leq }(\xi_{\L+2}^{(\filsize)} )^2 \frac{n\cdot \filsize}{d\B(\L+2,\filsize)}
    \sum_{\substack{\diam(S) \leq \filsize \\ \abs{S} = \L+2}}{
        \frac{\filsize + 1-\diam(S)}{d\B(\L+2,\filsize)}
    } \\
    &\overset{(iii)}{\lesssim }
    \frac{n\cdot \filsize}{d \cdot \filsize^{\L+2}}
    \sum_{\substack{\diam(S) \leq \filsize \\ \abs{S} = \L+2}}{
        \frac{\filsize + 1-\diam(S)}{d\B(\L+2,\filsize)}
    } \\
    &\overset{(iv)}{=} \frac{n \cdot \filsize}{d\cdot \filsize^{\L+2}}
    \Qf_{\L + 2}^{(d,\filsize)}(x,x) \\
    &\overset{(v)}{\in}
    \bigO\left( \frac{d\cdot d^{\Lr} \filsize^{\L+\delta} \cdot \filsize}{d\cdot \filsize^{\L+2}} \right)
    =
    \bigO\left(d^{\Lr} \filsize^{-(1-\delta)}\right),
\end{align*}
where $(i)$ follows from \cref{prop:kern_eig},
and $(ii)$ uses $\filsize + 1-\diam(S_\i)\leq \filsize$.
Next, $(iii)$ uses that \cref{eq:ass_lb,eq:ass_ub_2} in \cref{ass:genericity}
imply $\xi_{\L + 2}^{(\filsize)} \in \Theta(1)$,
and applies the bound
$\B(\L + 2,\filsize)=\binom{\filsize}{\L+2}\leq(e\filsize / (\L+2))^{\L+2}$.
Step $(iv)$ uses $\Yf_{S}(x)\Yf_{S}(x) = 1$ for all $S$ and $x \in \2^d$,
together with the definition of $\Qf_{\L + 2}^{(d,\filsize)}$.
Lastly, $(v)$ applies \cref{lem:Q11}
and $n \in \Theta(d^\Ln) = \Theta(d\cdot d^{\Lr} \filsize^{\L+\delta})$.

Third, we upper-bound $\tterm_3$:
\begin{equation*}
    \tterm_3= n \sum_{\substack{\i \in \IndS_1 \\ \abs{S_\i} \geq \L + 3}}
        \lambda_{S_\i}^2
    \leq n\left(\max_{\i \in \IndS_1, \abs{S_\i} \geq \L+3}\lambda_{S}\right)
        \sum_{\substack{\i \in \IndS_1 \\ \abs{S_\i} \geq \L+3}}{\lambda_{S_{\i}}}
    \lesssim n\max_{\i \in \IndS_1, \abs{S_\i} \geq \L+3}(\lambda_{S_\i}).
\end{equation*}
The last step follows from
\begin{align*}
    \sum_{\substack{\i \in \IndS_1 \\ \abs{S_\i} \geq \L+3}}{\lambda_{S_{\i}}}
    &\leq \sum_{\ii = \L+3}^{\bL + 1}{
        \sum_{\substack{\diam(S) \leq \filsize \\ \abs{S} = \ii}}{\lambda_S}
    } \\
    &\overset{(i)}{=} \sum_{\ii = \L+3}^{\bL + 1}{
        \sum_{\substack{\diam(S) \leq \filsize \\ \abs{S} = \ii}}{
            \xi^{(\filsize)}_{\ii}
            \frac{\filsize + 1 - \diam(S)}{d\B(\ii,\filsize)}
        }
    } \\
    &\overset{(ii)}{=} \sum_{\ii = \L+3}^{\bL + 1}{
        \xi^{(\filsize)}_{\ii}
        \sum_{\substack{\diam(S) \leq \filsize \\ \abs{S} = \ii}}{
            \frac{\filsize + 1 - \diam(S)}{d\B(\ii,\filsize)}
            \Yf_{S}(x)\Yf_{S}(x)
        }
    } \\
    &\overset{(iii)}{=} \sum_{\ii = \L+3}^{\bL + 1}{
        \xi^{(\filsize)}_{\ii}
    }
    \overset{(iv)}{\lesssim} 1,
\end{align*}
where $(i)$ follows from \cref{prop:kern_eig},
$(ii)$ uses $\Yf_{S}(x)\Yf_{S}(x) = 1$ for all $S$ and $x \in \2^d$,
$(iii)$ applies the definition of $\Qf_{\ii}^{(d,\filsize)}$ and \cref{lem:Q11},
and $(iv)$ follows from \cref{eq:ass_lb,eq:ass_ub_2} in \cref{ass:genericity}
since $\bL + 1 \leq \T$.

For $\max_{\i \in \IndS_1, \abs{S_\i} \geq \L+3}(\lambda_{S_\i})$,
we bound each element individually:
\begin{align*}
    \lambda_{S_\i}
    \overset{(i)}{\in} \bigO{\left( \frac{1}{d\filsize^{\abs{S_\i}-1}} \right)}
    \subseteq \bigO{\left( \frac{1}{d\filsize^{(\L + 3)-1}} \right)},
\end{align*}
where $(i)$ uses \cref{eq:beforeT} in \cref{lem:eigendecay}
since $\i \leq \bL+1<\T$ by definition of $\IndS_1$.
Hence, we obtain the following bound on $\tterm_3$:
\begin{equation*}
    \tterm_3
    \lesssim n\max_{\i \in \IndS_1, \abs{S_\i} \geq \L+3}(\lambda_{S_\i}) \in \bigO\left(\frac{n}{d\filsize^{\L+2}}\right) =\bigO\left(
        \frac{d\cdot d^{\Lr}\filsize^{\L+\delta}}{d\filsize^{\L+2}}
    \right)
    \subseteq \bigO\left(d^{\Lr}\filsize^{-(1-\delta)}\right).
\end{equation*}

Finally, we can bound $\Tr(\S_1)$ as
\begin{equation*}
    \Tr(\S_1) \leq 1.5 n\sum_{\i \in \IndS_1} \lambda_{S_\i}^2
    = \term_1 + \term_2 + \term_3
    \in \bigO{\left(
        d^{\Lr}\filsize^{-\delta}
        + d^{\Lr} \filsize^{-(1-\delta)}
    \right)},
\end{equation*}
which yields
\begin{equation*}
    \Tr(\S_{\geq \m}) = \Tr(\S_1) + \Tr(\S_2)
    \in \bigO\left(d^{\Lr}\filsize^{-\delta} + d^{\Lr} \filsize^{-(1-\d)}\right)
\end{equation*}
as desired with probability at least
$1- (\tilde{\c}_3 {\filsize^{-(1-\bd)}} + \tilde{\c}_1 \filsize^{-\bd})
\geq 1-\c \filsize^{-\min\{\bd,1-\bd\}}$.

\paragraph*{Bound on $\sum_{\iii = 1+\m}^{n}{\eigval_\iii{(\SM)}}$}
As before, we differentiate between no/weak and strong regularization,
that is, between $\bL = \L$ and $\bL > \L$:

\pseudoparagraph{Case $\bL = \L$ (interpolation or weak regularization)}
In this case, we start by directly bounding
\begin{align}
    \sum_{\iii = 1+\m}^{n}{\eigval_\iii{(\SM)}}
    &= \sum_{\iii = 1+\m}^{n}{\eigval_\iii{(\S_1+\S_2)}}
    \geq \sum_{\iii = 1+\m}^{n}{\eigval_\iii{(\S_2)}} \nonumber \\
    &= \sum_{\iii = 1}^{n}{\eigval_\iii{(\S_2)}}
    - \sum_{\iii = 1}^{\m}{\eigval_\iii{(\S_2)}}
    \overset{(i)}\geq \Tr(\S_{2}) - \m \norm{\S_{2}} \label{eq:trick}\\
    &\overset{(ii)}{\in} \Omega{\left(
        \filsize^{-(1-\bd)}-\frac{\m}{d\filsize^{\bL+1}}
    \right)} \nonumber \\
    &\overset{(iii)}{=} \Omega{\left(
        d^{\Lr}\filsize^{-(1-\delta)}-\frac{\m}{d\filsize^{\bL+1}}
    \right)}, \nonumber
\end{align}
where $(i)$ bounds each of the first $\m$ eigenvalues of $S_2$ with the largest one,
$(ii)$ follows from \cref{lem:S2} with probability at least $1-\tilde{\c}_3\filsize^{-(1-\bd)}$,
and $(iii)$ from \cref{eq:dbd} since $\bL = \L$.

To conclude the lower bound, it suffices to show
that $\frac{\m}{d\filsize^{\bL+1}} \in o(d^{\Lr}\filsize^{-(1-\delta)})$:
\begin{equation*}
    \frac{\m}{d\filsize^{\bL+1}}
    \overset{(i)}{\in} \bigO\left(\frac{n\filsize^{-\delta}}{\max\{\reg,1\}d\filsize^{\bL+1}}\right)
    = \bigO\left(\frac{ \filsize^{-\delta} d \filsize^{\bd+ \bL}}{d^{\Lr}d\filsize^{\bL+1}}\right)
    \overset{(ii)}{=} \bigO\left(\filsize^{-\bd}d^{\Lr}\filsize^{-(1-\delta)}\right)
    \overset{(iii)}{\subseteq} o(d^{\Lr}\filsize^{-(1-\delta)}),
\end{equation*}
where $(i)$ follows from $m \in \bigO\left(\frac{n\filsize^{-\delta}}{\max\{\reg,1\}}\right)$,
and $(ii)$ from \cref{eq:dbd}.
For $(iii)$, note that $\bd = 0$ for a sufficiently large $\c$
yields a vacuous result.
We hence assume without loss of generality that $\bd > 0$,
which justifies the step.
This concludes the proof for the current case
with probability at least
$1-\tilde{\c}_3\filsize^{-(1-\bd)} \geq 1-\c \filsize^{-\min\{\bd,1-\bd\}}$.

\pseudoparagraph{Case $\bL > \L$ (strong regularization)}
In this case, we define the additional index set
\begin{equation*}
    \IndS_3
    \defeq \{\i \in \IndS_1 \mid \abs{S_\i} = \L+2\}
\end{equation*}
with $\S_3,\P_3,\D_3$ analogously to $\S_1,\P_1,\D_1$ in \cref{ssec:12decomposition},
but using $\IndS_3$ instead of $\IndS_1$.
Since $\IndS_3\subseteq\IndS_1$,
it follows that
$\P_3^\t\P_3$ is a submatrix of $\P_1^\t\P_1$, and thus
\begin{equation*}
    \frac{\P_3^\t\P_3}n - \I_{\abs{\IndS_3}}
    \quad\text{is a submatrix of}\quad
    \frac{\P_1^\t\P_1}n - \I_{\abs{\IndS_1}}.
\end{equation*}
This particularly implies
\begin{equation}
    \label{eq:K3}
    \norm*{\frac{\P_3^\t\P_3}n - \I_{\abs{\IndS_3}}}
    \leq \norm*{\frac{\P_1^\t\P_1}n - \I_{\abs{\IndS_1}}}
    \overset{(i)}{\leq} 1/2,
\end{equation}
where $(i)$ follows from \cref{lem:K1}
with probability at least $1-\tilde{\c}_1 \filsize^{-\bd}$.

We now move our focus back to the lower bound of $\sum_{\iii = 1+\m}^{n}{\eigval_\iii{(\SM)}}$:
\begin{equation}\label{eq:tr_dec}
    \sum_{\iii = 1+\m}^{n}{\eigval_\iii{(\SM)}}
    \overset{(i)}\geq \sum_{\iii = 1+\m}^{n}{\eigval_\iii{(\S_1)}}
    \overset{(ii)}\geq \sum_{\iii = 1+\m}^{n}{\eigval_\iii{(\S_3)}}
    \overset{(iii)}\geq \Tr(\S_{3}) - \m \norm{\S_3},
\end{equation}
where $(i)$ follows from the $1$-$2$ decomposition $\SM = \S_{1}+\S_2$,
$(ii)$ from the fact that $\IndS_3 \subseteq \IndS_1$,
and $(iii)$ analogously to \cref{eq:trick}.
Similar to the previous case,
we conclude the proof by first showing that
$\Tr{(\S_3)} \in \Omega\left(d^{\Lr} \filsize^{-(1-\delta)}\right)$,
and then $\m \norm{\S_3} \in o(\Tr(\S_3))$.

For the lower bound of $\Tr(\S_{3})$, we start with
\begin{align*}
    \Tr(\S_{3})
    &= n\Tr\left(\frac{1}n\P_3\D_3^2\P_3^\t\right)\\
    &\geq n\Tr\left(\D_3^2\right) \mineig{\frac{\P_{3}^\t\P_3}n}\\
    &\overset{(i)}\geq 0.5 n\sum_{\i \in \IndS_3} \lambda_{\i}^2\\
    &\overset{(ii)}{=} n (\xi_{\L+2}^{(\filsize)} )^2
    \sum_{\substack{\abs{S} = \L+2 \\ \diam(S)\leq \filsize}}{
        \left(\frac{\filsize + 1-\diam(S)}{d\B(\L+2,\filsize)}\right)^2
    }
\end{align*}
where $(i)$ follows with high probability from \cref{eq:K3}.
Step $(ii)$ applies \cref{prop:kern_eig},
and the fact that
$\IndS_3 = \{\i \in \N \mid \abs{S_\i} = \L + 2 \text{ and } \diam{(S)} \leq \filsize\}$
for $d$ sufficiently large.
To show this, we use
\begin{equation*}
    \lambda_\m \in \Theta\left(\frac{d^\Lr}{n}\right) = \Theta\left(\frac{d^\Lr}{dd^{\Lr}\filsize^{\L+\delta}}\right)= \Theta\left(\frac{1}{d\filsize^{\L+\delta}}\right).
\end{equation*}
Since the eigenvalues are in decreasing order
and $\L + 2 \leq \bL + 1$ in the current case,
we only need to show that $\lambda_{S} < \lambda_\m$
for all $S \subseteq \{1, \dotsc, d\}$ with $\abs{S} = \L+2$ and $\diam(S) \leq \filsize$:
\begin{equation*}
    \lambda_{S}
    \overset{(i)}{\in} \bigO\left(\frac{1}{d\filsize^{\L+1}}\right)
    = o\left(\frac{1}{d\filsize^{\L+\delta}}\right) = o(\lambda_\m),
\end{equation*}
where $(i)$ applies \cref{eq:beforeT} in \cref{lem:eigendecay}
since $\L+2 < \T$.
Thus,
$\lambda_{S} < \lambda_\m$ for all $S \subseteq \{1, \dotsc, d\}$
with $\abs{S} = \L+2$ and $\diam(S) \leq \filsize$
if $d$ is sufficiently large,
which we additionally assume from now on.

The lower bound of $\Tr(\S_{3})$ continues as follows:
\begin{align*}
    \Tr(\S_{3})
    &\geq n (\xi_{\L+2}^{(\filsize)} )^2
    \sum_{\substack{\abs{S} = \L+2 \\ \diam(S)\leq \filsize}}{
        \left(\frac{\filsize + 1-\diam(S)}{d\B(\L+2,\filsize)}\right)^2
    } \\
    &\geq n (\xi_{\L+2}^{(\filsize)} )^2
    \sum_{\substack{\abs{S} = \L+2\\\diam(S)\leq \floor{\filsize/2}}}{
        \left(\frac{\filsize + 1-\diam(S)}{d\B(\L+2,\filsize)}\right)^2
    } \\
    &\overset{(iii)}{\geq} (\xi_{\L+2}^{(\filsize)})^2
    \frac{n\cdot \filsize/2}{d\B(\L+2,\filsize)}
    \sum_{\substack{\abs{S} = \L+2 \\ \diam(S)\leq \floor{\filsize/2}}}{
        \frac{\floor{\filsize/2} + 1-\diam(S)}{d\B(\L+2,\filsize)}
    } \\
    &\overset{(iv)}{\gtrsim} (\xi_{\L+2}^{(\filsize)})^2
    \frac{n\cdot \filsize/2}{d\B(\L+2,\filsize)}
    \sum_{\substack{\abs{S} = \L+2\\\diam(S)\leq \floor{\filsize/2}}}{
        \frac{\floor{\filsize/2} + 1-\diam(S)}{d\B(\L+2,\floor{\filsize/2})}
    },
\end{align*}
where $(iii)$ follows from $\filsize \geq \floor{\filsize/2}$
and $\filsize+1-\diam(S)\geq \filsize/2$.
Step $(iv)$ follows from the fact that
$\B(\L+2,\filsize)$ and $\B(\L+2,\floor{\filsize/2})$
are of the same order;
this follows from a classical bound on the binomial coefficient:
\begin{equation*}
    \B(\L+2,\filsize)
    \leq \left(\frac{e \filsize}{\L+2}\right)^{\L+2}
    \lesssim \left(\frac{\floor{\filsize/2}}{\L+2}\right)^{\L+2}
    \leq \B(\L+2,\floor{\filsize/2}).
\end{equation*}

We conclude the lower bound on $\Tr(\S_{3})$ as follows:
\begin{align}
    \Tr(\S_{3})
    &\gtrsim (\xi_{\L+2}^{(\filsize)})^2
    \frac{n\cdot \filsize/2}{d\B(\L+2,\filsize)}
    \sum_{\substack{\abs{S} = \L+2\\\diam(S)\leq \floor{\filsize/2}}}{
        \frac{\floor{\filsize/2} + 1-\diam(S)}{d\B(\L+2,\floor{\filsize/2})}
    } \nonumber \\
    &\overset{(v)}{=} (\xi_{\L+2}^{(\filsize)})^2
    \frac{n\cdot \filsize/2}{d\B(\L+2,\filsize)}
    \Qf_{\L+2}^{(d,\floor{\filsize/2})}(x,x) \nonumber \\
    &\overset{(vi)}{=} (\xi_{\L+2}^{(\filsize)})^2
    \frac{n\cdot \filsize/2}{d\B(\L+2,\filsize)} \nonumber \\
    &\overset{(vii)}{\in} \Omega\left(
        \frac{n\cdot \filsize}{d\B(\L+2,\filsize)}
    \right) \nonumber \\
    &\overset{(viii)}{=} \Omega\left(
        \frac{d\cdot d^{\Lr} \filsize^{\L+\delta}\cdot \filsize}{d\cdot \filsize^{\L+2}}
    \right)
    = \Omega\left(d^{\Lr} \filsize^{-(1-\delta)}\right),\label{eq:trs3_bound}
\end{align}
where $(v)$ uses $\Yf_{S}(x)\Yf_{S}(x) = 1$ for all $S$ and $x \in \2^d$
with the definition of $\Qf_{\L+2}^{(d,\floor{\filsize/2})}$ in \cref{eq:Qdef},
$(vi)$ applies \cref{lem:Q11} with $\floor{q/2}$ as filter size,
$(vii)$ follows from \cref{eq:ass_lb} in \cref{ass:genericity},
and $(viii)$ uses the classical bound on the binomial coefficient.

Finally, for the upper bound on $\m\norm{\S_3}$, we have
\begin{align}
    \m\norm{\S_3}
    &= \m \norm{\P_3\D_3\P_3^\t}
    \leq \m n \norm{\D_3} \norm*{\frac{\P_3^\t\P_3}{n}}
    \overset{(i)}{\leq} 1.5 \m n \norm{\D_3}
    = \m n \max_{\i \in \IndS_3}{\lambda_\i^2} \nonumber \\
    &\overset{(ii)}{=} \m n  \max_{\i \in \IndS_3}{\left(
        \xi_{\L+2}^{(\filsize)}\frac{\filsize + 1-\diam(S_\i)}{d\B(\L+2,\filsize)}
    \right)^2}
    \leq \frac{\m}{n} (\xi_{\L+2}^{(\filsize)})^2\left(
        \frac{n \cdot \filsize}{d\B(\L+2,\filsize)}
    \right)^2 \nonumber\\
    &\overset{(iii)}\in \bigO\left(
        \frac{\m}n \left(\frac{d d^{\Lr} \filsize^{\L+\delta} \cdot \filsize}{d\filsize^{\L+2}}
    \right)^2\right)
    \overset{(iv)}{\subseteq} \bigO\left(
        \filsize^{-\delta}d^{-\Lr}\left(d^{\Lr}\filsize^{-(1-\delta)}\right)^2
    \right) \nonumber \\
    &= \bigO\left(
        \filsize^{-\delta}\filsize^{-(1-\delta)}\left(d^{\Lr}\filsize^{-(1-\delta)}\right)
    \right)
    \overset{(v)}{\subseteq} o\left(d^{\Lr}\filsize^{-(1-\delta)}\right)
    \overset{(vi)}{\subseteq} o(\Tr(\S_3)),\notag
\end{align}
where $(i)$ follows with high probability from \cref{eq:K3},
and $(ii)$ from \cref{prop:kern_eig}.
Step $(iii)$ uses that \cref{eq:ass_lb,eq:ass_ub_2} in \cref{ass:genericity}
yield $\xi_{\L+2}^{(\filsize)} \in \Theta(1)$,
and $\B(\L+2,\filsize) = \binom{\filsize}{\L+2} \in \bigO(\filsize^{\L+2})$.
Furthermore,
$(iv)$ follows from $\m \in \bigO\left(\frac{n\filsize^{-\delta}}{\max\{\reg,1\}}\right)$,
$(v)$ from $\filsize^{-\delta}\filsize^{-(1-\delta)} \in o(1)$,
and $(vi)$ from the lower bound on $\Tr(\S_3)$ in \cref{eq:trs3_bound}.

Finally, combining this result with \cref{eq:tr_dec,eq:trs3_bound}, we have
\begin{equation*}
    \sum_{\iii = 1+\m}^{n}{\eigval_\iii{(\SM)}}
    \geq \Tr(\S_3) - \m \norm{\S_3}
    \in \Omega(\Tr(\S_3)) \subseteq \Omega\left(d^{\Lr}\filsize^{-(1-\delta)}\right)
\end{equation*}
with probability at least
$1-\tilde{\c}_1 \filsize^{-\bd} \geq 1-\c \filsize^{-\min\{\bd,1-\bd\}}$.

\subsection{Technical lemmas}\label{ssec:technical_lemmas}
We use the following technical lemmas in the proof of \cref{lem:combined}.
All results assume the setting of \cref{ssec:12decomposition},
particularly a kernel as in \cref{th:rates}
that satisfies \cref{ass:genericity}.

\begin{lemma}[Bound on $\K_1$]
    \label{lem:K1}
    In the setting of \cref{ssec:12decomposition},
    for $d/2 > \filsize \geq \bL + 1$, we have
    \begin{equation*}
        \norm*{\frac{\P_1^\t\P_1}n-\I_{\abs{\IndS_1}}} \leq 1/2
    \end{equation*}
    with probability at least $1-\c \filsize^{-\bd}$
    uniformly over all choices of $\m$ and $\reg$.
\end{lemma}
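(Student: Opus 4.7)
The plan is to reduce the lemma to an application of the matrix concentration result used in the proof of \cref{lem:as_1_verification} (Theorem~5.44 from \cite{Vershynin2012}), combined with a submatrix argument that delivers uniformity over $\m$ and $\reg$ for free.

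First, I would remove the $\m$- and $\reg$-dependence from the object of interest. Let $\tilde{\IndS} \defeq \{\i \in \N \mid \abs{S_\i} \leq \bL + 1\}$, and define the ``maximal'' feature matrix $\tp \in \R^{n \times \abs{\tilde\IndS}}$ whose columns are all eigenfunction evaluations $\Yf_{S_\i}(x_\cdot)$ with $\i \in \tilde\IndS$. By construction $\IndS_1 \subseteq \tilde\IndS$ for every admissible $\m$ and $\reg$, so $\P_1^\t\P_1/n - \I_{\abs{\IndS_1}}$ is a principal submatrix of $\tp^\t\tp/n - \I_{\abs{\tilde\IndS}}$. Since the operator norm of a Hermitian matrix dominates that of any principal submatrix, it suffices to bound $\norm{\tp^\t\tp/n - \I_{\abs{\tilde\IndS}}}$ by $1/2$, and this bound is automatically uniform over $\m$ and $\reg$.

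Next I would verify the quantitative preconditions of Vershynin's theorem for $\tp$. The rows of $\tp$ are i.i.d.\ (since the $x_i$ are i.i.d.); each row has identity second moment matrix by orthonormality of $\{\Yf_S\}_S$ under $\U(\2^d)$; and, because $\Yf_S(x)^2 = 1$ pointwise, each row has squared norm exactly $M \defeq \abs{\tilde\IndS}$. The critical step is to bound $M$ by $\bigO(n q^{-\bd})$: using \cref{lem:n_sets} (applicable since $\filsize < d/2$ and $\bL + 1 \leq \filsize$) together with the standard binomial estimate $\binom{\filsize-1}{\ii-1} \lesssim \filsize^{\ii-1}$, we get
\begin{equation*}
    M \;\leq\; \sum_{\ii=0}^{\bL+1} \C(\ii,\filsize,d) \;\lesssim\; d\cdot \filsize^{\bL}
    \;=\; d^{1 + \beta\bL}
    \;=\; d^{\Ln - \beta\bd}
    \;\in\; \bigO\!\left(n\filsize^{-\bd}\right),
\end{equation*}
by definition of $\bL = \floor{(\Ln-1)/\beta}$ and $\bd = (\Ln-1)/\beta - \bL$.

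Finally, I would apply Theorem~5.44 from \cite{Vershynin2012} to $\tp$ with the choice $t = \tfrac12\sqrt{n/M}$, exactly as in the proof of \cref{lem:as_1_verification}. This yields $\norm{\tp^\t\tp/n - \I_M} \leq 1/2$ with failure probability at most $M\exp\{-c\, n/M\} \lesssim \filsize^{-\bd}$ for $d$ large enough, using the bound on $M$ above. The submatrix inequality from the first paragraph then transfers the conclusion to $\P_1$, uniformly in $\m$ and $\reg$. The argument is essentially a copy of \cref{lem:as_1_verification}, and no step appears to be a genuine obstacle; the only point requiring care is the counting bound on $M$ to ensure the exponential factor dominates the linear $M$ prefactor with the correct rate $\filsize^{-\bd}$.
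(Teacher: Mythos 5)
Your proposal matches the paper's proof essentially step for step: same reduction to the maximal index set $\hat{\IndS}_1 = \{\i : \abs{S_\i} \leq \bL + 1\}$ (your $\tilde\IndS$), same counting bound $\abs{\hat{\IndS}_1} \in \bigO(d\filsize^{\bL}) = \bigO(n\filsize^{-\bd})$ via \cref{lem:n_sets}, same application of Theorem~5.44 of \citet{Vershynin2012} with $t = \tfrac12\sqrt{n/\abs{\hat{\IndS}_1}}$, and same principal-submatrix monotonicity of the operator norm to obtain uniformity over $\m$ and $\reg$. No substantive difference.
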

\begin{proof}
    The proof follows a very similar argument to \cref{lem:as_1_verification}
    with minor modifications.
    First, define
    \begin{equation*}
        \IndSlarge_1 \defeq \{\i \in \N \mid \abs{S_\i} \leq \bL + 1\}.
    \end{equation*}
    Note that $\IndSlarge_1$ does not depend on $\m$ or $\reg$,
    and $\IndS_1 \subseteq  \IndSlarge_1$ for any $\m$.
    Furthermore,
    \begin{equation*}
        \abs{\IndSlarge_1}
        = \sum_{\ii = 0}^{\bL + 1}\C(\ii,\filsize, d)
        \overset{(i)}{=} 1 + \sum_{\ii = 1}^{\bL + 1} d\binom{\filsize-1}{\ii-1}
        \overset{(ii)}{\leq} 1 + d\sum_{\ii = 1}^{\bL + 1}{
            \left(\frac{e (\filsize - 1)}{\ii-1}\right)^{\ii-1}
        }
        \overset{(iii)}{\in} \bigO(d\cdot \filsize^{\bL}),
    \end{equation*}
    where $(i)$ follows from \cref{lem:n_sets},
    $(ii)$ is a classical bound on the binomial coefficient,
    and $(iii)$ follows from the fact that the largest degree monomial
    dominates the others.

    Next, we define $\hat{\P}_1 \in \R^{n\times \abs{\IndSlarge_1}}$
    with $[\hat{\P}_1]_{i,j} = \Yf_{S_{\i_j}}(x_i)$
    where $\i_j$ is the $j$-th element in $\IndSlarge_1$.
    Using the same arguments as in the proof of \cref{lem:as_1_verification},
    it follows that the rows of $\hat{\P}_1$ are independent,
    that their norm is bounded by $\abs{\IndSlarge_1}$,
    and that they have an expected outer product equal to $\I_{\abs{\IndSlarge_1}}$.

    Hence, as in the proof of \cref{lem:as_1_verification},
    we can apply Theorem~5.44 from \cite{Vershynin2012}.
    Choosing $t = \frac{1}{2}\sqrt{\frac{n}{\abs{\IndSlarge_1}}}$ yields
    \begin{equation*}
        \norm*{\frac{\hat{\P}_1^\t\hat{\P}_1}n - \I_{\abs{\IndSlarge_1}}}\leq 1/2
    \end{equation*}
    with probability at least $1- \c \filsize^{-\bd}$
    for some absolute constant $\c$.

    Finally, since $\IndS_1\subseteq \IndSlarge_1$
    for all choices of $\reg$ and $\m$,
    \begin{equation*}
        \norm*{\frac{\P_1^\t\P_1}n-\I_{\abs{\IndS_1}}}
        \leq \norm*{\frac{\hat{\P}_1^\t\hat{\P}_1}n-\I_{\abs{\IndSlarge_1}}} \leq 1/2
    \end{equation*}
    with probability at least $1- \c \filsize^{-\bd}$
    uniformly over all $\reg$ and $\m$.
\end{proof}

\begin{lemma}[Bound on $\K_2$]
    \label{lem:K2}
    In the setting of \cref{ssec:12decomposition},
    if $\c < \floor{\filsize / 2} < \filsize < d/2$
    for $\c$ as in \cref{ass:genericity},
    we have
    \begin{equation*}
        \c_1 \leq \mineig{\K_2} \leq\norm*{\K_2} \leq \c_2
    \end{equation*}
    for some positive constants $\c_1, \c_2$
    with probability at least $1-\tilde{\c} {\filsize^{-(1-\bd)}}$.
\end{lemma}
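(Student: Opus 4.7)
The proof will exploit the fact that $\Yf_S(x)^2 = 1$ for every $x \in \{-1,1\}^d$ and every $S \subseteq \{1,\dots,d\}$, which makes the diagonal of $\K_2$ \emph{deterministic}. Applying \cref{prop:kern_eig} and using $\Qf_\ii^{(d,\filsize)}(x,x) = 1$ (from \cref{lem:Q11}), every diagonal entry of $\K_2$ equals the constant $c \defeq \sum_{\ii = \bL + 2}^{\filsize} \xi_\ii^{(\filsize)}$. Hence I will write $\K_2 = c \I_n + E$, where $E$ is the hollow matrix of off-diagonal entries of $\K_2$, and bound $c$ and $\norm{E}$ separately before combining them via Weyl's inequality.

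For the scalar $c$, note that $\T = \ceil*{4 + 4\Ln/\beta} \geq \bL + 2$ since $\bL = \floor*{(\Ln-1)/\beta}$. The regularity conditions \cref{eq:ass_lb,eq:ass_psd} then give $c \geq \xi_{\bL+2}^{(\filsize)} \geq \c_{\bL+2} > 0$, while \cref{eq:ass_ub_2} gives $c \leq \c''$. This yields $c \in \Theta(1)$, independent of $d$ and $\filsize$, and matches the target range in the lemma statement.

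The main task is to show $\norm{E} \in o(1)$ with probability at least $1 - \tilde{\c}\filsize^{-(1-\bd)}$. My plan is a trace-moment approach. I will decompose $E = \sum_{\ii \geq \bL + 2} \xi_\ii^{(\filsize)}(\Q_\ii^{(d,\filsize)} - \I_n)$ using \cref{prop:kern_eig}, and bound $\E[\Tr(E^{2p})]$ for a suitable integer $p$ by expanding it as a sum over closed walks of length $2p$ on the index set $\{1,\dotsc,n\}$. Each edge contributes a factor $\Kf_2(x_i,x_j)$, which decomposes into products of $\Yf_S(x_i)\Yf_S(x_j)$ with $\abs{S}\geq \bL+2$. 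The independence of the $x_i$ combined with the orthogonality $\E_{\nu}[\Yf_S \Yf_{S'}] = \delta_{S,S'}$ forces surviving terms to correspond to walks where each vertex is visited an even number of times, and each edge label $S$ is traversed an even number of times. Carefully accounting for the combinatorial multiplicity together with the tail control \cref{eq:ass_ub_1,eq:ass_ub_2} on $\{\xi_\ii^{(\filsize)}\}_{\ii \geq \bL+2}$ and the count from \cref{lem:n_sets} should deliver a bound of the form $\E[\Tr(E^{2p})] \lesssim \filsize^{-p(1-\bd)}$. A Markov argument with $p$ taken large enough then converts this to $\norm{E} \in o(1)$ with the required probability.

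The chief obstacle is sharpness: the naive estimate $\norm{E}^2 \leq \Tr(E^2) \approx n \cdot \Tr(\S_2)/n$ scales like $n \filsize^{-(1-\bd)}$, which \emph{diverges} in $d$ since $n \in \Theta(d^\Ln)$. Extracting the correct rate requires exploiting the cancellation between off-diagonal entries across independent samples --- morally, the hypercontractivity of the uniform measure on the hypercube \citep{bc75}. Once $\norm{E}$ is controlled, Weyl's inequality on the good event gives $c - \norm{E} \leq \mineig{\K_2} \leq \norm{\K_2} \leq c + \norm{E}$, which, combined with $c \in [\c_{\bL+2}, \c'']$, yields the claimed constants $\c_1, \c_2$ and completes the proof.
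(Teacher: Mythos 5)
Your decomposition $\K_2 = c\I_n + E$ with $c = \sum_{\ii \geq \bL+2}\xi_\ii^{(\filsize)}$ is correct (it follows from \cref{lem:Q11} and \cref{prop:kern_eig}), and your bound $c \in \Theta(1)$ via \cref{eq:ass_lb,eq:ass_psd,eq:ass_ub_2} matches the paper. The trouble lies in your plan for $E$.

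First, a unified trace-moment bound on $E = \sum_{\ii \geq \bL+2}\xi_\ii^{(\filsize)}(\Q_\ii^{(d,\filsize)} - \I_n)$ cannot give the rate you want, because the graph argument (\cref{lem:ghorbani}, the Ghorbani-style moment bound the paper uses) requires a degree-reduction parameter $\M^{(\ii,\filsize)} = d\B(\ii,\filsize)/(\ii\filsize)$ that is large relative to $n$. This holds for low degrees, but for $\ii$ close to $\filsize$ we have $\B(\ii,\filsize) = \binom{\filsize}{\ii}$ which shrinks all the way down to $\filsize$, so $\M^{(\ii,\filsize)}$ can be far below $n$ and the moment bound is vacuous. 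This is precisely why the paper splits $\KfM$ into an intermediate range $\bL+2 \leq \ii < \T$ (where the graph argument applies and gives $\norm{\Q_\ii^{(d,\filsize)} - \I_n} \leq 1/2$, \cref{lem:tail_kernel_bound}) and a late range $\ii \geq \T$ (where it uses a crude Frobenius-norm bound combined with the very fast coefficient decay in \cref{eq:ass_ub_1}, \cref{lem:tail_kernel_bound_late}). The cancellation you hope to exploit does not survive the high-degree tail by itself; you need the assumption that the $\xi_\ii^{(\filsize)}$ there are polynomially tiny.

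Second, and more fundamentally, the claim $\norm{E} \in o(1)$ is stronger than anything the paper establishes, and is likely not true. The paper only shows $\norm{\Q_{\bL+2}^{(d,\filsize)} - \I_n} \leq 1/2$ (a constant, not vanishing) for the leading degree, and the late-tail contribution is controlled only up to a constant: the Frobenius-norm estimate in \cref{lem:tail_kernel_bound_late} gives $\elem_\ii \leq \c'/\filsize$ uniformly over $\ii \in \{\T,\dots,\filsize\}$, and summing roughly $\filsize$ such terms yields an $O(1)$ bound. Consequently the available bound on $\norm{E}$ can be comparable to $c$, and Weyl's inequality $\mineig{\K_2} \geq c - \norm{E}$ could fail to deliver a positive constant. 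The paper sidesteps this entirely: each $\Q_\ii^{(d,\filsize)}$ is PSD (it is a Gram matrix) and each $\xi_\ii^{(\filsize)} \geq 0$, so $\mineig{\K_2} \geq \xi_{\bL+2}^{(\filsize)} \mineig{\Q_{\bL+2}^{(d,\filsize)}} \geq \tfrac{1}{2}\xi_{\bL+2}^{(\filsize)}$ by simply dropping all higher-degree terms. This requires controlling only one matrix with the sharp concentration result, never the full hollow part. You should adopt this positivity trick for the lower bound and handle the late tail separately from the intermediate one for the upper bound.
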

\begin{proof}
    First, the condition on $d$
    implies $\filsize > \T$
    and ensures that we can apply
    \cref{lem:tail_kernel_bound,lem:tail_kernel_bound_late},
    and \cref{ass:genericity}.
    Furthermore, note that $\T - 1 > \bL + 2$.

    \Cref{prop:kern_eig} and the definition of $\Kf_2$ yield
    the following decomposition:
    \begin{equation*}
        \Kf_2(x,x') = \sum_{\ii = \bL + 2}^{\filsize}{
            \xi_\ii^{(\filsize)}
            \sum_{\substack{\diam(S)\leq \filsize \\ \abs{S} =\ii}}{
                \frac{\filsize + 1 - \diam(S)}{d\B(\ii,\filsize)} \Yf_{S}(x)\Yf_{S}(x')
            }
        }
        = \sum_{\ii = \bL + 2}^{\filsize}{
            \xi_\ii^{(\filsize)}\Qf^{(d,\filsize)}_\ii(x,x')
        } ,
    \end{equation*}
    where $\Qf^{(d,\filsize)}_\ii(x,x')$ is defined in \cref{eq:Qdef}.
    Then, using the triangle inequality with non-negativity of the $\xi_\ii^{(\filsize)}$
    from \cref{eq:ass_lb,eq:ass_psd} in \cref{ass:genericity}, we have
    \begin{align*}
        \norm{\K_2}
        &= \norm*{\sum_{\ii = \bL + 2}^{\filsize} \xi_\ii^{(\filsize)} \Q_\ii^{(d,\filsize)}}
        = \norm*{\sum_{\ii = \bL + 2}^{\T-1} \xi_\ii^{(\filsize)} \Q_\ii^{(d,\filsize)}
        + \sum_{\ii = \T}^{\filsize} \xi_\ii^{(\filsize)} \Q_\ii^{(d,\filsize)}} \\
        &\leq \sum_{\ii = \bL + 2}^{\T-1} \xi_\ii^{(\filsize)} \norm*{\Q_\ii^{(d,\filsize)}}
        + \norm*{\sum_{\ii = \T}^{\filsize} \xi_\ii^{(\filsize)} \Q_\ii^{(d,\filsize)}} \\
        &\overset{(i)}{\leq} 1.5 \sum_{\ii = \bL + 2}^{\T-1} \xi_\ii^{(\filsize)} +\c_3
        \overset{(ii)}{\leq} \c_2
        \quad \text{with probability $\geq 1-\c\filsize^{-(1-\bd)}$}.
    \end{align*}
    $\Q_{\ii}^{(d,\filsize)}$ is the kernel matrix
    corresponding to $\Qf^{(d,\filsize)}_\ii(x,x')$,
    $(i)$ uses \cref{lem:tail_kernel_bound,lem:tail_kernel_bound_late},
    and $(ii)$ uses non-negativity
    and additionally \cref{eq:ass_ub_2} in \cref{ass:genericity}.

    The lower bound follows similarly from
    \begin{equation*}
        \mineig{\K_2}
        \geq \sum_{\ii = \bL + 2}^{\filsize}{\xi_\ii^{(\filsize)} \mineig{\Q_\ii^{(d,\filsize)}}}
        \geq {\xi_{\bL+2}^{(\filsize)} \mineig{\Q_{\bL+2}^{(d,\filsize)}}}
        \overset{(i)}\geq \frac{1}{2} \xi_{\bL+2}^{(\filsize)}
        \overset{(ii)}\geq \c_1 ,
    \end{equation*}
    where $(i)$ follows from \cref{lem:tail_kernel_bound}
    with probability at least $1-\c\filsize^{-(1-\bd)}$,
    and $(ii)$ from \cref{eq:ass_lb} in \cref{ass:genericity}.

    Since \cref{lem:tail_kernel_bound} yields both the upper and lower bound
    for all $\ii$ uniformly
    with probability $1-\c\filsize^{-(1-\bd)}$,
    this concludes the proof.
\end{proof}

\begin{lemma}[Bound on $\Tr(\S_2)$]
    \label{lem:S2}
    In the setting of \cref{ssec:12decomposition},
    if $\c < \floor{\filsize / 2} < \filsize < d/2$
    for $\c$ as in \cref{ass:genericity},
    we have with probability at least $1-\tilde{\c} {\filsize^{-(1-\bd)}}$ that
    \begin{equation*}
        \Tr(\S_{2})
        \in \Theta(\filsize^{-(1-\bd)})
        \quad \text{and} \quad
        \norm{\S_2} \in \bigO\left(\frac{1}{d\filsize^{\bL+1}}\right),
    \end{equation*}
    where $\S_2$ is defined in \cref{ssec:12decomposition}.
\end{lemma}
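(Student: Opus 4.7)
}
My plan is to exploit two structural features of the hypercube setting: first, $\Yf_{S}(x)^2 = 1$ for every $x \in \2^d$ and every $S$, which turns the trace of $\S_2$ into a deterministic sum of squared eigenvalues; and second, the ordering property that makes $\lambda_{\bm}$ the largest eigenvalue in $\IndS_2$, which allows both the trace upper bound and the operator norm bound to reduce to estimates already produced by \cref{lem:K2}. Combining \cref{prop:kern_eig}, \cref{lem:eigendecay}, and $\xi_{\bL+2}^{(\filsize)} \in \Theta(1)$ (which follows from \cref{eq:ass_lb,eq:ass_ub_2} in \cref{ass:genericity} since $\bL+2 \leq \T$), the relevant rate will be
\begin{equation*}
    \lambda_{\bm} \in \Theta\left(\frac{1}{d\filsize^{\bL+1}}\right),
\end{equation*}
which is the common scale appearing in both bounds of the lemma.

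First I would compute $\Tr(\S_2) = n \cdot \Sf_2(x,x) = n \sum_{\i \in \IndS_2} \lambda_{S_\i}^2$, using $\Yf_{S}(x)^2 = 1$ on the hypercube. For the upper bound I would write $\sum_{\i \in \IndS_2} \lambda_{S_\i}^2 \leq \lambda_{\bm} \sum_{\i \in \IndS_2} \lambda_{S_\i} \leq \lambda_{\bm} \cdot \Kf(x,x) = \lambda_{\bm} \kappa(1)$, which yields $\Tr(\S_2) \in \bigO(n\lambda_{\bm}) = \bigO(\filsize^{-(1-\bd)})$ using $n \in \Theta(d \filsize^{\bL+\bd})$. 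For the matching lower bound I would restrict the sum to subsets $S$ with $\abs{S} = \bL+2$ and $\diam(S) \leq \floor{\filsize/2}$, so that $\filsize + 1 - \diam(S) \gtrsim \filsize$, and count them via \cref{lem:n_sets} as $\C(\bL+2, \floor{\filsize/2}, d) \in \Theta(d \filsize^{\bL+1})$. Combined with $\B(\bL+2,\filsize) \in \Theta(\filsize^{\bL+2})$ and $\xi_{\bL+2}^{(\filsize)} \in \Theta(1)$, this produces $\sum_{\i \in \IndS_2} \lambda_{S_\i}^2 \in \Omega(1/(d\filsize^{\bL+1}))$, and hence $\Tr(\S_2) \in \Omega(\filsize^{-(1-\bd)})$. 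Both trace estimates are deterministic.

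For the operator-norm bound I would use the factorization $\S_2 = \P_2 \D_2^2 \P_2^\t$ and $\K_2 = \P_2 \D_2 \P_2^\t$ (analogous to $\P_1, \D_1$), which immediately gives the matrix inequality $\S_2 \preceq \lambda_{\bm} \K_2$ by bounding $\lambda_{S_\i}^2 \leq \lambda_{\bm} \lambda_{S_\i}$ inside the quadratic form. Therefore $\norm{\S_2} \leq \lambda_{\bm} \norm{\K_2}$, and invoking the already-proven \cref{lem:K2} (which holds with probability at least $1 - \tilde{\c} \filsize^{-(1-\bd)}$ under the condition $\c < \floor{\filsize/2} < \filsize < d/2$) yields $\norm{\S_2} \in \bigO(1/(d\filsize^{\bL+1}))$ with the claimed probability. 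The main technical hurdle is not in this lemma itself but is imported from \cref{lem:K2}; the remaining step consists only of the counting argument for the lower bound on the trace, where one must check that $\bL+2 \leq \T$ so that \cref{ass:genericity} can supply the required positive constant for $\xi_{\bL+2}^{(\filsize)}$.
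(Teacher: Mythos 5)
Your proof is correct and takes a genuinely different (and for the trace bounds, cleaner) route than the paper. The paper bounds $\Tr(\S_2)$ via the extremal eigenvalues of the random matrix $\S_2$ — specifically $\Tr(\S_2) \leq n\norm{\S_2}$ for the upper bound (inheriting the high-probability concentration of $\norm{\K_2}$), and $\Tr(\S_2) \geq n\mineig{\S_2} \gtrsim \mineig{\Q_{\bL+2}^{(d,\floor{\filsize/2})}}$ for the lower bound, invoking the concentration $\mineig{\Q_{\bL+2}^{(d,\floor{\filsize/2})}} \geq 1/2$ from \cref{lem:tail_kernel_bound}. You instead observe that on the hypercube $\Yf_S(x)^2 \equiv 1$, so $\Tr(\S_2) = n\sum_{\i \in \IndS_2}\lambda_{S_\i}^2$ is deterministic, and both trace directions drop out of elementary eigenvalue counting and the nonnegativity of the $\lambda_{S_\i}$. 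This eliminates the need for any matrix concentration in the trace part and confines the high-probability argument to the operator-norm bound, which is imported from \cref{lem:K2} exactly as in the paper. What the paper's route buys is robustness to settings where the diagonal of $\Sf_2$ is not constant (e.g.\ non-hypercube feature distributions), but for the lemma as stated your shortcut is entirely legitimate.

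One small point worth spelling out: you assert $\lambda_{\bm} \in \Theta\left(\frac{1}{d\filsize^{\bL+1}}\right)$ but only the upper-bound direction $\lambda_{\bm} \in \bigO\left(\frac{1}{d\filsize^{\bL+1}}\right)$ is actually used (in both your trace upper bound and your operator-norm bound), and proving it cleanly requires splitting on $\abs{S_\i} < \T$ versus $\abs{S_\i} \geq \T$ and applying \cref{eq:beforeT} and \cref{eq:afterT} of \cref{lem:eigendecay} respectively (noting $\T \geq \bL+2$); the lower-bound direction for $\lambda_{\bm}$ is superfluous since your $\Tr(\S_2)$ lower bound comes from a direct count over degree-$(\bL+2)$ subsets with $\diam(S) \leq \floor{\filsize/2}$, which is valid under the hypothesis $\c < \floor{\filsize/2} < d/2$ needed to invoke \cref{lem:n_sets}. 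The rest of the argument checks out.
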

\begin{proof}
    Throughout the proof, the conditions on $d$ and hence $\filsize \in \Theta(d^\beta)$
    ensure that we can apply \cref{ass:genericity,lem:tail_kernel_bound},
    as well as $\bL + 2 \leq \T < \floor{\filsize / 2}$.

    First, \cref{lem:K2} yields $\norm*{\K_2} \in \Theta(1)$
    with probability at least $1-\c' {\filsize^{-(1-\bd)}}$,
    which we will use throughout the proof.
    Next, we bound $\norm*{\S_2}$ in two steps.
    For this, remember that $\lambda_{\bm}$ is the largest eigenvalue corresponding
    to an eigenfunction $\Yf_S$ of degree $\abs{S} \geq \bL + 2$.

    \paragraph*{Proof that $\norm{\S_2}\leq \lambda_{\bm}\norm{\K_2}$}
    Define $\pp_\i \in \R^{n \times n}$
    with $[\pp_\i]_{i,j} \defeq \Yf_{S_\i}(x_i) \Yf_{S_\i}(x_j)$
    for all $i,j \in \{1, \dotsc, n\}$, $\i \in \IndS_2$,
    and let $\v$ be any vector in $\R^n$.
    Then,
    \begin{align*}
        \norm{\S_2\v}
        &\overset{(i)}{=} \norm*{\left(
            \sum_{\i \in \IndS_2}\lambda_{\i}^2 \pp_\i
        \right)\v} \\
        &\leq \norm*{
            \max_{\i \in \IndS_2}\{\lambda_{\i}\}
            \sum_{\i \in \IndS_2}\lambda_{\i} \pp_\i \v
        } \\
        &\overset{(ii)}{=} \lambda_{\bm} \norm*{\left(
            \sum_{\i \in \IndS_2}\lambda_{\i} \pp_\i
        \right)\v} \\
        &= \lambda_{\bm}\norm{\K_2\v},
    \end{align*}
    where $(i)$ follows from the definition of $\Sf_2$
    and $(ii)$ from the definition of $\bm$.

    \paragraph*{Proof that $\lambda_{\bm} \in \Theta\left(\frac{1}{d\filsize^{\bL + 1}}\right)$}
    We show that $\lambda_{\i} \in \bigO\left(\frac{1}{d\filsize^{\bL + 1}}\right)$
    for all $\i \in \IndS_2$,
    and that there exists $\tm \in \IndS_2$
    with $\lambda_{\tm} \in \Omega\left(\frac{1}{d\filsize^{\bL + 1}}\right)$.
    Since $\lambda_{\bm} = \max_{\i \in \IndS_2}{\lambda_{S_\i}}$,
    those two facts imply $\lambda_{\bm} \in \Theta\left(\frac{1}{d\filsize^{\bL + 1}}\right)$.

    Let $\i \in \IndS_2$ be arbitrary.
    \Cref{lem:eigendecay} yields
    \begin{align*}
        \lambda_{\i} &\in \begin{cases}
            \bigO\left(\frac{1}{d\filsize^{\abs{S_\i}- 1}}\right) & \abs{S_\i} < \T\\
            \bigO\left(\frac{1}{d\filsize^{\T- 1}}\right) & \abs{S_\i} \geq \T\\
        \end{cases}\\
        &\overset{(i)}\subseteq \bigO\left(\frac{1}{d\filsize^{\bL+1}}\right),
    \end{align*}
    where $(i)$ follows from $\abs{S_\i} \geq \bL + 2$ and $\T \geq \bL + 2$.

    Now we show that there exists $\tm$
    with $\lambda_{\tm} \in \Omega\left(\frac{1}{d\filsize^{\bL + 1}}\right)$.
    We choose $\tm$ with $S_{\tm} = \{1,2,\dots,\bL+2\}$.
    Note that $\bL+2 = \diam(S_{\tm}) \leq \filsize$
    and thus $\tm\in \IndS_2$.
    Next, \cref{prop:kern_eig} yields
    \begin{equation*}
        \lambda_{S_{\tm}} = \xi_{\bL+2}^{(\filsize)} \frac{(\filsize+1-\diam(S_{\tm}))}{d\B(\bL+2,\filsize)} \overset{(i)}\in \Theta\left(\frac{\filsize}{d\filsize^{\bL+2}}\right) \subseteq \Theta\left(\frac{1}{d\filsize^{\bL+1}}\right),
    \end{equation*}
    where $(i)$ follows from \cref{eq:ass_lb,eq:ass_ub_2} in \cref{ass:genericity}.

    Finally, combining the previous two results and $\norm{\K_2} \in \Theta(1)$, we have
    \begin{equation*}
        \norm{\S_2}
        \leq \lambda_{\bm}\norm{\K_2}
        \in \bigO\left(\lambda_{\bm}\right)
        = \bigO\left(\frac{1}{d\filsize^{\bL + 1}}\right) .
    \end{equation*}

    \paragraph*{Upper bound of $\Tr(\S_2)$}
    The upper bound also follows directly from the last two results:
    \begin{equation*}
        \Tr(\S_2)
        \leq n \norm{\S_2}
        \in \bigO\left(\frac{n}{d\filsize^{\bL + 1}}\right)
        = \bigO\left(\frac{d\filsize^{\bL+ \bd}}{d\filsize^{\bL+ 1}}\right)
        = \bigO(\filsize^{-(1-\bd)}) .
    \end{equation*}

    \paragraph*{Lower bound of $\Tr(\S_2)$}
    The lower bound requires a more refined argument.
    \begin{align*}
        \Sf_2(x,x')
        &= \sum_{\i \in \IndS_2} \lambda_{S_\i}^2\Yf_{S_\i}(x)\Yf_{S_\i}(x')
        \geq \sum_{\substack{\diam(S)\leq \filsize \\ \abs{S} = \bL + 2}}{
            \lambda_{S}^2\Yf_{S}(x)\Yf_{S}(x')
        } \\
        &\geq \sum_{\substack{\diam(S)\leq \floor{\filsize/2} \\ \abs{S} = \bL + 2}}\lambda_{S}^2\Yf_{S}(x)\Yf_{S}(x') \\
        &\overset{(i)}{=} (\xi_{\bL+2}^{(\filsize)})^2\sum_{\substack{\diam(S)\leq \floor{\filsize/2} \\ \abs{S} = \bL + 2}}\frac{(\filsize + 1 - \diam(S))^2}{d^2\B(\bL+2,\filsize)^2} \Yf_{S}(x)\Yf_{S}(x') \\
        &= \frac{(\xi_{\bL+2}^{(\filsize)})^2}{d\B(\bL+2,\filsize)}\sum_{\substack{\diam(S)\leq \floor{\filsize/2} \\ \abs{S} = \bL + 2}}(\filsize + 1 - \diam(S))\frac{\filsize + 1 - \diam(S)}{d\B(\bL+2,\filsize)} \Yf_{S}(x)\Yf_{S}(x') \\
        &\overset{(ii)}{\geq} \frac{(\xi_{\bL+2}^{(\filsize)})^2\filsize/2}{d\B(\bL+2,\filsize)}\sum_{\substack{\diam(S)\leq \floor{\filsize/2} \\ \abs{S} = \bL + 2}}\frac{\floor{\filsize/2} + 1 - \diam(S)}{d\B(\bL+2,\filsize)} \Yf_{S}(x)\Yf_{S}(x'),
    \end{align*}
    where $(i)$ follows from \cref{prop:kern_eig}.
    In $(ii)$, we use that, as long as $\diam(S)\leq \floor{\filsize/2}$,
    we have $q+1-\diam(S) \geq \frac{\filsize}{2}$
    and $q \geq  \floor{\filsize/2}$.
    Continuing the bound, we have
    \begin{align*}
        \Sf_2(x,x')
        &\geq \frac{(\xi_{\bL+2}^{(\filsize)})^2\filsize/2}{d\B(\bL+2,\filsize)}\sum_{\substack{\diam(S)\leq \floor{\filsize/2} \\ \abs{S} = \bL + 2}}\frac{\floor{\filsize/2} + 1 - \diam(S)}{d\B(\bL+2,\filsize)} \Yf_{S}(x)\Yf_{S}(x') \\
        &\overset{(iii)}{\geq}\c_{\bL}' \frac{(\xi_{\bL+2}^{(\filsize)})^2\filsize/2}{d\B(\bL+2,\filsize)}\sum_{\substack{\diam(S)\leq \floor{\filsize/2} \\ \abs{S} = \bL + 2}}\frac{\floor{\filsize/2} + 1 - \diam(S)}{d\B(\bL+2,\floor{\filsize/2})} \Yf_{S}(x)\Yf_{S}(x') \\
        &\overset{(iv)}{=}\c_{\bL}' \frac{(\xi_{\bL+2}^{(\filsize)})^2\filsize/2}{d\B(\bL+2,\filsize)}\Qf_{\bL +2}^{(d,\floor{\filsize/2})}(x,x') .
    \end{align*}
    In $(iii)$, we use the classical bound on the binomial coefficient
    $\binom{\filsize}{\bL+2}=\B(\bL+2,\filsize)$ as follows:
    \begin{equation*}
        \B(\bL+2,\filsize)
        \leq {(2e)^{\bL+2}}\left(\frac{\filsize/2}{\bL+2}\right)^{\bL+2}
        \leq{(2e)^{\bL+2}\c_{\bL}}\left(\frac{\floor{\filsize/2}}{\bL+2}\right)^{\bL+2}
        = \c_{\bL}'\B(\bL+2,\floor{\filsize/2}),
    \end{equation*}
    where $\c_{\bL}'$ is a constant that depends only on $\bL$.
    Finally, $(iv)$ follows from the definition
    of $\Qf_{\bL +2}^{(d,\floor{\filsize/2})}$ in \cref{eq:Qdef}.

    The bound on $\Sf_2(x,x')$ implies
    \begin{equation*}
        \mineig{\S_{2}}
        \geq \c_{\bL}' \frac{(\xi_{\bL+2}^{(\filsize)})^2\filsize/2}{d\B(\bL+2,\filsize)}
        \mineig{\Q_{\bL +2}^{(d,\floor{\filsize/2})}},
    \end{equation*}
    and allows us to ultimately lower-bound $\Tr(\S_2)$ as follows:
    \begin{align*}
        \Tr(\S_2)
        &\geq n\mineig{\S_{2}}
        \geq \c_{\bL}' n\frac{
            (\xi_{\bL+2}^{(\filsize)})^2\filsize/2
        }{
            d\B(\bL+2,\filsize)
        }
        \mineig{\Q_{\bL +2}^{(d,\floor{\filsize/2})}} \\
        &\overset{(i)}{\gtrsim} \c_{\bL}' n\frac{
            (\xi_{\bL+2}^{(\filsize)})^2\filsize/2
        }{
            d\B(\bL+2,\filsize)
        } \\
        &\overset{(ii)}{\gtrsim} d^\Ln \frac{\filsize}{d \filsize^{\bL+2}}
        = \frac{d\filsize^{\bL+\bd}}{d \filsize^{\bL+1}} = \filsize^{-(1-\bd)},
    \end{align*}
    where $(i)$ follows from the lower bound on $\mineig{\Q_{\bL +2}^{(d,\floor{\filsize/2})}}$
    in \cref{lem:tail_kernel_bound}
    with probability at least $1-c''\floor{\filsize/2}^{-(1-\bd)} \geq 1-c''\filsize^{-(1-\bd)}$,
    and $(ii)$ follows from \cref{eq:ass_lb} in \cref{ass:genericity}
    and the classical lower bound on the binomial coefficient.
    This yields the desired lower bound $\Tr(\S_2) \in \Omega(\filsize^{-(1-\bd)})$.

    Finally, collecting all error probabilities concludes the proof.
\end{proof}

\begin{lemma}[Diagonal elements of $\Qf^{(d,\filsize)}$]
    \label{lem:Q11}
    Let $\ii, \filsize, d \in \N$
    with $0 < \ii \leq \filsize < d/2$
    and $x\in \{-1,1\}^d$. Then,
    \begin{equation*}
        \Qf_{\ii}^{(d,\filsize)}(x,x) = 1.
    \end{equation*}
\end{lemma}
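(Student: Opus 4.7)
The plan is to reduce the identity to a combinatorial double-counting argument. First, since $x \in \{-1,1\}^d$, every factor $x_j^2 = 1$ and hence $\Yf_S(x)^2 = \prod_{j \in S} x_j^2 = 1$ for every $S$. Substituting into the definition, the claim $\Qf_{\ii}^{(d,\filsize)}(x,x) = 1$ reduces to the purely combinatorial identity
\begin{equation*}
    \sum_{\substack{\abs{S}=\ii \\ \diam(S)\leq \filsize}} \bigl(\filsize + 1 - \diam(S)\bigr) \;=\; d \,\B(\ii,\filsize).
\end{equation*}

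Second, I will prove this identity by double counting the set of pairs $(S, k)$ with $k \in \{1,\dots,d\}$, $\abs{S}=\ii$, and $S$ entirely contained in the cyclic contiguous window $W_k \coloneqq \{\module{k}{d}, \module{k+1}{d}, \dots, \module{k+\filsize-1}{d}\}$ of size $\filsize$. Counting by $S$ first: given $S$ with $\diam(S) \leq \filsize$, the number of windows $W_k$ containing $S$ equals the number of shifts of a size-$\filsize$ window around the minimal arc of length $\diam(S)$ covering $S$, which is exactly $\filsize + 1 - \diam(S)$. The assumption $\filsize < d/2$ guarantees that the minimal covering arc is unambiguous and that the cyclic window never wraps around to contain $S$ in two different ways, so this count is exact. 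Counting by $k$ instead: each of the $d$ choices of $k$ fixes a contiguous window of $\filsize$ indices, inside of which there are exactly $\binom{\filsize}{\ii} = \B(\ii,\filsize)$ subsets of size $\ii$. Equating the two counts yields the identity, and dividing by $d\,\B(\ii,\filsize)$ gives $\Qf_{\ii}^{(d,\filsize)}(x,x) = 1$.

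The only subtle step is verifying that the two counts really are exact under $\ii \leq \filsize < d/2$. The upper bound on $\filsize$ prevents a cyclic window from overlapping itself, so each window contributes $\binom{\filsize}{\ii}$ distinct subsets with no double counting across $k$ within a fixed $S$; and the constraint $\ii \leq \filsize$ ensures every $S$ appearing in the sum actually fits inside some window. This is the only place the hypotheses are used, and I expect it to be the main (mild) obstacle to make rigorous.
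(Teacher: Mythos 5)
Your proof is correct, and it takes a genuinely different route from the paper. The paper also reduces the claim to the identity $\sum_{\diam(S) \leq \filsize, |S| = \ii} (\filsize + 1 - \diam(S)) = d\,\B(\ii,\filsize)$, but then proves it algebraically: it groups the sum by exact diameter, applies a telescoping recursion to show the left side equals $\sum_{\gamma=\ii}^{\filsize} \C(\ii,\gamma,d)$, substitutes the explicit formula $\C(\ii,\gamma,d) = d\binom{\gamma-1}{\ii-1}$ from \cref{lem:n_sets}, and finishes with the hockey-stick identity. Your double-counting of pairs $(S,k)$ with $S \subseteq W_k$ reaches the same identity directly, bypassing both the recursion and the reliance on \cref{lem:n_sets}, which makes it more elementary and arguably more illuminating as to why the factor $\filsize + 1 - \diam(S)$ is the natural multiplicity. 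The one point you should tighten in a write-up is the claim that for $\diam(S) \leq \filsize < d/2$ the diameter coincides with the length of the unique minimal covering cyclic arc and that any length-$\filsize$ window containing $S$ must contain that arc; both are true precisely because $\filsize < d/2$ rules out a window containing $S$ ``the long way around,'' and your stated count $\filsize + 1 - \diam(S)$ would fail without this hypothesis (e.g.\ $d=5$, $\filsize=4$, $S=\{1,3\}$ gives three containing windows, not two). You flag this, so the argument is sound; it just deserves a sentence or two of justification in a polished proof.
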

\begin{proof}
    First,
    \begin{align*}
        \Qf^{(d,\filsize)}_\ii(x,x)
        &= \sum_{\substack{\diam(S)\leq \filsize \\ \abs{S} =\ii}}{
            \frac{\filsize + 1 - \diam(S)}{d\B(\ii,\filsize)} \Yf_{S}(x)^2
        } \\
        &\overset{(i)}{=} \frac{1}{d\B(\ii,\filsize)}
        \sum_{\substack{\diam(S)\leq \filsize \\ \abs{S} =\ii}}{
            \filsize + 1 - \diam(S)
        } \\
        &= \frac1{d\B(\ii,\filsize)}
        \sum_{\diam = \ii}^{\filsize}{
            ({\filsize + 1 - \diam})
            \sum_{\substack{\diam(S)= \diam \\ \abs{S} =\ii}} 1
        }
    \end{align*}
    where $(i)$ follows from the fact that $\Yf_{S}(x)^2 = 1$.

    Note that
    $\sum_{\substack{\diam(S)= \diam \\ \abs{S} =\ii}} 1$
    matches the definition of $\tilde{\C}(\ii, \diam,d)$
    in the proof of \cref{lem:n_sets}.
    Next, we use the following recurrence:
    \begin{align*}
        \sum_{\diam = \ii}^\filsize{
            (\filsize+1-\diam)\tilde{\C}(\ii,\diam,d)
         }
         &= \sum_{\diam = \ii}^\filsize{
            \left(\tilde{\C}(\ii,\diam,d) + ((\filsize-1)+1-\diam)\tilde{\C}(\ii,\diam,d)\right)
         } \\
        &= {\C}(\ii,\filsize,d)
        + 0 + \sum_{\diam = \ii}^{\filsize-1}{
            ((\filsize-1)+1-\diam)\tilde{\C}(\ii,\diam,d)
        },
    \end{align*}
    where the last step uses the fact that
    $\C(\ii, \filsize, d) = \sum_{\diam = \ii}^\filsize{\tilde{\C}(\ii,\diam,d)}$
    by definition,
    and that the term corresponding to $\diam = \filsize$ in the second sum is zero.
    Recursively applying this formula $\filsize-\ii$ times yields
    \begin{equation*}
        \sum_{\diam = \ii}^\filsize{(\filsize+1-\diam)\tilde{\C}(\ii,\diam,d)}
        = \sum_{\diam = \ii}^\filsize{\C(\ii,\diam,d)} .
    \end{equation*}

    Using this identity, we finally get
    \begin{align*}
        \Qf^{(d,\filsize)}_\ii(x,x)
        &= \frac1{d\B(\ii,\filsize)}
        \sum_{\diam = \ii}^{\filsize}{\C(\ii,\diam,d)}
        \overset{(i)}= \frac1{d\B(\ii,\filsize)}
        \sum_{\diam = \ii}^{\filsize} d \binom{\diam-1}{\ii-1}
        \overset{(ii)} = \frac1{d\B(\ii,\filsize)} d\binom{\filsize}{\ii}
        \overset{(iii)}=1,
    \end{align*}
    where $(i)$ follows from \cref{lem:n_sets},
    $(ii)$ from the hockey-stick identity,
    and $(iii)$ from the definition of $\B(\ii,\filsize)$ in \cref{eq:b_formula}.
\end{proof}

\subsection{Random matrix theory lemmas}\label{ssec:rmt_lemmas}
We use the following lemmas related to random matrix theory in the proof of \cref{lem:combined}.
The first two results bound the kernel's intermediate and late eigenvalues.
\begin{lemma}[Bound on the kernel's intermediate tail]
    \label{lem:tail_kernel_bound}
    In the setting of \cref{ssec:12decomposition},
    for $\T - 1 \leq \filsize < d/2$,
    with probability at least $1-c\filsize^{-(1-\bd)}$,
    all $\ii \in \N$ with $\bL+2 \leq \ii <\T$ satisfy
    \begin{equation*}
        \norm{\Q_{\ii}^{(d,\filsize)} - \I_n} \leq \frac12,
    \end{equation*}
    where $\T$ is defined in \cref{ass:genericity}
    and $\Q$ in \cref{eq:Qdef}.
\end{lemma}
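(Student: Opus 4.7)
The plan is to control $\|\Q_{\ii}^{(d,\filsize)} - \I_n\|$ via a trace-moment argument for each fixed $\ii$, followed by a short union bound over $\ii$. First I would verify that $\E[\Q_{\ii}^{(d,\filsize)}] = \I_n$: the diagonal entries equal $1$ by \cref{lem:Q11}, and for $i \neq j$ the independence of $x_i, x_j$ together with $\E_x[\Yf_S(x)] = 0$ for $S \neq \emptyset$ kills every off-diagonal expectation (note $\abs{S} = \ii \geq \bL + 2 \geq 2$). Setting $\tilde\Q \defeq \Q_{\ii}^{(d,\filsize)} - \I_n$, the centered matrix has vanishing diagonal and entries $\tilde\Q_{ab} = \sum_S w_S \Yf_S(x_a)\Yf_S(x_b)$ for $a \neq b$, with weights $w_S = (\filsize + 1 - \diam(S))/(d\B(\ii,\filsize))$.

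Next I would bound $\E[\Tr(\tilde\Q^{2p})]$ for a suitable integer $p$. Expanding the trace as a sum over $2p$-cycles $(i_1,\dots,i_{2p})$ in $\{1,\dots,n\}$ with no consecutive repeats, and then expanding each edge factor in $S_k$, one uses $\Yf_S(x)\Yf_{S'}(x) = \Yf_{S \triangle S'}(x)$ together with independence of the $x_i$'s to factorize the joint expectation across vertices; at each vertex only labelings whose incident $S$-sets have empty symmetric difference survive, because $\E[\Yf_T(x)] = 0$ for $T \neq \emptyset$. Since $\abs{S_k} \geq 2$, one verifies that $\E[\tilde\Q_{ab}\tilde\Q_{cd}] = 0$ whenever $\{a,b\} \cap \{c,d\} = \emptyset$ while $\E[\tilde\Q_{ab}^2] = \sum_S w_S^2$, and a standard Wigner-style path-counting argument then yields
\[
    \E[\Tr(\tilde\Q^{2p})] \leq C_p\, n^{p+1} \Bigl(\sum_S w_S^2\Bigr)^p .
\]
To control $\sum_S w_S^2$, I would upper-bound one factor of $\filsize + 1 - \diam(S)$ by $\filsize$ and invoke \cref{lem:Q11} to obtain $\sum_S w_S^2 \leq \filsize/(d\B(\ii,\filsize))$. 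Using $\B(\ii,\filsize) \geq (\filsize/\ii)^{\ii}$, $\ii \geq \bL+2$, and the identity $\beta\bL = \Ln - 1 - \beta\bd$, a direct computation shows $n \sum_S w_S^2 \lesssim \filsize^{-(1-\bd)}$, so $\E[\Tr(\tilde\Q^{2p})] \lesssim_p n \cdot \filsize^{-p(1-\bd)}$.

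Combining Markov's inequality with $\|\tilde\Q\|^{2p} \leq \Tr(\tilde\Q^{2p})$ gives $\mathbb{P}(\|\tilde\Q\| > 1/2) \lesssim_p n \cdot \filsize^{-p(1-\bd)}$. Choosing $p$ large enough that $n \lesssim \filsize^{(p-1)(1-\bd)}$, that is, $p \geq 1 + \Ln/(\beta(1-\bd))$ (a choice depending only on the fixed parameters $\Ln, \beta, \bd$), produces the target $c\, \filsize^{-(1-\bd)}$ bound for a single $\ii$; a union bound over the $O(1)$-many values $\ii \in \{\bL+2, \dots, \T-1\}$ preserves the rate. The main obstacle will be the Wigner-style combinatorial moment estimate: although the edge labels $\Yf_{S_k}(x_{i_k})$ share vertices across different edges, the Fourier-Walsh orthonormality constrains the surviving labelings exactly as in classical Hermitian trace-moment proofs, so one can adapt that template at the cost of some bookkeeping on the weights $w_S$ in place of a single variance parameter.
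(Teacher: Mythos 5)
Your high-level strategy — center $\tilde\Q \defeq \Q_{\ii}^{(d,\filsize)} - \I_n$ using \cref{lem:Q11}, bound a high trace moment $\E[\Tr(\tilde\Q^{2p})]$, apply Markov, then union-bound over the $O(1)$-many $\ii \in \{\bL+2,\dots,\T-1\}$ — is exactly the route the paper takes. The variance computation $\sum_S w_S^2 \lesssim \filsize/(d\B(\ii,\filsize))$ and the rate algebra that yields $n\sum_S w_S^2 \lesssim \filsize^{-(1-\bd)}$ also match the paper's verification of its parameter $\M^{(\ii,\filsize)} = d\B(\ii,\filsize)/(\ii\filsize)$ (up to the harmless extra factor of $\ii$).

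However, the one step that you flag yourself as the "main obstacle" — the claimed moment bound $\E[\Tr(\tilde\Q^{2p})] \leq C_p\, n^{p+1}\bigl(\sum_S w_S^2\bigr)^p$ from "a standard Wigner-style path-counting argument" — is precisely where your sketch glosses over the real content, and the claimed estimate is likely not quite right as stated. Three issues. First, $\tilde\Q$ is not a Wigner matrix: its entries are driven by $n$ independent samples $x_1,\dots,x_n$, not by independent edge variables, so $\tilde\Q_{ij}$ and $\tilde\Q_{jk}$ are correlated. The paper encapsulates this by verifying the decorrelation property of \cref{lem:ghorbani}, Condition 2, namely $\lvert\E_{x_j}[\,[\W]_{i,j}[\W]_{j,k}\,]\rvert \leq \M^{-1}\lvert[\W]_{i,k}\rvert$, which is what makes the path counting close; your proposal never identifies this structural input. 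Second, at vertices of degree $\geq 4$ the Walsh–Fourier constraint is "the symmetric difference of all incident $S$-sets is empty", which is strictly weaker than the Wigner "doubled edge" condition — it admits labelings where four distinct sets satisfy $S_1 \triangle S_2 \triangle S_3 \triangle S_4 = \emptyset$. These extra labelings can contribute with weight beyond $(\sum_S w_S^2)^p$ when the number of eligible sets $\C(\ii,\filsize,d) \sim d\filsize^{\ii-1}$ exceeds $n$ (which is exactly the regime $\ii \geq \bL+2$ here), so a naive adaptation of the Wigner template is not safe. Third, your bound suppresses the dependence of $C_p$ on $p$: since the entries are heavy-tailed with hypercontractivity constant $(p-1)^{\ii/2}$ and $\ii$ can be as large as $\T-1$, the constant actually grows like $p^{3p}$, and the paper's \cref{lem:ghorbani} yields
\begin{equation*}
    \E[\Tr(\W^{2p})] \leq (\c p)^{3p}\,\frac{n^{p+1}}{\M^{p}} + \c'^{p}\left(\frac{n}{\M}\right)^{2},
\end{equation*}
with an additional $p$-independent second term that your bound is missing. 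In the end the conclusion you want still holds (choosing a fixed $p \geq 2 + (\beta^{-1}+\bL+\bd)/(1-\bd)$ makes the first term negligible and the second term gives the advertised $\filsize^{-2(1-\bd)}$), but the proof you would have to write out is essentially a re-derivation of Proposition~3 of \citet{Ghorbani19} including its Condition~2, not a plug-in of the classical Wigner combinatorics, and the correct moment bound has an extra term your outline does not account for.
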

This lemma particularly implies $\norm{\Q_{\ii}^{(d,\filsize)}}\leq 1.5$
and $\mineig{\Q_{\ii}^{(d,\filsize)}}\geq 1/2$
for all $\bL+2 \leq \ii <\T$ with high probability.
\begin{lemma}[Bound on the kernel's late tail]
    \label{lem:tail_kernel_bound_late}
    In the setting of \cref{ssec:12decomposition},
    if $\c \leq \floor{\filsize / 2} - 1 < \filsize < d/2$,
    for $\c$ and $\T$ as in \cref{ass:genericity},
    we have
    \begin{equation*}
        \norm*{\sum_{\ii = \T}^{\filsize} \xi_\ii^{(\filsize)}\Q^{(d,\filsize)}_\ii}
        \lesssim 1
    \end{equation*}
    with probability at least $1-\tilde{\c} d^{-1}$.
\end{lemma}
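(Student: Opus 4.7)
The plan is to exploit three structural facts: (i) by \cref{eq:ass_psd}, all coefficients $\xi_\ii^{(\filsize)}$ for $\ii \geq \T$ are non-negative, so $M \coloneqq \sum_{\ii = \T}^{\filsize} \xi_\ii^{(\filsize)}\Q^{(d,\filsize)}_\ii$ is positive semi-definite; (ii) since each eigenfunction $\Yf_S$ with $\abs{S} \geq 1$ has zero mean under the uniform measure on $\{-1,1\}^d$, orthonormality combined with \cref{lem:Q11} yields $\E[M] = c \I_n$ where $c \coloneqq \sum_{\ii = \T}^{\filsize} \xi_\ii^{(\filsize)}$, which by \cref{eq:ass_ub_2} is bounded by $\c''$; (iii) the diagonal of $M$ is deterministically $c$ by \cref{lem:Q11}, so $M - c \I_n$ has zero diagonal. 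By the triangle inequality $\norm{M} \leq \norm{M - c \I_n} + c$, so it suffices to show $\norm{M - c \I_n} \lesssim 1$ with high probability.

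The main step bounds the Frobenius norm: $\norm{M - c \I_n}^2 \leq \norm{M - c \I_n}_F^2 = \sum_{i \neq j} M_{ij}^2$. For $i \neq j$, independence of $x_i, x_j$ and orthonormality of $\{\Yf_S\}$ give
\begin{equation*}
    \E[M_{ij}^2] = \sum_{\substack{\abs{S} \geq \T \\ \diam(S) \leq \filsize}} \mu_S^2, \qquad \mu_S \coloneqq \xi_{\abs{S}}^{(\filsize)}\frac{\filsize + 1 - \diam(S)}{d\B(\abs{S}, \filsize)} .
\end{equation*}
Using $\sum_S \mu_S^2 \leq (\max_S \mu_S) \sum_S \mu_S$ and invoking \cref{lem:Q11} once more to get $\sum_{S : \abs{S} = \ii, \diam(S) \leq \filsize} (\filsize + 1 - \diam(S))/(d\B(\ii,\filsize)) = 1$, one obtains
\begin{equation*}
    \E[M_{ij}^2] \leq \sum_{\ii = \T}^{\filsize} \left(\xi_\ii^{(\filsize)}\right)^2 \frac{\filsize}{d\B(\ii,\filsize)} .
\end{equation*}

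The remaining arithmetic splits over two ranges of $\ii$. For $\ii \in [\T, \filsize - \T]$, the binomial coefficient $\B(\ii, \filsize)$ is minimized at the endpoints, giving $\B(\ii, \filsize) \geq (\filsize/\T)^\T$; combined with the crude bound $\sum_\ii (\xi_\ii^{(\filsize)})^2 \leq \c''^2$ from \cref{eq:ass_ub_2}, this range contributes to $n^2 \E[M_{12}^2]$ at most a term of order $d^{2\Ln - 1 - \beta(\T - 1)}$, which is at most $d^{-1}$ precisely because $\T \geq 4 + 4\Ln/\beta$ forces $\beta(\T - 1) \geq 2\Ln$. For $\ii \in [\filsize - \T + 1, \filsize]$, the binomial coefficients can be as small as $1$, but the tail decay $\xi_{\filsize - j}^{(\filsize)} \leq \c'/\filsize^{\T - j + 1}$ from \cref{eq:ass_ub_1} compensates; substituting $j = \filsize - \ii \in \{0, \dots, \T - 1\}$ and bounding $\B(\ii, \filsize) = \binom{\filsize}{j} \geq (\filsize/(2\T))^j$ shows every term is again of order $d^{-1}$, and summing the $\T$ terms preserves that rate. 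Hence $n^2 \E[M_{12}^2] \lesssim d^{-1}$, so $\E[\norm{M - c \I_n}_F^2] = n(n-1)\E[M_{12}^2] \lesssim d^{-1}$. Markov's inequality then gives $\norm{M - c \I_n}_F \lesssim 1$ with probability at least $1 - \tilde{\c} d^{-1}$, concluding the proof.

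The main obstacle is the high-degree range $\ii \in [\filsize - \T + 1, \filsize]$, where $\binom{\filsize}{\ii}$ degenerates and the crude bound $\xi_\ii^{(\filsize)} \leq \c''$ produces no decay whatsoever; it is precisely the tail estimate \cref{eq:ass_ub_1} in the regularity assumption that was tailored to close this gap, and the definition $\T = \lceil 4 + 4\Ln/\beta \rceil$ is the minimal choice that simultaneously makes both ranges contribute a negligible amount.
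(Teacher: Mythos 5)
Your proof is correct, and it takes a cleaner route than the paper's while using the same essential ingredients (orthonormality, \cref{lem:Q11} for diagonal entries and second moments, the split into $\ii \in [\T, \filsize - \T]$ versus $\ii \in [\filsize - \T + 1, \filsize]$, and the specific pieces of \cref{ass:genericity}). The key structural difference: you bound the \emph{whole} random matrix $M - c\I_n$ by its Frobenius norm and Markov-bound the expected squared Frobenius norm directly, which costs a single factor of $n^2 = n(n-1)$. The paper instead peels off each $\W_\ii \coloneqq \Q_\ii^{(d,\filsize)} - \I_n$ separately via the triangle inequality, bounds $\norm{\W_\ii}_F$ by $n\max_{i\neq j}\abs{[\W_\ii]_{ij}}$, applies a union bound over the $n^2$ off-diagonal entries together with Markov for the second moment, and finally union-bounds over $\ii$; that route accumulates an $n^4$ prefactor. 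Both strategies close because $\T = \ceil{4 + 4\Ln/\beta}$ is large enough: the paper's bound ends up needing $\beta(\T - 4) \geq 4\Ln$, while yours only needs $\beta(\T - 1) \geq 2\Ln$, so your argument has more slack (indeed, a smaller $\T$ would suffice for it). Your per-$\ii$ split uses $\binom{\filsize}{\T}$ in the middle range and \cref{eq:ass_ub_1} on the upper edge exactly as the paper does. Two cosmetic remarks: your bound $\binom{\filsize}{j} \geq (\filsize/(2\T))^j$ should read $(\filsize/j)^j \geq (\filsize/\T)^j$ (the factor $2$ is unnecessary and not standard); and the observation $\E[M] = c\I_n$ is not actually needed — it suffices that $\diag(M) = c\I_n$ deterministically, which is what \cref{lem:Q11} gives and what you actually use.
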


\begin{proof}[Proof of \cref{lem:tail_kernel_bound}]
    First,
    \cref{lem:Q11} yields that the diagonal elements of $\Q_{\ii}^{(d,\filsize)}$ are just $1$.
    Hence, we define $\W_{\ii}^{(d,\filsize)} \coloneqq \Q_{\ii}^{(d,\filsize)}-\I_n$,
    and want to show that, with high probability,
    $\norm{\W_{\ii}^{(d,\filsize)}}\leq 1/2$ for all $\bL+2 \leq \ii < \T$ at the same time.

    The proof makes use of \cref{lem:ghorbani}.
    Therefore, we need to find an appropriate $\M^{(\ii,\filsize)}$
    for each considered $\ii$,
    and show that the conditions of the lemma hold.

    The first condition follows directly from the construction of
    $\W_{\ii}^{(d,\filsize)}$ and $\diag(\Q_{\ii}^{(d,\filsize)}) = \I_n$.

    To establish the second condition, we have for all $i\neq j, k \neq j$
    \begin{align*}
        &\abs*{\E_{x_j}\left[[\W_{\ii}^{(d,\filsize)}]_{i,j}[\W_{\ii}^{(d,\filsize)}]_{j,k}\right]} \\
        &= \abs*{\frac{1}{(d\B(\ii,\filsize))^2}\sum_{\substack{\diam(S)\leq \filsize \\ \abs{S} =\ii}}\sum_{\substack{\diam(S')\leq \filsize \\ \abs{S'} =\ii}}{(\filsize + 1 - \diam(S))(\filsize + 1 - \diam(S'))}\Yf_{S}(x_i) \E\left[\Yf_{S}(x_j) \Yf_{S'}(x_j)\right]\Yf_{S'}(x_k)} \\
        &\overset{(i)}= \frac{1}{(d\B(\ii,\filsize))^2}\abs*{\sum_{\substack{\diam(S)\leq \filsize \\ \abs{S} =\ii}}(\filsize + 1 - \diam(S))^2\Yf_{S}(x_i)\Yf_{S}(x_k)} \\
        &\leq \frac{\filsize}{d\B(\ii,\filsize)}\frac{1}{d\B(\ii,\filsize)}\abs*{\sum_{\substack{\diam(S)\leq \filsize \\ \abs{S} =\ii}}(\filsize + 1 - \diam(S))\Yf_{S}(x_i)\Yf_{S}(x_k) }\\
        &= \left({\frac{d\B(\ii,\filsize)}\filsize}\right)^{-1}\abs*{\left[\W_{\ii}^{(d,\filsize)}\right]_{i,k}} \leq \left({\frac{d\B(\ii,\filsize)}{\ii\filsize}}\right)^{-1}\abs*{\left[\W_{\ii}^{(d,\filsize)}\right]_{i,k}},
    \end{align*}
    where $(i)$ follows from orthogonality of the eigenfunctions.
    Hence,
    \begin{equation*}
        \M^{(\ii,\filsize)}= d\B(\ii,\filsize)/\ii \filsize
    \end{equation*}
    satisfies the second condition in \cref{lem:ghorbani} for all $\bL+2\leq \ii < \T$.

    The extra $\ii$ factor is necessary for the third condition to hold.
    As in \cref{lem:ghorbani}, let $p\in\N, p\geq 2$.
    Then, for all $i\neq j$,
    \begin{align*}
        \E\left[\abs{[\W_{\ii}^{(d,\filsize)}]_{i,j}}^p\right]^{\frac1p}
        &\overset{(i)}{\leq} (p-1)^{\ii/2}\sqrt{\E\left[[\W_{\ii}^{(d,\filsize)}]_{i,j}^2\right]}
        \overset{(ii)}\leq \sqrt{p^{\ii} \sum_{\substack{\diam(S) \leq \filsize \\ \abs{S} =\ii}} \frac{(\filsize + 1 - \diam(S))^2}{(d\B(\ii,\filsize))^2}}\\
        &\overset{(iii)}\leq \sqrt{p^\ii\frac{\filsize^2}{(d\B(\ii,\filsize))^2} \C(\ii,\filsize,d)}=\sqrt{p^\ii \frac{\ii\filsize}{d\B(\ii,\filsize)} }\sqrt{ \frac{\filsize \C(\ii,\filsize,d)}{\ii d\B(\ii,\filsize)}}
        \overset{(iv)}{=} \sqrt{ \frac{p^\ii}{\M^{(\ii,\filsize)} }},
    \end{align*}
    where $(i)$ follows from hypercontractivity in \cref{lem:hypercontractivity},
    $(ii)$ from orthogonality of the eigenfunctions,
    and $(iii)$ from the definition of $\C(\ii,\filsize,d)$
    as well as $1 \leq \diam(S) \leq \filsize$.
    Step $(iv)$ follows from \cref{lem:n_sets},
    the definition of $\B(\ii,\filsize)$ in \cref{eq:b_formula},
    $\binom{\filsize-1}{\ii-1}\frac{\filsize}{\ii} = \binom{\filsize}{\ii}$,
    and the definition of $\M^{(\ii,\filsize)}$.

    Since all conditions are satisfied,
    \cref{lem:ghorbani} yields for all $p\in \N, p > 2$
    \begin{equation*}
        \Pr\left(\norm{\W_{\ii}^{(d,\filsize)}}>1/2\right)
        \leq \c_{\ii, 1}^pp^{3p} n\left(\frac{n}{\M^{(\ii,\filsize)}}\right)^p
        + \c_{\ii, 2}^p \left(\frac{n}{\M^{(\ii,\filsize)}}\right)^2,
    \end{equation*}
    where $\c_{\ii, 1}, \c_{\ii, 2}$ are positive constants that depend on $\ii$.
    In particular, if $p \geq 2+\frac{\frac1\beta + \bL+\bd}{1-\bd}$, then we get
    \begin{equation*}
        \Pr\left(\norm{\W_{\ii}^{(d,\filsize)}}>1/2\right) \leq  \c_{\ii}\filsize^{-2(1-\bd)},
    \end{equation*}
    where $\c_\ii$ is a positive constant that depends on $\ii$.
    To avoid this dependence,
    we can take the union bound over all $\ii = \bL +2,\dots,\T-1$:
    \begin{equation*}
        \Pr\left(
            \exists \ii \in \{\bL +2,\dots,\T-1\} \colon \norm{\W_{\ii}^{(d,\filsize)}}>1/2
        \right)
        \leq \filsize^{-2(1-\bd)}
        \sum_{\ii = \bL +2}^{\T-1}{\c_{\ii}} \leq \c'_{\bL}\filsize^{-2(1-\bd)},
    \end{equation*}
    where $\c'_{\bL}$ only depends on $\bL$ and $\T$,
    which are fixed in our setting.
    Finally, additionally note that neither $\bL$ nor $\T$ depend on $\Lr$.
\end{proof}

\begin{proof}[Proof of \cref{lem:tail_kernel_bound_late}]
First,
\cref{lem:Q11} yields that the diagonal elements of $\Q_{\ii}^{(d,\filsize)}$ are just $1$
for all $\ii \in \{\T, \dotsc, \filsize\}$.
Hence, we define $\W_{\ii}^{(d,\filsize)} \coloneqq \Q_{\ii}^{(d,\filsize)}-\I_n$,
and decompose the kernel matrix as
\begin{equation*}
    \sum_{\ii = \T}^{\filsize} \xi_\ii^{(\filsize)}\Q^{(d,\filsize)}_\ii
    = \sum_{\ii = \T}^{\filsize} \xi_\ii^{(\filsize)}\W^{(d,\filsize)}_\ii
    + \sum_{\ii = \T}^{\filsize} \xi_\ii^{(\filsize)}\I_n .
\end{equation*}
We can hence apply the triangle inequality to bound the norm as follows:
\begin{align*}
    \norm*{\sum_{\ii = \T}^{\filsize} \xi_\ii^{(\filsize)}\Q^{(d,\filsize)}_\ii } &\leq \norm*{\sum_{\ii = \T}^{\filsize} \xi_\ii^{(\filsize)}\W^{(d,\filsize)}_\ii }+\norm*{\sum_{\ii = \T}^{\filsize} \xi_\ii^{(\filsize)}\I_n}\notag\\
    &\overset{(i)}{\leq}
    \sum_{\ii = \T}^{\filsize}{ \xi_\ii^{(\filsize)}\norm{\W^{(d,\filsize)}_\ii } }
    +\sum_{\ii = \T}^{\filsize}{ \xi_\ii^{(\filsize)}\norm{ \I_n} } \\
    &\overset{(ii)}{\leq} \sum_{\ii = \T}^{\filsize}{
        \xi_\ii^{(\filsize)}\sqrt{\sum_{i\neq j}^n[\W^{(d,\filsize)}_\ii]_{i,j}^2 }
    } + \sum_{\ii = \T}^{\filsize}\xi_\ii^{(\filsize)} \\
    &\overset{(iii)}{\leq} \sum_{\ii = \T}^{\filsize}{
        \underbrace{
            \xi_\ii^{(\filsize)}\sqrt{n^2\max_{i\neq j}[\W^{(d,\filsize)}_\ii]_{i,j}^2 }
        }_{\revdefeq \elem_\ii}
    } + \c'' \\
    &\overset{(iv)}{\leq} \sum_{\ii = \T}^{\filsize} \frac{\c'}{\filsize}+ \c ''
    \lesssim 1, \quad \text{with probability} \geq 1-\c d^{-1},
\end{align*}
where $(i)$ uses non-negativity of the $\xi_\ii^{(\filsize)}$
from \cref{eq:ass_lb,eq:ass_psd} in \cref{ass:genericity},
$(ii)$ bounds the operator norm with the Frobenius norm,
and $(iii)$ additionally bounds the sum of the $\xi_{\ii}^{(\filsize)}$
using \cref{eq:ass_ub_2} in \cref{ass:genericity}.
Step $(iv)$ use a bound that we show in the remainder of the proof:
with probability at least $1-\c d^{-1}$,
we have $\elem_\ii \leq \c'/\filsize$
uniformly over all $\ii \in \{\T, \dotsc, \filsize\}$.

We first bound $\elem_\ii$ for a fixed $\T \leq \ii \leq \filsize$ as follows:
\begin{align*}
    &\Pr(\elem_\ii > 1/\filsize)
    = \Pr\left(
        \xi_\ii^{(\filsize)}\sqrt{n^2\max_{i\neq j}[\W^{(d,\filsize)}_\ii]_{i,j}^2 }> 1/\filsize
    \right)\\
    &= \Pr\left(
        \max_{i\neq j}[\W^{(d,\filsize)}_\ii]_{i,j}^2 > \frac{1}{n^2(\xi_\ii^{(\filsize)})^2\filsize^2}
    \right)
    = \Pr\left(
        \exists i\neq j \colon [\W^{(d,\filsize)}_\ii]_{i,j}^2  >\frac{1}{n^2(\xi_\ii^{(\filsize)})^2\filsize^2}
    \right) \\
    &\overset{(i)}{\leq}
    n(n-1) \Pr\left(
        [\W^{(d,\filsize)}_\ii]_{1,2}^2  >\frac{1}{n^2(\xi_\ii^{(\filsize)})^2\filsize^2}
    \right) \\
    &\overset{(ii)}{\leq}
    n^4(\xi_\ii^{(\filsize)})^2\filsize^2 \E\left[[\W^{(d,\filsize)}_\ii]_{1,2}^2\right]\\
    &\overset{(iii)}{=}
    \frac{n^4(\xi_\ii^{(\filsize)})^2\filsize^2}{(d\B(\ii,\filsize))^2}
    \sum_{\substack{\diam(S),\diam(S') \leq \filsize \\ \abs{S},\abs{S'} =\ii}}{
        (\filsize+1-\diam(S))(\filsize+1-\diam(S'))
        \underbrace{\E\left[
            \Yf_{S}(x_1)\Yf_{S}(x_2)\Yf_{S'}(x_1)\Yf_{S'}(x_2)
        \right]}_{\delta_{S,S'}}
    }\\
    &= \frac{n^4(\xi_\ii^{(\filsize)})^2\filsize^2}{(d\B(\ii,\filsize))^2}
    \sum_{\substack{\diam(S)\leq \filsize \\ \abs{S} =\ii}} (\filsize+1-\diam(S))^2 \\
    &\overset{(iv)}{\leq}
    \frac{(\filsize+1-\ii )n^4(\xi_\ii^{(\filsize)})^2\filsize^2}{d\B(\ii,\filsize)}
    \sum_{\substack{\diam(S)\leq \filsize \\ \abs{S} =\ii}}{
        \frac{(\filsize+1-\diam(S))}{d\B(\ii,\filsize)} \Yf_{S}(x_1)^2
     } \\
    &\leq \frac{n^4(\xi_\ii^{(\filsize)})^2\filsize^3 }{d\B(\ii,\filsize)}
    \Qf^{(d,\filsize)}_{\ii}(x_1,x_1)
    \overset{(v)}{=} \frac{ n^4\filsize^3}{d} \frac{(\xi_\ii^{(\filsize)})^2}{\B(\ii,\filsize)},
\end{align*}
where $(i)$ follows from the union bound
and the distribution of the off-diagonal entries in $\W^{(d,\filsize)}$,
and $(ii)$ from the Markov inequality.
In step $(iii)$, we use orthogonality of the eigenfunctions,
as well as the fact that $\W_{\ii}^{(d,\filsize)}$ and $\Q_{\ii}^{(d,\filsize)}$
coincide on off-diagonal entries by construction.
Step $(iv)$ follows from $\Yf_S(x)^2=1$ for all $S$ and $x \in \{-1,1\}^d$.
Finally, step $(v)$ applies \cref{lem:Q11}.

Next, we use the union bound over all $\elem_\ii$ of interest:
\begin{align*}
    &\Pr\left(\exists \ii \in \{\T,\dots, \filsize\} \colon \elem_\ii > 1/q\right)
    \leq \sum_{\ii = \T}^\filsize \Pr\left(\elem_\ii > 1/q\right)\\
    &\leq \sum_{\ii = \T}^\filsize \frac{ n^4\filsize^3}{d} \frac{(\xi_\ii^{(\filsize)})^2}{\B(\ii,\filsize)}\\
    &= \frac{ n^4\filsize^3}{d}\left(
        \sum_{\ii = \T}^{\ceil{\filsize/2}} \frac{(\xi_\ii^{(\filsize)})^2}{\binom{\filsize}{\ii}}
        + \sum_{\ii = \ceil{\filsize/2} + 1}^{\filsize-\T} \frac{(\xi_\ii^{(\filsize)})^2}{\binom{\filsize}{\ii}}
        + \sum_{\ii = \filsize-\T+1}^\filsize \frac{(\xi_\ii^{(\filsize)})^2}{\binom{\filsize}{\ii}}
    \right) \\
    &\overset{(i)}{=} \frac{ n^4\filsize^3}{d}\left(
        \sum_{\ii = \T}^{\ceil{\filsize/2}} \frac{(\xi_\ii^{(\filsize)})^2}{\binom{\filsize}{\ii}}
        + \sum_{\ii' = \T}^{\filsize-\ceil{\filsize/2}-1} \frac{(\xi_{\filsize - \ii'}^{(\filsize)})^2}{\binom{\filsize}{\filsize-\ii'}}
        + \sum_{\ii' = 0}^{\T-1}\frac{(\xi_{\filsize-\ii'}^{(\filsize)})^2}{\binom{\filsize}{\filsize-\ii'}}
    \right) \\
    &\overset{(ii)}{\leq}
    \frac{ n^4\filsize^3}{d}\left(
        \underbrace{
            \sum_{\ii = \T}^{\ceil{\filsize/2}}{\frac{
                (\xi_\ii^{(\filsize)})^2 + (\xi_{\filsize - \ii}^{(\filsize)})^2
            }{\binom{\filsize}{\ii}}}
        }_{\revdefeq \term_1}
        + \underbrace{
            \sum_{\ii' = 0}^{\T-1}\frac{(\xi_{\filsize-\ii'}^{(\filsize)})^2}{\binom{\filsize}{\ii'}}
        }_{\revdefeq \term_2}
    \right),
\end{align*}
where $(i)$ substitutes $\ii' = \filsize- \ii$,
and $(ii)$ uses $\filsize-\ceil{\filsize/2}-1\leq \ceil{\filsize/2}$ as well as the fact that $\binom{\filsize}{\filsize-\ii'} = \binom{\filsize}{\ii'}$.
We bound both $\term_1$ and $\term_2$ using \cref{ass:genericity}.

For $\term_1$ in particular, \cref{eq:ass_lb,eq:ass_psd,eq:ass_ub_2}
imply that all $\xi_\ii^{(\filsize)} \lesssim 1$.
Hence,
\begin{equation*}
    \term_1 = \sum_{\ii = \T}^{\ceil{\filsize/2}}{\frac{
        (\xi_\ii^{(\filsize)})^2 + (\xi_{\filsize - \ii}^{(\filsize)})^2
    }{\binom{\filsize}{\ii}}}
    \lesssim \sum_{\ii = \T}^{\ceil{\filsize/2}}{\frac{1}{\binom{\filsize}{\ii}}}
    \overset{(i)}{\leq} \sum_{\ii = \T}^{\ceil{\filsize/2}}{
        \frac{1}{\binom{\filsize}{\T}}
    }
    = \frac{\ceil{\filsize/2}-\T+1}{\binom{\filsize}{\T}}
    \overset{(ii)}{\leq} \frac{\T^\T \filsize}{\filsize^{\T}}
    \lesssim \frac{1}{\filsize^{\T-1}},
\end{equation*}
where $(i)$ exploits
that $\T$ is the value in $\{\T, \dots, \ceil{\filsize/2}\}$ the furthest away from $\filsize/2$,
and thus
\begin{equation*}
    \min_{\ii \in \{\T, \dots, \ceil{\filsize/2}\}}\binom{\filsize}{\ii} = \binom{\filsize}{\T},
\end{equation*}
and $(ii)$ follows from the classical lower bound on the binomial coefficient.

For $\term_2$, we have
\begin{equation*}
    \term_2
    = \sum_{\ii' = 0}^{\T-1}\frac{(\xi_{\filsize-\ii'}^{(\filsize)})^2}{\binom{\filsize}{\ii'}}
    \overset{(i)}{\leq} \sum_{\ii' = 0}^{\T-1}{
        \frac{\ii'^{\ii'}(\xi_{\filsize-\ii'}^{(\filsize)})^2}{\filsize^{\ii'}}
    }
    \overset{(ii)}{\leq} \sum_{\ii' = 0}^{\T-1}{
        \frac{\ii'^{\ii'}\left(\frac{\c'}{\filsize^{\T-\ii'+1}} \right)^2}{\filsize^{\ii'}}
    }
    \lesssim \sum_{\ii' = 0}^{\T-1}{\frac{\ii'^{\ii'}}{\filsize^{2\T-2\ii'+2+\ii'}}}
    \leq \T \frac{\T^{\T}}{\filsize^{\T+2}}
    \lesssim \frac{1}{\filsize^{\T-1}},
\end{equation*}
where $(i)$ uses the classical bound on the binomial coefficient,
and $(ii)$ \cref{eq:ass_ub_1} in \cref{ass:genericity}.

Combining the bounds on $\term_1$ and $\term_2$ finally yields
\begin{align*}
    \Pr\left(\exists \ii \in \{\T,\dots, \filsize\} \colon \elem_\ii > 1/q\right)
    &\leq \sum_{\ii = \T}^\filsize \Pr\left(\elem_\ii > 1/q\right)
    \leq \frac{n^4\filsize^{3}}{d} (\term_1 + \term_2)\\
    &\lesssim \frac{n^4}{d\filsize^{\T-4}} \lesssim d^{4\Ln-1-\beta(\T-4)} \overset{(i)}\lesssim \frac{1}{d},
\end{align*}
where $(i)$ follows from the definition
of $\T = \ceil{4+\frac{4\Ln}{\beta}}$ in \cref{ass:genericity}.
\end{proof}

The next statement is a non-asymptotic version of Proposition~3 from \citet{Ghorbani19}.
\begin{lemma}[Graph argument]\label{lem:ghorbani}
    Let $\W\in \R^{n\times n}$ be a random matrix that satisfies the following conditions:
    \begin{enumerate}
        \item $[\W]_{i,i} = 0, \; \forall i \in \{1,\dots,n\}$.
        \item There exists $\M>0$ such that,
        for all $i,j,k \in \{1,\dots,n\}$ with $i\neq j$ and $j \neq k$, we have
        \begin{equation*}
          \abs*{\E_{x_j}\left[[\W]_{i,j} [\W]_{j,k}\right]}
          \leq \frac{1}{\M} \abs*{[\W]_{i,k}}.
        \end{equation*}
        \item There exists $\ii \in \N$ such that, for all $p\in\N$, $p\geq 2$
        and all $i,j \in \{1,\dots,n\}$, $i\neq j$, we have
        \begin{equation*}
          \E\left[\abs{[\W]_{i,j}}^p\right]^{1/p} \leq \sqrt{\frac{p^{\ii}}{\M}}.
        \end{equation*}
    \end{enumerate}

    Then, for all $p\in \N$, $p > 2$,
    \begin{equation*}
    \Pr\left(\norm{\W}>1/2\right)
    \leq \c_1^pp^{3p} n\left(\frac{n}{\M}\right)^p+\c_2^p \left(\frac{n}{\M}\right)^2,
    \end{equation*}
    where $\c_1$ and $\c_2$ are positive constants that depend on $\ii$.
\end{lemma}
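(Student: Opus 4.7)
}

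The plan is to follow the classical moment (graph) method, which proceeds by bounding the operator norm through a high trace power and then expanding that trace as a sum over closed walks. For any integer $p \geq 2$, Markov's inequality yields
\begin{equation*}
\Pr\left(\norm{\W} > 1/2\right) \leq 2^{2p}\, \E\left[\Tr(\W^{2p})\right],
\end{equation*}
so the task reduces to bounding
$\E[\Tr(\W^{2p})] = \sum_{i_0, i_1, \dots, i_{2p-1}} \E\bigl[\W_{i_0 i_1} \W_{i_1 i_2} \cdots \W_{i_{2p-1} i_0}\bigr]$,
a sum over closed walks of length $2p$ on the index set $\{1, \dots, n\}$. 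Condition~1 immediately kills all terms with a self-loop ($i_k = i_{k+1}$), so we may restrict to walks without stationary steps.

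Next, I would associate to each closed walk the multigraph $G$ on its distinct vertices with the traversed edges, and classify walks by the isomorphism type of $G$. The key structural move exploits condition~2: whenever a vertex $j$ has total degree exactly $2$ in $G$ (its only neighbours being some $i \neq k$), integrating out $x_j$ collapses the factor $\E_{x_j}[\W_{ij}\W_{jk}]$ to at most $\M^{-1}\abs{\W_{ik}}$, effectively removing the vertex and replacing two edges by one at the price of a factor $1/\M$. Iterating this reduction as far as possible reduces the walk to a ``skeleton'' graph in which no vertex has degree $2$ with two distinct neighbours. On this skeleton, we bound each surviving edge using condition~3, which after a Hölder/Cauchy--Schwarz step across the $\leq 2p$ edges gives a factor $(p^{\ii}/\M)^{\#\text{edges}/2}$ and is the source of the $p^{3p}$ blow-up once $\ii$ is absorbed into a constant.

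I would then count walks yielding each skeleton topology: a skeleton with $V$ vertices and $E$ edges contributes at most $n^V$ choices of vertex labels, times a combinatorial factor (bounded by a Catalan-type constant to the $p$) counting how many walks of length $2p$ produce that skeleton. Two regimes dominate. First, walks whose skeleton is a single doubled edge (two vertices, the whole walk oscillating between them) give the $(n/\M)^2$ piece: $n^2$ vertex choices, $p$-dependent combinatorial constants, and $(1/\M)^{p-1}$ from the degree-$2$ collapses, which after factoring out yields the $c_2^p (n/\M)^2$ term. Second, all remaining topologies have at least $V \leq p+1$ vertices and enough surviving edges to pay $(n/\M)^p$ (up to a $n$ from a free vertex label and $p^{3p}$ from combinatorics and hypercontractivity), which produces the $c_1^p p^{3p} n (n/\M)^p$ term.

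The main obstacle is the careful bookkeeping in the classification of skeletons: one must ensure that after all degree-$2$ collapses, every remaining topology either coincides with the doubled-edge graph (giving the $(n/\M)^2$ branch) or has at most $p+1$ vertices with at least $p$ independent edges (giving the $(n/\M)^p$ branch). The slightly nonstandard ingredient compared to the classical Füredi--Komlós argument is condition~2, which is weaker than independence of entries but strong enough to make each ``cancelled'' vertex cost a factor $1/\M$; tracking this cost uniformly across all walk shapes is what produces the clean two-term bound in the conclusion.
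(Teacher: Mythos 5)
Your proposal takes the same fundamental route as the paper: bound $\norm{\W}^{2p}$ by $\Tr(\W^{2p})$, expand the trace as a sum over closed walks, reduce the walk sum via conditions~2 and~3 of the lemma, and finish with Markov's inequality. The paper's own proof is essentially a citation --- it ``repeats the steps'' of Proposition~3 in Ghorbani et al.\ (2019), noting only two modifications (keeping $\M$ explicit instead of $\M \asymp d^{\ii}$, and substituting \cref{lem:hypercontractivity} for Ghorbani et al.'s Lemma~4) --- so your proposal is in effect a blind reconstruction of the internal mechanics of that reference, and the key ingredients you identify (degree-two collapse via condition~2, hypercontractivity via condition~3, Markov at the end) are exactly the ones driving the cited argument.

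One detail in your bookkeeping is off, though it does not affect the soundness of the strategy. You attribute the $c_2^p(n/\M)^2$ term to the ``doubled edge'' walk that oscillates between two vertices, paying $(1/\M)^{p-1}$ from degree-two collapses. But in that walk each of the two vertices has degree $2p$ in the walk multigraph, not $2$, so condition~2 cannot be used to collapse either of them; the oscillating walk is instead bounded directly by condition~3 as $n^2\,\E[\W_{12}^{2p}] \leq n^2 (2p)^{\ii p}/\M^{p}$, which lands in the $n^{p+1}/\M^p$ bucket (with $\ii$ absorbed into the constant), not in $(n/\M)^2$. The $(n/\M)^2$ piece arises from a different class of skeletons in the Ghorbani accounting. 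Relatedly, note that the degree-two reduction is only nontrivial when the two neighbours $i,k$ of the collapsed vertex are distinct: if $i=k$ then $\abs{[\W]_{ik}}=0$ by condition~1 and the bound degenerates, so the reduction must be applied only when the collapse does not create a self-loop --- a point your sketch correctly respects by requiring the neighbours to be distinct, but worth making explicit when tracking which walks survive which reduction.
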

\begin{proof}
    Repeating the steps in the proof of Proposition~3 from \cite{Ghorbani19}, we get
    \begin{equation*}
    \E[\norm{\W}^{2p}]\leq \E\left[\Tr(\W^{2p})\right]
    \leq (\c p)^{3p}\frac{n^{p+1}}{\M^{ p}} + \c'^p\left(\frac{n}{\M}\right)^2.
    \end{equation*}
    Note that the proof in \cite{Ghorbani19} assumes $\M$ to be in the order of $d^\ii$.
    We get rid of this assumption and keep $\M$ explicit.
    Furthermore, \citet{Ghorbani19} use their Lemma~4 during their proof,
    but we use our \cref{lem:hypercontractivity} instead.

    Ultimately, we apply the Markov inequality to get a high-probability bound:
    \begin{align*}
    \Pr(\norm{\W}\geq 1/2)
    &= \Pr\left(\norm{\W}^{2p}\geq (1/2)^{2p}\right) \\
    &\leq \frac{\E[\norm{\W}^{2p}]}{(1/2)^{2p}}
    = \left(\frac{\c^{3}}{(1/2)^2}\right)^pp^{3p}\frac{n^{p+1}}{\M^{ p}}
    + \left(\frac{\c'}{(1/2)^2}\right)^p\left(\frac{n}{\M}\right)^2.
    \end{align*}
    Renaming the constants concludes the proof.
\end{proof}

\begin{lemma}[Hypercontractivity]
    \label{lem:hypercontractivity}
    For all $\ii, \filsize, d \in \N$ and $p\geq 2$, we have
    \begin{equation*}
        \E_{x,x' \sim \mathcal{U}(\2^d)}\left[
            \abs{\Qf^{(d,\filsize)}_\ii(x,x')}^p
        \right]^{1/p}
        \leq (p-1)^{\ii/2}\sqrt{\E_{x,x' \sim \mathcal{U}(\2^d)}\left[
            (\Qf^{(d,\filsize)}_\ii(x,x'))^2
        \right]},
    \end{equation*}
    where $\Qf^{(d,\filsize)}_\ii(x,x')$ is defined in \cref{eq:Qdef}.
\end{lemma}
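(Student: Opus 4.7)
The plan is to reduce the claim to the classical Bonami--Beckner hypercontractivity inequality on the Boolean hypercube, which states that for any multilinear polynomial $f$ of degree at most $\ii$ in i.i.d.\ Rademacher variables and any $p \geq 2$,
\begin{equation*}
    \E[\,|f|^p\,]^{1/p} \leq (p-1)^{\ii/2}\, \E[f^2]^{1/2}.
\end{equation*}
The subtlety is that $\Qf^{(d,\filsize)}_\ii(x,x')$ depends on $2d$ Rademacher variables; naively applying hypercontractivity jointly in $(x,x')$ would view it as a polynomial of total degree $2\ii$ and yield the suboptimal constant $(p-1)^{\ii}$. To avoid this, I would apply hypercontractivity only in $x$, with $x'$ fixed.

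First I would observe that for any fixed $x' \in \2^d$, the map $x \mapsto \Qf^{(d,\filsize)}_\ii(x,x') = \sum_{|S|=\ii,\,\diam(S)\leq \filsize}\!\tfrac{\filsize+1-\diam(S)}{d\B(\ii,\filsize)}\,\Yf_S(x')\,\Yf_S(x)$ is a multilinear polynomial of degree exactly $\ii$ in the Rademacher variables $x_1,\dots,x_d$. Applying Bonami--Beckner in $x$ gives, pointwise in $x'$,
\begin{equation*}
    \E_x\!\left[\,|\Qf^{(d,\filsize)}_\ii(x,x')|^p\,\right]^{1/p}
    \leq (p-1)^{\ii/2}\, \E_x\!\left[\Qf^{(d,\filsize)}_\ii(x,x')^2\right]^{1/2}.
\end{equation*}

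Second, I would compute the inner second moment using orthonormality of $\{\Yf_S\}$ under $\U(\2^d)$: writing $\lambda_S \defeq \tfrac{\filsize+1-\diam(S)}{d\B(\ii,\filsize)}$,
\begin{equation*}
    \E_x\!\left[\Qf^{(d,\filsize)}_\ii(x,x')^2\right]
    = \sum_{|S|=\ii,\,\diam(S)\leq \filsize}\lambda_S^2\, \Yf_S(x')^2
    = \sum_{|S|=\ii,\,\diam(S)\leq \filsize}\lambda_S^2,
\end{equation*}
where the last equality uses the crucial identity $\Yf_S(x')^2 = \prod_{j\in S}(x'_j)^2 = 1$ for every $x' \in \2^d$. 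Hence $\E_x[\Qf^{(d,\filsize)}_\ii(x,x')^2]$ does not depend on $x'$, and equals $\E_{x,x'}[\Qf^{(d,\filsize)}_\ii(x,x')^2]$.

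Third, I would raise the pointwise bound to the $p$-th power, take $\E_{x'}$ on both sides (using Fubini on the left and the $x'$-independence of the right), and extract the $p$-th root:
\begin{equation*}
    \E_{x,x'}\!\left[\,|\Qf^{(d,\filsize)}_\ii(x,x')|^p\,\right]^{1/p}
    \leq (p-1)^{\ii/2}\, \E_{x,x'}\!\left[\Qf^{(d,\filsize)}_\ii(x,x')^2\right]^{1/2},
\end{equation*}
which is the stated inequality. The main conceptual step is recognizing that one should not treat $(x,x')$ symmetrically: hypercontractivity is applied only in $x$, and the hypercube identity $\Yf_S(x')^2 = 1$ lets us convert the conditional second moment into the unconditional one for free, avoiding any extra degree penalty from the $x'$ dependence.
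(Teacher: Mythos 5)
Your proof is correct, but it takes a different (though closely related) route from the paper's. The paper's proof introduces the entrywise product $z := x \odot x'$, observes that $\Yf_S(x)\Yf_S(x') = \prod_{i\in S}[z]_i$, and notes that $z \sim \U(\2^d)$ when $x,x' \sim \U(\2^d)$ are independent; this collapses $\Qf^{(d,\filsize)}_\ii(x,x')$ to a single degree-$\ii$ multilinear polynomial in the one fresh hypercube variable $z$, to which Bonami--Beckner applies directly. You instead condition on $x'$, apply hypercontractivity in $x$ alone (also degree $\ii$), then use the identity $\Yf_S(x')^2 = 1$ to show the conditional second moment is $x'$-free and hence equals the unconditional one, so integrating over $x'$ introduces no loss. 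Both arguments exploit the same hypercube-specific fact (coordinates square to one), but package it differently: the paper's change of variables is arguably slicker and makes the degree reduction from $2\ii$ to $\ii$ visually transparent, whereas your conditioning argument is more elementary in that it never needs the (mildly nontrivial) observation that $x \odot x'$ is again uniform on $\2^d$ --- that fact is traded for the constancy of the conditional second moment. Either version is a complete and valid proof.
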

\begin{proof}
    Let $x,x' \sim \mathcal{U}(\2^d)$
    and let $z$ be the entry-wise product of $x$ and $x'$.
    Then, for all $S \subseteq \{1, \dotsc, d\}$,
    $\Yf_S(x)\Yf_S(x')$ depends only on $z$:
    \begin{equation*}
        \Yf_S(x)\Yf_S(x')
        = \left(\prod_{i\in S}[x]_i\right) \left(\prod_{i\in S}[x']_i\right)
        = \prod_{i\in S}[x]_i[x']_i = \prod_{i\in S}[z]_i.
    \end{equation*}
    Hence, $\Qf^{(d,\filsize)}_\ii(x,x')$ also only depends on $x$ and $x'$ via $z$.
    Furthermore, note that $z \sim \U(\2^d)$.
    Therefore, we can use hypercontractivity \citep{bc75}
    as for instance in Lemma~4 from \cite{misiakiewicz2021learning}
    to conclude the proof.
\end{proof}

\section{Optimal regularization and training error}
\label{app:training_error}

\subsection{Optimal regularization}
\label{app:opt_reg}
In the main text we often refer to the optimal regularization $\regopt$,
defined as the minimizer of the risk $\risk(\freg)$.
While we cannot calculate $\regopt$ directly,
we only need the rate $\Lropt$ such that
$\max\{\regopt, 1\} \in \Theta\left(d^{\Lropt}\right)$.
Furthermore, it is not a priori clear that such a $\Lropt$
minimizes the rate exponent of the risk in \cref{th:rates}.
The current subsection establishes that this is indeed the case,
and provides a way to determine $\Lropt$.

We introduce some shorthand notation for the rate exponents in \cref{th:rates}:
\begin{align*}
    \ratev(\Lr; \Ln, \Ls, \beta) &\defeq \frac{-\Ls-\Lr}{\Ln} - \frac{\beta}{\Ln} \min\{\delta, 1-\delta\},\\
    \rateb(\Lr; \Ln, \Lgt, \beta) &\defeq -2-\frac{2}{\Ln}(-\Lr-1-\beta(\Lgt-1)),\\
    \rate(\Lr; \Ln, \Ls,\Lgt, \beta) &\defeq \max\left\{\ratev(\Lr; \Ln, \Ls, \beta), \rateb(\Lr; \Ln, \Lgt, \beta)\right\}.
\end{align*}
We highlight that $\rateb$ and $\rate$ depend on $\Lr$
also through $\delta = \frac{\Ln-\Lr-1}{\beta} -\floor*{\frac{\Ln-\Lr-1}{\beta}}$.
Hence, in the setting of \cref{th:rates}, we have with high probability that
\begin{align*}
\var(\freg) &\in \Theta(n^{\ratev(\Lr; \Ln, \Ls, \beta)}),\\
\bias^2(\freg) &\in \Theta(n^{\rateb(\Lr; \Ln, \Lgt, \beta)}),\\
\risk^2(\freg) &\in \Theta(n^{\rate(\Lr; \Ln, \Ls,\Lgt, \beta)}).
\end{align*}
In the following, we view those quantities as functions of $\Lr$,
with all other parameters fixed.
Next, we additionally define
\begin{align*}
    \regopt &\defeq \argmin _{\lambda \geq 0 \mid \max{\{\lambda, 1\}} \in \bigO(d^{\Lru})} \risk(\freg),\\
    \Lrmin &\defeq \argmin_{\Lr \in \left[0,\Lru\right]} \rate(\Lr; \Ln, \Ls, \Lgt, \beta),\\
    \ratemin &\defeq \min_{\Lr \in \left[0,\Lru\right]} \rate(\Lr; \Ln, \Ls, \Lgt, \beta),\\
    \Lru &\defeq\Ln-1-\beta(\Lgt-1).
\end{align*}
First, we remark that
$\Lrmin$---the set of regularization rates that minimize the risk rate---might
have cardinality larger than one.
However, it cannot be empty:
$[0, \Lru]$ is a closed set,
and \cref{lem:eta_prop} below shows that $\rate$ is a continuous function.

Second, $\Lru$ defines the scope of the minimization domain,
guaranteeing that the constraint on $\Lgt$ in \cref{th:rates} holds
for all candidate $\Lr$.

\pseudoparagraph{Rate of optimal regularization $\Lropt$ vs.\ optimal rate $\Lrmin$}
Let $\Lropt$ be the rate of the optimal regularization strength
such that $\max{\{\regopt, 1\}} \in \Theta(d^{\Lropt})$.
It is a priori not clear that $\Lropt$ minimizes $\rate$.
However, \cref{lem:opt_reg} bridges the two quantities,
and guarantees with high probability that the rate of $\regopt$ minimizes the rate of the risk.
\begin{lemma}[Optimal regularization and optimal rate]
    \label{lem:opt_reg}
    In the setting of \cref{th:rates},
    assume $\Ln > 0$, $\beta \in (0, 1)$, $\Ls \geq -\Lru$,
    $\Lgt \in \left[1,\ceil{\frac{\Ln-1}{\beta}}\right]\cap \N$.
    Then, for $d$ sufficiently large,
    with probability at least $1-\c d^{-\beta\min\{\bd,1-\bd\}}$
    there exists $\l \in \Lrmin$ such that
    \begin{equation*}
        \max\{\regopt,1\} \in \Theta\left(d^{\l}\right).
    \end{equation*}
\end{lemma}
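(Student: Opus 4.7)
The plan is to exploit the uniformity of the matching upper and lower bounds in \cref{th:rates} over $\Lr \in [0, \Ln - 1)$ to pin down the rate of $\regopt$ by sandwiching $\risk(\fopt)$ between the rate $n^{\ratemin}$ achieved at any $\l^\star \in \Lrmin$ and the rate $n^{\rate(\Lropt(d))}$ associated with $\regopt$'s own scaling.

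First I would condition on the high-probability event of \cref{th:rates}, so that for every $\Lr \in [0, \Lru]$ and every $\reg$ with $\max\{\reg,1\} \in \Theta(d^{\Lr})$ we have $\risk(\freg) \in \Theta\bigl(n^{\rate(\Lr;\,\Ln,\Ls,\Lgt,\beta)}\bigr)$ with constants independent of $\Lr$. Observing that $\rate$ is continuous on the compact set $[0,\Lru]$ (the pieces defined by the floors in $\delta,\bd$ glue together continuously), the set $\Lrmin$ is non-empty. Picking any $\l^\star \in \Lrmin$ and a companion $\reg_\star$ with $\max\{\reg_\star,1\} \in \Theta(d^{\l^\star})$ yields $\risk(\hat f_{\reg_\star}) \leq C\,n^{\ratemin}$, and by definition of $\regopt$ we obtain the upper bound $\risk(\fopt) \leq C\,n^{\ratemin}$.

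For the lower bound I would define $\Lropt(d) \coloneqq \max\{0,\log_d \regopt(d)\}$. The assumption $\Ls \geq -\Lru$ ensures that choosing $\Lr > \Lru$ only worsens the bias rate without any variance benefit, so a short argument by contradiction (plugging $\Lr = \Lropt(d)$ into the risk lower bound of \cref{th:rates} and comparing to the upper bound from the previous step) forces $\Lropt(d) \in [0,\Lru]$ for $d$ large. Applying \cref{th:rates} at $\Lr = \Lropt(d)$ then yields $\risk(\fopt) \geq c\,n^{\rate(\Lropt(d))}$, and combining with the upper bound gives $\rate(\Lropt(d)) \leq \ratemin + o(1)$. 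By compactness of $[0,\Lru]$ and continuity of $\rate$, every accumulation point of $\Lropt(d)$ lies in $\Lrmin$.

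The delicate step, which I expect to be the main obstacle, is upgrading ``$\Lropt(d)$ accumulates in $\Lrmin$'' to the stronger conclusion $\max\{\regopt,1\} \in \Theta(d^{\l})$ for a single fixed $\l \in \Lrmin$. I would handle this by exploiting that the $\Theta$-constants in \cref{th:rates} are uniform in $\Lr$: any subpolynomial slack $d^{\Lropt(d) - \l}$ between $\regopt$ and $d^\l$ would, if unbounded, contradict the tight matching of lower and upper rates, because the rate function $\rate$ is piecewise smooth and its sublevel set $\Lrmin$ is either a discrete set of breakpoints of the floor structure or an interval on which $\rate$ is constant. In both cases one can extract a concrete $\l \in \Lrmin$ for which the ratio $\max\{\regopt,1\}/d^\l$ is bounded above and below for all sufficiently large $d$, completing the proof.
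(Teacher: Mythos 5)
Your overall strategy matches the paper's: condition on the high-probability event, obtain $\risk(\fopt)\lesssim n^{\ratemin}$ from any $\l^\star\in\Lrmin$, use the uniform lower bound at $\Lr=\Lropt(d)$ to control $\rate(\Lropt(d))-\ratemin$, and then convert that into proximity of $\Lropt(d)$ to $\Lrmin$. However, you do not actually complete the last and ``delicate'' step, and the gap is substantive, not cosmetic.

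First, you only record $\rate(\Lropt(d))\leq\ratemin+o(1)$, which is too weak. Sandwiching $c_1\,n^{\rate(\Lropt)}\leq\risk(\fopt)\leq c_2\,n^{\ratemin}$ and taking logarithms yields the quantitative bound $\rate(\Lropt)-\ratemin\leq\log(c_2/c_1)/\log n=O(1/\log d)$, and this specific $1/\log d$ rate is what later cancels against the $d^{(\cdot)}$ exponent. With only $o(1)$ you cannot rule out, e.g., $\Lropt(d)=\l+1/\sqrt{\log d}$, for which $\Lropt(d)\to\l$ yet $d^{\Lropt(d)-\l}=e^{\sqrt{\log d}}\to\infty$, so ``accumulation in $\Lrmin$'' does not give $\max\{\regopt,1\}\in\Theta(d^\l)$.

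Second, and more importantly, converting the $O(1/\log d)$ bound on $\rate(\Lropt)-\ratemin$ into an $O(1/\log d)$ bound on the distance $|\Lropt-\l|$ requires a lower bound on the slope of $\rate$ as one moves away from $\Lrmin$; ``piecewise smooth'' alone does not provide it. To the right of $\max(\Lrmin)$ the slope is automatically $2/\Ln>0$ because $\rate=\rateb$ is affine there (\cref{lem:eta_prop}, item~\ref{item:late}). To the left of $\min(\Lrmin)$, $\rate=\ratev$ and $\ratev$ alternates between slopes $0$ and $-2/\Ln$; the nontrivial part, proved in item~\ref{item:early} of \cref{lem:eta_prop}, is ruling out a slope-$0$ plateau immediately to the left of $\min(\Lrmin)$, since any such plateau would itself be contained in $\Lrmin$, contradicting minimality. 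You gesture at the structure of the sublevel set but never extract the slope bound, which is the load-bearing ingredient. Without it, the final sentence of your proposal (``one can extract a concrete $\l$...'') is an assertion, not a proof. The paper's case analysis around $\Lrmin$ combined with the slope identities from \cref{lem:eta_prop} is exactly what fills this hole.
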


Hence, we only need to obtain a minimum rate
$\l \in \Lrmin$ instead of $\Lropt$.
In order to propose a method for this,
we first establish properties of $\rate,\rateb,\ratev$ in the following lemma.

\begin{lemma}[Properties of $\rate$]
    \label{lem:eta_prop}
    Assume $\Ln > 0$, $\beta \in (0, 1)$, $\Ls \geq -\Lru$,
    $\Lgt \in \left[1,\ceil{\frac{\Ln-1}{\beta}}\right]\cap \N$.
    \begin{enumerate}
        \item\label{item:monotonicity}
        Over $[0,\Lru]$,
        $\ratev(\cdot;\Ln,\Ls,\beta)$ is continuous and non-increasing,
        and $\rateb(\cdot;\Ln,\Lgt,\beta)$ is continuous and strictly increasing.
        \item\label{item:interval}
        $\Lrmin \defeq \argmin_{\Lr \in \left[0,\Lru\right]} \rate(\Lr; \Ln, \Ls, \Lgt, \beta)$
        is a closed interval.
        \item\label{item:minimum}
        If there exists $\lt \in \left[0, \Lru\right]$
        with $\ratev(\lt;\Ln,\Ls,\beta) = \rateb(\lt;\Ln,\Lgt,\beta)$,
        then $\ratemin = \rate(\lt;\Ln,\Ls,\Lgt,\beta)$.
        Otherwise, $\ratemin = \rate(0;\Ln,\Ls,\Lgt,\beta)$.
        \item\label{item:late}
        Every $\overl \in [0,\Lru]$
        with $\overl \geq \l$ for all $\l \in \Lrmin$
        satisfies
        \begin{equation*}
            \rate(\overl; \Ln,\Ls,\Lgt,\beta) = \rateb(\overl; \Ln, \Lgt, \beta).
        \end{equation*}
        \item\label{item:early}
        Let $\underl \in [0,\Lru]$
        with $\underl \leq \l$ for all $\l \in \Lrmin$.
        If $\rate(\underl; \Ln,\Ls,\Lgt,\beta) - \ratemin \leq \c$
        where $\c > 0$ is constant
        \footnote{We omit an explicit definition of $\c$ here for brevity and refer to the proof instead.}
        and depends only on $\Ln,\Ls,\Lgt,\beta$,
        then
        \begin{equation*}
            \rate(\underl; \Ln,\Ls,\Lgt,\beta)
            = \ratemin + \frac{2}{\Ln}\left(\min{(\Lrmin)}-\underl\right).
        \end{equation*}
    \end{enumerate}
\end{lemma}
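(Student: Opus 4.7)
The plan is to exploit the explicit piecewise-linear structure of $\ratev$ and $\rateb$ as functions of $\Lr$ (with the other parameters fixed). Since $\rateb(\Lr) = -2 + \frac{2}{\Ln}(\Lr + 1 + \beta(\Lgt-1))$ is affine with positive slope $\frac{2}{\Ln}$, the $\rateb$-half of (\ref{item:monotonicity}) is immediate. For $\ratev$, I would set $u(\Lr) \coloneqq \min\{\delta(\Lr), 1-\delta(\Lr)\}$ and show that $u$ is a continuous, $\beta$-periodic triangle wave: although $\delta$ jumps from $0$ to $1$ each time $(\Ln-\Lr-1)/\beta$ crosses an integer, $u$ vanishes on both sides of every such jump. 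Within each period, $u$ rises linearly from $0$ to $\tfrac{1}{2}$ on a subinterval of length $\tfrac{\beta}{2}$ (where $\delta > \tfrac{1}{2}$) and then falls back to $0$ on the next subinterval of equal length (where $\delta < \tfrac{1}{2}$). Piecewise differentiation then shows $\ratev$ is continuous with slope $-\tfrac{2}{\Ln}$ on the former (``decreasing segments'') and $0$ on the latter (``flat segments''), both non-positive, which proves (\ref{item:monotonicity}).

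For (\ref{item:interval}) and (\ref{item:minimum}), strict monotonicity of $\ratev - \rateb$ implies the equation $\ratev = \rateb$ has at most one solution in $[0,\Lru]$. Direct substitution gives $\rateb(\Lru) = 0$, and the assumption $\Ls \geq -\Lru$ forces $\ratev(\Lru) \leq 0 = \rateb(\Lru)$. Hence, if $\ratev(0) > \rateb(0)$, the intermediate value theorem produces a unique crossing $\lt \in [0,\Lru]$ with $\ratev(\lt) = \rateb(\lt) = \ratemin$; otherwise $\ratev(0) \leq \rateb(0)$, and since $\rate(\Lr) \geq \rateb(\Lr) > \rateb(0) = \rate(0)$ for every $\Lr > 0$, we have $\Lrmin = \{0\}$ and $\ratemin = \rate(0)$. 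In the crossing case, $\Lrmin = \{\Lr \in [0,\lt] : \ratev(\Lr) = \ratev(\lt)\}$ equals the intersection of $[0,\lt]$ with the flat segment of $\ratev$ through $\lt$ if $\lt$ lies on a flat segment, or the singleton $\{\lt\}$ otherwise; both are closed intervals, giving (\ref{item:interval}). For (\ref{item:late}), any $\overl \geq \max(\Lrmin)$ satisfies $\ratev(\overl) \leq \ratev(\lt) = \rateb(\lt) \leq \rateb(\overl)$, so $\rate(\overl) = \rateb(\overl)$; the non-crossing case is immediate.

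For (\ref{item:early}), monotonicity yields $\ratev(\underl) \geq \ratemin \geq \rateb(\underl)$ for any $\underl \leq \min(\Lrmin)$, so $\rate(\underl) = \ratev(\underl)$. If $\min(\Lrmin) = 0$ the claim is trivial; otherwise, a short case analysis shows that $\min(\Lrmin)$ either sits at the left endpoint of a flat segment or lies in the interior (or at the right endpoint) of a decreasing segment, so the open interval immediately to the left of $\min(\Lrmin)$ is contained in a decreasing segment of slope $-\tfrac{2}{\Ln}$. Choosing $\c$ to be the total increase of $\ratev$ over this decreasing segment (a positive constant determined by $\Ln, \Ls, \Lgt, \beta$, bounded above by $\tfrac{\beta}{\Ln}$), the hypothesis $\rate(\underl) - \ratemin \leq \c$ confines $\underl$ to that decreasing segment, where $\ratev(\underl) = \ratemin + \tfrac{2}{\Ln}(\min(\Lrmin) - \underl)$. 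The main obstacle will be careful bookkeeping of the piecewise structure of $u$ at sawtooth transitions—verifying continuity of $u$ across the jumps of $\delta$, and ruling out the degenerate scenario where $\min(\Lrmin)$ would coincide with a jump point (impossible because at such a point $\ratev$ equals the plateau value of the preceding flat segment, so $\Lrmin$ would extend leftward to include that entire flat segment, contradicting minimality).
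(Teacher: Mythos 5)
Your proposal is correct and follows essentially the same route as the paper: both establish continuity of $\ratev$ via the triangle-wave structure of $\min\{\delta,1-\delta\}$, compute the piecewise derivatives ($0$ on ``flat'' and $-\tfrac{2}{\Ln}$ on ``decreasing'' segments) together with the strictly increasing affine $\rateb$, use the boundary values $\rateb(\Lru)=0$ and $\ratev(\Lru)\leq 0$ plus monotonicity of $\ratev-\rateb$ to locate the (at most one) crossing $\lt$, and then argue by contradiction that a left neighborhood of $\min(\Lrmin)$ must lie on a decreasing segment to obtain the constant $\c$. One small caveat: your phrase ``total increase of $\ratev$ over this decreasing segment'' should be read as the increase over the part of the segment to the \emph{left} of $\min(\Lrmin)$ (in case $\min(\Lrmin)$ lies strictly interior to a decreasing segment, which can happen when $\Lrmin=\{\lt\}$); with this reading it matches the paper's $\c=\tfrac{2}{\Ln}\varepsilon$ and the argument goes through.
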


\pseudoparagraph{Finding an optimal rate}
\Cref{lem:eta_prop} suggests a simple strategy to find a $\l\in\Lrmin$ numerically:
search the intersection of $\ratev$ and $\rateb$ in $[0,\Lru]$;
if found, then the intersection point is optimal, otherwise $\l = 0$ is optimal.
Note that, if the intersection point exists,
it is unique and easy to numerically approximate,
since $\ratev$ is non-increasing, and $\rateb$ is strictly increasing.

\pseudoparagraph{Calculating numerical solutions}
However, \cref{lem:eta_prop} also shows that $\Lrmin$
is an interval and thus might contain multiple values.
In that case, the proposed strategy might not necessarily retrieve the rate of $\regopt$,
but a different $\l\in \Lrmin$.
Yet, \cref{th:rates} guarantees
that both the optimally regularized estimator
and any estimator regularized with $\max{\{\reg, 1\}} \in d^{\l}$ for any $\l \in \Lrmin$
have a risk vanishing with the same rate $n^{\ratemin}$ with high probability.
In particular, this allows us to exhibit the rate
of the optimally regularized estimator in \cref{fig:rates_opt_reg}.
Finally, because of the multiple descent phenomenon
(see, for example, \cref{fig:theory}),
we do not expect either $\Lropt$ or $\betath$ to attain an easily readable closed-form expression.
Nevertheless, simple optimization procedures allow us to calculate
accurate numerical approximations.

Finally, we prove \cref{lem:opt_reg,lem:eta_prop}.
\begin{proof}[Proof of \cref{lem:opt_reg}]
    Let $\Lropt$ be such that
    \begin{equation*}
        \max{{\{\regopt, 1\}}} \in \Theta\left(d^{\Lropt}\right).
    \end{equation*}
    Combining the bounds on bias and variance from \cref{th:rates},
    we have
    \begin{align}
        \risk^2(\freg) &= \bias^2(\freg) + \var^{2}(\freg) \nonumber \\
        &\in \Theta\left(
            n^{-2-\frac{2}{\Ln}(-\Lr-1-\beta(\Lgt-1))}
            + n^{\frac{-\Ls-\Lr}{\Ln} - \frac{\beta}{\Ln} \min\{\delta, 1-\delta\}}
        \right) \nonumber \\
        &\in \Theta\left(n^{\rate(\Lr; \Ln, \Ls,\Lgt, \beta)}\right)
        \quad\text{with probability } \geq 1-cd^{-\min\{\bd,1-\bd\}} \label{eq:risk_full_bound}
    \end{align}
    uniformly for all $\Lr \in \left[0,\Lru\right]$
    with $\max\{\reg, 1\} \in \Theta(d^\Lr)$.

    Throughout the remainder of this proof,
    we drop the dependencies on $\Ln,\Ls,\Lgt,\beta$ in the notation of $\rate$
    and simply write $\rate(\Lr)$.
    We further omit repeating that each step is true
    with probability at least $1-cd^{-\min\{\bd,1-\bd\}}$,
    but imply it throughout.

    The goal of the proof is to show that there
    exists $\l \in \Lrmin$ sufficiently close to $\Lropt$;
    formally, we need to show that
    \begin{equation}\label{eq:lopt_like_l}
        d^{\Lropt-\l} \in \Theta(1).
    \end{equation}
    If this is the case, then the definition of $\Lropt$ yields the conclusion as follows:
    \begin{equation*}
        \max{{\{\regopt, 1\}}} \in \Theta\left(d^{\Lropt}\right)
        = \Theta\left(d^{\l}d^{\Lropt-\l}\right)
        \overset{(i)}{=} \Theta\left(d^{\l}\right),
    \end{equation*}
    where $(i)$ uses \cref{eq:lopt_like_l}.

    Towards an auxiliary result,
    we first apply \cref{eq:risk_full_bound} to $\Lropt$ and $\l\in \Lrmin$
    with $\max\{\reg, 1\} \in \Theta(d^\l)$:
    \begin{align*}
        \risk^2(\hat{f}_{\reg}) &\leq \c_2 n^{\rate(l)} = \c_2 n^{\ratemin},\\
        \risk^2(\fopt) &\geq \c_1 n^{\rate(\regopt)},
    \end{align*}
    where $\c_1,\c_2 > 0$ are the constants hidden by the $\Theta$-notation
    in \cref{eq:risk_full_bound}.
    Next, the optimality of $\regopt$ yields $\c_1<\c_2$, and
    \begin{equation*}
        \c_1 n^{\rate(\Lropt)}
        \leq \risk^2(\fopt)
        \leq \risk^2(\hat{f}_{l})
        \leq \c_2 n^{\ratemin} .
    \end{equation*}
    This implies
    \begin{equation}
        n^{\rate(\Lropt)-\ratemin} \leq \frac{\c_2}{\c_1}
        \quad \Rightarrow \quad
        \rate(\Lropt)-\ratemin \leq \frac{\log({\c_2/\c_1})}{\log n},
        \label{eq:opt_reg_aux}
    \end{equation}
    where the second implication uses that $\rate(\Lropt)-\ratemin\geq 0$,
    since $\ratemin$ is the minimum rate.

    With this result, we finally focus on establishing \cref{eq:lopt_like_l}
    which yields the claim of this lemma.
    \Cref{lem:eta_prop} shows that $\Lrmin$ is an interval;
    hence, we distinguish three cases:

    \pseudoparagraph{Case $\Lropt \in \Lrmin$}
    Picking $\Lropt = \l \in \Lrmin$ directly yields $d^{\Lropt-\Lropt} = d^0 \in \Theta(1)$.

    \pseudoparagraph{Case $\Lropt < \min{(\Lrmin)}$}
    Let $\underl \defeq \min(\Lrmin)$.
    \cref{eq:opt_reg_aux} yields
    $\rate(\Lropt) -\ratemin \leq \c$
    for any $\c > 0$ if $d$ is large enough.
    Hence, for $d$ fixed but sufficiently large,
    \cref{lem:eta_prop} yields
    \begin{equation*}
        \rate(\Lropt)
        = \ratemin + \frac{2}{\Ln}\left(\underl-\Lropt\right).
    \end{equation*}
    Applying \cref{eq:opt_reg_aux}, we get
    \begin{equation*}
        \underl-\Lropt
        = \frac{\Ln}{2}(\rate(\Lropt)-\ratemin)
        \leq \frac{\Ln\log(\c_2/\c_1)}{2 \log n}.
    \end{equation*}
    Now, since $\Lropt\leq \underl$, $d^{\Lropt-\underl} \in \bigO(1)$.
    Furthermore,
    \begin{equation*}
        d^{\underl-\Lropt}
        \leq d^{\frac{\Ln\log(\c_2/\c_1)}{2 \log n}}
        \overset{(i)}{=} d^{\frac{\log(\c_2/\c_1)}{2 \log d}}
        = \sqrt{\c_2/\c_1} \in \bigO(1),
    \end{equation*}
    where $(i)$ follows from $n\in \Theta(d^\Ln)$.
    Since both $d^{\underl-\Lropt}$ and $\frac1{d^{\underl-\Lropt}}=d^{\Lropt-\underl}$
    are in $\bigO(1)$,
    \begin{equation*}
        d^{\Lropt-\underl} \in \Theta(1),
    \end{equation*}
    which establishes \cref{eq:lopt_like_l} for $\overl \in \Lrmin$
    and thereby concludes this case.

    \pseudoparagraph{Case $\Lropt > \max{(\Lrmin)}$}
    Let $\overl \defeq \max(\Lrmin)$. Then, \cref{lem:eta_prop} yields
    \begin{align*}
        \rate(\Lropt) &= -2-\frac{2}{\Ln}(-\Lropt-1-\beta(\Lgt-1)) \\
        &= -2-\frac{2}{\Ln}(-\Lropt-\overl+\overl-1-\beta(\Lgt-1))\\
        &= -2-\frac{2}{\Ln}(-\overl-1-\beta(\Lgt-1)) - \frac{2}{\Ln}(\overl-\Lropt)\\
        &\overset{(i)}= \ratemin + \frac{2}{\Ln}(\Lropt-\overl),
    \end{align*}
    where $(i)$ uses that $\overl \in \Lrmin$.
    Applying \cref{eq:opt_reg_aux}, we get
    \begin{equation*}
        \Lropt-\overl
        = \frac{\Ln}{2}(\rate(\Lropt)-\ratemin)
        \leq \frac{\Ln\log(\c_2/\c_1)}{2 \log n}.
    \end{equation*}
    Analogously to the previous case,
    this implies $d^{\Lropt-\overl} \in \bigO(1)$,
    and the fact that $\Lropt > \overl$ implies $d^{\Lropt-\overl} \in \Omega(1)$.
    Together, this yields
    \begin{equation*}
        d^{\Lropt-\overl} \in \Theta(1),
    \end{equation*}
    which establishes \cref{eq:lopt_like_l} for $\overl \in \Lrmin$
    and thereby concludes this case.
\end{proof}

\begin{proof}[Proof of \cref{lem:eta_prop}]
    Throughout this proof,
    we drop the dependencies on $\Ln,\Ls,\Lgt,\beta$ in the notation of $\rate,\ratev,\rateb$
    and simply write $\rate(\l), \ratev(\l), \rateb(\l)$.

    \paragraph*{Continuity and monotonicity (\cref{item:monotonicity})}
    We first show that $\ratev(\l)$ and $\rateb(\l)$ are continuous functions.
    $\rateb(\l)$ is an affine function of $\l$, hence continuous.
    $\ratev(\l)$, however, additionally depends on $\l$ via $\delta$,
    which is not linear.
    Hence, to show that $\ratev(\l)$ is continuous,
    we need to show that $\min\{\delta,1-\delta\}$ is continuous.
    Consider the triangle wave function
    \begin{equation*}
        \dsaw(t) \defeq \min\{t-\floor{t}, 1-(t-\floor{t})\},
    \end{equation*}
    which is well-known to be continuous.
    Because $\min\{\delta,1-\delta\} = \dsaw\left(\frac{\Ln-\l-1}{\beta}\right)$,
    and $\frac{\Ln-\l-1}{\beta}$ is a linear function of $\l$,
    we get that $\ratev(\l)$ is also continuous.

    For monotonicity, we consider the derivatives
    of $\ratev(\l)$ and $\rateb(\l)$.
    For $\rateb(\l)$, we have
    \begin{equation*}
        \partial_\l \rateb(\l) = \frac{2}{\Ln}.
    \end{equation*}
    Since $\rateb$ is an affine function,
    and $\frac{2}{\Ln} > 0$,
    this also implies that $\rateb$ is strictly increasing.
    For $\ratev$, we need to distinguish two cases:
    \begin{align*}
        \partial_\l \ratev(\l) &= \begin{cases}
            \partial_\l\frac{-\Ls - \l - \beta \delta}{\Ln} & \delta < 1-\delta\\
            \partial_\l\frac{-\Ls - \l - \beta (1-\delta)}{\Ln} & \delta \geq 1-\delta
        \end{cases}\\
        &= \begin{cases}
            0 & \delta < 1-\delta , \\
            -\frac{2}{\Ln} & \delta \geq 1-\delta .
        \end{cases}
    \end{align*}
    Since $\ratev$ is a continuous function with non-positive derivatives,
    this implies that $\ratev$ is non-increasing.

    \paragraph*{$\rate$ decomposition (\cref{item:minimum})}
    First assume that there exists $\lt \in \left[0, \Lru\right]$
    with $\ratev(\lt) = \rateb(\lt)$.
    Since $\ratev$ is non-increasing and $\rateb$ is strictly increasing,
    \begin{equation*}
        \rate(\l) \defeq \max\{\rateb(\l), \ratev(\l)\}
        = \begin{cases}
            \ratev(\l)&\l < \lt,\\
            \rateb(\l)&\text{otherwise.}
        \end{cases}
    \end{equation*}
    In particular,
    $\rate(\l) > \rate(\lt)$ for all $\l > \lt$
    as $\rateb$ is strictly increasing,
    and $\rate(\l) \geq \rate(\lt)$ for all $\l < \lt$
    since $\ratev$ is non-increasing.
    Combined, this yields $\ratemin = \rate(\lt;\Ln,\Ls,\Lgt,\beta)$.

    Next, assume $\lt$ does not exist,
    that is, $\ratev(\l) \neq \rateb(\l)$
    for all $\l \in \left[0, \Lru\right]$.
    Then, due to continuity,
    either $\rateb(\l) > \ratev(\l)$ for all $\l \in \left[0, \Lru\right]$,
    or $\ratev(\l) > \rateb(\l)$ for all $\l \in \left[0, \Lru\right]$.
    However, a closer analysis shows that the latter is not possible:
    the rates at the boundary $\Lru$ are
    \begin{align*}
        \rateb(\Lru)
        &= -2 -\frac{2}{\Ln}\left(-\left(\Ln-1-\beta(\Lgt-1)\right)-1-\beta(\Lgt-1)\right)
        = 0, \\
        \ratev(\Lru)
        &= - \frac{\Ls+\left(\Ln-1-\beta(\Lgt-1)\right)}{\Ln}-\frac{\beta}{\Ln}\min\{\delta,1-\delta\}
        \overset{(i)} \leq 0,
    \end{align*}
    where $(i)$ follows from the assumption $\Ls \geq -\Lru$
    and the fact that $\min\{\delta,1-\delta\} \geq 0$.
    Hence, if $\lt$ does not exist,
    $\ratev(\l) < \rateb(\l), \forall \l \in [0,\Lru]$.
    In particular, $\ratemin$ is the minimum of the strictly increasing
    function $\rateb(\l)$ over $[\left[0, \Lru\right]$,
    and therefore attained only at $0$.

    Lastly, combining both cases of $\lt$ yields the following convenient expression:
    \begin{equation}
        \rate(\l) = \begin{cases}
            \ratev(\l)&\text{$\lt$ exists and $\l < \lt$,}\\
            \rateb(\l)&\text{otherwise.}
        \end{cases}
        \label{eq:rate_decomp}
    \end{equation}

    \paragraph*{Closed interval of solutions (\cref{item:interval})}
    We again differentiate whether $\lt$ as in the previous step exists or not.
    If $\lt$ does not exist,
    then the previous step already yields $\Lrmin = \{0\}$,
    which is a closed interval.
    Next, assume $\lt$ exists.
    Then, all $\l \in \Lrmin$ satisfy $\l \leq \lt$,
    since $\lt \in \Lrmin$ and $\rateb$ is strictly increasing.
    Since further $\ratev$ is continuous and non-increasing over
    $\left[0, \lt\right] \supseteq \Lrmin$, $\Lrmin$ is an interval.
    Finally, $\rate(\l) = \max\{\rateb(\l), \ratev(\l)\}$
    is the maximum of two continuous functions, hence itself also continuous.
    Therefore, the minimizers $\Lrmin$ of $\rate$ are a closed set.
    This concludes that $\Lrmin$ is a closed interval.

    \paragraph*{Proof of \cref{item:late,item:early}}
    \Cref{item:late} follows straightforwardly from \cref{eq:rate_decomp}:
    If $\lt$ does not exist,
    then $\rate(\l) = \rateb(\l)$ for all $\l \in \left[0, \Lru\right]$.
    Similarly, if $\lt$ exists, $\rate(\l) = \rateb(\l)$ for all $\l \geq \lt$,
    in particular for $\overl \geq \lt \in \Lrmin$.

    \cref{item:early} requires additional considerations.
    In the case where $\lt$ does not exist,
    we have $\Lrmin = \{0\}$, and the result follows directly.
    Otherwise, using \cref{eq:rate_decomp},
    we have $\rate(\l) = \ratev(\l)$ for any $\l \leq \underl$,
    as $\underl \leq \min({\Lrmin}) \leq \lt$.
    As shown in the proof of \cref{item:monotonicity},
    $\ratev(\l)$ alternates between derivatives $0$ and $-2/\Ln$.
    We claim that there exists a left neighborhood of $\min{(\Lrmin)}$
    where the derivative is $-2/\Ln$.
    Assume towards a contradiction that no such left neighborhood exists.
    Then, there must be a left neighborhood of $\min{(\Lrmin)}$ where the derivative is $0$,
    since $\ratev(\l)$ alternates between only two derivatives.
    However, in that left neighborhood, $\ratev(\l)$ is constant
    with all values equal to $\ratemin$.
    Hence, there exists $\l < \min{(\Lrmin)}$ with
    $\rate(\l) = \ratev(\l) = \ratemin$
    and thus $\l \in \Lrmin$,
    which is a contradiction.

    Thus, there exists a left neighborhood of $\min{(\Lrmin)}$
    with diameter $\varepsilon > 0$
    throughout which the derivative is $-\frac{2}{\Ln}$.
    Then, for all
    $\l \in [\min{(\Lrmin)}-\varepsilon, \min{(\Lrmin)}]$,
    \begin{equation*}
        \ratev(\l) = \ratemin - \frac{2}{\Ln}(\l - \min{(\Lrmin)}).
    \end{equation*}
    Finally,
    since $\ratev(\l)$ is non-increasing,
    as long as $\ratev(\underl) \leq \ratemin + \frac{2}{\Ln}\varepsilon$
    we have $\underl \geq \min{(\Lrmin)}-\varepsilon$.
    Hence, choosing
    $\c = \frac{2}{\Ln}\varepsilon$
    yields the statement of \cref{item:early}.
\end{proof}

\subsection{Proof of \texorpdfstring{\cref{th:trainerror}}{Theorem~\ref{th:trainerror}}}
\label{ssec:training_error_proof}
The informal \cref{th:trainerror} in the main text relies on a $\betath$,
defined as the intersection of the variance and bias rates from \cref{th:rates}
for the interpolator $\hat f_0$ (setting $\Lr =0$).
Whenever $\betath$ is unique,
the fact that $\bias^2(\hat f_0)$ in \cref{th:rates} strictly increases
as a function of $\beta$ induces a phase transition:
for $\beta > \betath$, the bias dominates the rate of the risk in \cref{th:rates},
while for $\beta \leq \betath$, the variance dominates.
In particular, \cref{lem:opt_reg,lem:eta_prop} imply that interpolation is harmless
whenever the bias dominates, and harmful if the variance dominates.

Intuitively, \cref{th:trainerror} considers optimally regularized estimators,
and varies the inductive bias strength via $\beta$.
The formal \cref{th:trainerror_formal} below
presents a different perspective:
it considers $\beta$ fixed,
and instead differentiates whether the optimal risk rate in \cref{th:rates}
results from $\Lr > 0$ (harmful interpolation)
or $\Lr = 0$ (harmless interpolation).

Whenever $\betath$ is well-defined and unique,
as for example in the setting of \cref{fig:theory},
the two perspectives coincide.
However, one can construct pathological edge cases
where variance and bias rates intersect on an interval of $\beta$ values,
or the dominating quantity of the risk rate in \cref{th:rates} as a function of $\beta$
alternates between the variance and bias.
While \cref{th:trainerror} fails to capture such settings,
\cref{th:trainerror_formal} still applies.
We hence present \cref{th:trainerror_formal} as a more general result.

\begin{theorem}[Formal version of \cref{th:trainerror}]
\label{th:trainerror_formal}
In the setting of \cref{th:rates}
using the notation from \cref{app:opt_reg},
let $\Ln > 0$, $\beta \in (0, 1)$, $\Ls \geq -\Lru$,
and $\Lgt \in \left[1,\ceil{\frac{\Ln-1}{\beta}}\right]\cap \N$.
Let further
$\regopt = \argmin_{\lambda \geq 0 \mid \max{\{\lambda, 1\}} \in \bigO(d^{\Lru})} \risk(\freg)$.
Then, the expected training error behaves as follows:
\begin{enumerate}
    \item\label{item:harfull_int}
    If all $\l \in \argmin_{\l \in [0,\Lru]}{\rate(\l; \Ln, \Ls,\Lgt, \beta)}$
    satisfy $\rate(\l; \Ln, \Ls,\Lgt, \beta)<\rate(0; \Ln, \Ls,\Lgt, \beta)$, then
    \begin{equation*}
        \abs*{
            \E_{\epsilon} \left[\frac{1}{n} \sum_i{(\fopt(x_i) - y_i)^2}\right] - \sigma^2
        } \in \bigO(d^{-\l} + \risk^2(\fopt)),
        \quad \text{w.p.\ } \geq 1-\c d^{-\beta\min\{\bd, 1-\bd\}}.
    \end{equation*}
    \item\label{item:harmless_int}
    If all $\l \in (0,\Lru]$
    satisfy $\rate(\l; \Ln, \Ls,\Lgt, \beta) > \rate(0; \Ln, \Ls,\Lgt, \beta)$, then
    \begin{equation*}
        \E_{\epsilon}\left[\frac1n\sum_i (\fopt(x_i)- y_i)^2\right]
        \leq \tilde{\c} \sigma^2 + \bigO\left(\risk^2(\fopt)\right),
        \quad \text{w.p.\ } \geq 1-\c d^{-\beta\min\{\bd, 1-\bd\}}
    \end{equation*}
    for some constant $\tilde{\c} < 1$.
\end{enumerate}
\end{theorem}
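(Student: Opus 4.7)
The plan is to rewrite the expected training error in closed form via the KRR identity, separate it into a bias-like term $A$ and a noise-like term $B$, bound $A \lesssim \bias^2(\fopt)$ via a short adaptation of the fixed-design bias proof in \cref{app:context_agnostic}, and then analyze $B/\sigma^2$ through the spectrum of $\K$ in each of the two regimes. Writing $\H = \K + \regopt\I_n$, the training residual equals $y_i - \fopt(x_i) = \regopt\,[\H^{-1}\y]_i$, so that
\begin{equation*}
    \E_\e\!\left[\frac{1}{n}\sum_i (\fopt(x_i) - y_i)^2\right]
    = \underbrace{\frac{\regopt^2}{n}\,\f^\t\H^{-2}\f}_{\revdefeq A}
    + \underbrace{\frac{\regopt^2\sigma^2}{n}\Tr(\H^{-2})}_{\revdefeq B}
\end{equation*}
with $\f = (\fstar(x_i))_{i=1}^n$. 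Using $\f = \Pm\a$ (from \cref{lem:ground_truth_rate}) and the Woodbury-type identity $\H^{-1}\Pm = \HM^{-1}\Pm(\Dm^{-1} + \Pm^\t\HM^{-1}\Pm)^{-1}\Dm^{-1}$ (as in the proof of \cref{th:vidya}), one obtains $A \leq \tfrac{\regopt^2}{n}\|\HM^{-1}\Pm\|^2 \cdot B_1$, where $B_1 \defeq \|(\Dm^{-1} + \Pm^\t\HM^{-1}\Pm)^{-1}\Dm^{-1}\a\|^2$ is precisely the quantity lower-bounding $\bias^2(\fopt)$ in \cref{th:vidya}. Since $\|\HM^{-1}\Pm\|^2 \leq 1.5\,n/\mineig{\HM}^2$ by \cref{eq:ass1} and $\mineig{\HM} \geq \regopt$, the prefactor is at most $1.5$, so $A \leq 1.5\,B_1 \leq 1.5\,\bias^2(\fopt) \leq 1.5\,\risk^2(\fopt)$.

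It therefore remains to compare $B/\sigma^2 = \frac{\regopt^2}{n}\sum_\iii \frac{1}{(\mu_\iii + \regopt)^2}$ to $1$, where $\mu_1 \geq \cdots \geq \mu_n$ denote the eigenvalues of $\K$. For \cref{item:harmless_int} (harmless interpolation, $\max\{\regopt,1\} \in \Theta(1)$), the $1$-$2$ decomposition of \cref{ssec:12decomposition} together with \cref{lem:K2} yields $\K \succeq \K_2 \succeq \c_1\I_n$ with high probability for a positive constant $\c_1$, so every $\mu_\iii \geq \c_1$. Hence $B/\sigma^2 \leq \regopt^2/(\c_1 + \regopt)^2 \revdefeq \tilde{\c} < 1$. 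For \cref{item:harfull_int} (harmful interpolation, $\regopt \in \Theta(d^\l)$ with $\l > 0$), I would instead bound $1 - B/\sigma^2 = \tfrac{1}{n}\sum_\iii \frac{\mu_\iii(\mu_\iii + 2\regopt)}{(\mu_\iii + \regopt)^2}$ by splitting at the index $\m$ provided by \cref{lem:eigendecay_main} (where $n\lambda_\m \in \Theta(\regopt)$): terms with $\iii \leq \m$ each contribute at most $1$ and total $\m/n \in \bigO(\filsize^{-\delta}/\regopt) \subseteq \bigO(d^{-\l})$, while terms with $\iii > \m$ each contribute at most $2\mu_\iii/\regopt$ and together yield $2\Tr(\K)/(n\regopt) \in \bigO(1/\regopt) = \bigO(d^{-\l})$, using that $\Kf(x,x) \lesssim 1$ uniformly on $\{-1,1\}^d$.

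The main obstacle will be handling the fact that $\regopt$ is defined only implicitly as the minimizer of the risk: in each case, one must first invoke \cref{lem:opt_reg} to conclude that $\regopt$ has some rate $\l \in \Lrmin$ (possibly not unique, which is why the formal theorem is phrased through $\argmin$), and then carefully combine this with the conditions of \crefrange{lem:K2}{lem:eigendecay_main}, which only hold with high probability for specific choices of $\m$ tied to $\reg$. The decomposition of $A$ and the case-by-case spectral analysis of $B$ are comparatively short consequences of the eigenvalue properties of $\K$ already established in the paper.
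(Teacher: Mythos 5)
Your decomposition of the expected training error into a bias-like term $A$ and a noise-like term $B$ is exactly the paper's (cf.\ the proof of \cref{prop:trainerror_context_agnostic}: $T_1 = B$ and $T_2 = A$), and the chain $A \leq 1.5\,B_1 \leq 1.5\,\bias^2(\fopt) \leq 1.5\,\risk^2(\fopt)$ matches what the paper does, just slightly streamlined by noting $\reg^2/(\mineig{\KM}+\reg)^2 \leq 1$ rather than carrying a $\reg^2\rmax^2/\rmin^2$ prefactor. Your treatment of case \ref{item:harmless_int} (via $\mineig{\K} \geq \mineig{\K_2} \geq \c_1$ from \cref{lem:K2} so $B/\sigma^2 \leq \regopt^2/(\c_1 + \regopt)^2 < 1$) is also the paper's argument.

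Where you genuinely depart from the paper is in bounding $|B/\sigma^2 - 1|$ in case \ref{item:harfull_int}. The paper decomposes $\H = (\Km + \K_1) + (\K_2 + \regopt\I_n)$, controls $\rank(\Km + \K_1) \in \bigO(n\filsize^{-\bd})$, and applies \cref{lem:tr_inv} to lower-bound $\Tr(\H^{-2})$; it also needs $\mineig{\K_2}\geq\c_1$ from \cref{lem:K2} for the matching upper bound. You instead write $1 - B/\sigma^2 = \frac1n\sum_\iii\frac{\mu_\iii(\mu_\iii+2\regopt)}{(\mu_\iii+\regopt)^2}$, split at the deterministic index $\m$ from \cref{lem:eigendecay_main}, and use (i) the elementary pointwise bound of $1$ on the first $\m$ terms giving $\m/n \in \Theta(\filsize^{-\delta}d^{-\l}) \subseteq \bigO(d^{-\l})$, and (ii) the bound $\frac{\mu_\iii(\mu_\iii+2\regopt)}{(\mu_\iii+\regopt)^2}\leq 2\mu_\iii/\regopt$ on the tail together with $\Tr(\K) = n\,\Kf(x,x) = n\sum_\ii\xi_\ii^{(\filsize)} \in \Theta(n)$. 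This is a genuine simplification: it avoids \cref{lem:tr_inv} and, since $B/\sigma^2 \leq 1$ is automatic from $\mineig{\H}\geq\regopt$, it does not need any probabilistic lower bound on $\mineig{\K_2}$ for this case (only \cref{lem:opt_reg} and \cref{eq:ass1} remain, which match the stated error probability). Your bound also avoids the paper's extra additive $\filsize^{-\bd}$ term, since your $\m$ is already the $\Lr=\l$ index rather than the larger index count $|\N\setminus\IndS_2|$. The plan is sound; what remains is exactly what you flag at the end — threading the high-probability events that pin down the rate of $\regopt$ through \cref{lem:opt_reg} and the fact that the $\m$ used in $A$ and in the eigenvalue split must be the same one instantiated with $\Lr=\l$.
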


Intuitively, the two cases in \cref{th:trainerror_formal}
correspond to harmful and harmless interpolation,
where the optimal rate in \cref{th:rates}
is for $\Lr > 0$ and $\Lr = 0$, respectively.
Then, \cref{lem:opt_reg} yields with high probability
that also $\Lropt > 0$ and $\Lropt = 0$
in the first and second case, respectively.
Finally, we remark that \cref{th:trainerror_formal} lacks an edge case:
if both some $\Lr > 0$ and $\Lr = 0$ minimize the risk rate in \cref{th:rates} simultaneously,
\cref{lem:opt_reg} fails to differentiate whether $\Lropt$ is zero or positive.
However, that edge case corresponds to either interpolation or very weak regularization.
Hence, we conjecture the corresponding model's training error
to behave similar to the second case in \cref{th:trainerror_formal}.

\begin{proof}[Proof of \cref{th:trainerror_formal}]
    First, \cref{lem:opt_reg} yields
    with probability at least $1-\c d^{-\beta\min\{\bd,1-\bd\}}$ that
    $\max\{\regopt,1\}\in \Theta(d^{\l})$,
    where $\l$ is a minimizer of $\rate(\l; \Ln, \Ls,\Lgt, \beta)$.
    The condition in \cref{item:harfull_int} guarantees that all optimal $\l$
    are positive,
    while the condition in \cref{item:harmless_int} ensures that the optimal $\l = 0$.

    \paragraph*{Harmful interpolation setting (\cref{item:harfull_int})}
    In this case, $\l > 0$,
    and thus $\regopt \in \Theta(d^\l)$ for $d$ sufficiently large.
    We start by applying \cref{th:rates} with $\Lr = \l$ in this setting.

    Within the proof of \cref{th:rates} in \cref{subsec:proof_s2},
    we pick a $\m \in \N$
    such that for $d$ sufficiently large,
    with probability at least $1-\c d^{-\beta\min\{\bd,1-\bd\}}$,
    $\m$ satisfies the conditions of
    \crefrange{lem:as_1_verification}{lem:eigendecay_main}
    and \cref{th:vidya}.
    For the remainder of this proof,
    let $\m$
    be the same as in the proof of \cref{th:rates}
    for $\Lr = \l$.
    This $\m$ satisfies the conditions of \cref{prop:trainerror_context_agnostic},
    which hence yields
    \begin{equation*}
        \frac{\regopt^2 \sigma^2}n \Tr\left(\H^{-2}\right)
        \leq \E_{\epsilon} \left[\frac1n\sum_i (\fopt(x_i) - y_i)^2\right]
        \leq \frac{\regopt^2 \sigma^2}{n}
            \Tr\left(\H^{-2}\right)
            + 6\regopt^2 \frac{\rmax^2}{\rmin^2}\frac{\norm{{\Dm^{-1}\a}}^2}{n^2},
    \end{equation*}
    where $\H \defeq \K+\reg \I_n$.
    Then, using that
    $a \leq b \leq a + d$
    implies $\abs{b - c} \leq \abs{a - c} + d$,
    we further get
    \begin{equation}
        \abs*{\E_{\epsilon} \left[\frac1n\sum_i (\fopt(x_i) - y_i)^2\right] - \sigma^2}
        \leq \underbrace{
            \sigma^2\abs*{\frac{\regopt^2 }n \Tr\left(\H^{-2}\right) - 1}
        }_{\defeq T_1}
        + \underbrace{
            6 \regopt^2\frac{\rmax^2}{\rmin^2} \frac{\norm{\Dm^{-1}\a}^2}{n^2}
        }_{\defeq T_2}.
        \label{eq:harmfull_int_bound}
    \end{equation}
    We first bound $T_2$ as follows:
    \begin{align}
        T_2 = 6 \regopt^2\frac{\rmax^2}{\rmin^2} \frac{\norm{\Dm^{-1}\a}^2}{n^2}
        &\overset{(i)}{\lesssim}
            \frac{d^{2\l}}{d^{2\Ln}} \frac{\rmax^2}{\rmin^2}\norm{\Dm^{-1}\a}^2
        \nonumber\\
        &\overset{(ii)}{\lesssim}
            \frac{d^{2\l}}{d^{2\Ln}} \cdot d^2 \filsize^{2(\Lgt-1)}
        \nonumber \\
        &=
            \frac{d^{2\l}}{d^{2}d^{2\l}d^{2\left(\Ln-\l-1\right)}}
            \cdot d^2 \filsize^{2(\Lgt-1)}
        \nonumber \\
        &\lesssim d^{-2\left(\Ln - \l - 1 - \beta(\Lgt-1)\right)} \nonumber \\
        &\overset{(iii)}{\in} \bigO\left(\bias^2(\fopt)\right)
        \subseteq \bigO\left(\risk^2(\fopt)\right),\label{eq:t_2_bound}
    \end{align}
    where $(i)$ uses $n\in \Theta(d^{\Ln})$ and $\regopt \in \Theta(d^{\l})$,
    $(ii)$ applies \cref{lem:ground_truth_rate} for the rate of $\norm{\Dm^{-1}\a}$
    and \cref{lem:combined} for $\rmax^2,\rmin^2 \in \Theta(1)$,
    and $(iii)$ matches the expression to the rate of the bias in \cref{th:rates}.

    For $T_1$, we will bound $\Tr\left(\H^{-2}\right)$ from above and below
    using \cref{lem:tr_inv}.
    We hence introduce the following notation:
    \begin{equation*}
        \K_{-2} \defeq  \Km + \K_1, \quad \H_2 \defeq \K_2 + \regopt\I_n,
    \end{equation*}
    so that $\H = \H_2 + \K_{-2}$,
    and where $\K_1$ and $\K_2$ are defined in \cref{ssec:12decomposition}.
    Furthermore, the rank of $\K_{-2}$ is at most
    $\abs{\N\setminus\IndS_2} = \abs{\{\i \in \N \mid \abs{S_\i} < \bL + 2 \}}$,
    that is, the number of eigenfunctions that contribute to $\K_{-2}$.
    The rate of $\abs{\N\setminus\IndS_2}$ is
    \begin{equation*}
        \abs{\N\setminus\IndS_2}
        \overset{(i)}{=} \sum_{\ii = 0}^{\bL + 1}{\C(\ii, \filsize, d)}
        \overset{(ii)}{=} 1 + \sum_{\ii = 1}^{\bL + 1}{d\binom{\filsize-1}{\ii-1}}
        \overset{(iii)}{\in} \bigO(d \cdot \filsize^{\bL}),
    \end{equation*}
    where $(i)$ uses the definition of $\C(\ii, \filsize, d)$
    in \cref{eq:C_def},
    $(ii)$ applies \cref{lem:n_sets} with $d$ sufficiently large,
    and $(iii)$ uses the classical bound
    $\binom{\filsize-1}{\ii-1} \leq\left(e\frac{\filsize-1}{\ii-1}\right)^{\ii-1}$
    as well as the fact that the largest monomial dominates the sum.
    Finally, since $n\in \Theta(d\cdot \filsize^{\bL +\bd})$,
    we have
    \begin{equation}\label{eq:rateK_2}
        \rank(\K_{-2})
        \leq \abs{\N\setminus\IndS_2}
        \in \bigO(n \filsize^{-\bd})
        \subseteq o(n).
    \end{equation}

    Therefore, for $d$ and hence $n \in \Theta(d^\Ln)$ sufficiently large,
    $\rank(\K_{-2}) < n$,
    and \cref{lem:K2} yields
    $c_1 \leq \mineig{\H_2} \leq \norm{\H_2} \leq c_2$ for some constants $c_1, c_2 > 0$
    with probability at least $1-\c' {\filsize^{-(1-\bd)}}$.
    We can thus instantiate \cref{lem:tr_inv} with $\Mm=\H, \Mm_1 = \K_{-2}, \Mm_2 = \H_2$.
    This implies that:
    \begin{equation*}
        \Tr\left(\H^{-2} \right)
        \geq \frac{n-\rank(\K_{-2})}{\norm{\H_2}^2}
        \geq \frac{n-\rank(\K_{-2})}{(c_2+\regopt)^2}.
    \end{equation*}
    The upper bound on $\Tr\left(\H^{-2}\right)$ simply follows from
    \begin{align}
        \Tr\left(\H^{-2} \right)
        &\leq n \norm{\H^{-2}}\notag\\
        &= \frac{n}{\mineig{\Km+\K_1+\K_2+\regopt \I_n}^2}\notag\\
        &\leq \frac{n}{\mineig{\K_2+\regopt \I_n}^2}\notag\\
        &\overset{(i)} \leq \frac{n}{(c_1 + \regopt)^2},\label{eq:up_bound_tr}
    \end{align}
    where $(i)$ uses the previous lower bound on $\mineig{\K_2}$ from \cref{lem:K2}.

    Combining the upper and lower bounds on $\Tr\left(\H^{-2}\right)$ yields
\begin{align*}
    \frac{n-\rank(\K_{-2})}{(c_2+\regopt)^2}
        &&&\leq \Tr\left(\H^{-2} \right)
        &&\leq \frac{n}{(c_1+\regopt)^2}\\
    \frac{n-\rank(\K_{-2})}{n}\frac{\regopt^2}{(c_2+\regopt)^2}
        &&&\leq \frac{\regopt^2}{n}\Tr\left(\H^{-2} \right)
        &&\leq \frac{\regopt^2}{(c_1+\regopt)^2}\\
    -\frac{\rank(\K_{-2})}{n}\frac{\regopt^2}{(c_2+\regopt)^2}
    +\frac{\regopt^2}{(c_2+\regopt)^2}-1
        &&&\leq \frac{\regopt^2}{n}\Tr\left(\H^{-2} \right) -1
        &&\leq \frac{\regopt^2}{(c_1+\regopt)^2}-1\\
    -\frac{\rank(\K_{-2})}{n}\frac{\regopt^2}{(c_2+\regopt)^2}
    -\frac{2c_2\regopt +c_2^2}{(c_2+\regopt)^2}
        &&&\leq \frac{\regopt^2}{n}\Tr\left(\H^{-2} \right) -1
        &&\leq -\frac{2c_1\regopt +c_1^2}{(c_1+\regopt)^2}.
\end{align*}
Taking absolute values,
a simple case distinction and $\c_1 < \c_2$ yields the following bound:
\begin{align*}
    \abs*{\frac{\regopt^2}{n}\Tr\left(\H^{-2} \right) -1}
    &\leq \frac{2\c_2+\regopt}{(\c_2+\regopt)^2}
    + \frac{\rank(\K_{-2})}{n} \frac{\regopt^2}{(\c_2+\regopt)^2}\\
    &\overset{(i)}\lesssim \frac{d^{\l}}{d^{2\l}}
    + \frac{n \filsize^{-\bd}}{n} \frac{d^{2\l}}{d^{2\l}}
    = d^{-\l} + \filsize^{-\bd}
    \in \bigO(d^{-\l}),
\end{align*}
where $(i)$ uses the rate of $\rank(\K_{-2})$ from \cref{eq:rateK_2}
and $\regopt \in \Theta\left(d^\l\right)$.
Combining this bound on $T_1$ with the bound on $T_2$ in \cref{eq:t_2_bound}
and collecting all error probabilities
concludes the current case.

\paragraph*{Harmless interpolation setting (\cref{item:harmless_int})}
In this setting,
the only minimizer of the risk rate is $\l = 0$,
and hence
\begin{equation*}
    \regopt \leq \max{\{\regopt, 1\}} \in \bigO(d^0) = \bigO(1).
\end{equation*}
As for the previous case,
let $\m$
be the same as in the proof of \cref{th:rates}
for $\Lr = 0$.
With probability at least $1-\c d^{-\beta\min\{\bd,1-\bd\}}$,
this $\m$ again satisfies the conditions of \cref{prop:trainerror_context_agnostic},
which yields
\begin{equation*}
    \E_{\epsilon} \left[\frac1n\sum_i (\fopt(x_i) - y_i)^2\right]
    \leq \underbrace{\frac{\regopt^2 \sigma^2}n \Tr\left(\H^{-2}\right)}_{\defeq T_3}
    + \underbrace{6\regopt^2 \frac{\rmax^2}{\rmin^2}\frac{\norm{{\Dm^{-1}\a}}^2}{n^2}}_{T_2}.
\end{equation*}
Furthermore, we apply the same steps with $\l = 0$ as in the previous case (\cref{eq:t_2_bound})
to bound $T_2$:
\begin{equation*}
    T_2 \in \bigO\left(\risk^2(\fopt)\right).
\end{equation*}
For $T_3$,
we use the same bound on $\Tr\left(\H^{-2} \right)$ as in \cref{eq:up_bound_tr}
with the same probability as follows:
\begin{align*}
    T_3 = \frac{\regopt^2\sigma^2}{n} \Tr\left(\H^{-2} \right)
    &\leq \frac{\regopt^2\sigma^2}{n} \frac{n}{(\c_1 + \regopt)^2} \\
    &= \sigma^2\frac{\regopt^2}{(\c_1+\regopt)^2} \\
    &\overset{(i)}\leq \sigma^2\frac{(\c'')^2}{(\c_1+\c'')^2},
\end{align*}
where $(i)$ follows for $d$ sufficiently large from $\regopt \in \bigO(1)$ with $\c'' > 0$.
Since $c_1 > 0$, we have $\frac{(\c'')^2}{(\c_1+\c'')^2} < 1$.
Hence, combining the bounds on $T_2$ and $T_3$,
as well as collecting all probabilities,
we get the desired result for this case.
\end{proof}

\subsection{Technical lemmas}
\begin{lemma}[Trace of the inverse]
    \label{lem:tr_inv}
    Let $\Mm, \Mm_1,\Mm_2 \in \R^{n\times n}$ be symmetric positive semi-definite matrices
    with $\Mm = \Mm_1 + \Mm_2$.
    Furthermore, assume that $\mineig{\Mm_2}>0$ and $\rank(\Mm_1) < n$.
    Then,
    \begin{equation*}
        \Tr(\Mm^{-2}) \geq \frac{n-\rank(\Mm_1)}{\norm{\Mm_2}^2}.
    \end{equation*}
\end{lemma}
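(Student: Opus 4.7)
The plan is to reduce the statement to a Weyl-type eigenvalue inequality, using that $\Tr(\Mm^{-2})$ is just the sum of reciprocal squared eigenvalues.

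First I would note that $\Mm$ is invertible: since $\Mm_1 \succeq 0$ and $\mineig{\Mm_2} > 0$, we have $\mineig{\Mm} \geq \mineig{\Mm_2} > 0$. Let $\sigma_1(\Mm) \geq \sigma_2(\Mm) \geq \dots \geq \sigma_n(\Mm) > 0$ denote the eigenvalues of $\Mm$ in decreasing order (which coincide with its singular values since $\Mm$ is symmetric positive definite), and analogously for $\Mm_1$ and $\Mm_2$. Then
\begin{equation*}
    \Tr(\Mm^{-2}) = \sum_{i=1}^n \sigma_i(\Mm)^{-2}.
\end{equation*}

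Next I would apply Weyl's inequality for sums of Hermitian matrices, which gives $\sigma_i(\Mm_1 + \Mm_2) \leq \sigma_i(\Mm_1) + \sigma_1(\Mm_2)$ for every $i \in \{1,\dots,n\}$. Setting $r \coloneqq \rank(\Mm_1) < n$, for all $i > r$ the $i$-th eigenvalue of $\Mm_1$ vanishes, hence
\begin{equation*}
    \sigma_i(\Mm) \leq \sigma_1(\Mm_2) = \norm{\Mm_2} \qquad \text{for all } i > r.
\end{equation*}

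Finally, I would restrict the sum in $\Tr(\Mm^{-2})$ to those indices, discarding the first $r$ (nonnegative) terms:
\begin{equation*}
    \Tr(\Mm^{-2}) \geq \sum_{i=r+1}^n \sigma_i(\Mm)^{-2}
    \geq \sum_{i=r+1}^n \norm{\Mm_2}^{-2}
    = \frac{n - r}{\norm{\Mm_2}^2},
\end{equation*}
which is exactly the claimed bound. There is no real obstacle here beyond recognizing that Weyl's inequality gives the right eigenvalue comparison; the rank constraint on $\Mm_1$ is precisely what pushes $n - r$ eigenvalues of $\Mm$ below $\norm{\Mm_2}$, and the symmetry/positive semi-definiteness of all three matrices lets us pass freely between singular values and eigenvalues.
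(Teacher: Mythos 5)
Your proof is correct, and it takes a genuinely different route from the paper. The paper's argument first applies the inverse-decomposition identity $\Mm^{-1} = \Mm_2^{-1} - \Am$ with $\Am \coloneqq \Mm_2^{-1}\Mm_1\Mm^{-1}$ (symmetric because $\Mm^{-1}$ and $\Mm_2^{-1}$ are), observes $\rank(\Am) \leq \rank(\Mm_1)$, and then expands $\Tr(\Mm^{-2})$ over an orthonormal basis adapted to $\col(\Am)^\perp$ so that all $\Am$-terms vanish, leaving a sum over $n - \rank(\Am)$ terms each at least $1/\norm{\Mm_2}^2$. This mirrors the argument structure used for the lower bound on $V_2$ in the proof of \cref{th:vidya}, which is likely why the authors chose it. Your approach instead invokes Weyl's eigenvalue inequality $\lambda_i(\Mm_1+\Mm_2) \leq \lambda_i(\Mm_1) + \lambda_1(\Mm_2)$ and the fact that a rank-$r$ PSD matrix has $\lambda_i = 0$ for $i > r$, which immediately gives $\lambda_i(\Mm) \leq \norm{\Mm_2}$ for $i > \rank(\Mm_1)$; the trace bound then follows in one line. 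Both proofs are sound; yours is shorter and avoids the explicit construction of an adapted orthonormal basis, at the (mild) cost of citing Weyl's inequality as a black box rather than working from first principles.
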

\begin{proof}
    First, we apply the identity
    \begin{equation*}
        \Mm^{-1} = (\Mm_2 + \Mm_1)^{-1} = \Mm_2^{-1}
        - \underbrace{\Mm_2^{-1}\Mm_1(\Mm_2 + \Mm_1)^{-1}}_{\defeq \Am},
    \end{equation*}
    which holds since $\Mm_2$, and thus $\Mm$, are full rank.
    Next, $\Am$ is a product of matrices including $\Mm_1$;
    hence, the rank of $\Am$ is bounded by $\rank(\Mm_1) < n$.
    Let now $\{\v_1,\dots,\v_{\rank(\Am)}\}$ be an orthonormal basis
    of $\col(\Am)$,
    and let $\{\v_{\rank(\Am)+1},\dots,\v_{n}\}$ be an orthonormal basis
    of $\col(\Am)^\perp$.
    Thus, $\{\v_1,\dots,\v_n\}$ is an orthonormal basis of $\R^n$,
    and similarity invariance of the trace yields
    \begin{align*}
        \Tr(\Mm^{-2})
        &= \sum_{i = 1}^n{\v_i^\t\Mm^{-2}\v_i}
        = \sum_{i = 1}^{\rank(\Am)}{\v_i^\t\Mm^{-2}\v_i}
            + \sum_{i = \rank(\Am)+1}^n{\v_i^\t\Mm^{-2}\v_i} \\
        &\geq \sum_{i = \rank(\Am)+1}^n{\v_i^\t\left({\Mm_2}^{-1}-\Am\right)^2\v_i} \\
        & \overset{(i)}{=} \sum_{i = \rank(\Am)+1}^n{
            \left(
                \v_i^\t {\Mm_2}^{-2}\v_i
                - \underbrace{ \v_i^\t \Am}_{=0}{\Mm_2}^{-1}\v_i
                - \v_i^\t {\Mm_2}^{-1}\underbrace{\Am \v_i}_{=0}
                + \underbrace{\v_i^\t\Am \Am\v_i}_{=0}
            \right)
        }\\
        &= \sum_{i = \rank(\Am)+1}^n{\v_i^\t {\Mm_2}^{-2}\v_i} \\
        &\overset{(ii)}\geq (n-\rank(\Am))\mineig{{\Mm_2}^{-2}}
        =(n-\rank(\Am))\frac{1}{\norm{{\Mm_2}}^2} \\
        &\overset{(iii)}{\geq} \frac{n-\rank(\Mm_1)}{\norm{\Mm_2}^2},
    \end{align*}
    where $(i)$ uses that, for all $i>\rank(\Am)$,
    $\v_\iii$ is orthogonal to the column space of $\Am$,
    and $\Am$ is symmetric.
    Furthermore, $(ii)$ uses that all $\v_i$ have norm $1$,
    and $(iii)$ that $\rank(\Am)\leq \rank(\Mm_1)$.
\end{proof}

\begin{lemma}[Fixed-design training error]
    \label{prop:trainerror_context_agnostic}
    In the setting of \cref{th:vidya},
    let $\m \in \mathbb{N}$ such that
    $\rmin > 0$,
    \cref{eq:ass1} holds,
    and the ground truth satisfies $\fstar(x) = \sum_{\i = 1}^\m \a_\i \Pf_\i(x)$.
    Then,
    \begin{equation*}
        \frac{\reg^2 \sigma^2}n \Tr\left(\H^{-2}\right)
        \leq \E_{\epsilon} \left[
            \frac{1}{n}\sum_i{(\hat{f_\reg}(x_i) - y_i)^2}
        \right]
        \leq \frac{\reg^2 \sigma^2}n \Tr\left(\H^{-2}\right)
            + 6\reg^2 \frac{\rmax^2}{\rmin^2}\frac{\norm{{\Dm^{-1}\a}}^2}{n^2},
    \end{equation*}
    where $\H \defeq \K+\reg \I_n$.
\end{lemma}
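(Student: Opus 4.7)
The plan is to first derive an exact closed form for the training error, then establish the lower bound trivially and handle the upper bound via the same operator-norm techniques used in the variance analysis of \cref{th:vidya}.

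The first step is the algebraic identity. Since $\freg(x) = \y^\t \H^{-1} \k(x)$ and the prediction vector on training points equals $\K \H^{-1} \y$, the residual vector is $\K \H^{-1} \y - \y = (\K \H^{-1} - \I)\y = -\reg \H^{-1}\y$ using $\K = \H - \reg \I$. Hence $\frac{1}{n}\sum_i (\freg(x_i) - y_i)^2 = \frac{\reg^2}{n} \y^\t \H^{-2}\y$. Substituting $\y = \f + \e$ with $\f = [\fstar(x_1),\dots,\fstar(x_n)]^\t$ and taking $\E_\epsilon$ kills the cross term, yielding the clean decomposition
\begin{equation*}
\E_\epsilon\!\left[\tfrac{1}{n}\sum_i (\freg(x_i) - y_i)^2\right]
= \tfrac{\reg^2 \sigma^2}{n}\Tr(\H^{-2}) + \tfrac{\reg^2}{n} \f^\t \H^{-2} \f .
\end{equation*}

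The lower bound is then immediate because $\H^{-2} \succeq 0$ implies $\f^\t \H^{-2} \f \geq 0$.

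For the upper bound, the key step is to control $\f^\t \H^{-2} \f = \a^\t \Pm^\t \H^{-2} \Pm \a$, where the second equality uses that $\fstar$ lies in the span of the first $\m$ eigenfunctions so that $\f = \Pm \a$. Writing $\a = \Dm(\Dm^{-1}\a)$ and applying the Cauchy--Schwarz-type bound yields
\begin{equation*}
\a^\t \Pm^\t \H^{-2} \Pm \a
\leq \norm{\Dm^{-1}\a}^2 \cdot \norm{\Dm \Pm^\t \H^{-2} \Pm \Dm}.
\end{equation*}
Now I would invoke \cref{lem:20_from_bartlett} to rewrite the inner operator as
$\bigl(\Dm^{-1} + \Pm^\t \HM^{-1} \Pm\bigr)^{-1} \Pm^\t \HM^{-2} \Pm \bigl(\Dm^{-1} + \Pm^\t \HM^{-1} \Pm\bigr)^{-1}$,
exactly mirroring the intermediate calculation for $V_1$ in the variance proof of \cref{th:vidya}. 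Bounding the three factors with the operator norm gives
$\norm{\Pm^\t \HM^{-2} \Pm} \leq \norm{\Pm^\t \Pm}/\mineig{\HM}^2$
and
$\norm{(\Dm^{-1} + \Pm^\t \HM^{-1} \Pm)^{-1}} \leq \norm{\HM}/\mineig{\Pm^\t \Pm}$.
Plugging in \cref{eq:ass1}, which gives $\norm{\Pm^\t \Pm/n} \leq 3/2$ and $\mineig{\Pm^\t \Pm/n} \geq 1/2$, collapses these bounds to $\norm{\Dm \Pm^\t \H^{-2} \Pm \Dm} \leq \frac{6}{n} \cdot \frac{\norm{\HM}^2}{\mineig{\HM}^2} = \frac{6 \rmax^2}{n\, \rmin^2}$. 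Multiplying by $\reg^2/n$ produces exactly the stated upper bound.

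\paragraph{Main obstacle}
None of the steps are intrinsically hard; the only subtlety is recognizing that the training-error upper bound must be structurally identical to the variance upper bound in \cref{th:vidya}, with the replacement $\Tr \mapsto$ operator norm so that the factor $\m/n$ becomes $\norm{\Dm^{-1}\a}^2/n$ after pulling out the $\a$-quadratic form. Once the Cauchy--Schwarz reduction to an operator norm is made, the remainder is a direct reuse of the bounds already assembled in the proof of \cref{th:vidya} together with \cref{lem:20_from_bartlett} and \cref{eq:ass1}.
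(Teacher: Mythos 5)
Your proof is correct and follows essentially the same route as the paper's: derive the exact training-error expression $\frac{\reg^2}{n}\norm{\H^{-1}\y}^2$ via the identity $\I_n - \K\H^{-1} = \reg\H^{-1}$, split into the trace term and the ground-truth quadratic form, take the trivial lower bound, and for the upper bound apply \cref{lem:20_from_bartlett} followed by operator-norm estimates using \cref{eq:ass1} and the spectrum of $\HM$. The only cosmetic difference is that the paper recognizes the middle factor as the already-bounded quantity $B_1$ from the proof of \cref{th:vidya}, whereas you re-derive the same operator-norm bound inline, arriving at the identical constant $6\,\rmax^2/\rmin^2$.
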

\begin{proof}
    The kernel ridge regression estimator is
    \begin{equation*}
        \freg(x^*) = \y^\t \H^{-1}\k(x^*),
    \end{equation*}
    where $\y \defeq [y_1,\dotsc, y_n]^\t$
    with $y_\iii = \fstar(x_\iii) + \epsilon_\iii$,
    and $\k(x^*) \defeq [\Kf(x_1,x^*),\dots,\Kf(x_n,x^*)]^\t$.
    Thus, the estimator evaluated at $x_1,\dots,x_n$
    is $\K\H^{-1}\y$, which yields the following training error:
    \begin{equation*}
        \frac{1}{n} \sum_i{(\freg(x_i) - y_i)^2}
        = \frac{1}{n}\norm{\left(\I_n-\K\H^{-1}\right)\y}^2
        \overset{(i)}{=} \frac{\reg^2}{n}\norm{\H^{-1}\y}^2,
    \end{equation*}
    where $(i)$ follows from
    \begin{equation*}
        \I_n-\K\H^{-1}
        = \I_n-\K(\K+\reg\I_n)^{-1}
        = \I_n-(\K+\reg\I_n-\reg\I_n)(\K+\reg\I_n)^{-1}
        = \reg(\K+\reg\I_n)^{-1}.
    \end{equation*}

    Next, by the assumptions on the ground truth,
    we can write $\y = \e + \Pm\a$.
    Thus, the expected training error with respect to the noise is
    \begin{align*}
        \E_{\epsilon}\left[\frac1n\sum_i (\freg(x_i) - y_i)^2\right]
        &=\frac{\reg^2}{n}\E_{\epsilon} \left[(\e + \Pm\a)^\t\H^{-2}(\e + \Pm\a)\right] \\
        &= \underbrace{\frac{\reg^2 \sigma^2}n \Tr\left(\H^{-2}\right)}_{\defeq T_1}
        + \underbrace{\frac{\reg^2 }{n}\a^\t\Pm^\t\H^{-2}\Pm\a}_{\defeq T_2}.
    \end{align*}

    First, $T_2 > 0$ since $\H$ is positive semi-definite.
    Therefore, $T_1$ already yields the desired lower bound on the expected training error.
    For the upper bound, we bound $T_2$ as follows:
    \begin{align*}
        T_2 &= \frac{\reg^2 }{n}( \Dm^{-1}\a)^\t \Dm\Pm^\t\H^{-2}\Pm\Dm (\Dm^{-1}\a)\\
        &\overset{(i)}{=} \frac{\reg^2}{n} (\Dm^{-1}\a)^\t
            \left(\Dm^{-1} + \Pm^\t\HM^{-1}\Pm\right)^{-1}
            \Pm^\t\HM^{-2}\Pm \\
        &\qquad\qquad\left(\Dm^{-1} + \Pm^\t\HM^{-1}\Pm\right)^{-1}(\Dm^{-1}\a)\\
        &= \frac{\reg^2}{n}
            \norm*{\HM^{-1}\Pm\left(\Dm^{-1} + \Pm^\t\HM^{-1}\Pm\right)^{-1}\Dm^{-1}\a}^2 \\
        &\leq \reg^2 \frac{\norm*{\Pm^\t\HM^{-2}\Pm}}n\norm*{\left({\Dm^{-1}} + \Pm^\t\HM^{-1}\Pm\right)^{-1}\Dm^{-1}\a}^2\\
        &\overset{(ii)}{\leq} \frac{1.5\reg^2 }{(\mineig{\KM}+\reg)^2} B_1.\\
        &\overset{(iii)}{\leq} 6\reg^2  \frac{\rmax^2\max\{\reg,1\}^2}{\left(\mineig{\KM}+\reg\right)^2}\frac{\norm{{\Dm^{-1}\a}}^2}{n^2} = 6\reg^2 \frac{\rmax^2}{\rmin^2}\frac{\norm{{\Dm^{-1}\a}}^2}{n^2},
    \end{align*}
    where $\HM \defeq \KM + \reg\I_n$.
    Step $(i)$ follows from \cref{lem:20_from_bartlett},
    step $(ii)$ uses \cref{eq:ass1} and matches the term $B_1$ from the proof of \cref{th:vidya},
    and step $(iii)$ applies the bound on $B_1$ achieved in \cref{th:vidya}.
    This upper-bounds the expected training error
    and thereby concludes the proof.
\end{proof}

\section{Experimental details}
\label{sec:experimentdetails}

This section describes our experimental setup
and includes additional details.
We provide the code to replicate all experiments and plots
in \url{https://github.com/michaelaerni/iclr23-InductiveBiasesHarmlessInterpolation}.

\subsection{Setup for filter size experiments}
\label{ssec:experimentdetails_setup_filters}

The following describes the main filter size experiments
presented in \cref{ssec:empirical_filters}.

\paragraph*{Network architecture}
We use a special CNN architecture that amplifies
the role of filter size as an inductive bias.
Each model of the main filter size experiments
in \cref{fig:filters_error}
has the following architecture:
\begin{enumerate}
    \item Convolutional layer with $128$ filters of varying size
    and no padding
    \item Global max pooling over the spatial feature dimensions
    \item ReLU activation
    \item Linear layer with $256$ output features
    \item ReLU activation
    \item Linear layer with $1$ output feature
\end{enumerate}
All convolutional and linear layers use a bias term.
Since we employ a single convolutional layer before global max pooling,
the convolutional filter size directly determines the maximum size
of an input patch that can influence the CNN's output.
Note that this architecture reduces to an MLP
if the filter size equals the input image size.

\paragraph*{Optimization procedure}
We use the same training procedure for all settings in \cref{fig:filters_error}.
Optimization minimizes the logistic loss for $300$ epochs
of mini-batch SGD with momentum $0.9$ and batch size $100$.
We linearly increase the learning rate from $10^{-6}$
to a peak value of $0.2$
during the first $50$ epochs,
and then reduce the learning rate according to an inverse square-root decay
every $20$ epochs.
For peak learning rate $\gamma_0$,
a decay rate $L$, the inverse square-root decay schedule at epoch $t \geq 0$ is
\begin{equation}
    \frac{\gamma_0}{\sqrt{1 + \floor{t / L}}} .
    \label{eq:experimentdetails_sqrtlr}
\end{equation}
Learning rate warm-up helps to capture the early-stopped test error
more precisely.
Whenever possible, we use deterministic training algorithms,
so that our results are as reproducible as possible.
We selected all hyperparameters to minimize
the training loss of the strongest inductive bias (filter size $5$)
on noisy training data,
with the constraint that all other settings still converge and interpolate.
Note that we do not use data augmentation, dropout, or weight decay.

\paragraph*{Evaluation}
We observed that all models achieved their minimum test error either
at the beginning or very end of training.
Hence, our experiments evaluate the test error every $2$ epochs during the first $150$ epochs,
and every $10$ epochs afterwards to save computation time.
We use an oracle, that is, the true test error,
to determine the optimal early stopping epoch in retrospective.
The optimal early stopping training error is always over the entire training set
(including potential noise) for a fixed model,
not an average over mini-batches.
To mitigate randomness in both the training data and optimization procedure,
we average over multiple dataset and training seeds.
More precisely, we sample $5$ different pairs of training and test datasets.
For each dataset, we fit $15$ randomly initialized models
per filter size on the same dataset,
and calculate average metrics.
The plots then display the mean and standard error over
the $5$ datasets.

\paragraph*{Dataset}
All filter size experiments use synthetic images.
For a fixed seed, the experiments generate $200$ training
and $100$k test images,
both having an equal amount of positive and negative classes.
Given a class, the sampling procedure iteratively
scatters $10$ shapes on a black $32 \times 32$ image.
A single shape is either a circle (negative class) or a cross (positive class),
has a uniformly random size in $[3, 5]$,
and a uniformly random center such that all shapes
end up completely inside the target image.
We use a conceptual line width of $0.5$ pixels,
but discretize the shapes into a grid.
See \cref{fig:filters_data} for examples.
A single dataset seed fully determines the training data,
test data, and all scattered shapes.

\begin{figure}[tb]
    \centering
    \begin{subfigure}[b]{\figsixcol}
        \centering
        \includegraphics[width=\linewidth]{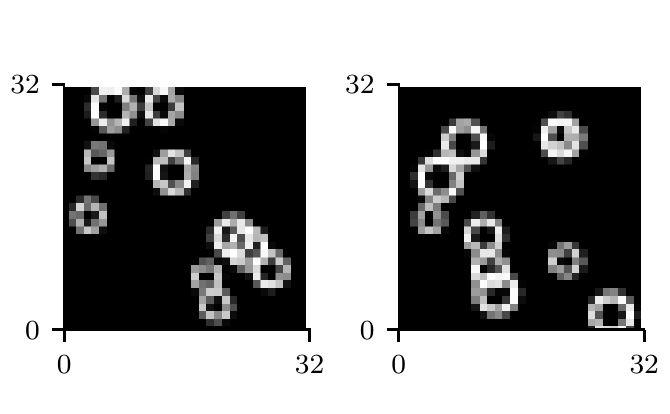}
        \caption{Negative samples}
    \end{subfigure}
    \hfill%
    \begin{subfigure}[b]{\figsixcol}
        \centering
        \includegraphics[width=\linewidth]{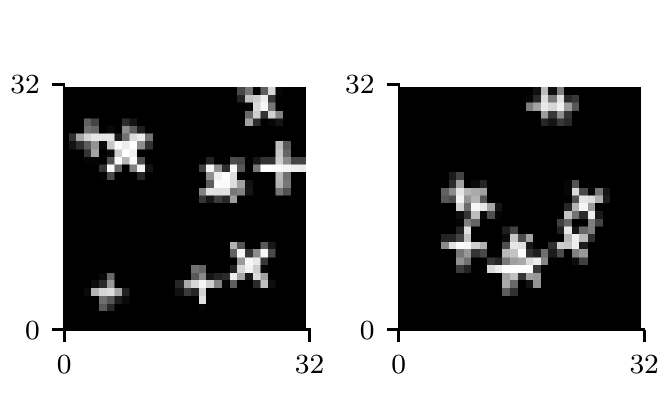}
        \caption{Positive samples}
    \end{subfigure}
    \caption{
        Example synthetic images used in the filter size experiments.
    }
    \label{fig:filters_data}
\end{figure}

\paragraph*{Noise model}
In the noisy case, we select $20\%$ of all training samples
uniformly at random without replacement,
and flip their label.
The noise is deterministic per dataset seed
and does not change between different optimization runs.
Note that we never apply noise to the test data.

\subsection{Setup for rotational invariance experiments}
\label{ssec:experimentdetails_setup_rotations}

The following describes the rotational invariance experiments
presented in \cref{ssec:empirical_rotations}.

\paragraph*{Dataset}
We use the EuroSAT \citep{Helber18} training split
and subsample it into $7680$ raw training and $10$k raw test samples in a stratified way.
For a fixed number of rotations $k$, we generate a training dataset as follows:
\begin{enumerate}
    \item In the noisy case, select a random $20\%$ subset of all training samples without replacement;
    for each, change the label
    to one of the other $9$ classes uniformly at random.
    \item For the $i$-th training sample ($i \in \{1, \dotsc, 7680\}$):
    \begin{enumerate}
        \item Determine a random offset angle $\alpha_i$.
        \item Rotate the original image by each angle in
        $\{\alpha_i + j \cdot (360^\circ / k) \mid j = 0, \dotsc, k - 1 \}$.
        \item Crop each of the $k$ rotated $64 \times 64$ images to $44 \times 44$
        so that no network sees black borders from image interpolation.
    \end{enumerate}
    \item Concatenate all $k \times 7680$ samples into a single training dataset.
    \item Shuffle this final dataset (at the beginning of training and every epoch).
\end{enumerate}
To generate the actual test dataset,
we apply a random rotation to each raw test sample independently,
and crop the rotated images to the same size as the training samples.
This procedure rotates every image exactly once,
and uses random angle offsets to avoid distribution shift effects from image interpolation.
Note that all random rotations are independent of the label noise and
the number of training rotations.
Hence, all experiments share the same test dataset.
Furthermore, since we apply label noise before rotating images,
all rotations of an image consistently share the same label.

\paragraph*{Network architecture}
All experiments use a \wrn{} \citep{Zagoruyko16}
with $16$ layers, widen factor $6$,
and default PyTorch weight initialization.
We chose the width and depth such that all networks
are sufficiently overparameterized
while still being manageable in terms of computational cost.

\paragraph*{Optimization procedure}
We use the same training procedure for all settings in \cref{fig:rotations}.
Optimization minimizes the softmax cross-entropy loss
using mini-batch SGD with momentum $0.9$ and batch size $128$.
Since the training set size grows in the number of rotations,
all experiments fix the number of gradient updates to $144$k.
This corresponds to $200$ epochs over a dataset with $12$ rotations.
Similar to the filter size experiments,
we linearly increase the learning rate from zero
to a peak value of $0.15$ during the first $4800$ steps,
and then reduce the learning rate according to an inverse square-root decay
(\cref{eq:experimentdetails_sqrtlr}) every $960$ steps.
Whenever possible, we use deterministic training algorithms,
so that our results are as reproducible as possible.
We selected all hyperparameters to minimize
the training loss of the strongest inductive bias ($12$ rotations)
on noisy training data,
with the constraint that all other settings still converge and interpolate.
As for all experiments in this paper,
we do not use additional data augmentation, dropout, or weight decay.

\paragraph*{Evaluation}
Similar to the filter size experiments,
we evaluate the test error more frequently during early training iterations:
every $480$ steps for the first $9600$ steps,
every $1920$ steps afterwards.
The experiments again use the actual test error
to determine the best step for early-stopping,
and calculate the corresponding training error
over the entire training dataset,
including all rotations and potential noise.
Due to the larger training set size and increased computational costs,
we only sample a single training and test dataset,
and report the mean and standard error of all metrics over five training seeds.

\subsection{Difference to \ddd{}}
\label{ssec:experimentdetails_ddd}

As mentioned in \cref{ssec:empirical_filters},
our empirical observations resemble the \ddd{} phenomenon.
This subsection expands on the discussion and provides
additional details about how this paper's phenomenon differs from \ddd{}.

While all models in all experiments interpolate the training data,
we observe that both noisy labels and stronger inductive biases
increase the final training loss of an interpolating model:
Smaller filter size results in a decreasing number of model parameters.
Enforcing invariance to more rotations requires a model to
interpolate more (correlated) training samples.
Thus, in both cases,
increasing inductive bias strength decreases a model's overparameterization
in relation to the number of training samples
--- shifting the setting closer to the interpolation threshold.

We argue that our choice of architecture and hyperparameter tuning
ensures that no model in any experiment is close to the corresponding interpolation threshold.
If that is the case, then \ddd{} predicts that increasing the number of model parameters
has a negligible effect on whether regularization benefits generalization,
and does therefore not explain our observations.

In the following, we first describe how our hyperparameter and model selection procedure
ensures that all models in all experiments are sufficiently overparameterized,
so that \ddd{} predicts negligible effects from increasing the number of parameters.
Then, we provide additional experimental evidence that supports our argument:
We repeat a subset of the experiments in \cref{sec:empirical}
while upscaling the number of parameters in all models.
For a fixed model scale and varying inductive bias,
we observe that all phenomena in \cref{sec:empirical} persist.
For a fixed inductive bias strength,
we further see that the test error of interpolating models saturates
at a value that matches our hypothesis.
In particular, for strong inductive biases,
the gap in test error between interpolating models
and their optimally early-stopped version
--- harmful interpolation --- persists.

\paragraph*{Hyperparameter tuning}
We mitigate differences in model complexity for different inductive bias strengths
by tuning all hyperparameters on worst-case settings,
that is, maximum inductive bias with noisy training samples.
To avoid optimizing on test data,
we tune on dataset seeds and network initializations
that differ from the ones used in actual experiments.
\Cref{fig:loss} displays the final training loss for all
empirical settings in this paper.
While models with a stronger inductive bias exhibit larger training losses,
all values are close to zero,
and the numerical difference is small.
Finally, we want to stress again that this discussion is only about the training \emph{loss};
all models in all experiments have zero training \emph{error}
and perfectly fit the corresponding training data.

\begin{figure}[t]
    \centering
    \begin{subfigure}[l]{\figsixcol}
        \centering
        \begin{subfigure}[t]{\linewidth}
            \centering
            \includegraphics[width=\linewidth]{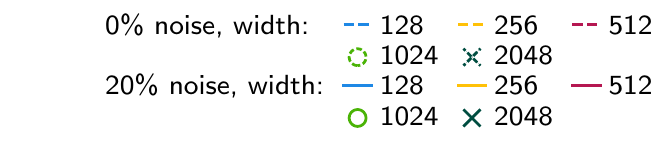}
        \end{subfigure}
        \begin{subfigure}[b]{\linewidth}
            \centering
            \includegraphics[width=\linewidth]{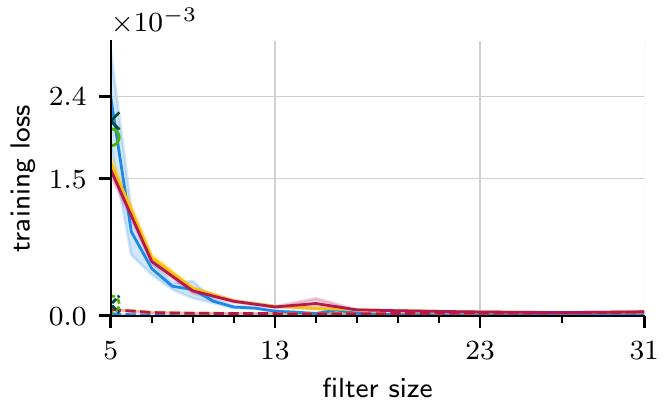}
        \end{subfigure}
        \caption{Filter size}
    \end{subfigure}
    \hfill%
    \begin{subfigure}[r]{\figsixcol}
        \centering
        \begin{subfigure}[t]{\linewidth}
            \centering
            \includegraphics[width=\linewidth]{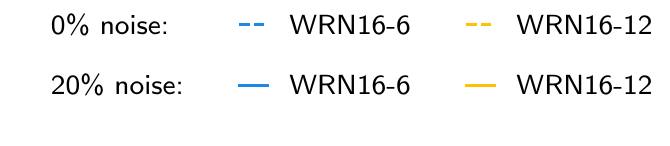}
        \end{subfigure}
        \begin{subfigure}[b]{\linewidth}
            \centering
            \includegraphics[width=\linewidth]{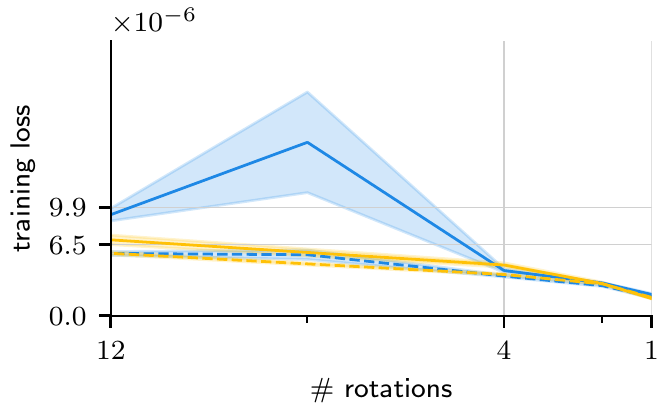}
        \end{subfigure}
        \caption{Rotational invariance}
    \end{subfigure}
    \caption{
        Training losses of the models in this paper's experiments as a function of
        (a) filter size and (b) the number of training set rotations.
        Models with a stronger inductive bias generally exhibit larger losses.
        However, in all instances, the numerical difference is small.
        Lines show the mean loss over $5$ training set samples in (a)
        and $5$ different optimization runs in (b),
        shaded areas the corresponding standard error.
    }
    \label{fig:loss}
\end{figure}

\paragraph*{Increasing model complexity for varying filter size}
As additional evidence,
we repeat the main filter size experiments
from \cref{fig:filters_error} in \cref{ssec:empirical_filters}
using the same setup as before
(see \cref{ssec:experimentdetails_setup_filters}),
but increase the convolutional layer width to $256$, $512$, $1024$, and $2048$.
For computational reasons,
we evaluate a reduced number of filter sizes for widths $256$ and $512$,
and only the smallest filter size $5$ for widths $1024$ and $2048$.
Since we found the original learning rate $0.2$ to be too unstable for the larger model sizes,
we use a decreased peak learning rate
$0.13$ for widths $256$ and $512$,
and $0.1$ for widths $1024$ and $2048$.

\Cref{fig:ablation_filters_error_noisy,fig:ablation_filters_error_noiseless}
show the test errors for $20\%$ and $0\%$ training noise, respectively.
With noisy training data (\cref{fig:ablation_filters_error_noisy}),
larger interpolating models yield a slightly smaller test error,
but the overall trends remain:
the gap in test error between converged and optimally early-stopped models
increases with inductive bias strength,
and the phase transition between harmless and harmful interpolation persists.
In particular, \cref{fig:ablation_filters_error_noisy} shows strong evidence
that the number of model parameters does not influence our phenomenon:
for example, models with filter size $5$ (strong inductive bias)
and width $512$ (red)
have more parameters than models with filter size $27$ (weak inductive bias)
and width $128$ (blue).
Nevertheless, models with filter size $5$ benefit significantly
from early stopping,
while interpolation for models with filter size $27$ is harmless.
In the noiseless case (\cref{fig:ablation_filters_error_noiseless}),
increasing model complexity does neither harm nor improve generalization,
and all models achieve their optimal performance
after interpolating the entire training dataset.
Similarly, \cref{fig:ablation_filters_es} reveals
that the fraction of training noise that optimally early-stopped models fit
stays the same for larger models.
Finally, for a fixed inductive bias strength,
the test errors saturate as model size increases,
making a different trend for models with more than $2048$ filters unlikely.
To increase legibility, we present the numerical results
for the largest two filter sizes in \cref{tab:conv_ablation}.

\begin{figure}[t]
    \centering
    \begin{subfigure}[t]{\figtwelvecol}
        \centering
        \begin{subfigure}[l]{2.\figfourcol}
            \centering
            \includegraphics[width=\linewidth]{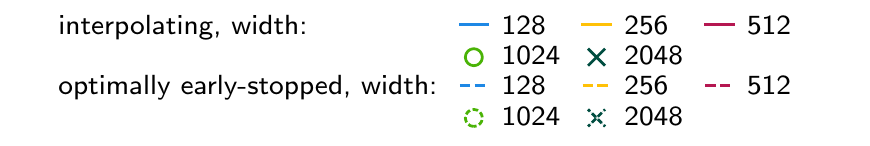}
        \end{subfigure}
            \begin{subfigure}[r]{\figfourcol}
            \centering
            \includegraphics[width=\linewidth]{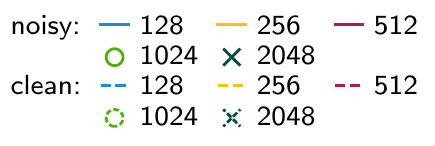}
        \end{subfigure}
    \end{subfigure}
    \begin{subfigure}[b]{\figtwelvecol}
        \centering
        \begin{subfigure}[l]{\figfourcol}
            \centering
            \includegraphics[width=\linewidth]{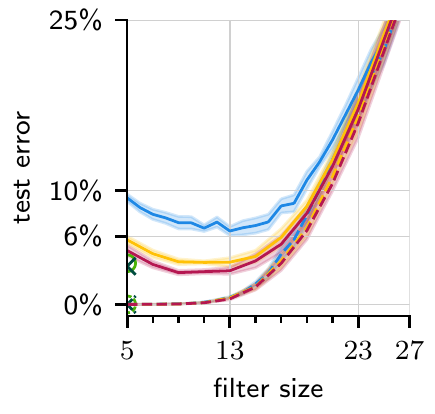}
            \caption{Test error ($20\%$ noise)}
            \label{fig:ablation_filters_error_noisy}
        \end{subfigure}
        \hfill%
        \begin{subfigure}[c]{\figfourcol}
            \centering
            \includegraphics[width=\linewidth]{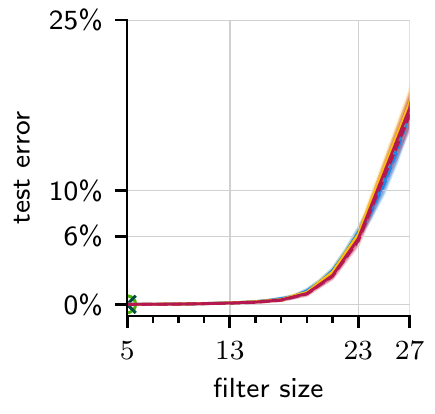}
            \caption{Test error ($0\%$ noise)}
            \label{fig:ablation_filters_error_noiseless}
        \end{subfigure}
        \hfill%
        \begin{subfigure}[r]{\figfourcol}
            \centering
            \includegraphics[width=\linewidth]{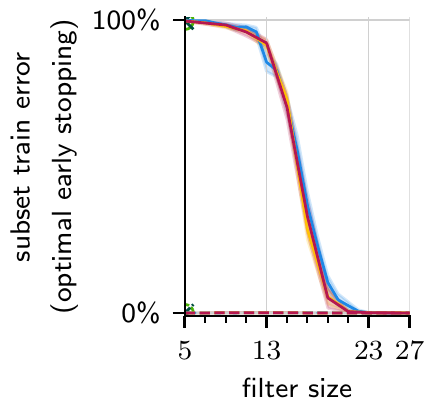}
            \caption{Opt.\ early stopping ($20\%$ noise)}
            \label{fig:ablation_filters_es}
        \end{subfigure}
    \end{subfigure}
    \caption{
        An increase in convolutional layer width by factors $2$ to $16$
        does not significantly alter the behavior of
        (a) the test error when training on $20\%$ label noise,
        (b) the test error when training on $0\%$ label noise,
        and (c) the training error when using optimal early stopping on $20\%$ label noise.
        Despite significantly larger model size,
        the phase transition between harmless and harmful interpolation persists.
        Lines show the mean over five random datasets,
        shaded areas the standard error.
    }
\end{figure}

\begin{table}[ht]
\centering
\caption{
Test errors for filter size $5$ (strongest inductive bias)
and very large width under $20\%$ training noise.
}
\label{tab:conv_ablation}
\begin{tabular}{@{}lll@{}}
\toprule
                         & width $1024$ & width $2048$ \\ \midrule
early-stopped test error & $0.0062\%$   & $0.0049\%$   \\
interpolating test error & $3.6251\%$   & $3.3664\%$   \\
\# parameters            & $289281$     & $578049$     \\ \bottomrule
\end{tabular}
\end{table}

\paragraph*{Increasing model capacity for varying rotational invariance}
For completeness, we also repeat the rotation invariance experiments
from \cref{fig:rotations} in \cref{ssec:empirical_rotations}
with twice as wide \wrn{}s
on a reduced number of rotations.
More precisely, we increase the network widen-factor from $6$ to $12$,
and otherwise use the same setting as the main experiments
(see \cref{ssec:experimentdetails_setup_rotations}).
Note that this corresponds to a parameter increase from
around $6$ million to around $24$ million parameters.

The results in \cref{fig:ablation_rotations} provide additional evidence
that our phenomenon is distinct from double descent:
both the test error
(\cref{fig:ablation_rotations_error_noisy,fig:ablation_rotations_error_noiseless})
and fraction of fitted noise under optimal early stopping
(\cref{fig:ablation_rotations_es})
exhibit the same trend,
despite the significant difference in number of parameters.

\begin{figure}[t]
    \centering
    \begin{subfigure}[t]{\figtwelvecol}
        \centering
        \begin{subfigure}[l]{2.\figfourcol}
            \centering
            \includegraphics[width=\linewidth]{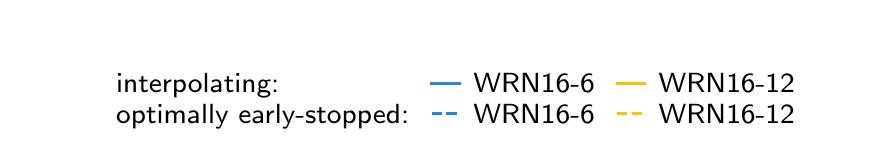}
        \end{subfigure}
            \begin{subfigure}[r]{\figfourcol}
            \centering
            \includegraphics[width=\linewidth]{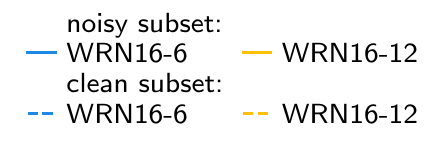}
        \end{subfigure}
    \end{subfigure}
    \begin{subfigure}[b]{\figtwelvecol}
        \centering
        \begin{subfigure}[l]{\figfourcol}
            \centering
            \includegraphics[width=\linewidth]{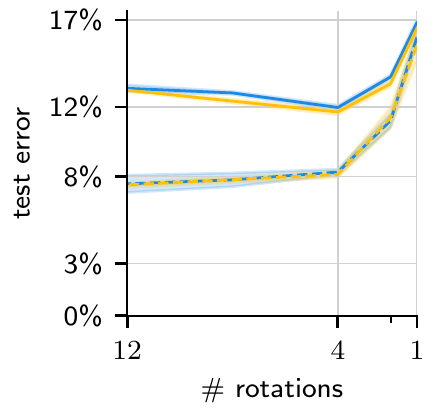}
            \caption{Test error ($20\%$ noise)}
            \label{fig:ablation_rotations_error_noisy}
        \end{subfigure}
        \hfill%
        \begin{subfigure}[c]{\figfourcol}
            \centering
            \includegraphics[width=\linewidth]{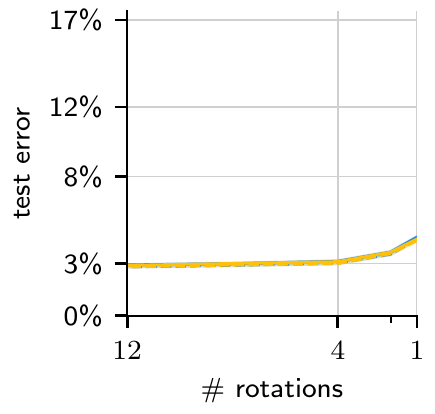}
            \caption{Test error ($0\%$ noise)}
            \label{fig:ablation_rotations_error_noiseless}
        \end{subfigure}
        \hfill%
        \begin{subfigure}[r]{\figfourcol}
            \centering
            \includegraphics[width=\linewidth]{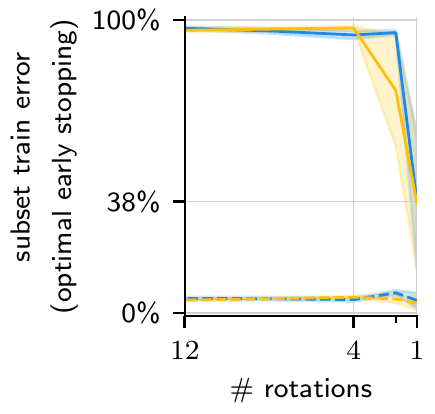}
            \caption{Opt.\ early stopping ($20\%$ noise)}
            \label{fig:ablation_rotations_es}
        \end{subfigure}
    \end{subfigure}
    \caption{
        Doubling \wrn{} width
        does not significantly alter the behavior of
        (a) the test error when training on $20\%$ label noise,
        (b) the test error when training on $0\%$ label noise,
        and (c) the training error when using optimal early stopping on $20\%$ label noise.
        Despite significantly larger model size,
        the phase transition between harmless and harmful interpolation persists.
        Lines show the mean over five random network initializations,
        shaded areas the standard error.
    }
    \label{fig:ablation_rotations}
\end{figure}

\end{document}